\newtheorem{theorem}{Theorem}
\newtheorem{lemma}{Lemma}
\newtheorem{proposition}{Proposition}
\newtheorem{remark}{Remark}
\newtheorem{assumption}{{Assumption}}
\def\beq{\begin{equation}}
\def\eeq{\end{equation}}
\def\beqr{\begin{eqnarray}}
\def\eeqr{\end{eqnarray}}
\def\beqrs{\begin{eqnarray*}}
\def\eeqrs{\end{eqnarray*}}
\def\bet{\begin{theorem}}
\def\eet{\end{theorem}}
\def\bel{\begin{lemma}}
\def\eel{\end{lemma}}
\def\bep{\begin{proposition}}
\def\eep{\end{proposition}}
\def\bg{\begin{figure}[tbph]\begin{center}}
\def\eg{\end{center}\end{figure}}
\def\bc{\begin{center}}
\def\ec{\end{center}}
\def\wt{\widetilde}
\def\wh{\widehat}
\renewcommand{\baselinestretch}{1.4}
\numberwithin{equation}{section}
\numberwithin{table}{section}
\newcommand*{\rom}[1]{\romannumeral #1}
\newcommand{\bA}{{\mathbf A}}
\newcommand{\bB}{{\mathbf B}}
\newcommand{\bF}{{\mathbf F}}
\newcommand{\bE}{{\mathbf E}}
\newcommand{\bG}{{\mathbf G}}
\newcommand{\bH}{{\mathbf H}}
\newcommand{\bI}{{\mathbf I}}
\newcommand{\bM}{{\mathbf M}}
\newcommand{\bQ}{{\mathbf Q}}
\newcommand{\bP}{{\mathbf P}}
\newcommand{\bR}{{\mathbf R}}
\newcommand{\bS}{{\mathbf S}}
\newcommand{\bU}{{\mathbf U}}
\newcommand{\bV}{{\mathbf V}}
\newcommand{\bW}{{\mathbf W}}
\newcommand{\bY}{{\mathbf Y}}
\newcommand{\bZ}{{\mathbf Z}}
\newcommand{\bd}{{\mathbf d}}
\newcommand{\be}{{\mathbf e}}
\newcommand{\bu}{{\mathbf u}}
\newcommand{\bv}{{\mathbf v}}
\newcommand{\bw}{{\mathbf w}}
\newcommand{\bx}{{\mathbf x}}
\newcommand{\bC}{{\mathbf C}}
\newcommand{\bD}{{\mathbf D}}
\newcommand{\ve}{{\varepsilon}}
\renewcommand{\epsilon}{{\ve}}
\renewcommand{\hat}{\widehat}
\def\wt{\widetilde}
\renewcommand{\tilde}{\wt}
\begin{document}
\def\spacingset#1{\renewcommand{\baselinestretch}%
{#1}\small\normalsize} \spacingset{1}

\title{Split-and-Conquer: Distributed Factor Modeling for High-Dimensional Matrix-Variate Time Series}

\author{Hangjin Jiang$^1$, Yuzhou Li$^{1,2}$, and Zhaoxing Gao$^3$\footnote{Corresponding author: \texttt{zhaoxing.gao@uestc.edu.cn} (Z. Gao),  School of Mathematical Sciences, University of Electronic Science and Technology of China, Chengdu, 611731 P.R. China. }\\
{\normalsize $^{1}$Center for Data Science, Zhejiang University}\\ {\normalsize $^2$School of Mathematical Sciences, Zhejiang University} \\ {\normalsize $^3$School of Mathematical Sciences, University of Electronic Science and Technology of China}
}
\date{}

\maketitle

\begin{abstract}
In this paper, we propose a distributed framework for reducing the dimensionality of high-dimensional, large-scale, heterogeneous matrix-variate time series data using a factor model. The data are first partitioned column-wise (or row-wise) and allocated to node servers, where each node estimates the row (or column) loading matrix via two-dimensional tensor PCA. These local estimates are then transmitted to a central server and aggregated, followed by a final PCA step to obtain the global row (or column) loading matrix estimator. Given the estimated loading matrices, the corresponding factor matrices are subsequently computed. Unlike existing distributed approaches, our framework preserves the latent matrix structure, thereby improving computational efficiency and enhancing information utilization. We also discuss row- and column-wise clustering procedures for settings in which the group memberships are unknown. Furthermore, we extend the analysis to unit-root nonstationary matrix-variate time series. Asymptotic properties of the proposed method are derived for the diverging dimension of the data in each computing unit and the sample size $T$. Simulation results assess the computational efficiency and estimation accuracy of the proposed framework, and real data applications further validate its predictive performance.
\end{abstract}

\noindent {\sl Keywords}: Split-and-Conquer,
Matrix Factor Model, Large-scale Time Series, Principal Component Analysis, Heterogeneity

\newpage

\section{Introduction}
Time series models with factor structures are widely applied across diverse fields, including finance, macroeconomics, and machine learning, due to their ability to represent large-scale time series through a small number of latent factors. Early research considered models with a fixed number of factors. For instance, \cite{ROSS1976341} modeled asset returns as linear functions of macroeconomic risks, and \cite{GREGORY1999423} represented multinational current accounts using global and country-specific factors. With the growing focus on dimensionality reduction for large-scale data, subsequent research developed approaches to infer a low-dimensional factor structure. \cite{Chamberlain1981} introduced the approximate factor model employing principal component analysis (PCA) to estimate the latent factors, with important extensions by \cite{Stock1998}, \cite{Stock01122002}, and \cite{bai2002}. To address strong temporal dependence, later research captured co-movements across multiple moments, leading to the dynamic factor model \citep{forni2000,forni2005,FORNI2015359} and factor models with dynamically dependent factors \citep{Lam_2012,gao2019,Gao_2021}. When the number of factors is unknown, various methods have been proposed for its estimation, including the matrix rank test \citep{Lewbel1991,Donald1997}, modified BIC \citep{Stock1998,bai2002}, multivariate AIC \citep{forni2000}, and eigenvalue ratio-based methods \citep{Lam_2012,ahn2013}. 

While vector factor models are widely used, they are not well-suited for matrix time series because they ignore multi-dimensional structural information. Matrix time series have become increasingly important in fields such as neuroscience (e.g., large-scale fMRI data), economics (e.g., high-frequency trading records), and machine learning (e.g., parameter snapshots of pre-trained models). \cite{WANG2019231} introduced a matrix factor model that preserves the latent matrix structure, employing row and column loading matrices to capture the variance across the corresponding dimensions. Building on this model and two-dimensional tensor PCA, \cite{chen2021} proposed an $\alpha$-PCA estimation procedure to enhance estimation accuracy and applicability, while \cite{yu2020} introduced a projection method that further improves estimation accuracy. \cite{he2022matrixfactoranalysissquares} developed a robust iterative estimation procedure, and \cite{he2022matrixkendallstauhighdimensions} robustified the factor number estimation via eigenvalue ratios of generalized row/column matrix Kendall’s tau matrices. Additionally, \cite{GAO202383} proposed a two-way transformed model providing solid statistical guarantees and improved interpretability.

Despite the effectiveness of factor models in dimensionality reduction for high-dimensional time series, storing and processing all data on a single machine are often impractical. Estimation procedures commonly require matrix operations involving billions or trillions of elements, making single-machine estimation infeasible. In practice, large-scale data are always distributed across multiple machines due to concerns over communication costs, privacy, and security. For example, high-frequency trading data are stored across exchanges, remote sensing data are massive and dispersed, and large-scale fMRI datasets are distributed across institutions to protect medical privacy. To address these challenges, distributed estimation procedures have been developed in which local estimators are computed on node servers, transmitted to a central server, and aggregated into global estimates. Early work by \cite{Qu2002} introduced a distributed PCA method under a horizontal partitioning regime, where each node stores a subset of observations across all variables. \cite{fan2018} refined the distributed PCA and established solid statistical guarantees, while \cite{garber2017communicationefficientalgorithmsdistributedstochastic} and \cite{chen2021distributedestimationprincipalcomponent} proposed multi-round methods to reduce communication cost and improve computational efficiency. Additionally, \cite{Kargupta2000} introduced a collective PCA method with a vertical partitioning regime, where each node stores all observations for a subset of variables, with further developments by \cite{Li2011} and \cite{BERTRAND2014120}.

However, as previously discussed, distributed methods developed for vector-variate factor models perform sub-optimally when applied to matrix time series. This motivates the study on distributed estimation for matrix-variate factor models. A naive approach is to partition the matrix data along columns or rows, estimate the loading matrices within each block, and then aggregate the local results to obtain global estimates. Although this framework preserves the latent matrix structure, it has two notable limitations when both the row and column loading matrices are estimated from a single block. Without loss of generality, consider an equal column-wise partition. First, the estimation accuracy of the column loading matrix decreases substantially faster than that of the row loading matrix. Specifically, the former decays at a rate proportional to the square of the number of splits, whereas the latter declines only linearly (see Section~\ref{sec3}). Second, because each computing node uses all data rows, localized heterogeneity may be obscured. These limitations motivate the development of improved distributed estimation methods.

This paper proposes a distributed estimation procedure for high-dimensional matrix-variate factor models, introducing a new partitioning regime. We first partition the observations column-wise according to their grouping structure and assign each subset to a node server, where the row loading matrix is estimated using $\alpha$-PCA \citep{chen2021}. These local estimates are transmitted to a central server, where they are aggregated and followed by a final PCA step to obtain a global estimator. In parallel, the observations are partitioned row-wise to estimate the column loading matrix. The factor matrices are then calculated based on the estimated loading matrices. This framework extends existing estimation methods for the matrix-variate factor model to efficiently handle high-dimensional heterogeneous data. Unlike existing distributed approaches, it preserves the latent matrix structure, thereby improving computational efficiency and information utilization. When data groupings are unknown, we recommend several clustering strategies for distributed data allocation, while the eigenvalue ratio-based method \citep{Lam_2012,ahn2013} is used to estimate the number of factors. We establish rigorous statistical guarantees under mild conditions, demonstrating the consistency of the proposed estimators as the sample size and data dimension at each computing unit diverge, and we further derive asymptotic normality of the loading matrix estimators. Extensive empirical validations, including simulations and applications to the Fama–French Stock Return data and the OECD CPI data, confirm the effectiveness of our method. Across these studies, our approach achieves substantially higher computational efficiency and better predictive performance, while maintaining accuracy comparable to that of $\alpha$-PCA.

From a statistical perspective, our method extends $\alpha$-PCA \citep{chen2021} to accommodate high-dimensional heterogeneous settings, and we further establish asymptotic normality for both their estimators and our proposed estimators. In related work, \cite{chen2025distributedtensorprincipalcomponent} proposed a distributed tensor PCA method for large-scale tensor datasets, motivated by considerations similar to ours in employing distributed estimation. However, their double-loop singular value decomposition (SVD) framework is less efficient and less streamlined. Our method is also connected to the distributed hierarchical factor model \citep{gao2021divideandconquerdistributedhierarchicalfactor}, particularly when both row and column loading matrices are estimated from column-wise (or row-wise) partitions. While the hierarchical model accommodates dispersed data and captures the commonality and heterogeneity across partitions, it converges more slowly and fails to preserve the latent matrix structure. Moreover, \cite{JIANG2027}, also based on the matrix factor model, exploited banded and general sparse structures in high-dimensional data to achieve dimensionality reduction, thereby enhancing interpretability and scalability. In contrast, our method imposes no specific structural assumptions and is applicable in more general settings.

The contributions of this paper are threefold. First, we propose a distributed estimation procedure for reducing the dimensionality of high-dimensional, large-scale, heterogeneous matrix-variate time series data using the matrix factor model. By preserving the matrix structure and employing the proposed partitioning regime, our method efficiently and accurately extracts low-dimensional latent factors while maintaining the overall common structure and respecting inter-group heterogeneity. Simulation results confirm that our approach substantially improves computational efficiency while maintaining high estimation accuracy, and real data applications further validate its predictive performance. Second, we establish rigorous statistical guarantees for the proposed method by deriving its asymptotic properties for the diverging sample size and data dimension at each computing unit. In particular, we obtain new results on the asymptotic normality of the $\alpha$-PCA loading estimators and derive the asymptotic normality of our loading estimators. Third, we extend the proposed method to accommodate unit-root nonstationary time series, enabling the identification of common stochastic trends, and provide corresponding statistical guarantees.

The rest of the paper is organized as follows. Section \ref{sec2} introduces our estimation methodology, and Section \ref{sec3} presents its theoretical properties. Our framework is extended to unit-root nonstationary data in Section~\ref{sec3.5}. Numerical studies based on simulated and real datasets are reported in Section \ref{sec4}. Section \ref{sec5} provides some concluding remarks. All technical proofs and additional numerical results are presented in an online supplementary Appendix. Throughout the article, we use the following notation. For a $p\times 1$ vector $\bv=(v_1,..., v_p)^\top,$  $||\bv||_2 =\|\bv^\top\|_2= (\sum_{i=1}^{p} v_i^2)^{1/2} $ denotes the Euclidean norm and $\|\bv\|_\infty=\max_{1\leq i\leq p}|v_i|$ denotes the $\ell_\infty$-norm. $\bI_p$ denotes a $p\times p$ identity matrix. For a matrix $\bH=(h_{ij})\in \mathbb{R}^{m\times n}$, $\|\bH\|_1=\max_j\sum_i|h_{ij}|$, $\|\bH\|_\infty=\max_{i,j}|h_{ij}|$,  $\|\bH\|_\mathsf{F}=\sqrt{\sum_{i,j}h_{ij}^2}$ denotes the Frobenius norm, and $\|\bH\|_2=\sqrt{\lambda_{\max} (\bH^\top \bH ) }$ denotes the operator norm, where $\lambda_{\max} (\cdot) $ denotes the largest eigenvalue of a matrix. $\bH^l\in \mathbb{R}^{1\times n}$ denotes the $l$-th row vector of a matrix $\bH$. For a matrix time series $\{\bH_t\in\mathbb{R}^{m\times n};\ t\in[T]\}$, $\overline{\bH}\in\mathbb{R}^{m\times n}$ denotes its sample mean and $[T]$ denotes the set $ \{1,2,...,T\}$. We also use the notation $a\asymp b$ to denote $a=O(b)$ and $b=O(a)$. 

\section{Model and Methodology}\label{sec2}

\subsection{Model Setting}\label{sec2.1}

Let $\bY_t$ be an observable $p\times q$-dimensional stationary time series for $t=1,2,...,T$, and assume it admits a latent matrix factor structure:
\begin{equation}\label{model1}
\mathbf{Y}_t=\mathbf{R}\mathbf{F}_t\mathbf{C}^\top+\mathbf{E}_t,\quad t=1,2,...,T,
\end{equation}
where $\mathbf{F}_t\in \mathbb{R}^{k\times r}$ is the common factor matrix with dimensions $k\ll p$ and $r\ll q$, $\mathbf{R}\in\mathbb{R}^{p\times k}$ and $\mathbf{C}\in\mathbb{R}^{q\times r}$ are the row and column loading matrices respectively, capturing the common covariance, and $\mathbf{E}_t\in \mathbb{R}^{p\times q}$ denotes the idiosyncratic component, assumed uncorrelated with $\mathbf{F}_t$.

It is well known that the triplet $[\mathbf{R},\mathbf{F}_t,\mathbf{C}]$ is not separately identifiable. Specifically, for any nonsingular matrices $\mathbf{H}_1\in \mathbb{R}^{k\times k}$ and $\mathbf{H}_2\in \mathbb{R}^{r\times r}$, the triplet $[\mathbf{R},\mathbf{F}_t,\mathbf{C}]$ is equivalent to $[\mathbf{R}\mathbf{H}_1,\mathbf{H}_1^{-1}\mathbf{F}_t(\mathbf{H}_2^{-1})^\top,\mathbf{C}\mathbf{H}_2]$ under model~(\ref{model1}). Following the restriction used in \cite{chen2021}, we assume that each row of the loading matrices is bounded, i.e.,
$$\left\|\mathbf{R}^{l_1}\right\|_2=\mathcal{O}(1)\quad \text{and}\quad \left\|\mathbf{C}^{l_2}\right\|_2=\mathcal{O}(1),\quad l_1=1,2,...,p,\quad l_2=1,2,...,q.$$
Our estimation procedure yields the triplet $[\widehat{\mathbf{R}},\widehat{\mathbf{F}}_t,\widehat{\mathbf{C}}]$, which serve as an estimator of $[\mathbf{R},\mathbf{F}_t,\mathbf{C}]$. Specifically, there exist nonsingular matrices $\mathbf{H}_\bR\in \mathbb{R}^{k\times k}$ and $\mathbf{H}_\bC\in \mathbb{R}^{r\times r}$, such that $\widehat{\bR}$ estimates $\mathbf{R}\mathbf{H}_\bR$, $\widehat{\bC}$ estimates $\mathbf{C}\mathbf{H}_\bC$, and $\widehat{\bF}_t$ estimates $\mathbf{H}_\bR^{-1}\mathbf{F}_t(\mathbf{H}_\bC^{-1})^\top$.

\subsection{Estimation Procedure for Low-dimensional Case}\label{sec2.2}

Several estimation methods have been developed for model (\ref{model1}). Among them, the $\alpha$-PCA method in \cite{chen2021} is computationally efficient and easy to implement, as it applies two-dimensional tensor PCA to statistics based on the sample mean and covariance. We first outline this estimation procedure and then introduce our distributed approach developed for high-dimensional observations. Throughout this section, we assume that $k$ and $r$ are known, with their estimation discussed later. Define the $p\times p$ and $q\times q$ aggregation matrices,
\begin{equation}\label{mr1}
    \widehat{\bM}_{\bR}^\alpha=\frac{1}{pq}\left[(1+\alpha)\overline{\mathbf{Y}}\overline{\mathbf{Y}}^\top+\frac{1}{T}\sum_{t=1}^T(\mathbf{Y}_{t}-\overline{\mathbf{Y}})(\mathbf{Y}_{t}-\overline{\mathbf{Y}})^\top\right]
\end{equation}
and
\begin{equation}\label{mc1}
 \widehat{\mathbf{M}}_{\mathbf{C}}^\alpha=\frac{1}{pq}\left[(1+\alpha)\overline{\mathbf{Y}}^\top\overline{\mathbf{Y}}+\frac{1}{T}\sum_{t=1}^T(\mathbf{Y}_{t}-\overline{\mathbf{Y}})^\top(\mathbf{Y}_{t}-\overline{\mathbf{Y}})\right],
\end{equation}
where $\alpha$ $\in [-1,+\infty)$ is a hyper-parameter controlling the weight of the two terms. The row and column loading matrix estimators, $\widehat{\mathbf{R}}_\alpha$ and $\widehat{\mathbf{C}}_\alpha$, are obtained as $\sqrt{p}$ times the top $k$ eigenvectors of $\widehat{\mathbf{M}}_{\mathbf{R}}^\alpha$ and $\sqrt{q}$ times the top $r$ eigenvectors of $\widehat{\mathbf{M}}_{\mathbf{C}}^\alpha$, respectively. Let $\tilde{\alpha}=\sqrt{\alpha+1}-1$, and define $\widetilde{\mathbf{Y}}_{t}=\mathbf{Y}_{t}+\tilde{\alpha}\overline{\mathbf{Y}}$, $\widetilde{\mathbf{F}}_{t}=\mathbf{F}_{t}+\tilde{\alpha}\overline{\mathbf{F}}$, and $ \widetilde{\mathbf{E}}_{t}=\mathbf{E}_{t}+\tilde{\alpha}\overline{\mathbf{E}}$, so that $\widetilde{\mathbf{Y}}_{t}=\bR\widetilde{\mathbf{F}}_{t}\bC^\top+\widetilde{\mathbf{E}}_{t}$. Under this definition, Equations (\ref{mr1}) and (\ref{mc1}) are simplified to
\begin{equation*}
    \widehat{\mathbf{M}}_{\mathbf{R}}^\alpha=\frac{1}{pqT}\sum_{t=1}^T \widetilde{\mathbf{Y}}_{t}\widetilde{\mathbf{Y}}_{t}^\top\quad \text{and}\quad
    \widehat{\mathbf{M}}_{\mathbf{C}}^\alpha=\frac{1}{pqT}\sum_{t=1}^T \widetilde{\mathbf{Y}}_{t}^\top\widetilde{\mathbf{Y}}_{t}.
\end{equation*}


\subsection{Estimation Procedure for High-dimensional Case}\label{sec2.3}

In this section, we introduce our distributed estimation procedure, starting with known $k$ and $r$. The first step is to estimate the row loading matrix $\bR$. We assume that $\bY_t$ admits a column-wise grouping structure,
$$\bY_t= \begin{pmatrix}\mathbf{Y}_{1t}& \mathbf{Y}_{2t} & \cdots & \mathbf{Y}_{s_1t}\end{pmatrix},$$
where $\mathbf{Y}_{it}\in \mathbb{R}^{p\times m_{i}}$ for $i\in[s_1]$ and $\sum_{i=1}^{s_1}m_i=q$. These blocks are assigned to $s_1$ computing nodes. Under model~(\ref{model1}), the block $\bY_{it}$ follows the sub-model,
\begin{equation}
    \mathbf{Y}_{it}=\mathbf{R}\mathbf{F}_t\mathbf{C}_i^\top+\mathbf{E}_{it},\quad t=1,2,...,T,
    \label{submodel1}
\end{equation} 
where $$\begin{pmatrix}\mathbf{C}_1^\top &\mathbf{C}_2^\top & \cdots &\mathbf{C}_{s_1}^\top\end{pmatrix}=\mathbf{C}^\top\quad\text{and}\quad\begin{pmatrix}\mathbf{E}_{1t}& \mathbf{E}_{2t}& \cdots& \mathbf{E}_{s_1t}\end{pmatrix}=\mathbf{E}_t,$$
with $\mathbf{C}_{i} \in \mathbb{R}^{m_{i}\times r}$ and $\mathbf{E}_{it} \in \mathbb{R}^{p\times m_{i}}$. At each node, we apply $\alpha$-PCA to its assigned data block to obtain a local estimator of $\bR$. Specifically, for each $i\in[s_1]$, the estimator $\widehat{\mathbf{R}}_i$ is constructed by multiplying $\sqrt{p}$ with the eigenvectors of $\widehat{\mathbf{M}}_{\mathbf{R}_i}$ corresponding to its largest $k$ eigenvalues in decreasing order, where
\begin{equation}\label{mri}
    \widehat{\mathbf{M}}_{\mathbf{R}_i}=\frac{1}{p m_i}\left[(1+\alpha)\overline{\mathbf{Y}}_i\overline{\bY}_i^\top+\frac{1}{T} \sum_{t=1}^T\left(\mathbf{Y}_{i t}-\overline{\mathbf{Y}}_i\right)\left(\mathbf{Y}_{i t}-\overline{\mathbf{Y}}_i\right)^\top\right]=\frac{1}{pm_iT}\sum_{t=1}^T\widetilde{\bY}_{it}\widetilde{\bY}_{it}^\top,
\end{equation}
with $\overline{\mathbf{Y}}_i=\frac{1}{T} \sum_{t=1}^T \mathbf{Y}_{it}$ and $\widetilde{\mathbf{Y}}_{it}=\mathbf{Y}_{it}+\tilde{\alpha}\overline{\mathbf{Y}}_i$. The local estimators $\{\wh{\bR}_i\}_{i=1}^{s_1}$ are transmitted to a center server and aggregated by 
\begin{equation}\label{mr}
\widehat{\mathbf{M}}_{\mathbf{R}}=\frac{1}{ps_1}\sum_{i=1}^{s_1} \wh{\bR}_i\wh{\bR}_i^\top.
\end{equation} 
The global estimator $\wh{\bR}$ is then obtained by multiplying $\sqrt{p}$ with the eigenvectors of $\widehat{\mathbf{M}}_{\mathbf{R}}$ corresponding to its largest k eigenvalues in decreasing order.

Next, we estimate the column loading matrix $\bC$ similarly. We assume that $\bY_t$ also admits a row-wise grouping structure,
$$\mathbf{Y}_t=\begin{pmatrix}\mathbf{Z}_{1t}^\top& \mathbf{Z}_{2t}^\top& \cdots& \mathbf{Z}_{s_2t}^\top\end{pmatrix}^\top,$$
where $\mathbf{Z}_{jt}\in \mathbb{R}^{ n_{j}\times q}$ for $j\in[s_2]$ and $\sum_{j=1}^{s_2}n_j=p$. These blocks are assigned to $s_2$ computing nodes. By the transpose of model (\ref{model1}), the block $\mathbf{Z}_{jt}$ follows another sub-model,
\begin{equation}
\mathbf{Z}_{jt}=\mathbf{R}_j\mathbf{F}_t\mathbf{C}^\top+\boldsymbol{\varepsilon}_{jt},\quad t=1,2,...,T,
    \label{submodel2}
\end{equation}
where 
$$\begin{pmatrix}\mathbf{R}_1^\top& \mathbf{R}_2^\top& \cdots& \mathbf{R}_{s_2}^\top\end{pmatrix}=\mathbf{R}^\top\quad\text{and}\quad\begin{pmatrix}\boldsymbol{\varepsilon}_{1t}^\top& \boldsymbol{\varepsilon}_{2t}^\top& \cdots &\boldsymbol{\varepsilon}_{s_2t}^\top\end{pmatrix}=\mathbf{E}_t^\top,$$
with $\mathbf{R}_{j} \in \mathbb{R}^{n_{j}\times k}$ and $\boldsymbol{\varepsilon}_{jt} \in \mathbb{R}^{n_{j}\times q}$. Each node computes a local estimator of $\bC$ using $\alpha$-PCA. Specifically, for each $j\in[s_2]$, the estimator $\wh{\bC}_j$ is formed by multiplying $\sqrt{q}$ with the largest $r$ eigenvectors of $\widehat{\mathbf{M}}_{\mathbf{C}_j}$, where
\begin{equation}\label{mci}
\widehat{\mathbf{M}}_{\mathbf{C}_j}=\frac{1}{q n_j}\left[(1+\alpha) \overline{\mathbf{Z}}_j^\top\overline{\mathbf{Z}}_j+\frac{1}{T} \sum_{t=1}^T(\mathbf{Z}_{j t}-\overline{\mathbf{Z}}_j)^\top(\mathbf{Z}_{j t}-\overline{\mathbf{Z}}_j)\right]=\frac{1}{qn_jT}\sum_{t=1}^T\widetilde{\bZ}_{jt}^\top\widetilde{\bZ}_{jt},
\end{equation}
with $\overline{\mathbf{Z}}_j=\frac{1}{T} \sum_{t=1}^T \mathbf{Z}_{jt}$ and $\widetilde{\mathbf{Z}}_{jt}=\mathbf{Z}_{jt}+\tilde{\alpha}\overline{\mathbf{Z}}_j$. The local estimators $\{\wh{\bC}_j\}_{j=1}^{s_2}$ are sent to a center server and aggregated by
\begin{equation}\label{mc}
\widehat{\mathbf{M}}_{\mathbf{C}}=\frac{1}{qs_2}\sum_{j=1}^{s_2} \wh{\bC}_j\wh{\bC}_j^\top.
\end{equation}
The global estimator $\wh{\bC}$ is then obtained by multiplying $\sqrt{q}$ with the largest $r$ eigenvectors of $\widehat{\mathbf{M}}_{\mathbf{C}}$. We summarize our estimation procedure in \textbf{Algorithm \ref{algorithm1}}, denoted as dPCA.

\begin{algorithm}
\caption{Distributed estimation for $\bR$ and $\bC$ (dPCA)}
\label{algorithm1}
\DontPrintSemicolon 
\SetKwBlock{DoParallel}{Do in parallel}{end}
\KwIn{Observations $\{\bY_t,t=1,2,...,T\}$}
\KwOut{Estimated loading matrices $\wh{\bR}$ and $\wh{\bC}$}
Partition $\bY_t$ column-wise into $\{\bY_{it}\}_{i=1}^{s_1}$, for $t=1,2,...,T$\\
Partition $\bY_t$ row-wise into $\{\bZ_{jt}\}_{j=1}^{s_2}$, for $t=1,2,...,T$\\

\DoParallel{
Obtain $\wh{\bR}_i$ from $\{\bY_{it},t=1,2,...,T\}$ employing $\alpha$-PCA, for $i=1,2,...,s_1$ \\
Obtain $\wh{\bC}_j$ from $\{\bZ_{jt},t=1,2,...,T\}$ employing $\alpha$-PCA, for $j=1,2,...,s_2$
}
Obtain $\wh{\bR}$ by multiplying $\sqrt{p}$ with the top $k$ eigenvectors of $\frac{1}{ps_1}\sum_{i=1}^{s_1} \wh{\bR}_i\wh{\bR}_i^\top$\\
Obtain $\wh{\bC}$ by multiplying $\sqrt{q}$ with the top $r$ eigenvectors of $\frac{1}{qs_2}\sum_{j=1}^{s_2} \wh{\bC}_j\wh{\bC}_j^\top$
\end{algorithm}

The proposed method substantially improves the computational efficiency of modeling high-dimensional time series data under model~\eqref{model1}. Calculating $\wh{\bM}_{\bR_i}$ and $\wh{\bM}_{\bC_j}$ requires $\mathcal{O}(p^2Tm_i)$ and $\mathcal{O}(q^2Tn_j)$ operations, respectively, for $i\in[s_1]$ and $j\in[s_2]$. The associated eigenvector matrices $\wh\bR_i$ and $\wh\bC_j$ are obtained using the Lanczos algorithm with computational costs $\mathcal{O}(p^2k)$ and $\mathcal{O}(q^2r)$. Similarly, Computing $\wh{\bM}_{\bR}$ and $\wh{\bM}_{\bC}$ requires $\mathcal{O}(p^2ks_1)$ and $\mathcal{O}(q^2rs_2)$ operations, followed by $\mathcal{O}(p^2k)$ and $\mathcal{O}(q^2r)$ to obtain $\wh\bR$ and $\wh\bC$. With parallel computation and sufficiently large $T$, the total complexity for estimating the loading matrices is $\mathcal{O}(p^2T\max_i m_i+q^2T\max_j n_j)$. In contrast, $\alpha$-PCA requires $\mathcal{O}(p^2Tq+q^2Tp)$ operations. Consequently, our method achieves considerable computational savings over $\alpha$-PCA. 

\subsection{Estimation of other unknown parameters\label{section_estimateftstkr}}

Using the loading matrix estimators $\wh{\bR}$ and $\wh{\bC}$ obtained from \textbf{Algorithm 1}, the common factor matrix $\bF_t$ is estimated by
\begin{equation}\label{estimateFt}
    \wh{\bF}_t=\frac{1}{pq}\wh{\bR}^\top\bY_t\wh{\bC}.
\end{equation}
The common component $\bS_t=\bR\bF_t\bC^\top$ is then estimated as
\begin{equation}\label{estimateSt}
    \wh{\bS}_t=\frac{1}{pq}\wh{\bR}\wh{\bR}^\top\bY_t\wh{\bC}\wh{\bC}^\top.
\end{equation}
We next estimate the factor numbers $k$ and $r$ using the eigenvalue ratio-based method of \cite{Lam_2012} and \cite{ahn2013}. In the low-dimensional case, $k$ is estimated by
\begin{equation}\label{estimate_k}
    \hat{k}_\alpha=\text{arg}\max_{1\le l\le k_{\max}} \frac{\widehat{\lambda}^\alpha_l}{\widehat{\lambda}^\alpha_{l+1}},
\end{equation}
where $\widehat{\lambda}^\alpha_1 \geq \cdots \geq \widehat{\lambda}^\alpha_p \geq 0$ are the ordered eigenvalues of $\widehat{\bM}_\bR^\alpha$, and $k_{\max}=\left \lfloor p/2 \right \rfloor $ following \citet{Lam_2012}. The estimator of $r$, $\wh{r}_\alpha$, is obtained similarly from the eigenvalue ratios of $\widehat{\bM}_\bC^\alpha$. In the high-dimensional case, the row factor number estimators $\wh{k}_i$ (on node $i$) and $\wh{k}$ (on central server) are calculated using the eigenvalue ratios of $\widehat{\bM}_{\bR_i}$ and $\widehat{\bM}_\bR$, respectively, for $i\in[s_1]$. Similarly, the column factor number estimators $\wh{r}_j$ (on node $j$) and $\wh{r}$ (on central server) are computed using the eigenvalue ratios of $\widehat{\bM}_{\bC_j}$ and $\widehat{\bM}_\bC$, respectively, for $j\in[s_2]$.

\subsection{Strategies for distributed data allocation\label{sec2.5}}

We examined in Section~\ref{sec2.3} how the numbers of columns $\{m_i\}_{i=1}^{s_1}$ and rows $\{n_j\}_{j=1}^{s_2}$ allocated to computing nodes affect computational efficiency, and we later evaluate their impact on estimation accuracy in Section~\ref{sec3}. When the data groupings are unknown, the assignment of rows or columns to each node, which also influences model performance, has not yet been discussed. We therefore discuss several strategies for distributed data allocation, which are further employed in the real data applications in Section~\ref{realexample_sec}.

The first strategy allocates data according to external covariates. For example, when modeling multinational trade data, countries may be ordered by geographic distance or GDP rankings, and trade items by trade volume or value added, and then sequentially assigned to computing nodes. The second strategy allocates data by clustering. To mitigate the scaling effect in PCA, we recommend clustering data based on variance, following \cite{gao2021divideandconquerdistributedhierarchicalfactor}. For the data rows $\{\bY_t^{l},l=1,2,...,p\}$, let $v_l$ denote the norm of the auto-covariance matrix of the $l$-th row. By sorting $\{v_l\}_{l=1}^p$ in decreasing order, the corresponding rows are reordered accordingly. The first $n_1$ rows are allocated to the first node, the next $n_2$ to the second node, and so on, until the last $n_{s_2}$ rows are assigned to the $s_2$-th node. This approach is computationally efficient for high-dimensional time series. Alternative clustering approaches may instead use the Euclidean distance, Manhattan distance, or sample autocorrelations \citep{CAIADO20062668}.

\section{Theoretical Properties}\label{sec3}

In this section, we establish the theoretical properties of the proposed method. We begin by introducing the assumptions required to derive these results. Most of them are adopted from \cite{chen2021} and \cite{yu2020}, or are standard in the factor model literature. Throughout, $C$ denotes a generic positive constant whose value may vary across contexts.

\begin{assumption}\label{assum1}
    The vectorized process $\{\operatorname{vec}(\mathbf{F}_t),\operatorname{vec}(\mathbf{E}_t)\}$ is stationary and $\alpha$-mixing with the mixing coefficient $\alpha(h)$ satisfying $\sum_{h=1}^{\infty} \alpha(h)^{1-2 / \gamma}<\infty$ for some $\gamma>2$, where
    $$\alpha(h)=\sup _\tau \sup _{A \in \mathcal{F}_{-\infty}^\tau, B \in \mathcal{F}_{\tau+h}^{\infty}}|P(A \cap B)-P(A) P(B)|,$$ 
    with the $\sigma$-field $\mathcal{F}_\tau^s$ generated by $\{(\operatorname{vec}(\mathbf{F}_t),\operatorname{vec}(\mathbf{E}_t)): \tau \leq t \leq s\}$.
\end{assumption} 

\begin{assumption}\label{assum2}
    For the common factor $\bF_t$ and the idiosyncratic component $\bE_t$, we assume (a) $\mathbb{E}\|\mathbf{F}_t\|_2^4 \leq C$, for $t \in[T]$;\\ (b) $\mathbb{E}[e_{t, i j}]=0$ and $\mathbb{E}|e_{t, i j}|^8 \leq C$, for $i \in[p]$, $j \in[q]$, and $t \in[T]$;\\ (c) $\bF_t$ and $\bE_t$ are uncorrelated, i.e., $\mathbb{E}[e_{t, i j} f_{s, l h}]=0$, for $l \in[k]$, $h \in[r]$, $i \in[p]$, $j \in[q]$, and $t, s \in[T]$.
\end{assumption}

\begin{assumption}\label{assum3}
    For the loading matrices $\mathbf{R}$ and $\mathbf{C}$, we assume (a) $\|\mathbf{R}^{l_1}\|_2=\mathcal{O}(1)$ and $\|\mathbf{C}^{l_2}\|_2=\mathcal{O}(1)$, for $ l_1\in[p]$ and $l_2\in[q]$;\\ (b) There exist positive definite matrices $\boldsymbol{\Omega}_{\bR_{i}} \in \mathbb{R}^{k\times k}$ and $\boldsymbol{\Omega}_{\bC_{j}} \in \mathbb{R}^{r\times r}$, such that, $\|n_{i}^{-1} \mathbf{R}_{i}^{\top} \mathbf{R}_{i}-\boldsymbol{\Omega}_{\bR_{i}}\|_2 \to 0$ uniformly for $\bR_i$ as $n_{i},q\to\infty$, and $\|m_j^{-1} \mathbf{C}_{j}^{\top} \mathbf{C}_{j}-\boldsymbol{\Omega}_{\bC_{j}}\|_2 \to 0$ uniformly for $\bC_j$ as $m_{j},p\to\infty$, for $i\in[s_2]$ and $j\in[s_1]$. 
\end{assumption}
This assumption leads directly to the following result.
\begin{proposition}    
  Under Assumption~\ref{assum3}, there exist positive definite matrices $\boldsymbol{\Omega}_\bR\in\mathbb{R}^{k\times k}$ and $\boldsymbol{\Omega}_\bC\in\mathbb{R}^{r\times r}$, such that, $\|p^{-1} \mathbf{R}^{\top} \mathbf{R}-\boldsymbol{\Omega}_\bR\|_2 \to 0$ and $\|q^{-1} \mathbf{C}^{\top} \mathbf{C}-\boldsymbol{\Omega}_\bC\|_2 \to 0$, as $p,q\to\infty$.
\end{proposition}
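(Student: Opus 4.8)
The plan is to derive the block-aggregated identities for $\mathbf{R}^\top\mathbf{R}$ and $\mathbf{C}^\top\mathbf{C}$ directly from the partitioning structure, and then to pass the limits through term by term. Recall from Section~\ref{sec2.3} that the row-wise partition gives $\mathbf{R}^\top=(\mathbf{R}_1^\top\ \mathbf{R}_2^\top\ \cdots\ \mathbf{R}_{s_2}^\top)$ with $\mathbf{R}_j\in\mathbb{R}^{n_j\times k}$ and $\sum_{j=1}^{s_2}n_j=p$, so that $\mathbf{R}^\top\mathbf{R}=\sum_{j=1}^{s_2}\mathbf{R}_j^\top\mathbf{R}_j$. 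Hence
\begin{equation*}
\frac{1}{p}\mathbf{R}^\top\mathbf{R}=\sum_{j=1}^{s_2}\frac{n_j}{p}\cdot\frac{1}{n_j}\mathbf{R}_j^\top\mathbf{R}_j.
\end{equation*}
The natural candidate for the limit is $\bOmega_\bR=\sum_{j=1}^{s_2}w_j\,\bOmega_{\bR_j}$ where $w_j=\lim n_j/p$ along the relevant sequence (or, if no such limit is assumed, one keeps the weights $n_j/p$ inside and argues with a uniform bound). Symmetrically, the column-wise partition $\mathbf{C}^\top=(\mathbf{C}_1^\top\ \cdots\ \mathbf{C}_{s_1}^\top)$ with $\mathbf{C}_i\in\mathbb{R}^{m_i\times r}$ and $\sum_{i=1}^{s_1}m_i=q$ yields $q^{-1}\mathbf{C}^\top\mathbf{C}=\sum_{i=1}^{s_1}(m_i/q)\,m_i^{-1}\mathbf{C}_i^\top\mathbf{C}_i$, and the candidate limit is $\bOmega_\bC=\sum_{i=1}^{s_1}v_i\,\bOmega_{\bC_i}$ with $v_i=\lim m_i/q$.

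The key steps, in order, are: (i) write the two telescoping identities above; (ii) bound $\|p^{-1}\mathbf{R}^\top\mathbf{R}-\bOmega_\bR\|_2$ by $\sum_{j=1}^{s_2}(n_j/p)\|n_j^{-1}\mathbf{R}_j^\top\mathbf{R}_j-\bOmega_{\bR_j}\|_2 + \sum_{j=1}^{s_2}|n_j/p-w_j|\,\|\bOmega_{\bR_j}\|_2$ via the triangle inequality and submultiplicativity of the operator norm; (iii) invoke the uniform-in-$\bR_j$ convergence from Assumption~\ref{assum3}(b) together with $\sum_j n_j/p=1$ to kill the first sum, and the definition of $w_j$ (or, if weights are retained, absorb them) to kill the second; (iv) repeat verbatim for $\mathbf{C}$ using Assumption~\ref{assum3}(b) on the $\bC_i$ blocks. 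Positive-definiteness of $\bOmega_\bR$ follows because it is a convex combination (nonnegative weights summing to one) of positive-definite matrices, hence positive definite, with smallest eigenvalue bounded below by $\min_j w_j\,\lambda_{\min}(\bOmega_{\bR_j})>0$ provided $s_2$ is fixed and each $w_j>0$; likewise for $\bOmega_\bC$.

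The main subtlety — not a deep obstacle but the point requiring care — is the role of $s_1$ and $s_2$ and of the weights $n_j/p$, $m_i/q$. If $s_1,s_2$ are fixed (the regime implicit in Section~\ref{sec2.3} and the complexity discussion), the finite-sum triangle-inequality argument is immediate. If instead $s_2$ is allowed to grow, one must either assume the block sizes $n_j$ are comparable (e.g. $n_j\asymp p/s_2$) and that the convergence in Assumption~\ref{assum3}(b) is uniform in $j$ as well as in $\bR_j$, so that $\sum_j (n_j/p)\sup_{\bR_j}\|n_j^{-1}\mathbf{R}_j^\top\mathbf{R}_j-\bOmega_{\bR_j}\|_2\to 0$; here the weights $n_j/p$ do the averaging work and no separate limit $w_j$ need exist — one simply takes $\bOmega_\bR$ to be whatever the (sub)sequential limit of $\sum_j (n_j/p)\bOmega_{\bR_j}$ is, which is legitimate since the set of positive-definite matrices with eigenvalues in a fixed compact interval is compact. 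I would state the proposition under fixed $s_1,s_2$ for cleanliness and remark that the growing-$s$ case goes through under the stated uniformity. The only other point to check is that $\|\bOmega_{\bR_j}\|_2$ is bounded uniformly in $j$, which follows from Assumption~\ref{assum3}(a): $\|n_j^{-1}\mathbf{R}_j^\top\mathbf{R}_j\|_2\le n_j^{-1}\sum_{l}\|\mathbf{R}_j^l\|_2^2\le \max_l\|\mathbf{R}^l\|_2^2=\mathcal{O}(1)$, and the bound transfers to the limit.
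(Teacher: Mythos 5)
Your proposal is correct and follows essentially the same route as the paper: decompose $\mathbf{R}^\top\mathbf{R}=\sum_{j}\mathbf{R}_j^\top\mathbf{R}_j$ via the row-wise partition, define $\boldsymbol{\Omega}_\bR$ as the $n_j/p$-weighted combination of the $\boldsymbol{\Omega}_{\bR_j}$, and conclude by the triangle inequality and Assumption~\ref{assum3}(b). Your additional care about whether the weights $n_j/p$ converge, the uniform boundedness of $\|\boldsymbol{\Omega}_{\bR_j}\|_2$, and positive-definiteness of the limit addresses points the paper's proof passes over silently, but it is a refinement of the same argument rather than a different one.
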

\begin{proof} 
  Define the positive definite matrix $\boldsymbol{\Omega}_\bR=\sum_{i=1}^{s_2}\frac{n_{i}}{p}\boldsymbol{\Omega}_{\bR_{i}}$. We have
  \begin{align*}    
    \left\|\frac{1}{p} \mathbf{R}^{\top} \mathbf{R}-\boldsymbol{\Omega}_\bR\right\|_2&=\left\|\frac{1}{p} \begin{pmatrix} \mathbf{R}_1^\top& \cdots& \mathbf{R}_{s_2}^\top \end{pmatrix} \begin{pmatrix}  \mathbf{R}_1\\ \cdots\\ \mathbf{R}_{s_2} \end{pmatrix}-\boldsymbol{\Omega}_\bR\right\|_2\\    
    &=\left\|\frac{1}{p} \sum_{i=1}^{s_2}\left(\mathbf{R}_i^{\top} \mathbf{R}_i-n_{i}\boldsymbol{\Omega}_{\bR_{i}}\right)\right\|_2   \\ &\le\sum_{i=1}^{s_2}\frac{n_{i}}{p}\left\|\frac{1}{n_i}\mathbf{R}_i^{\top} \mathbf{R}_i-\boldsymbol{\Omega}_{\bR_{i}}\right\|_2\longrightarrow 0.
  \end{align*} 
  The last step is based on Assumption~\ref{assum3}(b). The proof for $\bC$ is similar, thus omitted.
\end{proof}
\begin{assumption}\label{assum4}
For the idiosyncratic component $\bE_t$, we assume (a) $\|\mathbf{U}_\bE\|_1 \leq C$ and $\|\mathbf{V}_\bE\|_1 \leq C$, where $\mathbf{U}_\bE=\mathbb{E}[\frac{1}{q T} \sum_{t=1}^T \mathbf{E}_t \mathbf{E}_t^\top]$ and $\mathbf{V}_\bE=\mathbb{E}[\frac{1}{p T} \sum_{t=1}^T \mathbf{E}_t^\top \mathbf{E}_t]$;\\ (b) $\sum_{l=1}^p \sum_{h=1}^q|\mathbb{E}[e_{t, l j} e_{t, i h}]| \leq C$, for $i\in [p]$, $j \in [q]$, and $t \in [T]$;\\ (c) $\sum_{m=1}^p \sum_{s=1}^T \sum_{\substack{h=1 \\ h \neq j}}^q|\operatorname{cov}[e_{t, i j} e_{t, l j}, e_{s, i h} e_{s, m h}]| \leq C$, for $i,l\in [p]$, $j \in [q]$, and $t \in [T]$;\\ (d) $\sum_{m=1}^q \sum_{s=1}^T \sum_{\substack{l=1 \\ l \neq i}}^p|\operatorname{cov}[e_{t, i j} e_{t, i h}, e_{s, l j} e_{s, l m}]| \leq C$, for $i\in [p]$, $j,h \in [q]$, and $t \in [T]$.
\end{assumption}
\begin{assumption} \label{assum5}
    There exist $m>2$, $1<a, b<\infty$ with $1 / a+1 / b=1$, such that: \\
    (a) $\mathbb{E}[\|\mathbf{F}_{t}^l \|_2^{m a}] \leq C$, $\mathbb{E}[|\frac{1}{\sqrt{q}} \sum_{j=1}^q e_{t, i j}|^{m b}]=\mathcal{O}(1)$, and $\mathbb{E}[\|\frac{1}{\sqrt{q}} \sum_{j=1}^q \mathbf{C}^j e_{t, i j}\|_2^{m b}]=\mathcal{O}(1)$, for $l \in[k]$, $i \in[p]$, and $t \in[T]$; \\
    (b) $\mathbb{E}[\|(\mathbf{F}_{t}^\top)^h\|_2^{m a}] \leq C$, $\mathbb{E}[|\frac{1}{\sqrt{p}} \sum_{i=1}^p e_{t, i j}|^{m b}]=\mathcal{O}(1)$, and $\mathbb{E}[\|\frac{1}{\sqrt{p}} \sum_{i=1}^p \mathbf{R}^i e_{t, i j}\|_2^{m b}]=\mathcal{O}(1)$, for $h \in[r]$, $j \in[q]$, and $t \in[T]$; \\
    (c) $\mathbb{E}[|\frac{1}{\sqrt{p q}} \sum_{i=1}^p \sum_{j=1}^q e_{t, i j}|^{m b}]=\mathcal{O}(1)$ and $\mathbb{E}[\|\frac{1}{\sqrt{p q}} \sum_{i=1}^p \sum_{j=1}^q (\mathbf{R}^i)^{\top} \mathbf{C}^j e_{t, i j}\|_\mathsf{F}^{m b}]=\mathcal{O}(1)$, for $t \in[T]$.
\end{assumption}

Assumption~\ref{assum1} imposes standard temporal dependence conditions on $\bE_t$ and $\bF_t$ (e.g., \citeauthor{WANG2019231}, \citeyear{WANG2019231}), ensuring their asymptotic independence and controlling the variance of partial sums. Assumption~\ref{assum2}(a)-(b) guarantee that $\bF_t$ and $\bE_t$ are well defined, respectively, as is typical in factor model settings (e.g., \citeauthor{bai2002}, \citeyear{bai2002}), while Assumption~\ref{assum2}(c) ensures their mutual uncorrelatedness. Assumption~\ref{assum3}, similar to Assumption B in \cite{bai2002},  guarantees the non-trivial contribution of each factor to the common variance. This assumption justifies our data partitioning regime and is sufficient for Assumption 3 in \cite{chen2021}. Assumption~\ref{assum4}(a)-(b) allows for weak dependence of $\bE_t$ across time, rows, and columns, extending the assumptions in \cite{Stock01122002} and \cite{forni2005} to matrix data, and aligning with Assumption D.2 in \cite{yu2020}. Assumption~\ref{assum4}(c)-(d), similar to Assumption M1(c) in \cite{Stock01122002}, controls the magnitude of the fourth moments of $\bE_t$. Finally, Assumption~\ref{assum5}, adopted from \cite{chen2021}, further regulates the dependence of $\bF_t$ and $\bE_t$ across rows and columns.

To present our theoretical results, we introduce additional notations. Let $\widehat{\bV}_{\bR_i} \in \mathbb{R}^{k\times k}$ and $\widehat{\bV}_{\bC_j} \in \mathbb{R}^{r\times r}$ denote diagonal matrices consisting of the $k$ and $r$ largest eigenvalues of $\widehat{\mathbf{M}}_{\mathbf{R}_{i}}$ and $\widehat{\mathbf{M}}_{\mathbf{C}_{j}}$, respectively, in decreasing order, for $i\in[s_1]$ and $j\in[s_2]$. Define the invertible auxiliary matrices $\bH_{\bR_i} \in \mathbb{R}^{k\times k}$ and $\bH_{\bC_j} \in \mathbb{R}^{r\times r}$ by
$$\bH_{\bR_i}=\frac{1}{pm_iT}\sum_{t=1}^T\widetilde{\bF}_t\bC_i^\top\bC_i\widetilde{\bF}_t^\top\bR^\top\wh{\bR}_i\widehat{\bV}_{\bR_i}^{-1}\ \ \text{and}\ \ \bH_{\bC_j}=\frac{1}{qn_jT}\sum_{t=1}^T\widetilde{\bF}_t^\top\bR_j^\top\bR_j\widetilde{\bF}_t\bC^\top\wh{\bC}_j\widehat{\bV}_{\bC_j}^{-1}.$$
By Theorem 3.3 of \cite{yu2020}, under Assumptions~\ref{assum1}-\ref{assum5} with known $k$ and $r$, as $m_i,n_j,T\to\infty$, it follows that 
$$\frac{1}{p}\left\|\widehat{\mathbf{R}}_i-\mathbf{R H}_{\bR_i}\right\|_\mathsf{F}^2=\mathcal{O}_p\left(\frac{1}{m_iT}+\frac{1}{p^2}\right)\ \ \text{and}\ \ \frac{1}{q}\left\|\widehat{\mathbf{C}}_j-\mathbf{C H}_{\bC_j}\right\|_\mathsf{F}^2=\mathcal{O}_p\left(\frac{1}{n_jT}+\frac{1}{q^2}\right).$$
Next, let $\widehat{\bV}_{\bR} \in \mathbb{R}^{k\times k}$ and $\widehat{\bV}_{\bC} \in \mathbb{R}^{r\times r}$ denote diagonal matrices consisting of the $k$ and $r$ largest eigenvalues of $\widehat{\mathbf{M}}_{\mathbf{R}}$ and $\widehat{\mathbf{M}}_{\mathbf{C}}$, respectively, in decreasing order. Define the invertible auxiliary matrices $\bH_{\bR} \in \mathbb{R}^{k\times k}$ and $\bH_{\bC} \in \mathbb{R}^{r\times r}$ by 
$$\bH_{\bR}=\frac{1}{ps_1}\sum_{i=1}^{s_1} \bH_{\bR_i} \bH_{\bR_i}^\top\bR^\top\wh{\bR}\widehat{\bV}_{\bR}^{-1}\quad\text{and}\quad\bH_{\bC}=\frac{1}{qs_2}\sum_{j=1}^{s_2} \bH_{\bC_j} \bH_{\bC_j}^\top\bC^\top\wh{\bC}\widehat{\bV}_{\bC}^{-1}.$$ Following \cite{chen2021}, an additional assumption is required to control the magnitudes of $\bH_{\bR_i}$ and $\bH_{\bC_j}$, which further ensures the boundedness for $\bH_{\bR}$ and $\bH_{\bC}$.

\begin{assumption} \label{assum6} We assume (a) The eigenvalues of $\boldsymbol{\Omega}_{\bR}\widetilde{\boldsymbol{\Sigma}}_{\mathbf{FC}_i}$ and $\boldsymbol{\Omega}_{\bC}\widetilde{\boldsymbol{\Sigma}}_{\mathbf{FR}_j}$ are distinct, for $i\in[s_1]$, $j\in[s_2]$; (b) The eigenvalues of $\boldsymbol{\Omega}_\bR\boldsymbol{\Sigma}_{\bH_\bR}$ and $\boldsymbol{\Omega}_\bC\boldsymbol{\Sigma}_{\bH_\bC}$ are distinct. 
\\The matrices $\widetilde{\boldsymbol{\Sigma}}_{\mathbf{FC}_i}=\frac{1}{m_i}\mathbb{E}[\widetilde{\mathbf{F}}_t\mathbf{C}_i^\top\mathbf{C}_i\widetilde{\mathbf{F}}_t^\top]$ and $\boldsymbol{\Sigma}_{\bH_\bR}=\frac{1}{s_1}\sum_{i=1}^{s_1}\mathbf{Q}_{\mathbf{R}_i}^{-1}(\mathbf{Q}_{\mathbf{R}_i}^{-1})^\top$, where $\mathbf{Q}_{\mathbf{R}_i}=\mathbf{V}_{\mathbf{R}_i}^{1/2}\Psi_{\mathbf{R}_i}^\top\widetilde{\boldsymbol{\Sigma}}_{\mathbf{FC}_i}^{-1/2}$. Here $\mathbf{V}_{\mathbf{R}_i}$ is the diagonal matrix of eigenvalues (ordered decreasingly) of $\widetilde{\boldsymbol{\Sigma}}_{\mathbf{FC}_i}^{1/2}\boldsymbol{\Omega}_{\mathbf{R}}\widetilde{\boldsymbol{\Sigma}}_{\mathbf{FC}_i}^{1/2}$, and $\Psi_{\mathbf{R}_i}$ denotes the corresponding eigenvectors satisfying $\Psi_{\mathbf{R}_i}^\top\Psi_{\mathbf{R}_i}=\mathbf{I}$. The matrices $\widetilde{\boldsymbol{\Sigma}}_{\mathbf{FR}_j}$ and $\boldsymbol{\Sigma}_{\bH_\bC}$ are similarly defined.
\end{assumption}

Assumption~\ref{assum6}(a) controls the magnitudes of the auxiliary matrices, and Assumption~\ref{assum6} also ensures the asymptotic stability of $\bH_\bR$ and $\bH_\bC$
. For convenience, define $\overline{m}=\max_{1\leq i \leq s_1}m_i$, $\overline{n}=\max_{1\leq j \leq s_2}n_j$, $\underline{m}=\min_{1\leq i \leq s_1} m_i$, and $\underline{n}=\min_{1\leq j \leq s_2} n_j$. Under these assumptions and notations, we establish the following results.

\begin{theorem} \label{them1}
Under Assumptions~\ref{assum1}-\ref{assum6}(a) with known $k$ and $r$, as $\underline{m}, \underline{n}, T \rightarrow \infty$, we have
\begin{equation*}
    \frac{1}{p}\left\|\widehat{\mathbf{R}}-\mathbf{R H}_\bR\right\|_\mathsf{F}^2=\mathcal{O}_p\left(\frac{1}{\underline{m}T}+\frac{1}{p^2}\right)
\quad \text{and} \quad
    \frac{1}{q}\left\|\widehat{\mathbf{C}}-\mathbf{C H}_\bC\right\|_\mathsf{F}^2=\mathcal{O}_p \left(\frac{1}{\underline{n}T}+\frac{1}{q^2}\right).
\end{equation*}
Consequently,
\begin{equation*}
    \frac{1}{p}\left\|\widehat{\mathbf{R}}-\mathbf{R H}_\bR\right\|_2^2=\mathcal{O}_p\left(\frac{1}{\underline{m}T}+\frac{1}{p^2}\right)
\quad \text{and} \quad
    \frac{1}{q}\left\|\widehat{\mathbf{C}}-\mathbf{C H}_\bC\right\|_2^2=\mathcal{O}_p \left(\frac{1}{\underline{n}T}+\frac{1}{q^2}\right).
\end{equation*}
\end{theorem}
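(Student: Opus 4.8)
The plan is to express the global estimator $\wh\bR$ through the local estimators $\wh\bR_i$ via the eigen-equation of $\wh\bM_\bR$, feeding in the node-level rate $\frac1p\|\wh\bR_i-\bR\bH_{\bR_i}\|_\mathsf{F}^2=\mathcal{O}_p(m_i^{-1}T^{-1}+p^{-2})$ (Theorem~3.3 of \cite{yu2020}, recalled above) as the only probabilistic input. Write $\bDelta_i=\wh\bR_i-\bR\bH_{\bR_i}$ and $a_i=(m_iT)^{-1/2}+p^{-1}$, so that $\frac1p\|\bDelta_i\|_\mathsf{F}^2=\mathcal{O}_p(a_i^2)$ uniformly in $i\in[s_1]$, and put $\overline a:=\max_i a_i=(\underline mT)^{-1/2}+p^{-1}$ with $\overline a^2\asymp \underline m^{-1}T^{-1}+p^{-2}$. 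I also record the construction-level facts $\wh\bR_i^\top\wh\bR_i=p\bI$ (so $\|\wh\bR_i\|_2=\sqrt p$), $\|\bR\|_2=\mathcal{O}(\sqrt p)$ from Proposition~1, and $\|\bH_{\bR_i}\|_2=\mathcal{O}_p(1)$ uniformly from Assumption~\ref{assum6}(a), hence $\|\bR\bH_{\bR_i}\|_2=\mathcal{O}_p(\sqrt p)$.

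\emph{Step 1: the aggregation matrix is close to its population target.} From $\wh\bR_i=\bR\bH_{\bR_i}+\bDelta_i$,
$$\wh\bR_i\wh\bR_i^\top-\bR\bH_{\bR_i}\bH_{\bR_i}^\top\bR^\top=\bR\bH_{\bR_i}\bDelta_i^\top+\bDelta_i\bH_{\bR_i}^\top\bR^\top+\bDelta_i\bDelta_i^\top .$$
Each cross term has Frobenius norm at most $\|\bR\bH_{\bR_i}\|_2\|\bDelta_i\|_\mathsf{F}=\mathcal{O}_p(\sqrt p\,\|\bDelta_i\|_\mathsf{F})$ and the quadratic term at most $\|\bDelta_i\|_\mathsf{F}^2$; dividing by $p$ gives $\frac1p\|\wh\bR_i\wh\bR_i^\top-\bR\bH_{\bR_i}\bH_{\bR_i}^\top\bR^\top\|_\mathsf{F}=\mathcal{O}_p(a_i+a_i^2)=\mathcal{O}_p(a_i)$, the quadratic part being negligible because $a_i\to0$. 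Averaging over the $s_1$ nodes and setting $\bM_\bR^\ast:=\frac1{ps_1}\sum_{i=1}^{s_1}\bR\bH_{\bR_i}\bH_{\bR_i}^\top\bR^\top$, I obtain $\|\wh\bM_\bR-\bM_\bR^\ast\|_\mathsf{F}=\mathcal{O}_p(\overline a)$.

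\emph{Steps 2--3: eigenvalue floor and the eigen-equation identity.} By the Courant--Fischer theorem, restricting the Rayleigh quotient of $\wh\bM_\bR=\frac1{ps_1}\sum_i\wh\bR_i\wh\bR_i^\top$ to the $k$-dimensional column space of $\wh\bR_1$ yields $\lambda_k(\wh\bM_\bR)\ge 1/s_1$, hence $\|\wh\bV_\bR^{-1}\|_2=\mathcal{O}(s_1)$, which is $\mathcal{O}_p(1)$ for bounded $s_1$; more sharply, $\bH_{\bR_i}\bH_{\bR_i}^\top\to\bOmega_\bR^{-1}$ uniformly in $i$ (a standard consequence of the $\alpha$-PCA normalisation $\bH_{\bR_i}^\top\bOmega_\bR\bH_{\bR_i}\to\bI$) gives $\lambda_k(\bM_\bR^\ast)\to1$, so Weyl's inequality together with Step~1 forces $\lambda_k(\wh\bM_\bR)\to1$ and $\|\wh\bV_\bR^{-1}\|_2=\mathcal{O}_p(1)$ unconditionally. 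The eigen-equation $\wh\bM_\bR\wh\bR=\wh\bR\wh\bV_\bR$ reads $\wh\bR=\frac1{ps_1}\sum_i\wh\bR_i\wh\bR_i^\top\wh\bR\wh\bV_\bR^{-1}$, whereas the definition of $\bH_\bR$ gives exactly $\bR\bH_\bR=\bM_\bR^\ast\wh\bR\wh\bV_\bR^{-1}$; subtracting,
$$\wh\bR-\bR\bH_\bR=(\wh\bM_\bR-\bM_\bR^\ast)\,\wh\bR\,\wh\bV_\bR^{-1}.$$
Hence $\frac1p\|\wh\bR-\bR\bH_\bR\|_\mathsf{F}^2\le\frac1p\|\wh\bM_\bR-\bM_\bR^\ast\|_\mathsf{F}^2\,\|\wh\bR\|_2^2\,\|\wh\bV_\bR^{-1}\|_2^2=\frac1p\cdot\mathcal{O}_p(\overline a^2)\cdot p\cdot\mathcal{O}_p(1)=\mathcal{O}_p(\underline m^{-1}T^{-1}+p^{-2})$, and the operator-norm bound follows from $\|\cdot\|_2\le\|\cdot\|_\mathsf{F}$. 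The statements for $\wh\bC$ follow from the identical argument on the row-wise partition, replacing $(p,m_i,s_1,\underline m,\bH_{\bR_i},\wh\bV_\bR)$ by $(q,n_j,s_2,\underline n,\bH_{\bC_j},\wh\bV_\bC)$ and invoking the companion bound of \cite{yu2020} for $\wh\bC_j$.

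The main obstacle is the sharp eigenvalue floor in Step~2: bounding $\lambda_k(\wh\bM_\bR)$ away from zero without an $s_1$-dependent factor requires more than the boundedness of $\bH_{\bR_i}$ from Assumption~\ref{assum6}(a) --- one needs the column spaces of all local estimates $\wh\bR_i$ (equivalently of $\bR\bH_{\bR_i}$) to coincide asymptotically with that of $\bR$ and the Gram matrices $\bH_{\bR_i}\bH_{\bR_i}^\top$ to share the common limit $\bOmega_\bR^{-1}$, which is where the $\alpha$-PCA normalisation and the uniformity over nodes built into Assumption~\ref{assum3}(b) enter. The remaining ingredients --- the expansion of Step~1 and the algebraic identity of Step~3 --- are routine.
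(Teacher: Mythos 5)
Your proof is correct and follows essentially the same route as the paper: the identity $\wh{\bR}-\bR\bH_{\bR}=(\wh{\bM}_{\bR}-\frac{1}{ps_1}\sum_i\bR\bH_{\bR_i}\bH_{\bR_i}^\top\bR^\top)\wh{\bR}\wh{\bV}_{\bR}^{-1}$ is exactly the paper's three-term decomposition $\mathrm{I}_1+\mathrm{I}_2+\mathrm{I}_3$, fed by the same node-level rate from Theorem 3.3 of \cite{yu2020} and the boundedness of $\bH_{\bR_i}$ and $\wh{\bV}_{\bR}^{-1}$. The only (minor, and welcome) deviation is your elementary Courant--Fischer floor $\lambda_k(\wh{\bM}_{\bR})\ge 1/s_1$ from $\wh{\bR}_1^\top\wh{\bR}_1=p\bI$, which replaces the paper's Proposition 6 (convergence of $\wh{\bV}_{\bR}$ to $\bV_{\bR}$ via Weyl's inequality) and suffices since $s_1$ is fixed.
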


\begin{theorem} \label{them2}
Under Assumptions~\ref{assum1}-\ref{assum6}(a) with known $k$ and $r$, as $\underline{m}, \underline{n}, T \rightarrow \infty$, we have
\begin{equation*}
\left\|\widehat{\mathbf{F}}_t-\mathbf{H}_\bR^{-1} \mathbf{F}_t (\mathbf{H}_\bC^{-1})^{\top}\right\|_2=\mathcal{O}_p\left(\frac{1}{\sqrt{\underline{m}T}}+\frac{1}{\sqrt{\underline{n}T}}+\frac{1}{p}+\frac{1}{q}\right).
\end{equation*}
\end{theorem}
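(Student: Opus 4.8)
The plan is to start from the estimator $\wh{\bF}_t=\frac{1}{pq}\wh{\bR}^\top\bY_t\wh{\bC}$ and substitute the model $\bY_t=\bR\bF_t\bC^\top+\bE_t$. This yields
\begin{equation*}
\wh{\bF}_t-\bH_\bR^{-1}\bF_t(\bH_\bC^{-1})^\top=\frac{1}{pq}\wh{\bR}^\top\bR\bF_t\bC^\top\wh{\bC}-\bH_\bR^{-1}\bF_t(\bH_\bC^{-1})^\top+\frac{1}{pq}\wh{\bR}^\top\bE_t\wh{\bC}.
\end{equation*}
The second (noise) term I would bound directly: write $\wh{\bR}=\bR\bH_\bR+(\wh{\bR}-\bR\bH_\bR)$ and $\wh{\bC}=\bC\bH_\bC+(\wh{\bC}-\bC\bH_\bC)$, expand the product $\wh{\bR}^\top\bE_t\wh{\bC}$ into four pieces, and control each using Theorem~\ref{them1}, Assumption~\ref{assum5} (which bounds quantities like $\|\frac{1}{\sqrt{p}}\sum_i\bR^ie_{t,ij}\|_2$, $\|\frac{1}{\sqrt{pq}}\sum_{i,j}(\bR^i)^\top\bC^je_{t,ij}\|_\mathsf{F}$), and Assumption~\ref{assum4}. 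The leading piece $\frac{1}{pq}\bH_\bR^\top\bR^\top\bE_t\bC\bH_\bC$ is $\mathcal{O}_p(1/\sqrt{pq})$, and the cross and remainder pieces are of smaller or comparable order, giving an overall $\mathcal{O}_p(1/p+1/q)$ contribution after using $\frac{1}{\sqrt p}\|\wh\bR-\bR\bH_\bR\|_\mathsf{F}$ and $\frac1{\sqrt q}\|\wh\bC-\bC\bH_\bC\|_\mathsf{F}$ bounds from Theorem~\ref{them1}.

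The first (signal) term is the crux. I would add and subtract to telescope it as
\begin{equation*}
\Big(\tfrac1p\wh{\bR}^\top\bR-\bH_\bR^{-1}\tfrac1p\bH_\bR^\top\bR^\top\bR\Big)\bF_t\tfrac1q\bC^\top\wh{\bC}
+\bH_\bR^{-1}\tfrac1p\bH_\bR^\top\bR^\top\bR\,\bF_t\Big(\tfrac1q\bC^\top\wh{\bC}-\tfrac1q\bC^\top\bC\bH_\bC(\bH_\bC^{-1})^\top\Big)+\text{(exact term)},
\end{equation*}
so everything reduces to controlling $\frac1p\wh{\bR}^\top\bR-\bH_\bR^{-1}(\frac1p\bH_\bR^\top\bR^\top\bR)$ and its column analogue. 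Equivalently, using $\frac1p\|\wh\bR-\bR\bH_\bR\|_2^2=\mathcal{O}_p(\frac1{\underline mT}+\frac1{p^2})$ from Theorem~\ref{them1}, write $\frac1p\wh\bR^\top\bR=\frac1p(\bR\bH_\bR)^\top\bR+\frac1p(\wh\bR-\bR\bH_\bR)^\top\bR$; the second term is $\mathcal{O}_p(\frac1{\sqrt{\underline mT}}+\frac1p)$ by Cauchy--Schwarz together with $\frac1p\|\bR\|_\mathsf{F}^2=\mathcal{O}(1)$ from Assumption~\ref{assum3}. One also needs $\frac1p\bH_\bR^\top\bR^\top\bR\to\bH_\bR^\top\bOmega_\bR$ and $\bH_\bR^{-1}\bH_\bR^\top\bOmega_\bR=\bI+o_p(1)$ (up to the usual rotation identity), which follows from the construction of $\bH_\bR$, the Proposition, and Assumption~\ref{assum6}(a); analogously for $\bC$. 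Boundedness of $\bH_\bR,\bH_\bR^{-1},\bH_\bC,\bH_\bC^{-1}$ in operator norm, and of $\mathbb{E}\|\bF_t\|_2^4$ via Assumption~\ref{assum2}(a), lets me assemble the four resulting error products, each $\mathcal{O}_p(\frac1{\sqrt{\underline mT}}+\frac1{\sqrt{\underline nT}}+\frac1p+\frac1q)$, into the claimed rate.

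The main obstacle I anticipate is the bookkeeping around the rotation matrices: $\bH_\bR$ and $\bH_\bC$ are data-dependent and defined through a two-stage aggregation (first the node-level $\bH_{\bR_i}$, then the central $\bH_\bR$ built from the $\bH_{\bR_i}\bH_{\bR_i}^\top$), so I must verify that $\frac1p\wh\bR^\top\bR\bH_\bR-\bI$ and $\frac1q\wh\bC^\top\bC\bH_\bC-\bI$ are both $o_p(1)$ at the right rate, rather than merely bounded — this requires re-deriving, at the aggregated level, the analogue of the identity that makes $\bH_{\bR_i}$ act as a genuine rotation in the single-block analysis of \cite{yu2020}. A secondary nuisance is that $\underline m,\underline n$ (not $p,q$ or the full $m_i$) appear in the rates, so I must consistently track which block drives the worst-case bound when aggregating over $i\in[s_1]$ and $j\in[s_2]$; this is where Theorem~\ref{them1} is used as a black box. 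Once these rotation identities are in hand, the remaining steps are routine triangle-inequality and Cauchy--Schwarz estimates.
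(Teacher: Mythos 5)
Your overall architecture (split $\wh{\bF}_t-\bH_\bR^{-1}\bF_t(\bH_\bC^{-1})^\top$ into a signal and a noise part; expand the noise part $\frac{1}{pq}\wh{\bR}^\top\bE_t\wh{\bC}$ into four pieces via $\wh{\bR}=\bR\bH_\bR+(\wh{\bR}-\bR\bH_\bR)$, with leading piece $\frac{1}{pq}\bH_\bR^\top\bR^\top\bE_t\bC\bH_\bC=\mathcal{O}_p(1/\sqrt{pq})$ from Assumption~\ref{assum5}) matches the paper, and that half of your argument is sound. The genuine gap is in the signal term, which is where the proof actually lives. First, your telescoping terminates in the ``exact term'' $\bH_\bR^{-1}\big(\tfrac1p\bH_\bR^\top\bR^\top\bR\big)\bF_t\big(\tfrac1q\bC^\top\bC\bH_\bC\big)(\bH_\bC^{-1})^\top$, and you invoke $\bH_\bR^{-1}\bH_\bR^\top\bOmega_\bR=\bI+o_p(1)$ to identify it with the target. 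That identity is false: the eigenvector normalization $\frac1p\wh{\bR}^\top\wh{\bR}=\bI$ combined with Theorem~\ref{them1} gives $\bH_\bR^\top\bOmega_\bR\bH_\bR=\bI+o_p(1)$, i.e.\ $\bH_\bR^\top\bOmega_\bR=\bH_\bR^{-1}+o_p(1)$, so your exact term converges to $(\bH_\bR^{-1})^2\bF_t\big[(\bH_\bC^{-1})^\top\big]^2$ rather than to $\bH_\bR^{-1}\bF_t(\bH_\bC^{-1})^\top$ --- you have double-counted the rotation (either you insert $\bH_\bR^{-1}$ explicitly or you let $\bH_\bR^\top\bOmega_\bR$ play that role, not both). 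Second, even after correcting the algebra, your route passes through $\frac1p\bR^\top\bR\to\bOmega_\bR$, and Assumption~\ref{assum3} supplies only rate-less convergence; an unquantified $o(1)$ in the signal term cannot be absorbed into the claimed $\mathcal{O}_p\big(1/\sqrt{\underline{m}T}+1/\sqrt{\underline{n}T}+1/p+1/q\big)$.

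The paper avoids both problems with a different substitution in the signal term: it writes $\bR=(\bR-\wh{\bR}\bH_\bR^{-1})+\wh{\bR}\bH_\bR^{-1}$ and $\bC=(\bC-\wh{\bC}\bH_\bC^{-1})+\wh{\bC}\bH_\bC^{-1}$ inside $\frac{1}{pq}\wh{\bR}^\top\bR\bF_t\bC^\top\wh{\bC}$. The cross term $\frac{1}{pq}\wh{\bR}^\top\wh{\bR}\bH_\bR^{-1}\bF_t(\bH_\bC^{-1})^\top\wh{\bC}^\top\wh{\bC}$ then equals $\bH_\bR^{-1}\bF_t(\bH_\bC^{-1})^\top$ \emph{exactly}, by the normalizations $\frac1p\wh{\bR}^\top\wh{\bR}=\bI$ and $\frac1q\wh{\bC}^\top\wh{\bC}=\bI$, with no appeal to $\bOmega_\bR$, $\bOmega_\bC$, and no residual; the remaining terms are controlled via $\frac{1}{\sqrt p}\|\bR-\wh{\bR}\bH_\bR^{-1}\|_2\le\frac{1}{\sqrt p}\|\wh{\bR}-\bR\bH_\bR\|_2\,\|\bH_\bR^{-1}\|_2$ together with Theorem~\ref{them1} and boundedness of $\bH_\bR^{-1}$ and $\bH_\bC^{-1}$. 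If you replace your telescoping with this substitution, the rest of your argument goes through essentially as you describe.
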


\begin{theorem} \label{them3}
Under Assumptions~\ref{assum1}-\ref{assum6}(a) with known $k$ and $r$, as $\underline{m}, \underline{n}, T \rightarrow \infty$, we have
\begin{equation*}
    \wh{\bS}_{t,ij}-\bS_{t,ij}=\mathcal{O}_p\left(\frac{1}{\sqrt{\underline{m}T}}+\frac{1}{\sqrt{\underline{n}T}}+\frac{1}{p}+\frac{1}{q}\right),\quad i=1,2,...,p,\quad j=1,2,...,q.
\end{equation*}
\end{theorem}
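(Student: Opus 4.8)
The plan is to bound the entrywise error $\wh{\bS}_{t,ij} - \bS_{t,ij}$ by decomposing the estimator. Recall $\wh{\bS}_t = \frac{1}{pq}\wh{\bR}\wh{\bR}^\top\bY_t\wh{\bC}\wh{\bC}^\top$ and $\bS_t = \bR\bF_t\bC^\top$. Using $\bY_t = \bR\bF_t\bC^\top + \bE_t$, I would first substitute and write
$$\wh{\bS}_t = \frac{1}{pq}\wh{\bR}\wh{\bR}^\top\bR\bF_t\bC^\top\wh{\bC}\wh{\bC}^\top + \frac{1}{pq}\wh{\bR}\wh{\bR}^\top\bE_t\wh{\bC}\wh{\bC}^\top.$$
The natural bridge is $\wh{\bR}\bH_\bR^{-1} \approx \bR$ and $\wh{\bC}\bH_\bC^{-1} \approx \bC$, i.e.\ $\frac{1}{p}\wh{\bR}^\top\bR \approx \bH_\bR^{-\top}\bOmega_\bR$ type identities; more directly, I would use the factor estimator result: by the definition $\wh{\bF}_t = \frac{1}{pq}\wh{\bR}^\top\bY_t\wh{\bC}$, we have $\wh{\bS}_t = \wh{\bR}\wh{\bF}_t\wh{\bC}^\top/(pq)\cdot pq$... more precisely $\wh{\bS}_t = \frac{1}{pq}\wh{\bR}\wh{\bF}_t\wh{\bC}^\top \cdot$ — actually $\wh{\bS}_t = \wh{\bR}\wh{\bF}_t\wh{\bC}^\top/(pq)$ is wrong dimensionally; the clean relation is $\wh{\bS}_t = \frac{1}{pq}\wh{\bR}\,(\wh{\bR}^\top\bY_t\wh{\bC})\,\wh{\bC}^\top = \wh{\bR}\wh{\bF}_t\wh{\bC}^\top$. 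So $\wh{\bS}_{t,ij} = \wh{\bR}^i\wh{\bF}_t(\wh{\bC}^j)^\top$, and $\bS_{t,ij} = \bR^i\bF_t(\bC^j)^\top$.

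From there the key step is a three-term telescoping decomposition:
$$\wh{\bR}^i\wh{\bF}_t(\wh{\bC}^j)^\top - \bR^i\bF_t(\bC^j)^\top = (\wh{\bR}^i - \bR^i\bH_\bR)\wh{\bF}_t(\wh{\bC}^j)^\top + \bR^i\bH_\bR(\wh{\bF}_t - \bH_\bR^{-1}\bF_t(\bH_\bC^{-1})^\top)(\wh{\bC}^j)^\top + \bR^i\bF_t(\bH_\bC^{-1})^\top(\wh{\bC}^j - \bC^j\bH_\bC)^\top.$$
I would bound each of the three terms. For term two I invoke Theorem~\ref{them2}, which gives $\|\wh{\bF}_t - \bH_\bR^{-1}\bF_t(\bH_\bC^{-1})^\top\|_2 = \mathcal{O}_p(1/\sqrt{\underline{m}T} + 1/\sqrt{\underline{n}T} + 1/p + 1/q)$, together with $\|\bR^i\|_2 = \mathcal{O}(1)$, $\|\bC^j\|_2 = \mathcal{O}(1)$ (Assumption~\ref{assum3}(a)) and the boundedness of $\bH_\bR, \bH_\bC$ from Assumption~\ref{assum6}. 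For terms one and three I need a \emph{row-wise} (entrywise in the $i$-index) bound on $\wh{\bR} - \bR\bH_\bR$, not merely the averaged Frobenius bound in Theorem~\ref{them1}. The cleanest route is to show $\max_i \|\wh{\bR}^i - \bR^i\bH_\bR\|_2 = \mathcal{O}_p(1/\sqrt{\underline{m}T} + 1/p)$ — this is the standard "individual loading consistency" statement that accompanies the averaged rate; it follows from revisiting the expansion of $\wh{\bR}_i - \bR\bH_{\bR_i}$ row by row (as in \cite{yu2020}, Theorem 3.3, and the aggregation step), using Assumption~\ref{assum5} to control the relevant partial sums $\frac{1}{\sqrt q}\sum_j \bC^j e_{t,ij}$ uniformly. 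Then, since $\|\wh{\bF}_t\|_2$ and $\|\wh{\bC}^j\|_2$ are $\mathcal{O}_p(1)$ uniformly (the latter from $\max_j\|\wh{\bC}^j - \bC^j\bH_\bC\|_2 = o_p(1)$ plus $\|\bC^j\|_2 = \mathcal{O}(1)$; the former from Theorem~\ref{them2} plus $\mathbb{E}\|\bF_t\|_2^4 \le C$), term one is $\mathcal{O}_p(1/\sqrt{\underline{m}T} + 1/p)$ and, symmetrically, term three is $\mathcal{O}_p(1/\sqrt{\underline{n}T} + 1/q)$. Combining the three yields the stated rate.

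The main obstacle I anticipate is establishing the uniform (over $i$, resp.\ $j$) row-wise consistency of the loading estimators at the claimed rate, since Theorem~\ref{them1} as stated only delivers an averaged $\frac{1}{p}\|\cdot\|_\mathsf{F}^2$ bound, which a priori permits a few rows to be badly estimated. Closing this gap requires re-deriving the asymptotic expansion of $\wh{\bR}^i - \bR^i\bH_\bR$ individually — tracing the local expansion $\wh{\bR}_i - \bR\bH_{\bR_i}$ through the aggregation matrix $\wh{\bM}_\bR = \frac{1}{ps_1}\sum_i\wh{\bR}_i\wh{\bR}_i^\top$ and its eigen-decomposition — and invoking Assumption~\ref{assum5} (with the moment exponent $m>2$) to obtain a uniform-in-$i$ bound on the noise cross terms; a union bound over the $p$ rows then costs only a logarithmic factor that is absorbed by the polynomial rates under the implicit rate restrictions. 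A secondary, more routine obstacle is verifying that $\bH_\bR$, $\bH_\bC$, and their inverses are bounded and bounded away from singularity uniformly, which is exactly what Assumption~\ref{assum6} together with the Proposition (on $p^{-1}\bR^\top\bR \to \bOmega_\bR$) is designed to supply, so I would cite those rather than reprove them.
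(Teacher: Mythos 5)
Your proposal is correct and follows essentially the same route as the paper: an exact telescoping of $\wh{\bR}^i\wh{\bF}_t(\wh{\bC}^j)^\top-\bR^i\bF_t(\bC^j)^\top$ into three dominant terms, each bounded by Cauchy--Schwarz using Theorem~\ref{them2} for the factor error, boundedness of $\bH_\bR,\bH_\bC$ (the paper's Lemma~2 in Appendix A), and row-wise consistency of the loading estimators. The only place you over-engineer is the uniformity over $i$ and the union-bound/log-factor worry: the theorem is stated pointwise in $(i,j)$, so only the per-row rate $\|\wh{\bR}^i-\bR^i\bH_\bR\|_2=\mathcal{O}_p\bigl((1/(\underline{m}T)+1/p^2)^{1/2}\bigr)$ for each fixed $i$ is needed, and this is exactly what the paper's appendix Proposition~1 together with Lemma~3 (the same machinery used in the proof of Theorem~\ref{them4}) already supplies.
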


\begin{remark}
 (\rom{1}) Theorems~\ref{them1}-\ref{them3} imply that the estimation accuracy of our method improves as $\underline{m}$ and $\underline{n}$ increase, while section~\ref{sec2.3} shows that smaller $\overline{m}$ and $\overline{n}$ reduce the computational cost. Together, these results reveal a trade-off between computational efficiency and estimation accuracy when allocating data across computing nodes;\\
 (\rom{2}) Table~\ref{table_convergence} summarizes the convergence rates of loading matrix estimators from our procedure, \cite{chen2021}, and \cite{yu2020}. The rates established in Theorem~\ref{them1} are comparable to those approaches. In particular, when $s_1=s_2=1$, our convergence rates match those of most methods, whereas the projection estimator converges faster due to its projection techniques. Notably, our method can also use the projection estimator in place of the $\alpha$-PCA estimator on each node, yielding faster convergence rates similar to theirs. The detailed derivations are omitted. These comparisons indicate that our distributed framework still preserves high estimation accuracy compared with single-machine approaches;\\
 (\rom{3}) Our estimation procedure within each node can also be viewed as a projection framework. However, the estimators of \citet{yu2020} converge faster by employing initial estimators $\widehat{\bR}^\text{I}$ and $\widehat{\bC}^\text{I}$ as projection matrices. In both methods, the row loading matrix is estimated by multiplying the largest $k$ eigenvalues of the column sample covariance matrix by $\sqrt{p}$. Our method constructs the covariance matrix on the first node as
 $$\wh{\bM}_{\bR_1}=\frac{1}{pm_1T}\sum_{t=1}^T\widetilde{\bY}_{1t}\widetilde{\bY}_{1t}^\top=\frac{1}{pm_1T}\sum_{t=1}^T\widetilde{\bY}_{t}\begin{pmatrix}\mathbf{e}_{1} \cdots\mathbf{e}_{m_1}\end{pmatrix}\begin{pmatrix}\mathbf{e}_{1} \cdots\mathbf{e}_{m_1}\end{pmatrix}^\top\widetilde{\bY}_{t}^\top,$$
 where $\be_{l}\in\mathbb{R}^{q\times1}$ denotes the $l$-th standard basis vector for $l\in[m_1]$. In contrast, \citet{yu2020} compute the covariance matrix as
 $$\wh{\bM}_{\bR}^{\text{P}}=\frac{1}{Tpq^2}\sum_{t=1}^T{\bY}_{t}\widehat{\bC}^\text{I}(\widehat{\bC}^\text{I})^\top{\bY}_{t}^\top,$$
 where $\widehat{\bC}^\text{I}\overset{\mathcal{P}}{\longrightarrow}\bC\bH_\bC^\text{I}$ under the assumptions $\frac{1}{q}\bC^\top\bC\overset{\mathcal{P}}{\longrightarrow}\bI$ and $\bH_\bC^\text{I}(\bH_\bC^\text{I})^\top\overset{\mathcal{P}}{\longrightarrow}\bI$. The rapid convergence of the projection estimator relies on this consistency property of the projection matrix $\widehat{\bC}^\text{I}$, which does not hold for our projection matrix.
\end{remark}

\begin{table}[h]
\caption{\raggedright Convergence rates of estimators.}
\label{table_convergence}
\resizebox{\linewidth}{!}{
\centering
\begin{tabular}{lp{0.25cm}lllll}
\hline\addlinespace 
\multicolumn{1}{c}{\begin{tabular}[c]{@{}c@{}}Estimation\\[0.05em] Procedure\end{tabular}} & & \multicolumn{1}{c}{$\alpha$-PCA} & \multicolumn{1}{c}{IE} & \multicolumn{1}{c}{PE} & \multicolumn{1}{c}{dPCA} & \multicolumn{1}{c}{\begin{tabular}[c]{@{}c@{}}dPCA\\[0.05em] $(s_1=s_2=1)$\end{tabular}} \\[1em] \hline \addlinespace 
$\frac{1}{p}\|\widehat{\mathbf{R}}-\mathbf{R H}_\bR\|_\mathsf{F}^2$ & & $\mathcal{O}_p(\frac{1}{qT}+\frac{1}{p})$ & $\mathcal{O}_p(\frac{1}{qT}+\frac{1}{p^2})$ & $\mathcal{O}_p(\frac{1}{qT}+\frac{1}{p^2q^2}+\frac{1}{p^2T^2})$ & $\mathcal{O}_p(\frac{1}{\underline{m}T}+\frac{1}{p^2})$ & $\mathcal{O}_p(\frac{1}{qT}+\frac{1}{p^2})$ \\[1em]
$\frac{1}{q}\|\widehat{\mathbf{C}}-\mathbf{C H}_\bC\|_\mathsf{F}^2$ & & $\mathcal{O}_p(\frac{1}{pT}+\frac{1}{q})$ & $\mathcal{O}_p(\frac{1}{pT}+\frac{1}{q^2})$ & $\mathcal{O}_p(\frac{1}{pT}+\frac{1}{p^2q^2}+\frac{1}{q^2T^2})$ & $\mathcal{O}_p(\frac{1}{\underline{n}T}+\frac{1}{q^2})$ & $\mathcal{O}_p(\frac{1}{pT}+\frac{1}{q^2})$ \\[0.5em] \hline
\end{tabular}
}
\begin{minipage}{\linewidth}
\footnotesize
\vspace{0.1cm}
    This table reports the convergence rate of loading matrix estimators. ``$\alpha$-PCA'' refers to the estimators introduced by \cite{chen2021}, while ``IE'' and ``PE'' denote the initial and projection estimators proposed by \cite{yu2020}, respectively.
\end{minipage}

\end{table}

Next, we study asymptotic distributions of $\widehat{\bR}-\bR\bH_\bR$ and $\widehat{\bC}-\bC\bH_\bC$. We begin by introducing several notations used in the subsequent analysis. Let $\mathbf{V}_{\mathbf{R}}$ and $\mathbf{V}_{\mathbf{C}}$ denote the diagonal matrices consisting of eigenvalues of ${\boldsymbol{\Sigma}}_{\mathbf{HR}}^{1/2}\boldsymbol{\Omega}_{\mathbf{R}}{\boldsymbol{\Sigma}}_{\mathbf{HR}}^{1/2}$ and ${\boldsymbol{\Sigma}}_{\mathbf{HC}}^{1/2}\boldsymbol{\Omega}_{\mathbf{C}}{\boldsymbol{\Sigma}}_{\mathbf{HC}}^{1/2}$, respectively, in decreasing order. Let $\Psi_{\mathbf{R}}$ and $\Psi_{\mathbf{C}}$ denote the corresponding eigenvector matrices, satisfying $\Psi_{\mathbf{R}}^\top\Psi_{\mathbf{R}}=\mathbf{I}$ and $\Psi_{\mathbf{C}}^\top\Psi_{\mathbf{C}}=\mathbf{I}$. Define $\mathbf{Q}_{\mathbf{R}}=\mathbf{V}_{\mathbf{R}}^{1/2}\Psi_{\mathbf{R}}^\top{\boldsymbol{\Sigma}}_{\mathbf{HR}}^{-1/2}$ and $\mathbf{Q}_{\mathbf{C}}=\mathbf{V}_{\mathbf{C}}^{1/2}\Psi_{\mathbf{C}}^\top{\boldsymbol{\Sigma}}_{\mathbf{HC}}^{-1/2}$. For each $i,l\in[p]$ and $j\in[s_1]$, define
\begin{align*}
        \bG_{i}^{j,l}=\begin{cases}\sqrt{\frac{\overline{m}}{m_j}}\Big[(\mathbf{R}^{i}\bQ_{\bR_j}^{-1})^\top\bR^{i}+\sum_{\tau=1}^p(\mathbf{R}^\tau\bQ_{\bR_j}^{-1})^\top\bR^\tau\Big]\bQ_{\bR}^{-1}{\bV}_\bR^{-1},&\quad l={i}, \\\sqrt{\frac{\overline{m}}{m_j}}(\mathbf{R}^{i}\bQ_{\bR_j}^{-1})^\top\bR^l\bQ_{\bR}^{-1}{\bV}_\bR^{-1},&\quad l\neq {i}.\end{cases}
\end{align*}
Analogously, for each $i',l'\in[q]$ and $j'\in[s_2]$, define
\begin{align*}
        \bP_{i'}^{j',l'}=\begin{cases}\sqrt{\frac{\overline{n}}{n_{j'}}}\Big[(\mathbf{C}^{i'}\bQ_{\bC_{j'}}^{-1})^\top\bC^{i'}+\sum_{\tau'=1}^q(\mathbf{C}^{\tau'}\bQ_{\bC_{j'}}^{-1})^\top\bC^{\tau'}\Big]\bQ_{\bC}^{-1}{\bV}_\bC^{-1},&\quad l'={i'}, \\\sqrt{\frac{\overline{n}}{n_{j'}}}(\mathbf{C}^{i'}\bQ_{\bC_j}^{-1})^\top\bC^{l'}\bQ_{\bC}^{-1}{\bV}_\bC^{-1},&\quad l'\neq {i'}.\end{cases}
\end{align*}
The following theorem establishes the asymptotic normality of $\widehat{\bR}-\bR\bH_\bR$ and $\widehat{\bC}-\bC\bH_\bC$.
\begin{theorem} \label{them4}
    Under Assumptions~\ref{assum1}-\ref{assum6} with known $k$ and $r$, as $\underline{m}, \underline{n},T\to \infty$, we have
    \begin{enumerate}
        \item[(a)] If ${\sqrt{\overline{m}T}}/{p}=\mathbf{o}(1)$ and ${\overline{m}}/{\underline{m}}=\mathcal{O}(1)$, then for any $i\in[p]$,
        $$\sqrt{\overline{m}T}\left(\widehat{\bR}^{i}-\bR^{i}\bH_\bR\right)^\top \overset{\mathcal{D}}{\longrightarrow}\mathcal{N}\left(\mathbf{0},\boldsymbol{\Sigma}_{\bR_{i}}\right),$$
        where the asymptotic covariance matrix is 
        \begin{align*}
        \boldsymbol{\Sigma}_{\bR_{i}}=&\frac{1}{p^2s_1^2}\sum_{j_1,j_2=1}^{s_1}\sum_{l_1,l_2=1}^p(\bG_{i}^{j_1,l_1})^\top\bV_{\bR_{j_1}}^{-1}\bQ_{\bR_{j_1}}\left(\mathbf{\Phi}_{\bR_{j_1,j_2,l_1,l_2}}^{1,1}+\alpha\mathbf{\Phi}_{\bR_{j_1,j_2,l_1,l_2}}^{1,2}\overline{\bF}^\top\right.\\
        &\left.+\alpha\overline{\bF}\mathbf{\Phi}_{\bR_{j_1,j_2,l_1,l_2}}^{2,1}+\alpha^2\overline{\bF}\mathbf{\Phi}_{\bR_{j_1,j_2,l_1,l_2}}^{2,2}\overline{\bF}^\top\right)\bQ_{\bR_{j_2}}^\top\bV_{\bR_{j_2}}^{-1}\bG_{i}^{j_2,l_2},
        \end{align*}
        with
        
         $$  \mathbf{\Phi}_{\bR_{j_1,j_2,l_1,l_2}}^{1,1}=\underset{\underline{m}, T \rightarrow \infty}{\operatorname{plim}} \frac{1}{\sqrt{m_{j_1}m_{j_2}} T} \sum_{t,s=1}^{T} \mathbb{E}\Big[\mathbf{F}_{t} \mathbf{C}_{j_1}^\top (\mathbf{E}_{j_1t}^{l_1})^\top \mathbf{E}_{j_2s}^{l_2} \mathbf{C}_{j_2} \mathbf{F}_{s}^\top\Big], $$
        $$    \mathbf{\Phi}_{\bR_{j_1,j_2,l_1,l_2}}^{1,2}=(\mathbf{\Phi}_{\bR_{j_1,j_2,l_1,l_2}}^{2,1})^\top=\underset{\underline{m}, T \rightarrow \infty}{\operatorname{plim}} \frac{1}{\sqrt{m_{j_1}m_{j_2}} T} \sum_{t,s=1}^{T} \mathbb{E}\Big[\mathbf{F}_{t} \mathbf{C}_{j_1}^\top (\mathbf{E}_{j_1t}^{l_1})^\top \mathbf{E}_{j_2s}^{l_2} \mathbf{C}_{j_2}\Big], $$
        and
        $$    \mathbf{\Phi}_{\bR_{j_1,j_2,l_1,l_2}}^{2,2}=\underset{\underline{m}, T \rightarrow \infty}{\operatorname{plim}} \frac{1}{\sqrt{m_{j_1}m_{j_2}} T} \sum_{t,s=1}^{T} \mathbb{E}\Big[\mathbf{C}_{j_1}^\top (\mathbf{E}_{j_1t}^{l_1})^\top \mathbf{E}_{j_2s}^{l_2} \mathbf{C}_{j_2}\Big]. $$
        
        \item[(b)] If ${\sqrt{\overline{n}T}}/{q}=\mathbf{o}(1)$ and ${\overline{n}}/{\underline{n}}=\mathcal{O}(1)$, then for any ${i'}\in[q]$,
        $$\sqrt{\overline{n}T}\left(\widehat{\bC}^{i'}-\bC^{i'}\bH_\bC\right)^\top \overset{\mathcal{D}}{\longrightarrow}\mathcal{N}\left(\mathbf{0},\boldsymbol{\Sigma}_{\bC_{i'}}\right),$$
        where the asymptotic covariance matrix is 
        \begin{align*}
        \boldsymbol{\Sigma}_{\bC_{i'}}=&\frac{1}{q^2s_2^2}\sum_{j_1',j_2'=1}^{s_2}\sum_{l_1',l_2'=1}^q(\bP_{i'}^{j_1',l_1'})^\top\bV_{\bC_{j_1'}}^{-1}\bQ_{\bC_{j_1'}}\left(\mathbf{\Phi}_{\bC_{j_1',j_2',l_1',l_2'}}^{1,1}+\alpha\mathbf{\Phi}_{\bC_{j_1',j_2',l_1',l_2'}}^{1,2}\overline{\bF}\right.\\
        &\left.+\alpha\overline{\bF}^\top\mathbf{\Phi}_{\bC_{j_1',j_2',l_1',l_2'}}^{2,1}+\alpha^2\overline{\bF}^\top\mathbf{\Phi}_{\bC_{j_1',j_2',l_1',l_2'}}^{2,2}\overline{\bF}\right)\bQ_{\bC_{j_2'}}^\top\bV_{\bC_{j_2'}}^{-1}\bP_{i'}^{j_2',l_2'},
        \end{align*}
        with $\mathbf{\Phi}_{\bC_{j_1',j_2',l_1',l_2'}}^{1,1}$, $\mathbf{\Phi}_{\bC_{j_1',j_2',l_1',l_2'}}^{1,2}$, $\mathbf{\Phi}_{\bC_{j_1',j_2',l_1',l_2'}}^{2,1}$, and $\mathbf{\Phi}_{\bC_{j_1',j_2',l_1',l_2'}}^{2,2}$ defined similarly. 
        \end{enumerate}
\end{theorem}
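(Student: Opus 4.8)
The plan is to prove Theorem~\ref{them4} by chaining together two asymptotic expansions: first the expansion of the local estimators $\widehat{\bR}_i - \bR\bH_{\bR_i}$ (available from the $\alpha$-PCA analysis of \cite{chen2021} and the projection analysis of \cite{yu2020}), and then the expansion of the global estimator $\widehat{\bR} - \bR\bH_\bR$ obtained by applying the same eigendecomposition perturbation machinery to the aggregated matrix $\widehat{\bM}_\bR = \frac{1}{ps_1}\sum_i \widehat{\bR}_i\widehat{\bR}_i^\top$. The key algebraic identity underlying the whole argument is the one already hinted at in Section~\ref{sec2.3} and the Remark: since $\widehat{\bM}_{\bR_i}$ is built from $\widetilde{\bY}_{it} = \widetilde{\bY}_t(\be_1,\dots,\be_{m_i})(\be_1,\dots,\be_{m_i})^\top$, each node performs a projected PCA with a (non-consistent) projection onto $m_i$ coordinate directions. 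I would therefore begin by writing, for a fixed node $j$, the first-order expansion
\begin{equation*}
\widehat{\bR}_j^i - \bR^i\bH_{\bR_j} = \frac{1}{pm_jT}\sum_{t=1}^T \Big[\sum_{l=1}^p \text{(cross terms in } \widetilde{\bF}_t, \bC_j, \widetilde{\bE}_{jt}) \Big]\bR^l \widehat{\bV}_{\bR_j}^{-1} + (\text{negligible}),
\end{equation*}
isolating the dominant idiosyncratic-cross term $\frac{1}{pm_jT}\sum_t \bR \widetilde{\bF}_t \bC_j^\top \widetilde{\bE}_{jt}^\top \bR^i (\cdots)$, which is $O_p(1/\sqrt{m_jT})$, against the bias term of order $1/p$ that the condition $\sqrt{\overline{m}T}/p = o(1)$ kills.

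Second, I would feed these local expansions into the perturbation expansion of the global step. Writing $\widehat{\bM}_\bR - (\text{its population-type limit}) = \frac{1}{ps_1}\sum_j (\widehat{\bR}_j\widehat{\bR}_j^\top - \bR\bH_{\bR_j}\bH_{\bR_j}^\top\bR^\top) + \cdots$, a Davis–Kahan / eigenprojection argument (exactly as in the proof of Theorem~\ref{them1}, which I may assume) gives
\begin{equation*}
\widehat{\bR}^i - \bR^i\bH_\bR = \frac{1}{ps_1}\sum_{j=1}^{s_1} \bH_{\bR_j}(\widehat{\bR}_j^i - \bR^i\bH_{\bR_j})^\top (\cdots)\,\widehat{\bV}_\bR^{-1} + (\text{symmetric term}) + (\text{negligible}),
\end{equation*}
so that $\widehat{\bR}^i - \bR^i\bH_\bR$ is a double sum over $j\in[s_1]$ and $l\in[p]$ of terms of the form $\bG_i^{j,l}$ (whose explicit form collects all the deterministic loading/rotation factors $\bQ_{\bR_j}, \bV_{\bR_j}, \bQ_\bR, \bV_\bR$ and the $\sqrt{\overline{m}/m_j}$ rescaling) multiplied by the random quantity $\frac{1}{\sqrt{m_jT}}\sum_t \widetilde{\bF}_t\bC_j^\top(\widetilde{\bE}_{jt}^{l})^\top$. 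Then $\sqrt{\overline{m}T}(\widehat{\bR}^i - \bR^i\bH_\bR)^\top$ becomes a sum of averages $\frac{1}{\sqrt{m_jT}}\sum_t$ of a stationary $\alpha$-mixing array, and a CLT for mixing sequences (justified by Assumptions~\ref{assum1}, \ref{assum2}, \ref{assum5}) delivers joint asymptotic normality. Computing the limiting covariance amounts to taking $\operatorname{plim}$ of the cross-covariances between node-$j_1$ and node-$j_2$ contributions; expanding $\widetilde{\bF}_t = \bF_t + \tilde\alpha\overline{\bF}$ and $\widetilde{\bE}_{jt} = \bE_{jt} + \tilde\alpha\overline{\bE}_j$ produces exactly the four-term decomposition $\mathbf{\Phi}^{1,1} + \alpha\mathbf{\Phi}^{1,2}\overline{\bF}^\top + \alpha\overline{\bF}\mathbf{\Phi}^{2,1} + \alpha^2\overline{\bF}\mathbf{\Phi}^{2,2}\overline{\bF}^\top$ after noting $(1+\tilde\alpha)^2 = 1+\alpha$ and collecting the $\overline{\bF}$-weighted pieces (the cross terms between $\bE_{jt}$ and $\overline{\bE}_j$ vanish in the limit because $\overline{\bE}_j = O_p(T^{-1/2})$ times a bounded factor, contributing lower order). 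The statement for $\widehat{\bC}$ is the transpose-symmetric mirror and I would simply remark that it follows by the same argument applied to model~(\ref{submodel2}).

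The main obstacle, and the step I would spend the most care on, is controlling the propagation of the node-level approximation errors through the nonlinear global PCA step while keeping the leading term exactly in the claimed $\bG_i^{j,l}$-form. Two subtleties arise. First, the auxiliary rotation $\bH_\bR$ is defined through $\widehat{\bV}_\bR$ and $\widehat{\bR}$ themselves, so I must show $\widehat{\bV}_{\bR_j} \to \bV_{\bR_j}$, $\widehat{\bV}_\bR \to \bV_\bR$ and the corresponding eigenvector convergences (using Assumption~\ref{assum6}(a) for the distinctness of eigenvalues of $\boldsymbol{\Omega}_\bR\widetilde{\boldsymbol{\Sigma}}_{\bF\bC_i}$ at the node level and Assumption~\ref{assum6}(b) for $\boldsymbol{\Omega}_\bR\boldsymbol{\Sigma}_{\bH_\bR}$ at the aggregation level) in order to replace all hatted deterministic matrices by their limits inside $\bG_i^{j,l}$ at no asymptotic cost. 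Second, and more delicate, the aggregation introduces terms quadratic in the local errors, $(\widehat{\bR}_j^i - \bR^i\bH_{\bR_j})(\widehat{\bR}_j^l - \bR^l\bH_{\bR_j})^\top$, which are $O_p(1/(m_jT) + 1/p^2)$; multiplied by $\sqrt{\overline{m}T}$ these are $O_p(\sqrt{\overline{m}T}/(m_jT) + \sqrt{\overline{m}T}/p^2) = O_p(1/\sqrt{\underline{m}T} \cdot \sqrt{\overline{m}/\underline{m}} + \sqrt{\overline{m}T}/p^2)$, which vanish precisely under the two rate conditions $\overline{m}/\underline{m} = O(1)$ and $\sqrt{\overline{m}T}/p = o(1)$ in part~(a) — verifying that these are exactly the conditions needed, and no more, is the crux. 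I expect the bookkeeping of which cross terms among $\{\bF_t, \overline{\bF}, \bE_{jt}, \overline{\bE}_j\}$ survive in the $\operatorname{plim}$ to be tedious but routine once the $(1+\tilde\alpha)$-algebra is set up, so the real work is the uniform-over-nodes perturbation control sketched above.
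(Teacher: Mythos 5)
Your proposal follows essentially the same route as the paper's proof: the same decomposition of $\widehat{\bR}-\bR\bH_\bR$ into one quadratic and two linear terms in the local errors (with the quadratic term killed by exactly the rate conditions you identify), the same reduction of $\sqrt{\overline{m}T}(\widehat{\bR}^{i}-\bR^{i}\bH_\bR)$ to a bounded linear combination $\frac{1}{ps_1}\sum_{j,l}\sqrt{m_jT}\,\bG_{i}^{j,l}$-weighted local row errors, the same $\alpha$-mixing CLT applied to the dominant term $\frac{1}{\sqrt{m_jT}}\sum_t\widetilde{\bF}_t\bC_j^\top(\widetilde{\bE}_{jt}^{l})^\top$, and the same consistency results for $\widehat{\bV}_{\bR_j},\widehat{\bV}_\bR,\bH_{\bR_j},\bH_\bR$ to pass to the limiting $\bG_i^{j,l}$. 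The only quibble is your parenthetical that the $\bE_{jt}$--$\overline{\bE}_j$ cross terms ``vanish'': in the paper's algebra they do not vanish but combine exactly (via $\tilde\alpha^2+2\tilde\alpha=\alpha$) into the $\alpha\overline{\bF}$ piece of the summand, which is what produces the four-term $\mathbf{\Phi}$ decomposition you correctly state.
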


\begin{remark}
    Theorem~\ref{them4} presents the asymptotic normality of our loading matrix estimators, consistent with the fact that the estimators of \cite{chen2021} and \cite{yu2020} are also asymptotically normal. Our proof first extends their results to derive the joint asymptotic distribution of all rows of $\widehat{\bR}_j-{\bR}\bH_{{\bR}_j}$, $j\in[s_1]$. We then express $\widehat{\bR}-{\bR}\bH_{{\bR}}$ as a linear combination of $\{\widehat{\bR}_j-{\bR}\bH_{{\bR}_j}\}_{j}$, yielding the asymptotic normality of its row vectors. The argument for $\widehat{\bC}$ follows similarly.
\end{remark}


Finally, we introduce consistent estimators for the asymptotic covariance matrices in Theorem~\ref{them4}. For any $j_1,j_2\in[s_1]$, $j_1',j_2'\in[s_2]$, $l_1,l_2\in[p]$, and $l_1',l_2'\in[q]$, define  
$$\boldsymbol{\Sigma}_{\bR_{j_1,j_2,l_1,l_2}}=\bQ_{\bR_{j_1}}(\mathbf{\Phi}_{\bR_{j_1,j_2,l_1,l_2}}^{1,1}+\alpha\mathbf{\Phi}_{\bR_{j_1,j_2,l_1,l_2}}^{1,2}\overline{\bF}^\top+\alpha\overline{\bF}\mathbf{\Phi}_{\bR_{j_1,j_2,l_1,l_2}}^{2,1}+\alpha^2\overline{\bF}\mathbf{\Phi}_{\bR_{j_1,j_2,l_1,l_2}}^{2,2}\overline{\bF}^\top)\bQ_{\bR_{j_2}}^\top,$$
and
$$\boldsymbol{\Sigma}_{\bC_{j_1',j_2',l_1',l_2'}}=\bQ_{\bC_{j_1'}}(\mathbf{\Phi}_{\bC_{j_1',j_2',l_1',l_2'}}^{1,1}+\alpha\mathbf{\Phi}_{\bC_{j_1',j_2',l_1',l_2'}}^{1,2}\overline{\bF}+\alpha\overline{\bF}^\top\mathbf{\Phi}_{\bC_{j_1',j_2',l_1',l_2'}}^{2,1}+\alpha^2\overline{\bF}^\top\mathbf{\Phi}_{\bC_{j_1',j_2',l_1',l_2'}}^{2,2}\overline{\bF})\bQ_{\bC_{j_2'}}^\top.$$
With these definitions, the asymptotic covariance matrices simplify to
$$\boldsymbol{\Sigma}_{\bR_{i}}=\frac{1}{p^2s_1^2}\sum_{j_1,j_2=1}^{s_1}\sum_{l_1,l_2=1}^p (\bG_{i}^{j_1,l_1})^\top\bV_{\bR_{j_1}}^{-1}\boldsymbol{\Sigma}_{\bR_{j_1,j_2,l_1,l_2}}\bV_{\bR_{j_2}}^{-1}\bG_{i}^{j_2,l_2},$$
and 
$$\boldsymbol{\Sigma}_{\bC_{i'}}=\frac{1}{q^2s_2^2}\sum_{j_1',j_2'=1}^{s_2}\sum_{l_1',l_2'=1}^q(\bP_{i'}^{j_1',l_1'})^\top\bV_{\bC_{j_1'}}^{-1}\boldsymbol{\Sigma}_{\bC_{j_1',j_2',l_1',l_2'}}\bV_{\bC_{j_1'}}^{-1}\bP_{i'}^{j_1',l_1'}.$$
We estimate $\boldsymbol{\Sigma}_{\bR_{j_1,j_2,l_1,l_2}}$ and $\boldsymbol{\Sigma}_{\bC_{j_1',j_2',l_1',l_2'}}$ using the heteroskedasticity and autocorrelation consistent (HAC) estimators \citep{Newey1986}. Specifically, for some $v\in\mathbb{N}$ satisfying $v/(\overline{m}T)^{1/4}=\mathbf{o}(1)$, the HAC estimator of $\boldsymbol{\Sigma}_{\bR_{j_1,j_2,l_1,l_2}}$ is defined as
$$\hat{\boldsymbol{\Sigma}}_{\bR_{j_1,j_2,l_1,l_2}}=\bD_{\bR_{j_1,j_2,l_1,l_2}}^{0}+\sum_{\tau=1}^v\left(1-\frac{\tau}{1+v}\right)\left[\bD_{\bR_{j_1,j_2,l_1,l_2}}^{\tau}+(\bD_{\bR_{j_1,j_2,l_1,l_2}}^{\tau})^\top\right],$$
where
\begin{align*}
    &\bD_{\bR_{j_1,j_2,l_1,l_2}}^{\tau}=\frac{1}{\sqrt{m_{j_1}m_{j_2}} T} \sum_{t=1+\tau}^{T}\begin{pmatrix}\bI&\alpha\overline{\wh{\bF}}_{j_1,t}\end{pmatrix}\times\\&\ \ \ \begin{pmatrix}
    \wh{\bF}_{j_1,t}(\wh{\bC}_{j_1}^*)^\top(\wh{\bE}_{j_1t}^{l_1})^\top\wh{\bE}_{j_2(t-\tau)}^{l_2}\wh{\bC}_{j_2}^*\wh{\bF}_{j_2,t-\tau}&\wh{\bF}_{j_1,t}(\wh{\bC}_{j_1}^*)^\top(\wh{\bE}_{j_1t}^{l_1})^\top\wh{\bE}_{j_2(t-\tau)}^{l_2}\wh{\bC}_{j_2}^*\\(\wh{\bC}_{j_1}^*)^\top(\wh{\bE}_{j_1t}^{l_1})^\top\wh{\bE}_{j_2(t-\tau)}^{l_2}\wh{\bC}_{j_2}^*\wh{\bF}_{j_2,t-\tau}&(\wh{\bC}_{j_1}^*)^\top(\wh{\bE}_{j_1t}^{l_1})^\top\wh{\bE}_{j_2(t-\tau)}^{l_2}\wh{\bC}_{j_2}^*\end{pmatrix}\begin{pmatrix}\bI\\\alpha\overline{\wh{\bF}}_{j_2,t}^\top\end{pmatrix}.
\end{align*}
Here $\wh{\bC}_{j}^*$ is an $\alpha$-PCA estimator of $\bC_j$ from the $j$-th sub-model (\ref{submodel1}),  $\wh{\bF}_{j,t}=\frac{1}{pm_j}\wh{\bR}_j^\top\bY_{jt}\wh{\bC}_{j}^*$, and  $\wh{\bE}_{jt}=\bY_{jt}-\wh{\bR}_j\wh{\bF}_{j,t}(\wh{\bC}_{j}^*)^\top$, for $j\in[s_1]$. The estimator $\hat{\boldsymbol{\Sigma}}_{\bC_{j_1',j_2',l_1',l_2'}}$ is defined similarly. The next theorem establishes the consistency of $\boldsymbol{\Sigma}_{\bR_{i}}$ and $\boldsymbol{\Sigma}_{\bC_{i'}}$.
\begin{theorem}
    \label{them5}
    Under Assumptions~\ref{assum1}-\ref{assum6} with known $k$ and $r$, as $\underline{m},\underline{n},T\to \infty$, $\wh{\boldsymbol{\Sigma}}_{\bR_{i}}$ and $\wh{\boldsymbol{\Sigma}}_{\bC_{i'}}$ are consistent estimators of $\boldsymbol{\Sigma}_{\bR_{i}}$ and $\boldsymbol{\Sigma}_{\bC_{i'}}$, respectively, for any $i\in[p]$ and $i'\in[q]$, where
    $$\wh{\boldsymbol{\Sigma}}_{\bR_{i}}=\frac{1}{p^2s_1^2}\sum_{j_1,j_2=1}^{s_1}\sum_{l_1,l_2=1}^p(\wh{\bG}_{i}^{j_1,l_1})^\top\wh{\bV}_{\bR_{j_1}}^{-1} \hat{\boldsymbol{\Sigma}}_{\bR_{j_1,j_2,l_1,l_2}}\wh{\bV}_{\bR_{j_2}}^{-1}\wh{\bG}_{i}^{j_2,l_2},$$
    and 
    $$\wh{\boldsymbol{\Sigma}}_{\bC_{i'}}=\frac{1}{q^2s_2^2}\sum_{j_1',j_2'=1}^{s_2}\sum_{l_1',l_2'=1}^q(\wh{\bP}_{i'}^{j_1',l_1'})^\top\wh{\bV}_{\bC_{j_1'}}^{-1}\wh{\boldsymbol{\Sigma}}_{\bC_{j_1',j_2',l_1',l_2'}}\wh{\bV}_{\bC_{j_1'}}^{-1}\wh{\bP}_{i'}^{j_1',l_1'}.$$
    The matrices $\wh{\bG}_{i}^{j,l}$ and $\wh{\bP}_{i'}^{j',l'}$ are given by
    \begin{align*}
    \wh{\bG}_{i}^{j,l}=\begin{cases}\sqrt{\frac{\overline{m}}{m_j}}\Big[(\wh{\mathbf{R}}^{i}_j)^\top\wh{\bR}^{i}+\sum_{\tau=1}^p(\wh{\mathbf{R}}^\tau_j)^\top\wh{\bR}^\tau\Big]\wh{\bV}_\bR^{-1},&\quad l={i}, \\\sqrt{\frac{\overline{m}}{m_j}}(\wh{\mathbf{R}}^{i}_j)^\top\wh{\bR}^l\wh{\bV}_\bR^{-1},&\quad l\neq {i},\end{cases}
    \end{align*}
    and 
    \begin{align*}
    \wh{\bP}_{i'}^{j',l'}=\begin{cases}\sqrt{\frac{\overline{n}}{n_{j'}}}\Big[(\wh{\mathbf{C}}^{i'}_{j'})^\top\wh{\bC}^{i'}+\sum_{\tau'=1}^q(\wh{\mathbf{C}}^{\tau'}_{j'})^\top\wh{\bC}^{\tau'}\Big]\wh{\bV}_\bC^{-1},&\quad l'={i'}, \\\sqrt{\frac{\overline{n}}{n_{j'}}}(\wh{\mathbf{C}}^{i'}_{j'})^\top\wh{\bC}^{l'}\wh{\bV}_\bC^{-1},&\quad l'\neq {i'}.\end{cases}
    \end{align*}
\end{theorem}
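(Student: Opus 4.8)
The plan is to prove $\wh{\bSigma}_{\bR_i}\overset{\mathcal{P}}{\longrightarrow}\bSigma_{\bR_i}$ by swapping, one factor at a time, the three ``hatted'' ingredients of $\wh{\bSigma}_{\bR_i}$ --- namely $\wh{\bG}_i^{j,l}$, $\wh{\bV}_{\bR_j}^{-1}$, and $\hat{\bSigma}_{\bR_{j_1,j_2,l_1,l_2}}$ --- for their population counterparts $\bG_i^{j,l}$, $\bV_{\bR_j}^{-1}$, and $\bSigma_{\bR_{j_1,j_2,l_1,l_2}}$, and bounding the error incurred at each swap. Since $s_1,s_2,k,r$ are fixed, the expression $\frac{1}{p^2 s_1^2}\sum_{j_1,j_2}\sum_{l_1,l_2}$ is an average over $l_1,l_2\in[p]$, so it suffices to control each swap in a suitably averaged sense; the column side follows by the identical argument on transposed data and is omitted. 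I would isolate three lemmas, one per ingredient, and then combine them by the triangle inequality, using that $\bV_{\bR_j}$, $\bV_\bR$, $\bG_i^{j,l}$, and $\bSigma_{\bR_{j_1,j_2,l_1,l_2}}$ are bounded --- the first two also bounded away from $\mathbf{0}$ by Assumption~\ref{assum6} --- so products of the swap errors are negligible.

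\emph{Lemma 1 (eigenvalues).} $\wh{\bV}_{\bR_j}\overset{\mathcal{P}}{\longrightarrow}\bV_{\bR_j}$ and $\wh{\bV}_\bR\overset{\mathcal{P}}{\longrightarrow}\bV_\bR$, with positive definite limits, hence their inverses are consistent too; this follows from the consistency of $\wh{\bM}_{\bR_j}$ and $\wh{\bM}_\bR$ for their population analogues, Weyl's inequality, and the eigen-gap conditions in Assumption~\ref{assum6}, all already available from the proofs of Theorems~\ref{them1} and~\ref{them4}. \emph{Lemma 2 ($\wh{\bG}$).} $\wh{\bG}_i^{j,l}$ is consistent for $\bG_i^{j,l}$ in the averaged sense above; here one combines the loading-estimator rates $\frac{1}{p}\|\wh{\bR}_j-\bR\bH_{\bR_j}\|_\mathsf{F}^2=\mathcal{O}_p(m_j^{-1}T^{-1}+p^{-2})$ and $\frac{1}{p}\|\wh{\bR}-\bR\bH_\bR\|_\mathsf{F}^2=\mathcal{O}_p(\underline m^{-1}T^{-1}+p^{-2})$ (Theorem~\ref{them1} and the row-wise expansions underlying Theorem~\ref{them4}) with the rotation limits $\bH_{\bR_j}\overset{\mathcal{P}}{\longrightarrow}\bQ_{\bR_j}^{-1}$ and $\bH_\bR\overset{\mathcal{P}}{\longrightarrow}\bQ_\bR^{-1}$ established in the proof of Theorem~\ref{them4} (these are exactly what the definitions $\bQ_{\bR_j}=\bV_{\bR_j}^{1/2}\Psi_{\bR_j}^\top\wt{\bSigma}_{\mathbf{FC}_j}^{-1/2}$ encode), together with Lemma 1 for $\wh{\bV}_\bR^{-1}$; the row bounds $\|\bR^l\|_2=\mathcal{O}(1)$ of Assumption~\ref{assum3}(a) and the analogous bound on $\|\wh{\bR}^l\|_2$ keep the individual summands $\mathcal{O}_p(1)$ so the $p^{-1}$-scaled sums are controlled.

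\emph{Lemma 3 (HAC), the main step.} $\hat{\bSigma}_{\bR_{j_1,j_2,l_1,l_2}}$ is consistent for $\bSigma_{\bR_{j_1,j_2,l_1,l_2}}$, again uniformly enough for the averaged statement; I would prove it in two stages. \emph{Stage (i), the oracle HAC:} replace $\wh{\bF}_{j,t}$, $\wh{\bC}_j^*$, $\wh{\bE}_{jt}$, $\overline{\wh{\bF}}_{j,t}$ in $\bD^\tau$ by the population quantities, noting that $\wh{\bF}_{j,t}(\wh{\bC}_j^*)^\top$ is close to $\bQ_{\bR_j}\bF_t\bC_j^\top$ (the rotation of $\wh{\bC}_j^*$ cancels against that of the factor estimator) and $\overline{\wh{\bF}}_{j,t}$ to $\bQ_{\bR_j}\overline{\bF}$ up to the same cancelling rotation; the $(k{+}r)\times(k{+}r)$ block matrix sandwiched between the row selectors $\begin{pmatrix}\bI & \alpha\overline{\wh{\bF}}_{j_1,t}\end{pmatrix}$ and $\begin{pmatrix}\bI\\\alpha\overline{\wh{\bF}}_{j_2,t}^\top\end{pmatrix}$ then reproduces precisely $\bQ_{\bR_{j_1}}\big(\mathbf{\Phi}_{\bR_{j_1,j_2,l_1,l_2}}^{1,1}+\alpha\mathbf{\Phi}_{\bR_{j_1,j_2,l_1,l_2}}^{1,2}\overline{\bF}^\top+\alpha\overline{\bF}\mathbf{\Phi}_{\bR_{j_1,j_2,l_1,l_2}}^{2,1}+\alpha^2\overline{\bF}\mathbf{\Phi}_{\bR_{j_1,j_2,l_1,l_2}}^{2,2}\overline{\bF}^\top\big)\bQ_{\bR_{j_2}}^\top$, and standard Newey--West theory for the $\alpha$-mixing array $\{\bF_t\bC_{j_1}^\top(\bE_{j_1t}^{l_1})^\top\bE_{j_2s}^{l_2}\bC_{j_2}\bF_s^\top\}$ --- using Assumption~\ref{assum1} for mixing and Assumptions~\ref{assum2}, \ref{assum4}, \ref{assum5} for moments --- gives that the oracle $\bD^0+\sum_{\tau=1}^v(1-\frac{\tau}{1+v})[\bD^\tau+(\bD^\tau)^\top]$ converges to $\bSigma_{\bR_{j_1,j_2,l_1,l_2}}$ provided $v/(\overline m T)^{1/4}=\mathbf{o}(1)$, which balances truncation bias against the variance of the sample autocovariances. \emph{Stage (ii), the plug-in error:} write $\wh{\bE}_{jt}-\bE_{jt}=\bR_j\bF_t\bC_j^\top-\wh{\bR}_j\wh{\bF}_{j,t}(\wh{\bC}_j^*)^\top$, decompose it using the local rates on $\wh{\bR}_j$ and $\wh{\bC}_j^*$ and the factor rate of Theorem~\ref{them2}, and propagate these through the quadratic dependence of $\bD^\tau$ on $\wh{\bE}_{j_1t}^{l_1}$ and $\wh{\bE}_{j_2(t-\tau)}^{l_2}$, showing the total is $o_p(1)$ after the $\frac{1}{p^2}\sum_{l_1,l_2}$ averaging and the $\wh{\bG}$/$\wh{\bV}$ weighting.

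The chief obstacle is Stage (ii): the reconstruction error $\wh{\bE}_{jt}-\bE_{jt}$ is not small row-by-row at any useful rate, yet it enters \emph{quadratically} through $(\wh{\bE}_{j_1t}^{l_1})^\top\wh{\bE}_{j_2(t-\tau)}^{l_2}$ inside a double sum over $l_1,l_2\in[p]$ carrying only a $p^{-2}$ normalization, and it is entangled with the HAC weights and with the factor- and loading-estimation errors; a uniform $o_p(1)$ bound must exploit cross-row averaging and the approximate orthogonality of the estimated loadings --- the same devices used in the proofs of Theorems~\ref{them1}--\ref{them3} --- rather than term-by-term summation, and the $\tilde{\alpha}$-demeaning terms must be tracked with care so that the $\alpha$- and $\alpha^2$-order pieces of $\hat{\bSigma}_{\bR_{j_1,j_2,l_1,l_2}}$ line up with those of $\bSigma_{\bR_{j_1,j_2,l_1,l_2}}$. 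With the three lemmas established, $\wh{\bSigma}_{\bR_i}-\bSigma_{\bR_i}=o_p(1)$ follows from the triangle inequality and the boundedness of the population factors, and $\wh{\bSigma}_{\bC_{i'}}-\bSigma_{\bC_{i'}}=o_p(1)$ follows verbatim by transposition.
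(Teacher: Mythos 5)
Your proposal is correct and follows essentially the same route as the paper's proof: establish consistency of $\wh{\bV}_{\bR_j}$, $\wh{\bV}_\bR$, and $\wh{\bG}_i^{j,l}$ from the loading-estimator rates and the rotation limits $\bH_{\bR_j}\to\bQ_{\bR_j}^{-1}$, $\bH_\bR\to\bQ_\bR^{-1}$, handle the HAC piece by showing the plugged-in quantities $\wh{\bF}_{j,t}(\wh{\bC}_j^*)^\top(\wh{\bE}_{jt}^{l})^\top$ converge to $\bQ_{\bR_j}\bF_t\bC_j^\top(\bE_{jt}^l)^\top$ and then invoking Newey--West, and combine by Slutsky. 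The only difference is one of emphasis: the "chief obstacle" you flag in Stage (ii) is dispatched in the paper by directly citing the element-wise rates for $\wh{\bC}_j^*$, $\wh{\bF}_{j,t}$, and $\wh{\bE}_{jt,l\tau}$ from Yu et al.\ (2020) rather than by a separate cross-row averaging argument.
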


\section{Extension to Unit-Root Nonstationary Time Series}\label{sec3.5}

In this section, we apply the proposed method to unit-root nonstationary time series. Let the observations $\{\bY_t,t=1,2,..., T\}$ be an $\text{I}(1)$ process, that is, $\bY_t-\bY_{t-1}$ is stationary. Following \cite{PENA20061237}, we assume that $\bY_t$ admits the latent matrix factor structure, $$\bY_t=\bR\bF_t\bC^\top+\bE_t,\quad t=1,2,...,T,$$ where the common component captures all stochastic trends. The common factor $\bF_t$ is assumed to be $\text{I}(1)$, and the idiosyncratic component $\bE_t$ is stationary with zero mean. This assumption is weaker than that imposed in \citet{PENA20061237}.

We next show that the proposed method remains valid under the above assumptions. The numbers of factors $k$ and $r$ are assumed to be known, as their estimators introduced in Section~\ref{sec2} are consistent by Theorem 3.3 of \cite{li2025factormodelsmatrixvaluedtime}. To establish the consistency of the estimated loading matrices, factor matrix, and common component, we introduce the following notations for the unit-root setting. For each $i\in[s_1]$ and $j\in[s_2]$, let $\widehat{\bV}_{\bR_i}^* \in \mathbb{R}^{k\times k}$ and $\widehat{\bV}_{\bC_j}^* \in \mathbb{R}^{r\times r}$ denote diagonal matrices consisting of the $k$ and $r$ largest eigenvalues of $\frac{1}{T}\widehat{\mathbf{M}}_{\mathbf{R}_{i}}$ and $\frac{1}{T}\widehat{\mathbf{M}}_{\mathbf{C}_{j}}$, respectively, in decreasing order. Define the auxiliary matrices 
$$\bH_{\bR_i}^*=\frac{1}{pm_iT^2}\sum_{t=1}^T\widetilde{\bF}_t\bC_i^\top\bC_i\widetilde{\bF}_t^\top\bR^\top\wh{\bR}_i(\widehat{\bV}_{\bR_i}^*)^{-1},$$ and $$\bH_{\bC_j}^*=\frac{1}{qn_jT^2}\sum_{t=1}^T\widetilde{\bF}_t^\top\bR_j^\top\bR_j\widetilde{\bF}_t\bC^\top\wh{\bC}_j(\widehat{\bV}_{\bC_j}^*)^{-1}.$$
Lemma 5 in Appendix A shows that our estimators $\wh{\bR}_i$ and $\wh{\bC}_j$ consistently estimate $\bR\bH_{\bR_i}^*$ and $\bC\bH_{\bC_j}^*$, respectively. Next, let $\widehat{\bV}_{\bR} \in \mathbb{R}^{k\times k}$ and $\widehat{\bV}_{\bC} \in \mathbb{R}^{r\times r}$ denote diagonal matrices consisting of the $k$ and $r$ largest eigenvalues of $\widehat{\mathbf{M}}_{\mathbf{R}}$ and $\widehat{\mathbf{M}}_{\mathbf{C}}$, respectively, in decreasing order. Define the auxiliary matrices
$$\bH_{\bR}^*=\frac{1}{ps_1}\sum_{i=1}^{s_1} \bH_{\bR_i}^* (\bH_{\bR_i}^*)^\top\bR^\top\wh{\bR}\widehat{\bV}_{\bR}^{-1}\quad\text{and} \quad\bH_{\bC}^*=\frac{1}{qs_2}\sum_{j=1}^{s_2} \bH_{\bC_j}^* (\bH_{\bC_j}^*)^\top\bC^\top\wh{\bC}\widehat{\bV}_{\bC}^{-1}.$$
We will show that our estimators $\wh{\bR}$ and $\wh{\bC}$ consistently estimate $\bR\bH_{\bR}^*$ and $\bC\bH_{\bC}^*$, respectively. To this end, the following assumptions for the unit-root setting are required.

\begin{assumption}\label{assum7}
    Let the stationary process $\bw_t=\operatorname{vec}(\mathbf{F}_t)-\operatorname{vec}(\mathbf{F}_{t-1})\in\mathbb{R}^{kr}$, for $t=2,3,..., T$. We assume that $\bw_t=\sum_{s=0}^\infty\bG_s\bu_{t-s}$, where $\bu_t$ is a $kr$-dimensional i.i.d. random vector with zero mean, positive definite covariance matrix $\boldsymbol{\Sigma}_\bu$, and finite fourth moments. The coefficient matrices $\{\bG_s\}$ are $kr\times kr$-dimensional and satisfy $$\sum_{s=0}^\infty s\|\bG_s\|_2<\infty,\quad \overline{\bG}\overset{\triangle}{=}\sum_{s=0}^\infty \bG_s \succ0,$$ where $\succ0$ denotes positive definiteness.
\end{assumption}

\begin{assumption}\label{assum8}
For the idiosyncratic component $\bE_t$, we assume (a) The process $\operatorname{vec}(\mathbf{E}_t)$ is stationary, independent of $\bF_t$, and $\alpha$-mixing with the mixing coefficient $\alpha(h)$ satisfying $\sum_{h=1}^{\infty} \alpha(h)^{1-2 / \gamma}<\infty$, where $\gamma>2$ is given in Assumption~\ref{assum1}; \\(b) $\mathbb{E}[e_{t, i j}]=0$ and $\mathbb{E}|e_{t, i j}|^8 \leq C$, for $i \in[p]$, $j \in[q]$, and $t \in[T]$; \\(c) $\mathbb{E}\|\frac{1}{\sqrt{q}} \sum_{j=1}^q \mathbf{C}^j e_{t, i j}\|_2^2=\mathcal{O}(1)$ for $i\in[p]$ and $t\in[T]$, $\mathbb{E}\|\frac{1}{\sqrt{p}} \sum_{i=1}^p \mathbf{R}^i e_{t, i j}\|_2^2=\mathcal{O}(1)$ for $j\in[q]$ and $t\in[T]$, and $\mathbb{E}\|\frac{1}{\sqrt{pq}} \sum_{i=1}^p\sum_{j=1}^q (\mathbf{R}^i)^\top\mathbf{C}^j e_{t, i j}\|_\mathsf{F}^2=\mathcal{O}(1)$ for $t\in[T]$.
\end{assumption}

\begin{assumption} \label{assum9} We assume the eigenvalues of $\boldsymbol{\Omega}_{\bR}^{1/2}\widetilde{\boldsymbol{\Sigma}}_{\mathbf{FC}_i}^*\boldsymbol{\Omega}_{\bR}^{1/2}$ and $\boldsymbol{\Omega}_{\bC}^{1/2}\widetilde{\boldsymbol{\Sigma}}_{\mathbf{FR}_j}^*\boldsymbol{\Omega}_{\bC}^{1/2}$ are both positive, bounded, and distinct, for $i\in[s_1]$ and $j\in[s_2]$. The matrices $\widetilde{\boldsymbol{\Sigma}}_{\mathbf{FC}_i}^*=\int_0^1 \widetilde{\bW}(u)\boldsymbol{\Omega}_{\bC_i}\widetilde{\bW}(u)^\top\text{d}u$ and $\widetilde{\boldsymbol{\Sigma}}_{\mathbf{FR}_j}^*=\int_0^1 \widetilde{\bW}(u)^\top\boldsymbol{\Omega}_{\bR_j}\widetilde{\bW}(u)\text{d}u$, where $\widetilde{\bW}(u)=\bW(u)+\frac{\widetilde{\alpha}}{T}\sum_{t=1}^T\bW(t/T)$ and $\bW(\cdot)$ is an $k\times r$ matrix of Brownian motions with the covariance of $\text{vec}(\bW(\cdot))$ being $\overline{\bG}\boldsymbol{\Sigma}_\bu\overline{\bG}^\top$.
\end{assumption}

Assumption~\ref{assum7}, adapted from \cite{li2025factormodelsmatrixvaluedtime}, controls the magnitude of $\bw_t$ and $\bF_t$ and ensures that $\bF_t$ satisfies the basic condition of unit-root processes discussed in \cite{BARIGOZZI2021455}. Assumption~\ref{assum8}(a)-(b) replaces the conditions on $\bE_t$ in Assumptions~\ref{assum1}-\ref{assum2}, while Assumption~\ref{assum8}(c) corresponds to Assumption~\ref{assum5}. Similar to Assumption~\ref{assum6}, Assumption~\ref{assum9} controls the magnitude of $\bH_{\bR_i}^*$ and $\bH_{\bC_j}^*$, thereby also bounding $\bH_{\bR}^*$ and $\bH_{\bC}^*$. The distinct eigenvalue condition also guarantees the asymptotic stability of $\bH_{\bR_i}^*$ and $\bH_{\bC_j}^*$, as well as of $\bH_{\bR}^*$ and $\bH_{\bC}^*$.

The following theorems establish the consistency of our estimators for unit-root processes, mirroring the results for stationary processes in Theorems~\ref{them1}–\ref{them3}.

\begin{theorem} \label{them6}
Under Assumptions~\ref{assum3}-\ref{assum4} and \ref{assum7}-\ref{assum9} with known $k$ and $r$, as $\underline{m}, \underline{n}, T \rightarrow \infty$, we have
\begin{equation*}
    \frac{1}{p}\left\|\widehat{\mathbf{R}}-\mathbf{R H}_\bR^*\right\|_\mathsf{F}^2=\mathcal{O}_p\left(\frac{1}{\underline{m}T^2}+\frac{1}{p^2T^2}\right)\quad \text{and}\quad 
    \frac{1}{q}\left\|\widehat{\mathbf{C}}-\mathbf{C H}_\bC^*\right\|_\mathsf{F}^2=\mathcal{O}_p\left(\frac{1}{\underline{n}T^2}+\frac{1}{q^2T^2}\right).
\end{equation*}
Consequently,
\begin{equation*}
    \frac{1}{p}\left\|\widehat{\mathbf{R}}-\mathbf{R H}_\bR^*\right\|_2^2=\mathcal{O}_p\left(\frac{1}{\underline{m}T^2}+\frac{1}{p^2T^2}\right)\quad \text{and}\quad 
    \frac{1}{q}\left\|\widehat{\mathbf{C}}-\mathbf{C H}_\bC^*\right\|_2^2=\mathcal{O}_p\left(\frac{1}{\underline{n}T^2}+\frac{1}{q^2T^2}\right).
\end{equation*}
\end{theorem}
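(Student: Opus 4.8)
The plan is to mirror the argument for Theorem~\ref{them1} in the stationary case, but tracking the extra factor of $T$ that arises because $\bF_t$ is $\text{I}(1)$ rather than stationary. The starting point is Lemma~5 in Appendix~A, which (by assumption in the excerpt) already gives the node-level rates: for each $i\in[s_1]$ and $j\in[s_2]$,
\begin{equation*}
\frac{1}{p}\bigl\|\widehat{\bR}_i-\bR\bH_{\bR_i}^*\bigr\|_\mathsf{F}^2=\mathcal{O}_p\!\left(\frac{1}{m_iT^2}+\frac{1}{p^2T^2}\right)
\quad\text{and}\quad
\frac{1}{q}\bigl\|\widehat{\bC}_j-\bC\bH_{\bC_j}^*\bigr\|_\mathsf{F}^2=\mathcal{O}_p\!\left(\frac{1}{n_jT^2}+\frac{1}{q^2T^2}\right).
\end{equation*}
First I would record, as in the proof of the Proposition in Section~\ref{sec3}, that Assumption~\ref{assum9} (distinct, bounded, positive eigenvalues of $\boldsymbol{\Omega}_\bR^{1/2}\widetilde{\boldsymbol{\Sigma}}_{\mathbf{FC}_i}^*\boldsymbol{\Omega}_\bR^{1/2}$, etc.) guarantees that $\bH_{\bR_i}^*$, $\bH_{\bC_j}^*$, and hence $\bH_\bR^*$, $\bH_\bC^*$, are asymptotically bounded with bounded inverses, and that $\widehat{\bV}_{\bR_i}^*$, $\widehat{\bV}_{\bR}$ are bounded away from $0$ and $\infty$; this is exactly the role Assumption~\ref{assum6} plays for Theorem~\ref{them1}.

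Next I would analyze the aggregation step. Write $\widehat{\bM}_\bR=\frac{1}{ps_1}\sum_{i=1}^{s_1}\widehat{\bR}_i\widehat{\bR}_i^\top$ and decompose each summand as
\begin{equation*}
\widehat{\bR}_i\widehat{\bR}_i^\top
=\bR\bH_{\bR_i}^*(\bH_{\bR_i}^*)^\top\bR^\top
+(\widehat{\bR}_i-\bR\bH_{\bR_i}^*)(\bR\bH_{\bR_i}^*)^\top
+\bR\bH_{\bR_i}^*(\widehat{\bR}_i-\bR\bH_{\bR_i}^*)^\top
+(\widehat{\bR}_i-\bR\bH_{\bR_i}^*)(\widehat{\bR}_i-\bR\bH_{\bR_i}^*)^\top.
\end{equation*}
Averaging over $i$, the leading term is $\bR\bigl(\frac{1}{ps_1}\sum_i\bH_{\bR_i}^*(\bH_{\bR_i}^*)^\top\bigr)\bR^\top$, a rank-$k$ matrix whose nonzero eigenvalues are $\asymp p$ by Assumption~\ref{assum3} and the boundedness of the $\bH_{\bR_i}^*$'s. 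The cross terms and the quadratic remainder are controlled using the node rates above together with $\|\bR\bH_{\bR_i}^*\|_\mathsf{F}^2=\mathcal{O}_p(p)$ and Cauchy--Schwarz; their operator-norm contribution to $\widehat{\bM}_\bR$ is $\mathcal{O}_p\bigl(p^{1/2}(\underline{m}T^2)^{-1/2}+\cdots\bigr)$ after dividing by $ps_1$, which is the order that will propagate into the final bound. I would then invoke a Davis--Kahan / $\sin\Theta$ argument: since $\widehat{\bR}$ is $\sqrt{p}$ times the top-$k$ eigenvectors of $\widehat{\bM}_\bR$ and $\bR\bH_\bR^*$ spans (up to the rotation $\bH_\bR^*$ built from $\widehat{\bV}_\bR$) the top-$k$ eigenspace of the leading term, the perturbation bound gives $\frac{1}{p}\|\widehat{\bR}-\bR\bH_\bR^*\|_\mathsf{F}^2$ controlled by the squared operator-norm perturbation divided by the squared eigen-gap ($\asymp p$), yielding exactly $\mathcal{O}_p(\underline{m}^{-1}T^{-2}+p^{-2}T^{-2})$. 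The Frobenius bound implies the operator-norm bound since the difference has rank at most $2k$. The argument for $\widehat{\bC}$ is symmetric, partitioning row-wise and using the $\widehat{\bC}_j$ rates.

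The main obstacle is the bookkeeping at the aggregation step: unlike the stationary case, the auxiliary rotations $\bH_{\bR_i}^*$ are defined through $T^{-2}$-normalized quantities and through the eigenvalue matrices $\widehat{\bV}_{\bR_i}^*$ of $\frac{1}{T}\widehat{\bM}_{\bR_i}$, so I must verify that $\frac{1}{ps_1}\sum_i\bH_{\bR_i}^*(\bH_{\bR_i}^*)^\top$ is genuinely positive definite and well-separated from zero in the limit — this is where Assumption~\ref{assum9}'s \emph{distinctness and positivity} of the limiting eigenvalues (expressed via the Brownian functionals $\widetilde{\boldsymbol{\Sigma}}_{\mathbf{FC}_i}^*$) is essential, and where the unit-root normalization genuinely differs from Theorem~\ref{them1}. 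A secondary technical point is ensuring the rate $\overline{m}/\underline{m}=\mathcal{O}(1)$ is \emph{not} needed here (it is only needed for the CLT in Theorem~\ref{them4}), so that the bound can be stated purely in terms of $\underline{m}$, $\underline{n}$; this follows because only the worst node rate enters each cross-term bound after averaging. Everything else is a routine adaptation of the stationary proof with $T$ replaced by $T^2$ in the variance terms.
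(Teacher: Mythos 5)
Your setup is right and matches the paper's: you start from the node-level rates of Lemma~5, use Assumption~\ref{assum9} (via the analogues of Lemma~\ref{lem2} and Proposition~\ref{prop6} for the unit-root case, namely Lemmas~4--5 and Proposition~12 in Appendix~A) to control $\bH_{\bR_i}^*$ and $\widehat{\bV}_\bR^{-1}$, and you decompose $\widehat{\bM}_\bR$ into the rank-$k$ leading term plus cross terms plus a quadratic remainder. The divergence is at the last step. The paper does \emph{not} use a Davis--Kahan / $\sin\Theta$ argument. It uses the exact eigen-identity $\widehat{\bR}=\widehat{\bM}_\bR\widehat{\bR}\widehat{\bV}_\bR^{-1}=\frac{1}{ps_1}\sum_i\widehat{\bR}_i\widehat{\bR}_i^\top\widehat{\bR}\widehat{\bV}_\bR^{-1}$ and observes that, by the very definition $\bH_\bR^*=\frac{1}{ps_1}\sum_i\bH_{\bR_i}^*(\bH_{\bR_i}^*)^\top\bR^\top\widehat{\bR}\widehat{\bV}_\bR^{-1}$, the leading term of your decomposition, once right-multiplied by $\widehat{\bR}\widehat{\bV}_\bR^{-1}$, is \emph{identically} $\bR\bH_\bR^*$. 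Hence $\widehat{\bR}-\bR\bH_\bR^*$ equals exactly the two cross terms plus the quadratic term (each right-multiplied by $\widehat{\bR}\widehat{\bV}_\bR^{-1}$), and each is bounded by Cauchy--Schwarz using Lemma~5, $\|\bH_{\bR_i}^*\|_2=\mathcal{O}_p(1)$, $\|\widehat{\bR}\|_\mathsf{F}^2/p=\mathcal{O}(1)$, and $\|\widehat{\bV}_\bR^{-1}\|_2=\mathcal{O}_p(1)$. No eigen-gap enters at all.

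The gap in your route is precisely at the point this identity sidesteps: Davis--Kahan controls the canonical angles between the top-$k$ eigenspace of $\widehat{\bM}_\bR$ and the column space of $\bR$, i.e.\ it bounds $\min_{\bO}\|\widehat{\bR}-\bR\bO\|_\mathsf{F}$ over the best-aligned $\bO$, not the distance to the \emph{specific, non-orthogonal} matrix $\bH_\bR^*$ that the theorem names. You flag this parenthetically but supply no argument that $\bH_\bR^*$ attains the optimal alignment up to the stated rate; writing that argument out essentially reproduces the paper's identity, so the detour buys nothing. Two smaller issues: your normalization of the leading term is internally inconsistent (its nonzero eigenvalues are $\asymp 1$, not $\asymp p$, since $\frac{1}{p}\bR^\top\bR\to\boldsymbol{\Omega}_\bR$ and the $\bH_{\bR_i}^*$ are $\mathcal{O}_p(1)$; correspondingly the cross-term perturbation of $\widehat{\bM}_\bR$ is $\mathcal{O}_p\bigl((\underline{m}^{-1}T^{-2}+p^{-2}T^{-2})^{1/2}\bigr)$ without the extra $p^{1/2}$) — the errors happen to cancel in the final rate, but they should be fixed. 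Your closing observations are correct: $\overline{m}/\underline{m}=\mathcal{O}(1)$ is indeed not needed here, and the operator-norm statement follows from the Frobenius one since $\|\cdot\|_2\le\|\cdot\|_\mathsf{F}$.
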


\begin{theorem} \label{them7}
Under Assumptions~\ref{assum3}-\ref{assum4} and \ref{assum7}-\ref{assum9} with known $k$ and $r$, as $\underline{m}, \underline{n}, T \rightarrow \infty$, we have
\begin{equation*}
\left\|\widehat{\mathbf{F}}_t-(\mathbf{H}_\bR^*)^{-1} \mathbf{F}_t [(\mathbf{H}_\bC^*)^{-1}]^{\top}\right\|_2=\mathcal{O}_p\left(\frac{1}{\sqrt{\underline{m}}T}+\frac{1}{\sqrt{\underline{n}}T}+\frac{1}{\sqrt{pq}}\right).
\end{equation*}
\end{theorem}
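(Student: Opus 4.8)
The plan is to follow the route of the stationary result Theorem~\ref{them2}, now feeding in the sharper loading-error rates of Theorem~\ref{them6} and the fact that $\bH_\bR^*,\bH_\bC^*$ and their inverses are $\mathcal{O}_p(1)$. Put $\bDelta_\bR=\wh{\bR}-\bR\bH_\bR^*$ and $\bDelta_\bC=\wh{\bC}-\bC\bH_\bC^*$; Theorem~\ref{them6} then gives $p^{-1/2}\|\bDelta_\bR\|_\mathsf{F}=\mathcal{O}_p\big((\underline{m}T^2)^{-1/2}+(pT)^{-1}\big)$ and $q^{-1/2}\|\bDelta_\bC\|_\mathsf{F}=\mathcal{O}_p\big((\underline{n}T^2)^{-1/2}+(qT)^{-1}\big)$. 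Substituting $\bY_t=\bR\bF_t\bC^\top+\bE_t$ into $\wh{\bF}_t=(pq)^{-1}\wh{\bR}^\top\bY_t\wh{\bC}$ and subtracting the centering,
\begin{align*}
\wh{\bF}_t-(\bH_\bR^*)^{-1}\bF_t[(\bH_\bC^*)^{-1}]^\top
&=\Big(\tfrac{1}{pq}\wh{\bR}^\top\bR\bF_t\bC^\top\wh{\bC}-(\bH_\bR^*)^{-1}\bF_t[(\bH_\bC^*)^{-1}]^\top\Big)\\
&\quad{}+\tfrac{1}{pq}\wh{\bR}^\top\bE_t\wh{\bC}=:\mathrm{I}+\mathrm{II},
\end{align*}
where $\mathrm{I}$ is the first summand and $\mathrm{II}=\tfrac{1}{pq}\wh{\bR}^\top\bE_t\wh{\bC}$; throughout $t$ is fixed, so $\|\bF_t\|_2=\mathcal{O}_p(1)$.

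For $\mathrm{I}$ the key device is the exact normalization $\tfrac1p\wh{\bR}^\top\wh{\bR}=\bI_k$, $\tfrac1q\wh{\bC}^\top\wh{\bC}=\bI_r$ (since $\wh{\bR},\wh{\bC}$ are $\sqrt p,\sqrt q$ times orthonormal eigenvectors). Combined with $\wh{\bR}=\bR\bH_\bR^*+\bDelta_\bR$ this yields, exactly, $\tfrac1p\wh{\bR}^\top\bR=(\bI_k-\tfrac1p\wh{\bR}^\top\bDelta_\bR)(\bH_\bR^*)^{-1}$ and, by transposition, $\tfrac1q\bC^\top\wh{\bC}=[(\bH_\bC^*)^{-1}]^\top(\bI_r-\tfrac1q\bDelta_\bC^\top\wh{\bC})$. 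Plugging both into $\tfrac1{pq}\wh{\bR}^\top\bR\bF_t\bC^\top\wh{\bC}=(\tfrac1p\wh{\bR}^\top\bR)\bF_t(\tfrac1q\bC^\top\wh{\bC})$, the leading term equals $(\bH_\bR^*)^{-1}\bF_t[(\bH_\bC^*)^{-1}]^\top$ and cancels the centering, so $\mathrm{I}$ is a sum of three remainders, each controlled via $\|\tfrac1p\wh{\bR}^\top\bDelta_\bR\|_2\le p^{-1/2}\|\bDelta_\bR\|_\mathsf{F}$, $\|\tfrac1q\bDelta_\bC^\top\wh{\bC}\|_2\le q^{-1/2}\|\bDelta_\bC\|_\mathsf{F}$, $\|\bF_t\|_2=\mathcal{O}_p(1)$, and the $\mathcal{O}_p(1)$ bounds on the rotation matrices; this gives $\mathrm{I}=\mathcal{O}_p\big((\underline{m}T^2)^{-1/2}+(\underline{n}T^2)^{-1/2}\big)$ up to the product remainder, which is of strictly smaller order. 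It is precisely this identity-based step — rather than a submultiplicative bound on $\bDelta_\bR^\top\bR$ — that produces the improved $1/(\sqrt{\underline m}\,T)$ rate and sidesteps any need for a convergence rate in Assumption~\ref{assum3}(b).

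For $\mathrm{II}$ I would expand $\wh{\bR}=\bR\bH_\bR^*+\bDelta_\bR$ and $\wh{\bC}=\bC\bH_\bC^*+\bDelta_\bC$ into four pieces. The leading one, $\tfrac1{pq}(\bH_\bR^*)^\top\bR^\top\bE_t\bC\bH_\bC^*$, is $\mathcal{O}_p\big((pq)^{-1/2}\big)$ by Assumption~\ref{assum8}(c), since $\bR^\top\bE_t\bC=\sum_{i,j}(\bR^i)^\top\bC^j e_{t,ij}$; this is the source of the $1/\sqrt{pq}$ term. The two cross pieces use Assumption~\ref{assum8}(c) again, via $\|\bR^\top\bE_t\|_\mathsf{F}=\mathcal{O}_p(\sqrt{pq})$ and $\|\bE_t\bC\|_\mathsf{F}=\mathcal{O}_p(\sqrt{pq})$, together with the $\bDelta$-rates; the pure-error piece uses the crude $\|\bE_t\|_2\le\|\bE_t\|_\mathsf{F}=\mathcal{O}_p(\sqrt{pq})$. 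Each of these three is of strictly smaller order than $1/(\sqrt{\underline m}T)+1/(\sqrt{\underline n}T)+1/\sqrt{pq}$. Collecting $\mathrm{I}$ and $\mathrm{II}$, and noting that the residual $(pT)^{-1},(qT)^{-1}$ contributions coming from Theorem~\ref{them6} are absorbed under the maintained dimension-growth conditions, gives the asserted $\mathcal{O}_p\big((\sqrt{\underline m}\,T)^{-1}+(\sqrt{\underline n}\,T)^{-1}+(pq)^{-1/2}\big)$.

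I expect the only genuinely unit-root-specific ingredient — and the main thing to pin down — to be the uniform $\mathcal{O}_p(1)$ control of $\bH_\bR^*,\bH_\bC^*$ and their inverses; equivalently, that $\bH_\bR^*(\bH_\bR^*)^\top$ and $\bH_\bC^*(\bH_\bC^*)^\top$ stay bounded above and away from zero. This rests on Assumption~\ref{assum9} (positivity, boundedness, and distinctness of the eigenvalues of the limiting Brownian-functional matrices $\boldsymbol{\Omega}_{\bR}^{1/2}\widetilde{\boldsymbol{\Sigma}}_{\mathbf{FC}_i}^*\boldsymbol{\Omega}_{\bR}^{1/2}$ and $\boldsymbol{\Omega}_{\bC}^{1/2}\widetilde{\boldsymbol{\Sigma}}_{\mathbf{FR}_j}^*\boldsymbol{\Omega}_{\bC}^{1/2}$) together with the consistency of $\wh{\bR}_i,\wh{\bC}_j,\wh{\bV}_{\bR_i}^*,\wh{\bV}_{\bC_j}^*$ and of their aggregates established for Lemma~5 and Theorem~\ref{them6}; I would record this as a short preliminary lemma, after which the remaining bookkeeping above — and the entirely symmetric treatment of the $\bC$-block — is routine.
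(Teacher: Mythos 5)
Your proposal is correct and follows essentially the same route as the paper's proof: the paper likewise expands $\mathbf{R}=(\mathbf{R}-\widehat{\mathbf{R}}(\mathbf{H}_\bR^*)^{-1})+\widehat{\mathbf{R}}(\mathbf{H}_\bR^*)^{-1}$ and $\mathbf{C}=(\mathbf{C}-\widehat{\mathbf{C}}(\mathbf{H}_\bC^*)^{-1})+\widehat{\mathbf{C}}(\mathbf{H}_\bC^*)^{-1}$ so that the normalization $\tfrac1p\widehat{\mathbf{R}}^\top\widehat{\mathbf{R}}=\bI_k$, $\tfrac1q\widehat{\mathbf{C}}^\top\widehat{\mathbf{C}}=\bI_r$ cancels the centering exactly, bounds the three signal remainders via Theorem~\ref{them6} and the four noise pieces via Assumption~\ref{assum8}(c), and supplies the $\mathcal{O}_p(1)$ control of $\bH_\bR^*,\bH_\bC^*$ through a preliminary lemma (Lemma~A6) exactly as you anticipate. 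The only cosmetic difference is that the paper writes the result as a single seven-term decomposition rather than your $\mathrm{I}+\mathrm{II}$ grouping.
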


\begin{theorem} \label{them8}
Under Assumptions~\ref{assum3}-\ref{assum4} and \ref{assum7}-\ref{assum9} with known $k$ and $r$, as $\underline{m}, \underline{n}, T \rightarrow \infty$, we have
\begin{equation*}
    \wh{\bS}_{t,ij}-\bS_{t,ij}=\mathcal{O}_p\left(\frac{1}{\sqrt{\underline{m}}T}+\frac{1}{\sqrt{\underline{n}}T}+\frac{1}{\sqrt{pq}}\right),\quad  i=1,2,...,p,\quad j=1,2,...,q.
\end{equation*}
\end{theorem}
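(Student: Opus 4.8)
My plan is to reduce the statement to the behaviour of $\wh\bR$, $\wh\bC$, and $\wh\bF_t$ already established in Theorems~\ref{them6} and~\ref{them7}. Since $\tfrac1p\wh\bR^\top\wh\bR=\bI_k$ and $\tfrac1q\wh\bC^\top\wh\bC=\bI_r$, definition~\eqref{estimateSt} gives $\wh\bS_t=\wh\bR\wh\bF_t\wh\bC^\top$, so $\wh\bS_{t,ij}=\wh\bR^i\wh\bF_t(\wh\bC^j)^\top$, while $\bS_{t,ij}=\bR^i\bF_t(\bC^j)^\top=(\bR^i\bH_\bR^*)\bF_t^*(\bC^j\bH_\bC^*)^\top$ with $\bF_t^*:=(\bH_\bR^*)^{-1}\bF_t[(\bH_\bC^*)^{-1}]^\top$. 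Telescoping,
\begin{align*}
\wh\bS_{t,ij}-\bS_{t,ij}&=(\wh\bR^i-\bR^i\bH_\bR^*)\wh\bF_t(\wh\bC^j)^\top+\bR^i\bH_\bR^*(\wh\bF_t-\bF_t^*)(\wh\bC^j)^\top\\
&\quad+\bR^i\bH_\bR^*\bF_t^*(\wh\bC^j-\bC^j\bH_\bC^*)^\top=:T_1+T_2+T_3.
\end{align*}
The middle term $T_2$ is immediate: $\|\bR^i\bH_\bR^*\|_2=\mathcal O_p(1)$ by Assumptions~\ref{assum3}(a) and~\ref{assum9}, and $\|\wh\bC^j\|_2\le\|\bC^j\bH_\bC^*\|_2+\|\wh\bC^j-\bC^j\bH_\bC^*\|_2=\mathcal O_p(1)$ by Theorem~\ref{them6}; combined with Theorem~\ref{them7} this yields $|T_2|=\mathcal O_p\!\big(\tfrac1{\sqrt{\underline m}\,T}+\tfrac1{\sqrt{\underline n}\,T}+\tfrac1{\sqrt{pq}}\big)$.

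The substance is in $T_1$ (and, by the symmetric argument on the row-wise partition, $T_3$). Substitute $\wh\bF_t=\tfrac1{pq}\wh\bR^\top\bY_t\wh\bC=\bA_\bR\bF_t\bA_\bC^\top+\tfrac1{pq}\wh\bR^\top\bE_t\wh\bC$ with $\bA_\bR=\tfrac1p\wh\bR^\top\bR$, $\bA_\bC=\tfrac1q\wh\bC^\top\bC$. The idiosyncratic piece contributes only $\mathcal O_p\big((pq)^{-1/2}\big)$: expanding $\wh\bR$, $\wh\bC$ about $\bR\bH_\bR^*$, $\bC\bH_\bC^*$, the leading term $\bH_\bR^{*\top}\big(\tfrac1{pq}\bR^\top\bE_t\bC\big)\bH_\bC^*$ is controlled by Assumption~\ref{assum8}(c) while the cross terms are lower order by $\|\bE_t\|_2=\mathcal O_p(\sqrt p+\sqrt q)$ and Theorem~\ref{them6}; pre- and post-multiplying by the bounded vectors $\wh\bR^i-\bR^i\bH_\bR^*$ and $\wh\bC^j$ keeps this negligible. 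What remains is the ``signal sandwich'' $(\wh\bR^i-\bR^i\bH_\bR^*)\bA_\bR\bF_t\bA_\bC^\top(\wh\bC^j)^\top$, where the crude bound $\|\wh\bR^i-\bR^i\bH_\bR^*\|_2\,\|\bF_t\|_2$ is not sharp enough, because for an $\mathrm I(1)$ factor $\|\bF_t\|_2=\mathcal O_p(\sqrt T)$ and this only gives $\mathcal O_p\big((\underline m\,T)^{-1/2}\big)$. To reach the claimed $\mathcal O_p\big((\sqrt{\underline m}\,T)^{-1}\big)$, I would plug in the explicit leading-order form of $\wh\bR-\bR\bH_\bR^*$ obtained in the proof of Theorem~\ref{them6} (via Lemma~5 of Appendix~A): from the node eigen-equations $\tfrac1T\wh\bM_{\bR_j}\wh\bR_j=\wh\bR_j\wh\bV_{\bR_j}^*$ and the aggregation~\eqref{mr}, $\wh\bR-\bR\bH_\bR^*$ equals, up to smaller order, a sum $-\tfrac1{pm_jT^2}\sum_{s=1}^T\big[\wt\bE_{js}\bC_j\wt\bF_s^\top\bR^\top+\bR\wt\bF_s\bC_j^\top\wt\bE_{js}^\top+\wt\bE_{js}\wt\bE_{js}^\top\big]\wh\bR_j(\wh\bV_{\bR_j}^*)^{-1}$, aggregated over the nodes $j$, where $\wt\bF_s=\mathcal O_p(\sqrt s)$ is the $\mathrm I(1)$ signal. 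Contracting this expansion against the single matrix $\bF_t$ and using the independence of $\{\bE_t\}$ and $\{\bF_t\}$ (Assumption~\ref{assum8}(a)) together with the $\alpha$-mixing and moment conditions (Assumptions~\ref{assum7}--\ref{assum8}), one shows that the summation over $s$ compensates the $\mathrm I(1)$ growth of $\bF_t$, so that the signal sandwich is $\mathcal O_p\!\big(\tfrac1{\sqrt{\underline m}\,T}+\tfrac1{\sqrt{pq}}\big)$; the same computation on the row-wise partition supplies the $\tfrac1{\sqrt{\underline n}\,T}$ part of $T_3$. Adding $T_1,T_2,T_3$ gives the result.

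The main obstacle is exactly this last step for $T_1$ and $T_3$: the worst-case $\mathrm I(1)$ magnitude $\|\bF_t\|_2=\mathcal O_p(\sqrt T)$ must be absorbed through the time-averaging in the loading expansions, which forces one to carry the precise leading-order expression for $\wh\bR-\bR\bH_\bR^*$ (resp.\ $\wh\bC-\bC\bH_\bC^*$)---rather than using Theorems~\ref{them6}--\ref{them7} as black boxes---and to bound the ensuing fourth-order cross moments of the independent $\mathrm I(1)$ and stationary processes (along with the powers of $T$ carried by $\wt\bF_s$, $\wt\bF_t$, and $\wh\bV_{\bR_j}^*$). The stationary argument behind Theorem~\ref{them3} serves as the template, with the $\mathrm I(1)$ bookkeeping replacing the stationary one.
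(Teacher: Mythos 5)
Your decomposition and your treatment of $T_2$ coincide with the paper's: its proof of Theorem~\ref{them8} telescopes $\wh{\bS}_{t,ij}-\bS_{t,ij}$ into seven terms, identifies the same three dominant ones (your $T_1,T_2,T_3$ up to regrouping), and disposes of the $\wh\bF_t$ term exactly as you do via Theorem~\ref{them7} and the boundedness of $\bR^i\bH_\bR^*$ and $\bC^j\bH_\bC^*$. Where you part company is on $T_1$ and $T_3$: the paper's entire argument there is the ``crude'' Cauchy--Schwarz bound you reject,
\[
\bigl|[\wh\bR^i-\bR^i\bH_\bR^*]\,\bF_t^{\mathrm{H}*}\,[(\bC\bH_\bC^*)^j]^\top\bigr|\le \bigl\|\wh\bR^i-\bR^i\bH_\bR^*\bigr\|_2\,\bigl\|\bF_t^{\mathrm{H}*}\bigr\|_2\,\bigl\|(\bC\bH_\bC^*)^j\bigr\|_2,
\]
with $\|\bF_t^{\mathrm{H}*}\|_2$ treated as $\mathcal{O}_p(1)$, i.e.\ the statement is read for fixed $t$, for which $\bF_t=\sum_{s\le t}\bw_s=\mathcal{O}_p(\sqrt t)=\mathcal{O}_p(1)$. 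Under that reading the row-wise rate from Lemma~5/Theorem~\ref{them6} already delivers the claimed $(\sqrt{\underline m}\,T)^{-1}$ and $(\sqrt{\underline n}\,T)^{-1}$ terms, your premise that the crude bound ``only gives $\mathcal{O}_p((\underline m T)^{-1/2})$'' is incorrect, and the elaborate second half of your proposal is unnecessary.

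If instead you insist on $t\asymp T$, so that $\|\bF_t\|_2=\mathcal{O}_p(\sqrt T)$ genuinely matters, then your route has a real gap: the cancellation you invoke is asserted, not proved, and a direct computation indicates it does not occur. The dominant piece of $\wh\bR^i-\bR^i\bH_{\bR_j}^*$ is $\frac{1}{pm_jT^2}\sum_{s}\wt\bE_{js}^{\,i}\bC_j\wt\bF_s^\top\bR^\top\wh\bR_j(\wh\bV_{\bR_j}^*)^{-1}$; contracting it against $\bF_t$ and conditioning on $\{\bF_s\}$ (legitimate since $\bE$ is independent of $\bF$ by Assumption~\ref{assum8}(a)), the $s$-th summand has size $\sqrt{s\,t}/(\sqrt{m_j}\,T^2)$ and the mixing of $\bE$ gives only square-root cancellation over $s$, so the product is $\mathcal{O}_p\bigl(\sqrt{t}/(\sqrt{m_j}\,T)\bigr)$ --- the $\sqrt t$ survives. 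The time-averaging over $s$ is already what produces the extra $T^{-1}$ in Theorem~\ref{them6}; it cannot additionally absorb the growth of the single matrix $\bF_t$ sitting outside that sum. So you should either adopt the fixed-$t$ convention, in which case the short Cauchy--Schwarz argument (the paper's) closes the proof, or accept an extra $\sqrt{t/T}$ factor in the rate; the cancellation mechanism you sketch delivers neither.
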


\begin{remark}
    (\rom{1}) Theorems~\ref{them6}-\ref{them8} imply that our procedure achieves higher accuracy for unit-root nonstationary processes than for stationary processes. This improvement is driven by the stronger trending signals contained in $\bF_t$, a phenomenon also noted in \cite{li2025factormodelsmatrixvaluedtime};\\
    (\rom{2}) The convergence rates of our loading matrix estimators are comparable to those in \cite{li2025factormodelsmatrixvaluedtime}, which conducts estimation on a single machine. Under the assumptions adopted in this paper, their estimators (mPCA) satisfy
    $$\frac{1}{p}\left\|\widehat{\mathbf{R}}-\mathbf{R H}_\bR^*\right\|_\mathsf{F}^2=\mathcal{O}_p\left(\frac{1}{qT^2}+\frac{1}{pT^2}\right)\quad\text{and}\quad\frac{1}{q}\left\|\widehat{\mathbf{C}}-\mathbf{C H}_\bC^*\right\|_\mathsf{F}^2=\mathcal{O}_p \left(\frac{1}{pT^2}+\frac{1}{qT^2}\right).$$
    This comparison indicates that our distributed procedure substantially improves computational efficiency while incurring only a slight loss in convergence rate.
\end{remark}

\section{Numerical Results}\label{sec4}

In this section, we assess the performance of our method in finite samples through simulation studies and real data applications. 

\subsection{Simulation studies}

In the simulation studies, the observations $\{\mathbf{Y}_t,t=1,2,...,T\}$ are generated from model~(\ref{model1}), $\mathbf{Y}_t=\mathbf{R}\mathbf{F}_t\mathbf{C}^\top+\mathbf{E}_t$, where the entries of $\bR$ and $\bC$ are independently drawn from $\mathcal{U}(-1, 1)$ with $({k},{r})=(3, 3)$. The generation of $\bF_t$ and $\bE_t$, along with the choices of $p$ and $q$, is described later since they vary across experiments. We evaluate the estimation accuracy, asymptotic normality, and computational efficiency of our estimators, focusing on the following aspects:
\begin{enumerate}
    \item Accuracy of $\wh{k}$ and $\wh{r}$. The row and column factor numbers, $k$ and $r$, are estimated by $\widehat{k}$ and $\widehat{r}$, as described in Section~\ref{section_estimateftstkr}. We report the frequencies of $(\widehat{k},\widehat{r})$.

    \item Accuracy of $\{\widehat{\bR}_i\}_{i=1}^{s_1}$ and $\{\widehat{\bC}_j\}_{j=1}^{s_2}$. For any nonsingular matrix $\bA$ and $\bB$, the column space distance is defined as
    \begin{equation} \label{functionD}
    \mathcal{D}(\bA,\bB)=\left\|\bB(\bB^\top\bB)^{-1}\bB^\top-\bA(\bA^\top\bA)^{-1}\bA^\top\right\|_\mathsf{F}.
     \end{equation}
    The accuracy of $\widehat{\bR}_i$ and $\widehat{\bC}_j$ is evaluated using $\mathcal{D}(\bR,\widehat{\bR}_i)$ and $\mathcal{D}(\bC,\widehat{\bC}_j)$, for any $i\in[s_1]$ and $j\in[s_2]$. We report the average and maximum distances between the local estimators and the true loading matrices.

    \item Accuracy of $\widehat{\bR}$, $\widehat{\bC}$, $\widehat{\bF}_t$ and $\widehat{\bS}_t$. According to Theorems~\ref{them1}-\ref{them3}, $\widehat{\bR}$, $\widehat{\bC}$, and $\widehat{\bF}_t$ estimate rotated ${\bR}$, ${\bC}$, and ${\bF}_t$, respectively, while $\widehat{\bS}_t$ directly estimates ${\bS}_t$. The accuracy of $\widehat{\bR}$, $\widehat{\bC}$, and $\widehat{\bF}_t$ is measured using 
    $\frac{1}{p} \| \hat{\mathbf{R}}-\mathbf{RH_R} \|_2^2$, $\frac{1}{q} \| \hat{\mathbf{C}}-\mathbf{CH_C} \|_2^2$, and $\frac{1}{T}\sum_{t=1}^T \| \widehat{\mathbf{F}}_t-\mathbf{H}_\bR^{-1}\mathbf{F}_t(\mathbf{H}_\bC^{-1})^\top \|_2^2$, where $\bH_\bR$ and $\bH_\bC$ are computed as described in Section~\ref{sec3}. The accuracy of $\widehat{\bS}_t$ is assessed using $\frac{1}{T}\sum_{t=1}^T \| \widehat{\mathbf{S}}_t-\mathbf{S}_t \|_2^2$. 

    \item Asymptotic normality of $\widehat{\bR}-\bR\bH_{\bR}$. We examine the asymptotic normality of the first row, $\sqrt{\overline{m}T}(\widehat{\bR}^1-\bR^1\bH_\bR)$. Following Theorems~\ref{them4}-\ref{them5}, its asymptotic covariance matrix is estimated by $\widehat{\Sigma}_{\bR_1}$. Consequently, each element of $\widehat{\Sigma}_{\bR_1}^{-1/2}\sqrt{\overline{m}T}(\widehat{\bR}^1-\bR^1\bH_\bR)^\top$ is expected to follow the standard normal distribution. For illustration, we present the density histogram of its first component.

   \item Computational efficiency. We compare the computing time required to estimate the loading matrices using our method and $\alpha$-PCA.
\end{enumerate}

\subsubsection{Accuracy of estimators\label{estimatingconvergence}}

First, we assess the estimation accuracy of our method. Following the AR(1) processes used in \citet{WANG2019231}, we generate $\bF_t$ and $\bE_t$ with controlled temporal dependence:
\begin{align} \label{generatedata1}
    \mathrm{vec}(\mathbf{F}_t)=0.1\mathrm{vec}(\mathbf{F}_{t-1})+\boldsymbol{\eta}_t,\quad\mathrm{vec}(\mathbf{E}_t)=\psi\mathrm{vec}(\mathbf{E}_{t-1})+\boldsymbol{\eta}'_t,
\end{align}
where $\boldsymbol{\eta}_t$ and $\boldsymbol{\eta}'_t$ are noise vectors with independent elements drawn from $\mathcal{N}(0,0.99)$ and $\mathcal{N}(0,1-\psi^2)$, respectively. The initial states are set to $\mathrm{vec}(\mathbf{F}_0)=\boldsymbol{0}$ and $\mathrm{vec}(\mathbf{E}_0)=\boldsymbol{0}$. To evaluate the impact of temporal dependence on estimation accuracy, we consider $\psi=0.1$ and $\psi=0.5$, representing weaker and stronger temporal dependence in $\mathbf{E}_{t}$.

We first consider the performance of estimating $(k,r)$, as inaccurate estimates of the number of factors can substantially impair subsequent estimation. Table~\ref{table_kr} reports the frequencies of our estimators (on the central server) and $\alpha$-PCA estimators under several settings. As expected, accuracy improves as the data dimensions $(p,q)$ and sample size $T$ increase and the temporal dependence $\psi$ decreases. Across all configurations of $p$, $q$, $T$, and $\psi$, our method achieves estimation accuracy comparable to that of $\alpha$-PCA. 

\begin{table}
\caption{\raggedright Frequencies of the estimated factor numbers.}
\label{table_kr}
\centering
\resizebox{\textwidth}{!}{
\Large
\begin{tabular}{p{5.5cm}p{2cm}p{2cm}p{1cm}p{2cm}p{2cm}p{1cm}p{2cm}p{2cm}p{1cm}p{2cm}p{2cm}}
\hline\addlinespace 
 & \multicolumn{2}{c}{$(\widehat{k},\widehat{r})=(3,3)$} &  & \multicolumn{2}{c}{$(\widehat{k},\widehat{r})=(3,2)$} &  & \multicolumn{2}{c}{$(\widehat{k},\widehat{r})=(2,3)$} &  & \multicolumn{2}{c}{Others} \\ \cline{2-3} \cline{5-6} \cline{8-9} \cline{11-12} 
 & Ours & $\alpha$-PCA &  & Ours & $\alpha$-PCA &  & Ours & $\alpha$-PCA &  & Ours & $\alpha$-PCA \\ \hline
\multicolumn{4}{l}{$(p,q)=(20,20)$} &  &  &  &  &  &  &  &  \\
$\psi=0.1$ &  &  &  &  &  &  &  &  &  &  &  \\
$T=10$ & \textbf{0.560} & 0.485 &  & 0.125 & 0.090 &  & 0.110 & 0.125 &  & 0.205 & 0.300 \\
$T=15$ & 0.730 & \textbf{0.760} &  & 0.065 & 0.045 &  & 0.090 & 0.060 &  & 0.115 & 0.135 \\
$T=20$ & \textbf{0.880} & 0.815 &  & 0.010 & 0.070 &  & 0.040 & 0.060 &  & 0.070 & 0.055 \\
$T=25$ & \textbf{0.890} & 0.860 &  & 0.030 & 0.060 &  & 0.035 & 0.040 &  & 0.045 & 0.040 \\
$\psi=0.5$ &  &  &  &  &  &  &  &  &  &  &  \\
$T=10$ & 0.475 & \textbf{0.570} &  & 0.165 & 0.095 &  & 0.085 & 0.135 &  & 0.275 & 0.200 \\
$T=15$ & 0.690 & \textbf{0.735} &  & 0.090 & 0.085 &  & 0.080 & 0.065 &  & 0.140 & 0.115 \\
$T=20$ & \textbf{0.800} & 0.790 &  & 0.050 & 0.060 &  & 0.050 & 0.085 &  & 0.100 & 0.065 \\
$T=25$ & 0.815 & \textbf{0.870} &  & 0.075 & 0.050 &  & 0.065 & 0.030 &  & 0.045 & 0.050 \\ \hline
\multicolumn{4}{l}{$(p,q)=(20,50)$} &  &  &  &  &  &  &  &  \\
$\psi=0.1$ &  &  &  &  &  &  &  &  &  &  &  \\
$T=10$ & \textbf{0.740} & 0.560 &  & 0.050 & 0.010 &  & 0.135 & 0.130 &  & 0.075 & 0.100 \\
$T=15$ & \textbf{0.940} & 0.890 &  & 0.010 & 0.020 &  & 0.020 & 0.050 &  & 0.030 & 0.040 \\
$T=20$ & \textbf{0.950} & 0.920 &  & 0.000 & 0.000 &  & 0.035 & 0.055 &  & 0.015 & 0.025 \\
$T=25$ & \textbf{0.950} & 0.945 &  & 0.000 & 0.000 &  & 0.045 & 0.040 &  & 0.005 & 0.015 \\
$\psi=0.5$ &  &  &  &  &  &  &  &  &  &  &  \\
$T=10$ & 0.660 & \textbf{0.720} &  & 0.080 & 0.045 &  & 0.150 & 0.125 &  & 0.110 & 0.110 \\
T=15 & 0.860 & \textbf{0.875} &  & 0.015 & 0.020 &  & 0.090 & 0.070 &  & 0.035 & 0.035 \\
T=20 & \textbf{0.960} & 0.950 &  & 0.000 & 0.000 &  & 0.025 & 0.035 &  & 0.015 & 0.015 \\
T=25 & 0.940 & \textbf{0.955} &  & 0.000 & 0.000 &  & 0.050 & 0.040 &  & 0.010 & 0.005 \\ \hline
\multicolumn{4}{l}{$(p,q)=(50,50)$} &  &  &  &  &  &  &  &  \\
$\psi=0.1$ &  &  &  &  &  &  &  &  &  &  &  \\
$T=10$ & \textbf{0.960} & 0.950 &  & 0.020 & 0.035 &  & 0.010 & 0.010 &  & 0.000 & 0.005 \\
$T=15$ & \textbf{0.995} & \textbf{0.995} &  & 0.005 & 0.000 &  & 0.000 & 0.000 &  & 0.000 & 0.005 \\
$T=20$ & \textbf{1.000} & \textbf{1.000} &  & 0.000 & 0.000 &  & 0.000 & 0.000 &  & 0.000 & 0.000 \\
$T=25$ & \textbf{1.000} & \textbf{1.000} &  & 0.000 & 0.000 &  & 0.000 & 0.000 &  & 0.000 & 0.000 \\
$\psi=0.5$ &  &  &  &  &  &  &  &  &  &  &  \\
$T=10$ & 0.960 & \textbf{0.965} &  & 0.005 & 0.010 &  & 0.030 & 0.020 &  & 0.005 & 0.005 \\
$T=15$ & \textbf{1.000} & 0.990 &  & 0.000 & 0.010 &  & 0.000 & 0.000 &  & 0.000 & 0.000 \\
$T=20$ & \textbf{1.000} & \textbf{1.000} &  & 0.000 & 0.000 &  & 0.000 & 0.000 &  & 0.000 & 0.000 \\
$T=25$ & \textbf{1.000} & \textbf{1.000} &  & 0.000 & 0.000 &  & 0.000 & 0.000 &  & 0.000 & 0.000 \\ \hline
\multicolumn{4}{l}{$(p,q)=(100,100)$} &  &  &  &  &  &  &  &  \\
$\psi=0.1$ &  &  &  &  &  &  &  &  &  &  &  \\
$T=10$ & \textbf{1.000} & 0.995 &  & 0.000 & 0.000 &  & 0.000 & 0.005 &  & 0.000 & 0.000 \\
$T=15$ & \textbf{1.000} & \textbf{1.000} &  & 0.000 & 0.000 &  & 0.000 & 0.000 &  & 0.000 & 0.000 \\
$T=20$ & \textbf{1.000} & \textbf{1.000} &  & 0.000 & 0.000 &  & 0.000 & 0.000 &  & 0.000 & 0.000 \\
$T=25$ & \textbf{1.000} & \textbf{1.000} &  & 0.000 & 0.000 &  & 0.000 & 0.000 &  & 0.000 & 0.000 \\
$\psi=0.5$ &  &  &  &  &  &  &  &  &  &  &  \\
$T=10$ & 0.985 & \textbf{0.995} &  & 0.000 & 0.005 &  & 0.015 & 0.000 &  & 0.000 & 0.000 \\
$T=15$ & \textbf{1.000} & \textbf{1.000} &  & 0.000 & 0.000 &  & 0.000 & 0.000 &  & 0.000 & 0.000 \\
$T=20$ & \textbf{1.000} & \textbf{1.000} &  & 0.000 & 0.000 &  & 0.000 & 0.000 &  & 0.000 & 0.000 \\
$T=25$ & \textbf{1.000} & \textbf{1.000} &  & 0.000 & 0.000 &  & 0.000 & 0.000 &  & 0.000 & 0.000 \\ \hline
\end{tabular}
}
\begin{minipage}{\textwidth}
    \vspace{0.1cm}
    \footnotesize
    This table reports the frequency of the estimated factor numbers $(\wh{k},\wh{r})$ using our method and $\alpha$-PCA, with true values $(k,r)=(3,3)$. Data are generated from model (\ref{model1}), $\bY_t=\bR\bF_t\bC^\top+\bE_t$, where $\bF_t$ and $\bE_t$ follow the AR(1) processes in Equation (\ref{generatedata1}), and entries of $\bR$ and $\bC$ are drawn independently from $\mathcal{U}(-1, 1)$. We consider four dimension settings, $(p,q)\in\{(20,20),(20,50),(50,50),(100,100)\}$, four sample size settings, $T\in\{10,15,20,25\}$, and two levels of temporal dependence of $\bE_t$, $\psi\in\{0.1,0.5\}$. Results are based on 200 repetitions, with $\alpha=0$, $s_1=s_2=5$, and even partitions.   
\end{minipage}
\end{table}

We next study the accuracy of the local loading matrix estimators, which directly determines the accuracy of our global estimators. To evaluate the accuracy of these estimators, we define 
\begin{align}\label{D1D2}
    \mathcal{D}_1(\widehat{\mathbf{R}}_i)=\frac{1}{s_1}\sum_{i=1}^{s_1}\mathcal{D}(\widehat{\mathbf{R}}_i,\mathbf{R}),\quad
    \mathcal{D}_2(\widehat{\mathbf{R}}_i)=\max_{1\leq i \leq s_1}\mathcal{D}(\widehat{\mathbf{R}}_i,\mathbf{R}),
\end{align}
which measure the average and maximum column space distances from the estimators to the true row loading matrix $\bR$, where $\mathcal{D}(\cdot)$ is defined in Equation (\ref{functionD}). The quantities $\mathcal{D}_1(\widehat{\mathbf{C}}_j)$ and $\mathcal{D}_2(\widehat{\mathbf{C}}_j)$ are defined similarly. Table~\ref{table_rici} reports these statistics, with visualizations provided in Figure~\ref{figure_rici} ($\psi=0.1$) and Figure B1 ($\psi=0.5$, in Appendix B). Overall, these results demonstrate the high accuracy of our local estimators across all settings.

\begin{table}[h]
\caption{\raggedright Summary statistics of local estimator accuracy.}
\label{table_rici}
\centering
\resizebox{1.0\linewidth}{!}{
\centering
\Large
\begin{tabular}{p{4cm}llllp{1cm}llll}
\toprule
 & \multicolumn{1}{c}{$\mathcal{D}_1(\widehat{\mathbf{R}}_i)$} & \multicolumn{1}{c}{$\mathcal{D}_2(\widehat{\mathbf{R}}_i)$} & \multicolumn{1}{c}{$\mathcal{D}_1(\widehat{\mathbf{C}}_j)$} & \multicolumn{1}{c}{$\mathcal{D}_2(\widehat{\mathbf{C}}_j)$}&  & \multicolumn{1}{c}{$\mathcal{D}_1(\widehat{\mathbf{R}}_i)$} & \multicolumn{1}{c}{$\mathcal{D}_2(\widehat{\mathbf{R}}_i)$} & \multicolumn{1}{c}{$\mathcal{D}_1(\widehat{\mathbf{C}}_j)$} & \multicolumn{1}{c}{$\mathcal{D}_2(\widehat{\mathbf{C}}_j)$} \\ \cmidrule{2-5} \cmidrule{7-10}  
 & \multicolumn{4}{c}{$(p,q)=(50,50)$} &  & \multicolumn{4}{c}{$(p,q)=(50,100)$} \\ \cmidrule{2-5} \cmidrule{7-10} 
$\psi=0.1$ &  &  &  &  &  &  &  &  &  \\
$T=0.5pq$ & 0.52(0.04) & 0.61(0.07) & 0.52(0.05) & 0.6(0.07) &  & 0.26(0.02) & 0.3(0.03) & 0.37(0.02) & 0.43(0.04) \\
$T=1.0pq$ & 0.37(0.03) & 0.44(0.06) & 0.37(0.04) & 0.43(0.05) &  & 0.18(0.01) & 0.21(0.02) & 0.26(0.02) & 0.31(0.03) \\
$T=1.5pq$ & 0.29(0.02) & 0.34(0.04) & 0.3(0.02) & 0.35(0.04) &  & 0.15(0.01) & 0.17(0.02) & 0.22(0.01) & 0.25(0.03) \\
$T=2.0pq$ & 0.26(0.02) & 0.3(0.03) & 0.26(0.02) & 0.3(0.04) &  & 0.13(0.01) & 0.15(0.02) & 0.19(0.01) & 0.22(0.03) \\
$\psi=0.5$ &  &  &  &  &  &  &  &  &  \\
$T=0.5pq$ & 0.54(0.05) & 0.64(0.08) & 0.55(0.05) & 0.64(0.08) &  & 0.27(0.02) & 0.31(0.04) & 0.39(0.03) & 0.45(0.05) \\
$T=1.0pq$ & 0.39(0.04) & 0.46(0.06) & 0.39(0.04) & 0.45(0.05) &  & 0.19(0.02) & 0.22(0.03) & 0.28(0.02) & 0.32(0.04) \\
$T=1.5pq$ & 0.31(0.02) & 0.36(0.04) & 0.31(0.03) & 0.37(0.05) &  & 0.16(0.01) & 0.18(0.02) & 0.23(0.02) & 0.27(0.03) \\
$T=2.0pq$ & 0.27(0.02) & 0.32(0.04) & 0.27(0.02) & 0.33(0.04) &  & 0.14(0.01) & 0.16(0.02) & 0.2(0.02) & 0.23(0.03) \\ \midrule
 & \multicolumn{4}{c}{$(p,q)=(100,50)$} &  & \multicolumn{4}{c}{$(p,q)=(100,100)$} \\ \cmidrule{2-5} \cmidrule{7-10} 
$\psi=0.1$ &  &  &  &  &  &  &  &  &  \\
$T=0.5pq$ & 0.37(0.03) & 0.43(0.05) & 0.26(0.02) & 0.29(0.03) &  & 0.19(0.01) & 0.21(0.02) & 0.19(0.01) & 0.21(0.02) \\
$T=1.0pq$ & 0.27(0.02) & 0.31(0.03) & 0.18(0.01) & 0.21(0.02) &  & 0.13(0.01) & 0.15(0.01) & 0.13(0.01) & 0.15(0.01) \\
$T=1.5pq$ & 0.21(0.01) & 0.25(0.02) & 0.15(0.01) & 0.17(0.02) &  & 0.11(0.01) & 0.12(0.01) & 0.11(0.01) & 0.12(0.01) \\
$T=2.0pq$ & 0.19(0.01) & 0.22(0.02) & 0.13(0.01) & 0.15(0.02) &  & 0.09(0.01) & 0.11(0.01) & 0.09(0) & 0.11(0.01) \\
$\psi=0.5$ &  &  &  &  &  &  &  &  &  \\
$T=0.5pq$ & 0.39(0.03) & 0.45(0.04) & 0.27(0.02) & 0.31(0.04) &  & 0.19(0.01) & 0.22(0.02) & 0.2(0.01) & 0.22(0.02) \\
$T=1.0pq$ & 0.27(0.02) & 0.32(0.03) & 0.19(0.01) & 0.22(0.02) &  & 0.14(0.01) & 0.16(0.01) & 0.14(0.01) & 0.16(0.02) \\
$T=1.5pq$ & 0.22(0.01) & 0.25(0.02) & 0.15(0.01) & 0.18(0.02) &  & 0.11(0.01) & 0.13(0.01) & 0.11(0.01) & 0.13(0.01) \\
$T=2.0pq$ & 0.2(0.01) & 0.23(0.03) & 0.14(0.01) & 0.16(0.02) &  & 0.1(0.01) & 0.11(0.01) & 0.1(0.01) & 0.11(0.01) \\ \bottomrule
\multicolumn{10}{l}{} \\
\end{tabular}
}
\begin{minipage}{\textwidth}
    \footnotesize
    This table reports the mean and standard deviation (in parentheses) of the average and maximum column space distances $\mathcal{D}_1(\widehat{\mathbf{R}}_i)$, $\mathcal{D}_2(\widehat{\mathbf{R}}_i)$, $\mathcal{D}_1(\widehat{\mathbf{C}}_j)$, and $\mathcal{D}_2(\widehat{\mathbf{C}}_j)$, defined in Equation (\ref{D1D2}). All values are multiplied by 10. Data are generated from model (\ref{model1}), $\bY_t=\bR\bF_t\bC^\top+\bE_t$, where $\bF_t$ and $\bE_t$ follow the AR(1) processes in Equation (\ref{generatedata1}) with $(k,r)=(3,3)$, and entries of $\bR$ and $\bC$ are drawn independently from $\mathcal{U}(-1, 1)$. We consider four dimension settings, $(p,q)\in\{(50,50),(50,100),(100,50),(100,100)\}$, four sample size settings, $T\in\{0.5pq,1.0pq,1.5pq,2.0pq\}$, and two levels of temporal dependence of $\bE_t$, $\psi\in\{0.1,0.5\}$. Results are based on 200 repetitions, with $\alpha=0$, $s_1=s_2=5$, and even partitions.   
\end{minipage}
\end{table}

\begin{figure}[h]
\begin{center}
\includegraphics[width=0.75\linewidth]{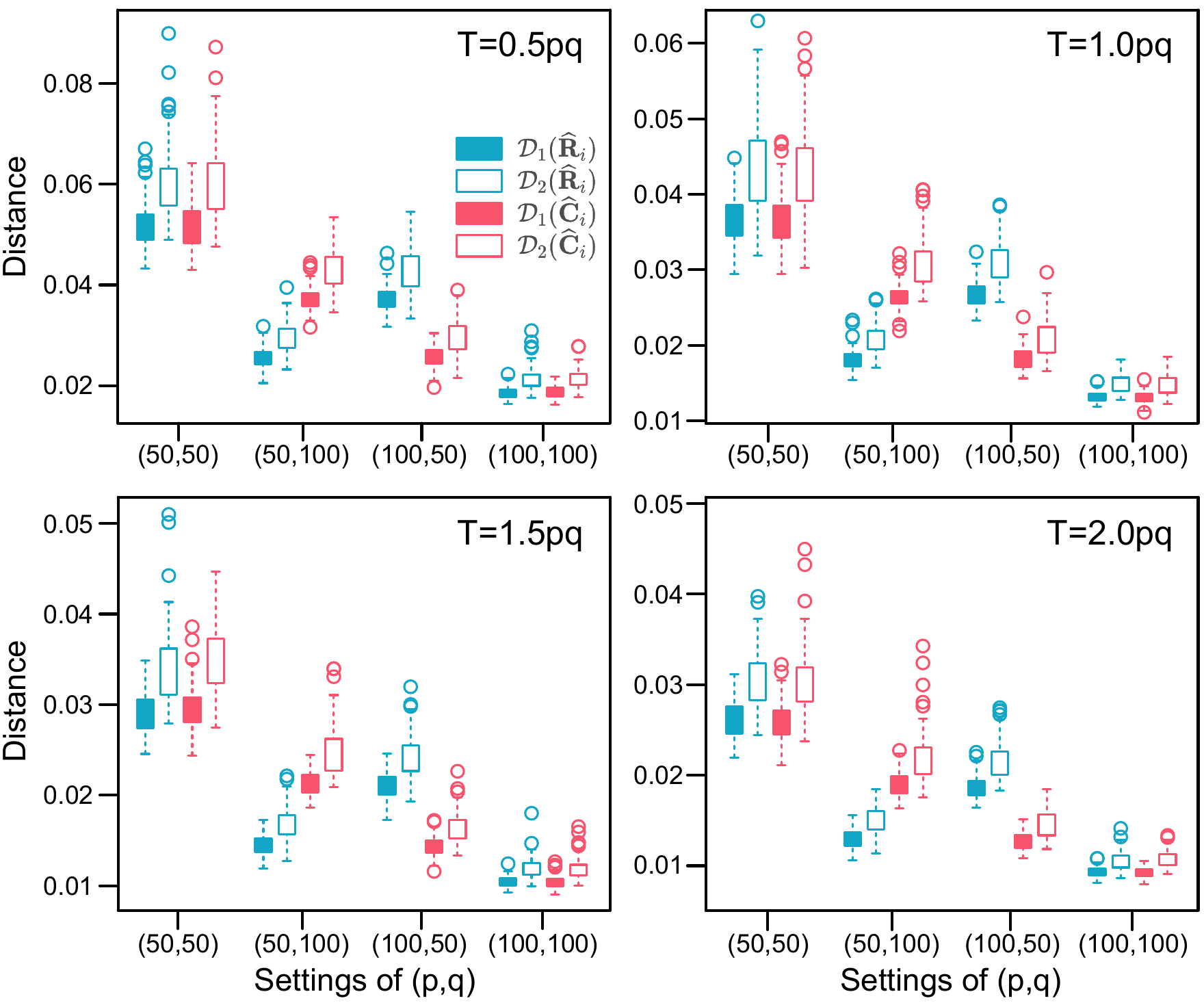}   
\end{center}
\caption{\footnotesize Boxplots of the average and maximum column space distances $\mathcal{D}_1(\widehat{\mathbf{R}}_i)$, $\mathcal{D}_2(\widehat{\mathbf{R}}_i)$, $\mathcal{D}_1(\widehat{\mathbf{C}}_j)$, and $\mathcal{D}_2(\widehat{\mathbf{C}}_j)$, defined in Equation (\ref{D1D2}). Data are generated from model (\ref{model1}), $\bY_t=\bR\bF_t\bC^\top+\bE_t$, where $\bF_t$ and $\bE_t$ follow the AR(1) processes in Equation (\ref{generatedata1}) with $(k,r)=(3,3)$ and $\psi=0.1$, and entries of $\bR$ and $\bC$ are drawn independently from $\mathcal{U}(-1, 1)$. We consider four dimension settings, $(p,q)\in\{(50,50),(50,100),(100,50),(100,100)\}$, and four sample size settings, $T\in\{0.5pq,1.0pq,1.5pq,2.0pq\}$. Results are based on 200 repetitions, with $\alpha=0$, $s_1=s_2=5$, and even partitions.   
}
\label{figure_rici}
\end{figure}

We now assess the accuracy of our global estimators. Table~\ref{figure_rcfs} reports means and standard deviations, based on 200 repetitions, of $e_{\bR}=\frac{1}{p} \| \widehat{\mathbf{R}}-\mathbf{RH_R} \|_2^2$, $e_{\bC}=\frac{1}{q} \| \widehat{\mathbf{C}}-\mathbf{CH_C} \|_2^2$, $e_{\bF}=\frac{1}{T}\sum_{t=1}^T \| \widehat{\mathbf{F}}_t-\mathbf{H}_\bR^{-1}\mathbf{F}_t(\mathbf{H}_\bC^{-1})^\top \|_2^2$, and $e_{\bS}=\frac{1}{T}\sum_{t=1}^T \| \widehat{\mathbf{S}}_t-\mathbf{S}_t \|_2^2$ from our method and $\alpha$-PCA. Consistent with Theorems \ref{them1}-\ref{them3}, estimation accuracy improves as $p$, $q$, and $T$ increase.
In addition, unlike the other estimators, the estimation accuracy of the common component decreases sharply with higher temporal dependence $\psi$ in the idiosyncratic term $\bE_t$, as theoretically expected. Across all settings, our method achieves accuracy comparable to that of $\alpha$-PCA in estimating the loading matrices $\bR$ and $\bC$, factor matrix $\bF_t$, and common component $\bS_t$. 

\begin{table}
\caption{\raggedright Summary statistics of global estimator accuracy.}
\label{figure_rcfs}
\centering
\resizebox{\textwidth}{!}{
\Large
\begin{tabular}{p{4cm}llp{0.5cm}llp{0.5cm}llp{0.5cm}ll}
\hline
 & \multicolumn{2}{c}{$e_{\bR}$} &  & \multicolumn{2}{c}{$e_{\bC}$} &  & \multicolumn{2}{c}{$e_{\bF}$} &  & \multicolumn{2}{c}{$e_{\bS}$} \\ \cline{2-3} \cline{5-6} \cline{8-9} \cline{11-12} 
 & Ours & $\alpha$-PCA &  & Ours & $\alpha$-PCA &  & Ours & $\alpha$-PCA &  & Ours & $\alpha$-PCA \\ \hline
\multicolumn{12}{l}{$(p,q)=(50,50)$} \\
\multicolumn{12}{l}{$\psi=0.1$} \\
$T=0.5pq$ & \textbf{0.32(0.1)} & 0.61(0.2) &  & \textbf{0.32(0.12)} & 0.59(0.19) &  & 0.73(0.2) & \textbf{0.37(0.08)} &  & 8.03(1.93) & \textbf{7.85(1.98)} \\
$T=1.0pq$ & \textbf{0.31(0.1)} & 0.57(0.17) &  & \textbf{0.32(0.1)} & 0.59(0.18) &  & 0.5(0.13) & \textbf{0.26(0.07)} &  & 5.6(1.26) & \textbf{5.46(1.39)} \\
$T=1.5pq$ & \textbf{0.31(0.09)} & 0.58(0.18) &  & \textbf{0.33(0.1)} & 0.6(0.22) &  & 0.41(0.11) & \textbf{0.21(0.05)} &  & 4.5(1.23) & \textbf{4.49(1.05)} \\
$T=2.0pq$ & \textbf{0.3(0.09)} & 0.58(0.19) &  & \textbf{0.3(0.09)} & 0.59(0.2) &  & 0.37(0.09) & \textbf{0.18(0.04)} &  & \textbf{3.85(0.96)} & 3.99(1.07) \\
\multicolumn{12}{l}{$\psi=0.5$} \\
$T=0.5pq$ & \textbf{0.31(0.09)} & 0.6(0.18) &  & \textbf{0.33(0.1)} & 0.61(0.21) &  & 0.72(0.2) & \textbf{0.42(0.11)} &  & \textbf{12.32(3.04)} & 12.47(3.35) \\
$T=1.0pq$ & \textbf{0.32(0.12)} & 0.57(0.17) &  & \textbf{0.31(0.1)} & 0.61(0.22) &  & 0.51(0.13) & \textbf{0.29(0.07)} &  & \textbf{8.68(2.19)} & 8.84(2.19) \\
$T=1.5pq$ & \textbf{0.31(0.1)} & 0.61(0.22) &  & \textbf{0.31(0.09)} & 0.61(0.2) &  & 0.45(0.11) & \textbf{0.24(0.06)} &  & 7.38(1.93) & \textbf{7.29(1.91)} \\
$T=2.0pq$ & \textbf{0.31(0.1)} & 0.57(0.17) &  & \textbf{0.31(0.1)} & 0.58(0.17) &  & 0.38(0.12) & \textbf{0.21(0.06)} &  & 6.65(1.58) & \textbf{6.16(1.67)} \\ \hline
\multicolumn{12}{l}{$(p,q)=(50,100)$} \\
\multicolumn{12}{l}{$\psi=0.1$} \\
$T=0.5pq$ & \textbf{0.04(0.01)} & 0.13(0.03) &  & \textbf{0.44(0.12)} & 0.57(0.17) &  & 0.38(0.1) & \textbf{0.19(0.05)} &  & \textbf{5.67(1.39)} & 5.75(1.47) \\
$T=1.0pq$ & \textbf{0.04(0.01)} & 0.14(0.04) &  & \textbf{0.44(0.13)} & 0.6(0.19) &  & 0.27(0.07) & \textbf{0.13(0.04)} &  & \textbf{3.82(1.02)} & 3.9(0.99) \\
$T=1.5pq$ & \textbf{0.04(0.01)} & 0.13(0.03) &  & \textbf{0.44(0.12)} & 0.55(0.16) &  & 0.22(0.06) & \textbf{0.11(0.03)} &  & 3.29(0.92) & \textbf{3.22(0.8)} \\
$T=2.0pq$ & \textbf{0.04(0.01)} & 0.13(0.04) &  & \textbf{0.45(0.14)} & 0.61(0.22) &  & 0.19(0.06) & \textbf{0.1(0.02)} &  & 2.78(0.77) & \textbf{2.71(0.64)} \\
\multicolumn{12}{l}{$\psi=0.5$} \\
$T=0.5pq$ & \textbf{0.04(0.01)} & 0.13(0.03) &  & \textbf{0.46(0.15)} & 0.58(0.17) &  & 0.38(0.1) & \textbf{0.22(0.05)} &  & \textbf{8.68(1.96)} & 8.7(2.11) \\
$T=1.0pq$ & \textbf{0.04(0.01)} & 0.13(0.03) &  & \textbf{0.45(0.15)} & 0.58(0.17) &  & 0.29(0.08) & \textbf{0.15(0.04)} &  & 6.23(1.48) & \textbf{6.07(1.54)} \\
$T=1.5pq$ & \textbf{0.04(0.01)} & 0.13(0.03) &  & \textbf{0.43(0.13)} & 0.59(0.17) &  & 0.22(0.05) & \textbf{0.13(0.03)} &  & 5.04(1.32) & \textbf{4.99(1.27)} \\
$T=2.0pq$ & \textbf{0.04(0.01)} & 0.13(0.03) &  & \textbf{0.45(0.15)} & 0.59(0.19) &  & 0.2(0.05) & \textbf{0.11(0.03)} &  & \textbf{4.29(1.09)} & 4.39(1.02) \\ \hline
\multicolumn{12}{l}{$(p,q)=(100,50)$} \\
\multicolumn{12}{l}{$\psi=0.1$} \\
$T=0.5pq$ & \textbf{0.44(0.11)} & 0.58(0.16) &  & \textbf{0.04(0.01)} & 0.13(0.03) &  & 0.37(0.1) & \textbf{0.18(0.05)} &  & \textbf{5.44(1.39)} & 5.46(1.3) \\
$T=1.0pq$ & \textbf{0.43(0.13)} & 0.6(0.18) &  & \textbf{0.04(0.01)} & 0.13(0.04) &  & 0.27(0.07) & \textbf{0.14(0.04)} &  & 3.98(0.9) & \textbf{3.96(1.11)} \\
$T=1.5pq$ & \textbf{0.43(0.12)} & 0.58(0.16) &  & \textbf{0.04(0.01)} & 0.13(0.04) &  & 0.21(0.05) & \textbf{0.11(0.03)} &  & \textbf{3.2(0.8)} & 3.22(0.78) \\
$T=2.0pq$ & \textbf{0.43(0.12)} & 0.61(0.18) &  & \textbf{0.04(0.01)} & 0.13(0.03) &  & 0.19(0.05) & \textbf{0.1(0.02)} &  & \textbf{2.83(0.71)} & 2.88(0.68) \\
\multicolumn{12}{l}{$\psi=0.5$} \\
$T=0.5pq$ & \textbf{0.43(0.11)} & 0.59(0.17) &  & \textbf{0.04(0.01)} & 0.13(0.03) &  & 0.39(0.1) & \textbf{0.22(0.05)} &  & \textbf{8.51(2.4)} & 8.61(2.04) \\
$T=1.0pq$ & \textbf{0.43(0.12)} & 0.58(0.18) &  & \textbf{0.04(0.01)} & 0.13(0.03) &  & 0.28(0.07) & \textbf{0.15(0.04)} &  & 6.49(1.36) & \textbf{6.33(1.57)} \\
$T=1.5pq$ & \textbf{0.44(0.13)} & 0.58(0.19) &  & \textbf{0.04(0.01)} & 0.13(0.03) &  & 0.22(0.06) & \textbf{0.12(0.03)} &  & 5.02(1.31) & \textbf{4.99(1.27)} \\
$T=2.0pq$ & \textbf{0.44(0.13)} & 0.56(0.14) &  & \textbf{0.04(0.01)} & 0.13(0.03) &  & 0.2(0.05) & \textbf{0.11(0.02)} &  & \textbf{4.24(1.06)} & 4.31(1.13) \\ \hline
\multicolumn{12}{l}{$(p,q)=(100,100)$} \\
\multicolumn{12}{l}{$\psi=0.1$} \\
$T=0.5pq$ & \textbf{0.05(0.01)} & 0.13(0.03) &  & \textbf{0.05(0.01)} & 0.13(0.03) &  & 0.17(0.05) & \textbf{0.09(0.02)} &  & 4.06(1.07) & \textbf{4.01(1.09)} \\
$T=1.0pq$ & \textbf{0.05(0.01)} & 0.13(0.03) &  & \textbf{0.05(0.01)} & 0.13(0.03) &  & 0.13(0.03) & \textbf{0.06(0.02)} &  & \textbf{2.72(0.71)} & 2.81(0.7) \\
$T=1.5pq$ & \textbf{0.05(0.01)} & 0.13(0.02) &  & \textbf{0.05(0.01)} & 0.13(0.03) &  & 0.1(0.02) & \textbf{0.05(0.01)} &  & 2.27(0.58) & \textbf{2.25(0.58)} \\
$T=2.0pq$ & \textbf{0.05(0.01)} & 0.13(0.03) &  & \textbf{0.05(0.01)} & 0.13(0.03) &  & 0.08(0.02) & \textbf{0.05(0.01)} &  & \textbf{1.92(0.52)} & 2(0.51) \\
\multicolumn{12}{l}{$\psi=0.5$} \\
$T=0.5pq$ & \textbf{0.05(0.01)} & 0.13(0.03) &  & \textbf{0.05(0.01)} & 0.13(0.03) &  & 0.18(0.04) & \textbf{0.1(0.03)} &  & 6.2(1.45) & \textbf{6.11(1.64)} \\
$T=1.0pq$ & \textbf{0.05(0.01)} & 0.13(0.03) &  & \textbf{0.05(0.01)} & 0.13(0.03) &  & 0.12(0.03) & \textbf{0.07(0.02)} &  & \textbf{4.39(1.05)} & 4.43(1.14) \\
$T=1.5pq$ & \textbf{0.05(0.01)} & 0.13(0.03) &  & \textbf{0.05(0.01)} & 0.13(0.03) &  & 0.1(0.02) & \textbf{0.06(0.02)} &  & 3.65(0.93) & \textbf{3.61(0.91)} \\
$T=2.0pq$ & \textbf{0.05(0.01)} & 0.13(0.03) &  & \textbf{0.05(0.01)} & 0.13(0.03) &  & 0.09(0.02) & \textbf{0.05(0.01)} &  & 3.13(0.86) & \textbf{3.09(0.77)} \\ \hline
\end{tabular}
}
\begin{minipage}{\textwidth}
    \footnotesize
    \vspace{0.1cm}
    This table reports the mean and standard deviation (in parentheses) of $e_{\bR}=\frac{1}{p} \| \widehat{\mathbf{R}}-\mathbf{RH_R} \|_2^2$, $e_{\bC}=\frac{1}{q} \| \widehat{\mathbf{C}}-\mathbf{CH_C} \|_2^2$, $e_{\bF}=\frac{1}{T}\sum_{t=1}^T \| \widehat{\mathbf{F}}_t-\mathbf{H}_\bR^{-1}\mathbf{F}_t(\mathbf{H}_\bC^{-1})^\top\|_2^2$, and $e_{\bS}=\frac{1}{T}\sum_{t=1}^T \| \widehat{\mathbf{S}}_t-\mathbf{S}_t \|_2^2$ from our method and $\alpha$-PCA. All values are multiplied by 100. Data are generated from model (\ref{model1}), $\bY_t=\bR\bF_t\bC^\top+\bE_t$, where $\bF_t$ and $\bE_t$ follow the AR(1) processes in Equation (\ref{generatedata1}) with $(k,r)=(3,3)$, and entries of $\bR$ and $\bC$ are drawn independently from $\mathcal{U}(-1, 1)$. We consider four dimension settings, $(p,q)\in\{(50,50),(50,100),(100,50),(100,100)\}$, four sample size settings, $T\in\{0.5pq,1.0pq,1.5pq,2.0pq\}$, and two levels of temporal dependence of $\bE_t$, $\psi\in\{0.1,0.5\}$. Results are based on 200 repetitions, with $\alpha=0$, $s_1=s_2=5$, and even partitions.   
\end{minipage}
\end{table}

\subsubsection{Asymptotic normality of loading matrix estimators}
We next examine the asymptotic normality of $\widehat{\mathbf{R}}-\mathbf{RH_R}$ established in Theorem~\ref{them4}. Specifically, we focus on the first row $\widehat{\bR}^1-\bR^1\bH_\bR\in\mathbb{R}^{1\times k}$. Without loss of generality, we generate $\bF_t$ and $\bE_t$ with independent rows and columns as
\begin{align}\label{generatedata2}
    \bF_t\sim\mathcal{MN}_{3\times 3}\left(3\mathbf{I},\mathbf{I},\mathbf{I}\right),\quad\bE_t\sim\mathcal{MN}_{p\times q}\left(\mathbf{0},\mathbf{I},\mathbf{I}\right).
\end{align}
Under the conditions ${\sqrt{\overline{m}T}}/{p}=\mathbf{o}(1)$ and ${\overline{m}}/{\underline{m}}=\mathcal{O}(1)$, Theorem~\ref{them4} implies that, as $\underline{m},p,T\to \infty$,
$$\sqrt{\overline{m}T}\left(\widehat{\bR}^1-\bR^1\bH_\bR\right)^\top\overset{\mathcal{D}}{\longrightarrow}\mathcal{N}\left(\mathbf{0},\mathbf{\Sigma}_{\bR_1}\right).$$
We estimate the asymptotic covariance $\mathbf{\Sigma}_{\bR_1}$ by $\widehat{\Sigma}_{\bR_1}$ defined in Theorem~\ref{them5}, yielding $$\widehat{\Sigma}_{\bR_1}^{-1/2}\sqrt{\overline{m}T}(\widehat{\bR}^1-\bR^1\bH_\bR)^\top\overset{\mathcal{D}}{\longrightarrow}\mathcal{N}\left(\mathbf{0},\mathbf{I}\right).$$ 
We first consider the special case $s_1=1$ (i.e., single-machine estimation). Figure B2 in Appendix B displays the empirical distribution of the first element of $\widehat{\Sigma}_{\bR_1}^{-1/2}\sqrt{\overline{m}T}(\widehat{\bR}^1-\bR^1\bH_\bR)^\top$ for various sample sizes. As expected, the distribution approaches the standard normal distribution as $T$ increases, consistent with \citet{chen2021}. We next consider the case $s_1=3$. Figure~\ref{figure_asymptotic_3} exhibits similar convergence behavior, where the empirical distribution of its first element also approaches the standard normal distribution as $T$ grows, confirming our theoretical results.

\begin{figure}[h]
\centering
\includegraphics[width=0.75\linewidth]{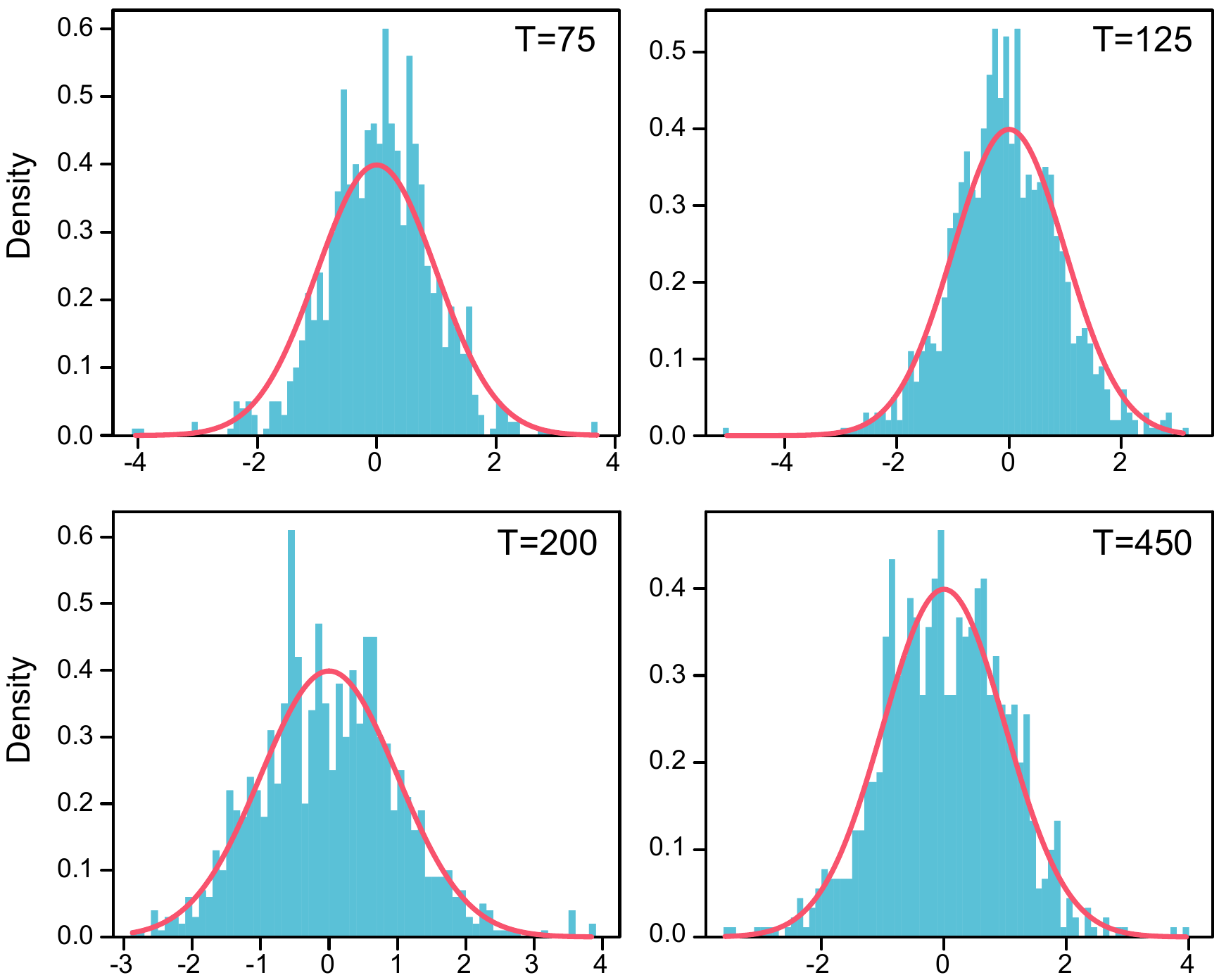}        
\caption{\footnotesize Density histograms of the first element of $\widehat{\Sigma}_{\bR_1}^{-1/2}\sqrt{\overline{m}T}(\widehat{\bR}^1-\bR^1\bH_\bR)^\top$. Red curves denote the standard normal distribution. Data are generated from model (\ref{model1}), $\bY_t=\bR\bF_t\bC^\top+\bE_t$, where $\bF_t$ and $\bE_t$ follow the matrix normal distributions in Equation (\ref{generatedata2}) with $(k,r)=(3,3)$ and $(p,q)=(200,200)$, and entries of $\bR$ and $\bC$ are drawn independently from $\mathcal{U}(-1, 1)$. We consider four sample size settings, $T\in\{0.5pq,1.0pq,1.5pq,2.0pq\}$. Results are based on 1000 repetitions, with $\alpha=0$, $s_1=s_2=3$, and even partitions.   
}
\label{figure_asymptotic_3}
\end{figure}

\subsubsection{Computational efficiency}

We evaluated the convergence rates and asymptotic normality of our estimators, showing accuracy comparable to that of $\alpha$-PCA. To further illustrate the computational savings of our method, we report the average runtime for estimating loading matrices under two scenarios. In setting I, we fix $p=q=100$ and increase $T$ from 100 to 10000; In setting II, we fix $T=100$ and increase $p=q$ from 50 to 1000. Figure~\ref{figure_time} demonstrates the substantial computational gains of our method relative to $\alpha$-PCA, consistent with the discussion in Section~\ref{sec2.3}. 

\begin{figure}[h]
\centering
\includegraphics[width=0.75\linewidth]{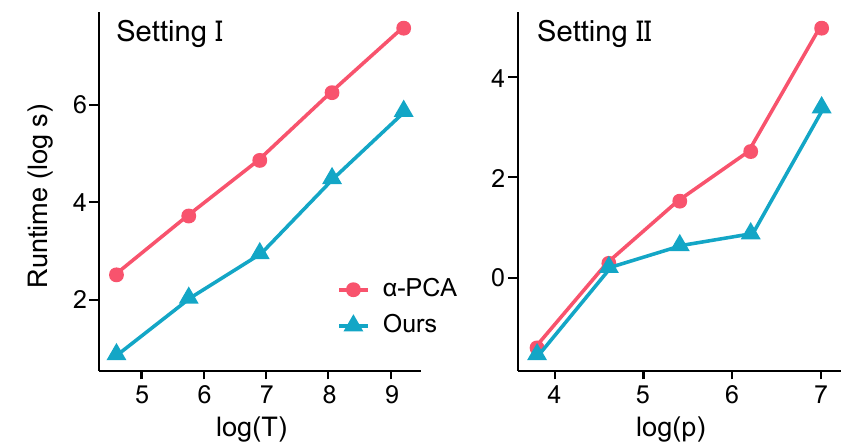}        
\caption{\footnotesize Average runtime (log s) for estimating loading matrices using our method and $\alpha$-PCA. Data are generated from model (\ref{model1}), $\bY_t=\bR\bF_t\bC^\top+\bE_t$, where $\bF_t$ and $\bE_t$ follow the AR(1) processes in Equation (\ref{generatedata1}) with $(k,r)=(3,3)$ and $\psi=0.1$, and entries of $\bR$ and $\bC$ are drawn independently from $\mathcal{U}(-1, 1)$. Two settings are considered: In setting I, we fix $p=q=100$ and increase $T$ from 100 to 10,000; In setting II, we fix $T=100$ and increase $p=q$ from 50 to 1,000. Results are based on 200 repetitions, with $\alpha=0$, $s_1=s_2=5$, and even partitions.   
}
\label{figure_time}
\end{figure}

\subsubsection{Accuracy of estimators in the unit-root nonstationary case}

Lastly, we assess the estimation accuracy of our method for unit-root nonstationary time series. The observations are still generated from model (\ref{model1}), $\bY_t=\bR\bF_t\bC^\top+\bE_t$, where $\bF_t$ is generated following \cite{li2025factormodelsmatrixvaluedtime} and $\bE_t$ is generated as described in Section~\ref{estimatingconvergence}. Specifically, we generate $\bF_t$ and $\bE_t$ as,
\begin{equation}
    \text{vec}(\bF_t)=\text{vec}(\bF_{t-1})+\bw_t,\quad \bw_t=0.3\bw_{t-1}+\boldsymbol{\xi}_t, \label{generatedata3}
\end{equation}
and 
\begin{equation}
    \text{vec}(\bE_t)=0.1\text{vec}(\bE_{t-1})+\boldsymbol{\xi}'_t,\label{generatedata4}
\end{equation}
with initial values $\bF_0=\boldsymbol{0}$, $\bw_0=\boldsymbol{0}$, and $\bE_0=\boldsymbol{0}$. The noise vectors $\boldsymbol{\xi}_t\in\mathbb{R}^{kr}$ and $\boldsymbol{\xi}'_t\in\mathbb{R}^{pq}$ are i.i.d., with each element drawn from the standard normal distribution. 

We evaluate the accuracy of $\wh{\bR}$, $\wh{\bC}$, $\wh{\bF}_t$, and $\wh{\bS}_t$. Table~\ref{tabunitroot} reports means and standard deviations, based on 200 repetitions, of $e_{\bR}=\frac{1}{p} \| \widehat{\mathbf{R}}-\mathbf{RH_R} \|_2^2$, $e_{\bC}=\frac{1}{q} \| \widehat{\mathbf{C}}-\mathbf{CH_C} \|_2^2$, $e_{\bF}=\frac{1}{T}\sum_{t=1}^T \| \widehat{\mathbf{F}}_t-\mathbf{H}_\bR^{-1}\mathbf{F}_t(\mathbf{H}_\bC^{-1})^\top\|_2^2$, and $e_{\bS}=\frac{1}{T}\sum_{t=1}^T \| \widehat{\mathbf{S}}_t-\mathbf{S}_t \|_2^2$. Across all settings, our method achieves substantially higher accuracy than in the stationary case, consistent with our theoretical results in Section \ref{sec3.5}.

\begin{table}[h]
\caption{\raggedright Summary statistics of global estimator accuracy in the unit-root nonstationary case.}
\label{tabunitroot}
    \centering
    \resizebox{\linewidth}{!}{
    \begin{tabular}{@{}p{4cm}llll@{}}
\toprule
 & \multicolumn{1}{c}{$e_{\bR}$} & \multicolumn{1}{c}{$e_{\bC}$} & \multicolumn{1}{c}{$e_{\bF}$} & \multicolumn{1}{c}{$e_{\bS}$} \\ \midrule
\multicolumn{2}{l}{$(p,q)=(50,50)$} &  &   &  \\
$T=0.25pq$ & 4.145e-6(4.306e-6) & 4.063e-6(3.564e-6) & 3.343e-3(6.250e-4) & 6.934e-2(1.493e-3) \\
$T=0.50pq$ & 8.651e-7(7.117e-7) & 9.193e-7(8.002e-7) & 3.008e-3(3.202e-4) & 6.846e-2(9.588e-4) \\
$T=0.75pq$ & 4.853e-7(4.685e-7) & 4.203e-7(3.552e-7) & 2.901e-3(1.839e-4) & 6.802e-2(7.181e-4) \\
$T=1.00pq$ & 2.464e-7(2.212e-7) & 2.458e-7(2.240e-7) & 2.843e-3(1.686e-4) & 6.775e-2(6.425e-4) \\ \midrule
\multicolumn{2}{l}{$(p,q)=(100,100)$} &  &   &  \\
$T=0.25pq$ & 6.119e-8(4.882e-8) & 6.173e-8(5.125e-8) & 7.067e-4(3.121e-5) & 6.834e-2(6.718e-4) \\
$T=0.5pq$ & 1.619e-8(1.322e-8) & 1.550e-8(1.444e-8) & 6.873e-4(1.266e-5) & 6.776e-2(4.613e-4) \\
$T=0.75pq$ & 7.967e-9(5.950e-9) & 7.876e-9(6.166e-9) & 6.850e-4(1.353e-5) & 6.763e-2(3.982e-4) \\
$T=1.00pq$ & 4.413e-9(3.611e-9) & 4.241e-9(2.784e-9) & 6.822e-4(9.091e-6) & 6.756e-2(3.441e-4) \\ \bottomrule
\end{tabular}
    }
    \begin{minipage}{\linewidth}
        \footnotesize
        \vspace{0.1cm}
        This table reports the mean and standard deviation (in parentheses) of $e_{\bR}=\frac{1}{p} \| \widehat{\mathbf{R}}-\mathbf{RH_R} \|_2^2$, $e_{\bC}=\frac{1}{q} \| \widehat{\mathbf{C}}-\mathbf{CH_C} \|_2^2$, $e_{\bF}=\frac{1}{T}\sum_{t=1}^T \| \widehat{\mathbf{F}}_t-\mathbf{H}_\bR^{-1}\mathbf{F}_t(\mathbf{H}_\bC^{-1})^\top\|_2^2$, and $e_{\bS}=\frac{1}{T}\sum_{t=1}^T \| \widehat{\mathbf{S}}_t-\mathbf{S}_t \|_2^2$ from our method. Data are generated from model (\ref{model1}), $\bY_t=\bR\bF_t\bC^\top+\bE_t$, where $\bF_t$ and $\bE_t$ follow the AR(1) processes in Equations (\ref{generatedata3}) and (\ref{generatedata4}) with $(k,r)=(3,3)$, and entries of $\bR$ and $\bC$ are drawn independently from $\mathcal{U}(-1, 1)$. We consider four dimension settings, $(p,q)\in\{(50,50),(50,100),(100,50),(100,100)\}$, and four sample size settings, $T\in\{0.25pq,0.50pq,0.75pq,1.00pq\}$. Results are based on 200 repetitions, with $\alpha=0$, $s_1=s_2=5$, and even partitions.   
    \end{minipage}
\end{table}

\subsection{Real data applications}\label{realexample_sec}

In this section, we apply our method to the Fama–French Stock Return Data and the OECD CPI data to assess its predictive performance.

\subsubsection{Example 1: Fama–French Stock Return Data}

We first apply our method to the Fama–French stock return dataset analyzed in \cite{WANG2019231}. The dataset contains monthly returns for 100 portfolios formed by intersecting 10 size levels with 10 book-to-market levels. The sample spans July 1926 to October 2025, covering 1,192 months. Details are available at \url{http://mba.tuck.dartmouth.edu/pages/faculty/ken.french/data_library.html}.

To compare our method with $\alpha$-PCA, we take the earliest 85\% of the observations as a training set to estimate the loading matrices, and the remaining 15\% as a test set to evaluate their performance. We define out-of-sample $R^2$ on the test set $\bY_t^\text{test}$ of size $N$ as:
\begin{equation}
    \text{out-of-sample }R^2=1-\frac{\sum_{t=1}^{N}\left\|\bY_t^\text{test}-\wh{\bY}_t^{\text{test}}\right\|_\mathsf{F}^2}{\sum_{t=1}^{N}\left\|\bY_t^\text{test}-\overline{\bY}^\text{test}\right\|_\mathsf{F}^2},\label{outofsampler2}
\end{equation}
where $\wh{\bY}_t^{\text{test}}=\wh{\bR}^\top\wh{\bR}\bY_t^\text{test}\wh{\bC}^\top\wh{\bC}$ and $\overline{\bY}^\text{test}=\frac{1}{N}\sum_{t=1}^{N}\bY_t^\text{test}$, with $\wh{\bR}$ and $\wh{\bC}$ estimated from the training set. The denominator is the total sum of squares (TSS), which measures the deviation of $\bY_t^{\text{test}}$ from its sample mean $\overline{\bY}^{\text{test}}$. The numerator is the residual sum of squares (RSS), measuring the deviation of $\bY_t^{\text{test}}$ from its estimator $\wh{\bY}_t^{\text{test}}$ constructed by the training set.

Table \ref{table_stock} reports the training time and out-of-sample $R^2$ of our method and $\alpha$-PCA on the stock return dataset under various assumed numbers of factors, along with $p$-values from the Diebold–Mariano test \citep{diebold1995comparing} used to compare their predictive performance on the RSS. The null hypothesis of interest is that the two methods yield equal forecast errors, $\mathbb{E}[\|\bY_t^\text{test}-\wh{\bY}_t^{\text{test,ours}}\|_\mathsf{F}^2-\|\bY_t^\text{test}-\wh{\bY}_t^{\text{test},\alpha}\|_\mathsf{F}^2]=0$, where $\wh{\bY}_t^{\text{test,ours}}$ and $\wh{\bY}_t^{\text{test},\alpha}$ denote our estimator and $\alpha$-PCA estimator, against the alternative that our method produces a smaller RSS. The eigenvalue ratio-based method suggests that $(k,r)=(4,1)$. The numbers of computing nodes, $s_1$ and $s_2$, are treated as tuning parameters and selected from 1 to 5 to maximize out-of-sample $R^2$, with their chosen values reported in the table. The distributed data allocation follows the strategy in Section \ref{sec2.5}. The observations, which are already sorted, are allocated sequentially and evenly to the nodes.

Across all assumed factor numbers, our method consistently achieves higher out-of-sample $R^2$, with all $p$-values below 0.05, indicating improved forecastability from our distributed estimation procedure. Moreover, despite the low dimensionality of the dataset, our method still exhibits substantial reductions in training time, consistent with our theoretical analysis and simulation results. This finding suggests that our framework achieves significant computational savings in high-dimensional settings.

\begin{table}[h]
\centering
    \begin{minipage}{0.9\linewidth}
    \caption{\raggedright Training time (s) and out-of-sample $R^2$ for the stock return dataset.
    } \label{table_stock}
    \centering
    \resizebox{\linewidth}{!}{
    \begin{tabular}{ccp{0.5cm}ccp{0.25cm}ccc}
\hline
 & & & \multicolumn{2}{c}{Training time (s)} &  & \multicolumn{3}{c}{Out-of-sample $R^2$} \\ \cline{4-5} \cline{7-9} 
\multirow{-2}{*}{$(k,r)$} & \multirow{-2}{*}{\begin{tabular}[c]{@{}c@{}}Selected\\ $(s_1,s_2)$\end{tabular}} &  & Ours & $\alpha$-PCA &  & Ours & $\alpha$-PCA & $p$-value \\
\hline
(2,1) & (5,5)&  & \textbf{0.021} & 0.046 &  & \textbf{0.639} & 0.591 & 8.98e-4 \\
(3,1) & (2,5) & & \textbf{0.008} & 0.017 &  & \textbf{0.680} & 0.637 & 1.59e-5 \\
\textbf{(4,1)} & (4,5) & &  \textbf{0.004} & 0.017 &  & \textbf{0.695} & 0.667 & 3.70e-3 \\
(5,1) & (5,5) & & \textbf{0.004} & 0.021 &  & \textbf{0.716} & 0.690 & 3.42e-3 \\
(3,2) & (2,3) & & \textbf{0.004} & 0.023 &  & \textbf{0.771} & 0.712 & 1.87e-4 \\
(4,2) & (4,3) & & \textbf{0.003} & 0.022 &  & \textbf{0.797} & 0.753 & 2.09e-3 \\
(5,2) & (5,3) & & \textbf{0.003} & 0.020 &  & \textbf{0.848} & 0.798 & 7.26e-4 \\ \hline
\end{tabular}
    }
    \end{minipage}
    \begin{minipage}{0.9\linewidth}
        \footnotesize
        \vspace{0.1cm}
        This table reports the training time (s) and out-of-sample $R^2$ of our method and $\alpha$-PCA on the stock return dataset under various assumed numbers of factors, along with $p$-values based on the Diebold–Mariano test. The dataset contains monthly returns for $10\times 10$ portfolios, covering 1,192 months. We take the earliest 85\% of the observations as a training set, and the remaining 15\% as a test set. The out-of-sample $R^2$ is computed according to Equation (\ref{outofsampler2}), with tuning parameters $s_1$ and $s_2$ selected to maximize out-of-sample $R^2$. In our method, we set $s_1,s_2\in\{1,2,...,5\}$ with even partitions, and in $\alpha$-PCA we use $\alpha=0$. The eigenvalue ratio-based method suggests that $(k,r)=(4,1)$.
    \end{minipage}
\end{table}

\subsubsection{Example 2: OECD CPI Data}

In this example, we analyze a multinational macroeconomic dataset collected from OECD, similar to those used in \cite{chen2021} and \cite{yu2020}. The dataset contains monthly Consumer Price Index (CPI) statistics from 1914 to 2025, covering 27 expenditure items for 52 countries and international organizations over 1343 months. The lists of countries and expenditure items are provided in Table B3 in Appendix B. Further details are available at \url{https://data-explorer.oecd.org/vis?df[ds]=dsDisseminateFinalDMZ&df[id]=DSD_PRICES%40DF_PRICES_ALL&df[ag]=OECD.SDD.TPS}.

We compare the predictive performance of our method with $\alpha$-PCA using the earliest 95\% of the observations used for training and the remaining 5\% for testing. Table~\ref{table_OECD} reports the training time and out-of-sample $R^2$ of both methods on the CPI dataset under various assumed numbers of factors, along with $p$-values from the Diebold–Mariano test \citep{diebold1995comparing} used to compare their predictive performance on the RSS. The eigenvalue ratio-based method suggests that $(k,r)=(3,2)$. The numbers of computing nodes, $s_1$ and $s_2$, are treated as tuning parameters and selected from 1 to 10 to maximize out-of-sample $R^2$, with their chosen values reported in the table. The distributed data allocation also follows the strategy in Section \ref{sec2.5}. The observations are first sorted by GDP rankings along the nation dimension (World Bank, December 16, 2024) and by variance along the item dimension in decreasing order, and are then sequentially and evenly allocated to the nodes. Across all assumed factor numbers, our method consistently achieves shorter training times and higher out-of-sample $R^2$, with all $p$-values below 0.05, indicating both improved computational efficiency and enhanced forecastability. Moreover, the effect of $(s_1,s_2)$ on training times is clearly observed in the table and aligns well with the discussion in Section \ref{sec2.3}.

\begin{table}[h]
\centering
\begin{minipage}{0.9\linewidth}
    \caption{\raggedright Training time (s) and out-of-sample $R^2$ for the CPI dataset. 
    } 
    \label{table_OECD}\centering
    \resizebox{\linewidth}{!}{
    \centering
\begin{tabular}{ccp{0.5cm}ccp{0.25cm}ccc}
\hline
 &  & &\multicolumn{2}{c}{Training time (s)} &  & \multicolumn{3}{c}{Out-of-sample $R^2$} \\ \cline{4-5} \cline{7-9} 
\multirow{-2}{*}{$(k,r)$} & \multirow{-2}{*}{\begin{tabular}[c]{@{}c@{}}Selected\\ $(s_1,s_2)$\end{tabular}} & & Ours & $\alpha$-PCA &  & Ours & $\alpha$-PCA & $p$-value \\
\hline
(3,1) & (4,10)& & \textbf{0.062} & 0.242 &  & \textbf{0.439} & 0.429 & 1.92e-7 \\
(2,2) & (9,9)& & \textbf{0.059} & 0.229 &  & \textbf{0.435} & 0.426 & 1.97e-12 \\
\textbf{(3,2)} & (4,9)& & \textbf{0.070} & 0.214 &  & \textbf{0.460} & 0.437 & 4.32e-8 \\
(4,3) & (1,3)& & \textbf{0.123} & 0.214 &  & \textbf{0.460} & 0.443 & 8.87e-9 \\
(3,3) & (4,6)& & \textbf{0.067} & 0.216 &  & \textbf{0.465} & 0.458 & 1.50e-7 \\
(4,3) & (1,6)& & \textbf{0.110} & 0.213 &  & \textbf{0.471} & 0.467 & 4.84e-7 \\
(4,4) & (1,6)& & \textbf{0.105} & 0.210 &  & \textbf{0.476} & 0.469 & 2.79e-9 \\ \hline
\end{tabular}
    }
\end{minipage}
\begin{minipage}{0.9\linewidth}
    \footnotesize
    \vspace{0.1cm}
    This table reports the training time (s) and out-of-sample $R^2$ of our method and $\alpha$-PCA on the CPI dataset under various assumed numbers of factors, along with $p$-values based on the Diebold–Mariano test. The dataset contains 27 expenditure items for 52 countries over 1343 months. We take the earliest 95\% of the observations as a training set, and the remaining 5\% as a test set. The out-of-sample $R^2$ is computed according to Equation (\ref{outofsampler2}), with tuning parameters $s_1$ and $s_2$ selected to maximize out-of-sample $R^2$. In our method, we set $s_1,s_2\in\{1,2,...,10\}$ with even partitions, and in $\alpha$-PCA we use $\alpha=0$. The eigenvalue ratio-based method suggests that $(k,r)=(3,2)$.
\end{minipage}
\end{table}

Next, we analyze connections among countries and among expenditure items by clustering the loading matrix estimators. Guided by the results in Table~\ref{table_OECD}, we set $(k,r)=(3,2)$ and $(s_1,s_2)=(4,9)$ for the subsequent analysis, and estimate the model using the full dataset. For better illustration, the estimators are normalized and rotated using the VARIMAX criterion \citep{promax}. Clustering our rotated estimators $\widehat{\bR}_\text{ours,rot}$ and $\widehat{\bC}_\text{ours,rot}$, together with the $\alpha$-PCA estimators $\widehat{\bR}_{\alpha\text{,rot}}$ and $\widehat{\bC}_{\alpha\text{,rot}}$), yields groupings of countries and expenditure items, respectively. The rotated estimators and the corresponding clusters are reported in Table B1 and Table B2 in Appendix B, and Table~\ref{table_oecd_cluster_loadingmatrix} summarizes the mean loadings within each cluster for our rotated estimators.

From the rotated estimator $\widehat{\bR}_\text{ours,rot}$, we can classify the countries into four groups: Group 1 (TUR), Group 2 (DEU, JPN, GBR, FRA, KOR, CHE, SWE, IRL, ...), Group 3 (G20, OECD, EU27, G7, CHN, IND, BRA, RUS, ...), and Group 4 (USA, ITA, CAN, MEX, ESP, NLD, POL, BEL, ...). Table B1 and Table \ref{table_oecd_cluster_loadingmatrix} show that Group 1, Turkey, loads most heavily on Column 1, reflecting its distinct high inflation among large economies. Group 2, mainly consisting of developed economies with advanced manufacturing sectors, exhibits negative loadings on Columns 2 and 3. Group 3, which comprises large developing economies, loads primarily on column 3. Finally, Group 4, comprising countries with strong economic ties to the United States but less developed manufacturing sectors, loads negatively on Column 2 and positively on Column 3. Overall, these clusters reflect similarities in domestic economic patterns. Moreover, the $\alpha$-PCA estimators produce groupings for larger economies that are broadly consistent with our clusters, while some differences arise for smaller economies, suggesting the differences in how the two methods capture localized heterogeneity. An analogous analysis applies to the groupings of expenditure items and is therefore omitted.


\begin{table}[h]
\centering
\begin{minipage}{0.8\linewidth}
        \caption{\raggedright Mean loadings by cluster of the estimators for the CPI dataset.
    } 
    \label{table_oecd_cluster_loadingmatrix}\centering
    \resizebox{\linewidth}{!}{
    \centering
\begin{tabular}{ccp{0.25cm}cccc}
\hline
Estimator & Column& & Group1 & Group2 & Group3 & Group4 \\ \hline
\multirow{3}{*}{$\widehat{\bR}_\text{ours,rot}$} & Column1& & 9 & 0 & 0 & 0 \\
 & Column2& & -1 & -2 & 0 & -1 \\
 & Column3& & -2 & -1 & 1 & 1 \\ \hline
\multirow{2}{*}{$\widehat{\bC}_\text{ours,rot}$} & Column1& & -3 & 0 & -2 &  \\
 & Column2& & -2 & 3 & 0 &  \\ \cline{1-6}
\end{tabular}
    }
\end{minipage}
\begin{minipage}{0.8\linewidth}
    \footnotesize
    \vspace{0.1cm}
     This table reports the mean loadings by cluster for our rotated estimators $\widehat{\bR}_\text{ours,rot}$ and $\widehat{\bC}_\text{ours,rot}$ based on the CPI dataset. All values are multiplied by 10. The dataset contains 27 expenditure items for 52 countries over 1343 months. The factor numbers are set to $(k,r)=(3,2)$, with $(s_1,s_2)=(4,9)$ and even partitions in our method, and $\alpha=0$ in $\alpha$-PCA.
\end{minipage}

\end{table}

\section{Concluding Remarks}\label{sec5}
We proposed a distributed estimation procedure for modeling both stationary and unit-root nonstationary high-dimensional matrix-valued time series. The method preserved the intrinsic matrix structure and introduced a novel partitioning regime, which enabled accurate and computationally efficient extraction of latent factors while maintaining the overall common structure and respecting inter-group heterogeneity. We established statistical guarantees for the resulting estimators, including consistency and asymptotic normality. Numerical studies further demonstrated that the proposed method achieved substantial computational gains without sacrificing reliable finite-sample performance. These properties positioned the proposed procedure as a compelling alternative for factor modeling in high-dimensional stationary and nonstationary time series in the big data era.
\vspace*{-12pt}

%

\bibliography{references}

@article{ahn2013,
author = {Ahn, Seung C. and Horenstein, Alex R.},
title = {Eigenvalue Ratio Test for the Number of Factors},
journal = {Econometrica},
volume = {81},
number = {3},
pages = {1203-1227},
keywords = {Approximate factor models, number of factors, eigenvalues},
doi = {https://doi.org/10.3982/ECTA8968},
url = {https://onlinelibrary.wiley.com/doi/abs/10.3982/ECTA8968},
eprint = {https://onlinelibrary.wiley.com/doi/pdf/10.3982/ECTA8968},
abstract = {This paper proposes two new estimators for determining the number of factors (r) in static approximate factor models. We exploit the well-known fact that the r largest eigenvalues of the variance matrix of N response variables grow unboundedly as N increases, while the other eigenvalues remain bounded. The new estimators are obtained simply by maximizing the ratio of two adjacent eigenvalues. Our simulation results provide promising evidence for the two estimators.},
year = {2013}
}

@article{chen2021,
   title={Statistical Inference for High-Dimensional Matrix-Variate Factor Models},
   volume={118},
   ISSN={1537-274X},
   url={http://dx.doi.org/10.1080/01621459.2021.1970569},
   DOI={10.1080/01621459.2021.1970569},
   number={542},
   journal={Journal of the American Statistical Association},
   publisher={Informa UK Limited},
   author={Chen, Elynn Y. and Fan, Jianqing},
   year={2021},
   month=oct, pages={1038–1055} }

@article{fan2018,
author = {Jianqing Fan and Dong Wang and Kaizheng Wang and Ziwei Zhu},
title = {{Distributed estimation of principal eigenspaces}},
volume = {47},
journal = {The Annals of Statistics},
number = {6},
publisher = {Institute of Mathematical Statistics},
pages = {3009 -- 3031},
keywords = {communication efficiency, Distributed learning, Heterogeneity, one-shot approach, PCA, unbiasedness of empirical eigenspaces},
year = {2019},
doi = {10.1214/18-AOS1713},
URL = {https://doi.org/10.1214/18-AOS1713}
}

@book{fan2003,
author = {Fan, Jianqing and Yao, Qiwei},
year = {2003},
month = {01},
pages = {},
title = {Nonlinear Time Series: Nonparametric and Parametric Methods},
publisher = {Springer New York, NY},
isbn = {978-0-387-26142-3},
doi = {10.1007/978-0-387-69395-8}
}

@book{franklin2012,
  title={Matrix Theory},
  author={Franklin, J.N.},
  isbn={9780486136387},
  series={Dover Books on Mathematics},
  url={https://books.google.com.tw/books?id=eXQXwCDD9agC},
  year={2012},
  publisher={Dover Publications}
}

@article{WANG2019231,
title = {Factor models for matrix-valued high-dimensional time series},
journal = {Journal of Econometrics},
volume = {208},
number = {1},
pages = {231-248},
year = {2019},
note = {Special Issue on Financial Engineering and Risk Management},
issn = {0304-4076},
doi = {https://doi.org/10.1016/j.jeconom.2018.09.013},
url = {https://www.sciencedirect.com/science/article/pii/S0304407618301787},
author = {Dong Wang and Xialu Liu and Rong Chen},
abstract = {In finance, economics and many other fields, observations in a matrix form are often observed over time. For example, many economic indicators are obtained in different countries over time. Various financial characteristics of many companies are reported over time. Although it is natural to turn a matrix observation into a long vector then use standard vector time series models or factor analysis, it is often the case that the columns and rows of a matrix represent different sets of information that are closely interrelated in a very structural way. We propose a novel factor model that maintains and utilizes the matrix structure to achieve greater dimensional reduction as well as finding clearer and more interpretable factor structures. Estimation procedure and its theoretical properties are investigated and demonstrated with simulated and real examples.}
}

@Article{yu2020,
journal={Journal of Econometrics},
author={Yu, Long and He, Yong and Kong, Xinbing and Zhang, Xinsheng},
title={Projected estimation for large-dimensional matrix factor models},
year={2022},
month={None},
pages={201-217},
volume={229},
number={1},
abstract={In this study, we propose a projection estimation method for large-dimensional matrix factor models with cross-sectionally spiked eigenvalues. By projecting the observation matrix onto the row or column factor space, we simplify factor analysis for matrix series to that of a lower-dimensional tensor. This method also reduces the magnitudes of the idiosyncratic error components, thereby increasing the signal-to-noise ratio, because the projection matrix linearly filters the idiosyncratic error matrix. We theoretically prove that the projected estimators of the factor loading matrices achieve faster convergence rates than existing estimators under similar conditions. Asymptotic distributions of the projected estimators are also presented. A novel iterative procedure is given to specify the pair of row and column factor numbers. Extensive numerical studies verify the empirical performance of the projection method. Two real examples in finance and macroeconomics reveal factor patterns across rows and columns, which coincide with financial, economic, or geographical interpretations.},
keywords={Matrix factor model; Vector factor model; Column covariance matrix; Row covariance matrix},
doi={10.1016/j.jeconom.2021.04.001},
url={https://ideas.repec.org/a/eee/econom/v229y2022i1p201-217.html},
}

@article{Lam_2012,
   title={Factor modeling for high-dimensional time series: Inference for the number of factors},
   volume={40},
   ISSN={0090-5364},
   url={http://dx.doi.org/10.1214/12-AOS970},
   DOI={10.1214/12-aos970},
   number={2},
   journal={The Annals of Statistics},
   publisher={Institute of Mathematical Statistics},
   author={Lam, Clifford and Yao, Qiwei},
   year={2012},
   month=apr }

@article{Stock01122002,
author = {James H Stock and Mark W Watson},
title = {Forecasting Using Principal Components From a Large Number of Predictors},
journal = {Journal of the American Statistical Association},
volume = {97},
number = {460},
pages = {1167--1179},
year = {2002},
publisher = {ASA Website},
doi = {10.1198/016214502388618960},
URL = {https://doi.org/10.1198/016214502388618960},
eprint = { https://doi.org/10.1198/016214502388618960}
}

@article{bai2002,
author = {Bai, Jushan and Ng, Serena},
title = {Determining the Number of Factors in Approximate Factor Models},
journal = {Econometrica},
volume = {70},
number = {1},
pages = {191-221},
keywords = {factor analysis, asset pricing, principal components, model selection},
doi = {https://doi.org/10.1111/1468-0262.00273},
url = {https://onlinelibrary.wiley.com/doi/abs/10.1111/1468-0262.00273},
eprint = {https://onlinelibrary.wiley.com/doi/pdf/10.1111/1468-0262.00273},
abstract = {In this paper we develop some econometric theory for factor models of large dimensions. The focus is the determination of the number of factors (r), which is an unresolved issue in the rapidly growing literature on multifactor models. We first establish the convergence rate for the factor estimates that will allow for consistent estimation of r. We then propose some panel criteria and show that the number of factors can be consistently estimated using the criteria. The theory is developed under the framework of large cross-sections (N) and large time dimensions (T). No restriction is imposed on the relation between N and T. Simulations show that the proposed criteria have good finite sample properties in many configurations of the panel data encountered in practice.},
year = {2002}
}

@techreport{Newey1986,
 title = "A Simple, Positive Semi-Definite, Heteroskedasticity and AutocorrelationConsistent Covariance Matrix",
 author = "Newey, Whitney K and West, Kenneth D",
 institution = "National Bureau of Economic Research",
 type = "Working Paper",
 series = "Technical Working Paper Series",
 number = "55",
 year = "1986",
 month = "April",
 doi = {10.3386/t0055},
 URL = "http://www.nber.org/papers/t0055",
 abstract = {This paper describes a simple method of calculating a heteroskedasticity and autocorrelation consistent covariance matrix that is positive semi-definite by construction. It also establishes consistency of the estimated covariance matrix under fairly general conditions.},
}

@article{Qu2002,
author = {Qu, Yongming and Ostrouchov, George and Samatova, Nagiza and Geist, Al},
year = {2002},
month = {04},
pages = {},
title = {Principal Component Analysis for Dimension Reduction in Massive Distributed Data Sets},
journal = {Knowledge and Information Systems - KAIS}
}

@article{ROSS1976341,
title = {The arbitrage theory of capital asset pricing},
journal = {Journal of Economic Theory},
volume = {13},
number = {3},
pages = {341-360},
year = {1976},
issn = {0022-0531},
doi = {https://doi.org/10.1016/0022-0531(76)90046-6},
url = {https://www.sciencedirect.com/science/article/pii/0022053176900466},
author = {Stephen A Ross}
}

@article{GREGORY1999423,
title = {Common and country-specific fluctuations in productivity, investment, and the current account},
journal = {Journal of Monetary Economics},
volume = {44},
number = {3},
pages = {423-451},
year = {1999},
issn = {0304-3932},
doi = {https://doi.org/10.1016/S0304-3932(99)00035-5},
url = {https://www.sciencedirect.com/science/article/pii/S0304393299000355},
author = {Allan W. Gregory and Allen C. Head},
keywords = {Current account, Kalman filter, Investment, Solow residual},
abstract = {Dynamic factor analysis and Kalman filtering are used to construct a measure of common economic activity for the G7 countries. We find that the common fluctuations are strongly associated with movements in US investment. Common fluctuations have substantial impact on fluctuations in both productivity and investment in these countries, but very little impact on the current account. Country-specific investment fluctuations have a significant negative impact on the current account, while country-specific productivity movements have little independent effect. These findings are shown to be generally consistent with the predictions of a multi-country dynamic model calibrated to the empirical decomposition of productivity shocks.}
}

@article{Lewbel1991,
 ISSN = {00129682, 14680262},
 URL = {http://www.jstor.org/stable/2938225},
 abstract = {Gorman's (1981) concept of Engel curve "rank" is extended to apply to any demand system. Rank is shown to have implications for specification, separability, and aggregation of demands. A simple nonparametric test of rank using Engel curve data is described and applied to U.S. and U.K. consumer survey data. The test employs a new general method for testing the rank of estimated matrices. The results are used to assess theoretical and empirical aggregation error in representative consumer models, and to explain a representative consumer paradox.},
 author = {Arthur Lewbel},
 journal = {Econometrica},
 number = {3},
 pages = {711--730},
 publisher = {[Wiley, Econometric Society]},
 title = {The Rank of Demand Systems: Theory and Nonparametric Estimation},
 urldate = {2025-09-02},
 volume = {59},
 year = {1991}
}

@article{Donald1997,
 ISSN = {00129682, 14680262},
 URL = {http://www.jstor.org/stable/2171815},
 abstract = {This paper considers the problem of determining the number of nonparametric factors in a multivariate nonparametric relationship. The definition given is broad enough to encompass a number of potential applications in econometrics including inferring the rank of demand, testing whether for a given set of instruments it is possible to identify a linear model, and testing arbitrage pricing theory. The paper gives methods for testing hypotheses concerning the number of factors, using series and kernel based nonparametric methods and also considers the problem of estimating the number of factors. The methods are compared in a small simulation study and an application to determining the rank of demand systems is considered.},
 author = {Stephen G. Donald},
 journal = {Econometrica},
 number = {1},
 pages = {103--131},
 publisher = {[Wiley, Econometric Society]},
 title = {Inference Concerning the Number of Factors in a Multivariate Nonparametric Relationship},
 urldate = {2025-09-02},
 volume = {65},
 year = {1997}
}

@TechReport{Stock1998,
type={NBER Working Papers},
institution={National Bureau of Economic Research, Inc},
author={James H. Stock and Mark W. Watson},
title={Diffusion Indexes},
year={1998},
month={Aug},
number={6702},
abstract={This paper considers forecasting a single time series using more predictors than there are time series observations. The approach is to construct a relatively few indexes, akin to diffusion indexes, which are weighted averages of the predictors, using an approximate dynamic factor model. Estimation is discussed for balanced and unbalanced panels. The estimated dynamic factors are (uniformly) consistent, even in the presence of time varying parameters and/or data contamination, and forecasts based on the estimated factors are efficient. In an application to forecasting U.S. inflation and industrial production using 224 monthly time series, these forecasts outperform various state-of-the-art benchmark models.},
keywords={},
doi={None},
url={https://ideas.repec.org/p/nbr/nberwo/6702.html},
}

@article{forni2000,
 ISSN = {00346535, 15309142},
 URL = {http://www.jstor.org/stable/2646650},
 abstract = {This paper proposes a factor model with infinite dynamics and nonorthogonal idiosyncratic components. The model, which we call the generalized dynamic-factor model, is novel to the literature and generalizes the static approximate factor model of Chamberlain and Rothschild (1983), as well as the exact factor model à la Sargent and Sims (1977). We provide identification conditions, propose an estimator of the common components, prove convergence as both time and cross-sectional size go to infinity at appropriate rates, and present simulation results. We use our model to construct a coincident index for the European Union. Such index is defined as the common component of real GDP within a model including several macroeconomic variables for each European country.},
 author = {Mario Forni and Marc Hallin and Marco Lippi and Lucrezia Reichlin},
 journal = {The Review of Economics and Statistics},
 number = {4},
 pages = {540--554},
 publisher = {The MIT Press},
 title = {The Generalized Dynamic-Factor Model: Identification and Estimation},
 urldate = {2025-09-02},
 volume = {82},
 year = {2000}
}

@article{forni2005,
 ISSN = {01621459},
 URL = {http://www.jstor.org/stable/27590616},
 abstract = {This article proposes a new forecasting method that makes use of information from a large panel of time series. Like earlier methods, our method is based on a dynamic factor model. We argue that our method improves on a standard principal component predictor in that it fully exploits all the dynamic covariance structure of the panel and also weights the variables according to their estimated signal-to-noise ratio. We provide asymptotic results for our optimal forecast estimator and show that in finite samples, our forecast outperforms the standard principal components predictor.},
 author = {Mario Forni and Marc Hallin and Marco Lippi and Lucrezia Reichlin},
 journal = {Journal of the American Statistical Association},
 number = {471},
 pages = {830--840},
 publisher = {[American Statistical Association, Taylor & Francis, Ltd.]},
 title = {The Generalized Dynamic Factor Model: One-Sided Estimation and Forecasting},
 urldate = {2025-09-02},
 volume = {100},
 year = {2005}
}

@article{FORNI2015359,
title = {Dynamic factor models with infinite-dimensional factor spaces: One-sided representations},
journal = {Journal of Econometrics},
volume = {185},
number = {2},
pages = {359-371},
year = {2015},
issn = {0304-4076},
doi = {https://doi.org/10.1016/j.jeconom.2013.10.017},
url = {https://www.sciencedirect.com/science/article/pii/S0304407614002693},
author = {Mario Forni and Marc Hallin and Marco Lippi and Paolo Zaffaroni},
keywords = {Generalized dynamic factor models, Vector processes with singular spectral density, One-sided representations for dynamic factor models},
abstract = {Factor model methods recently have become extremely popular in the theory and practice of large panels of time series data. Those methods rely on various factor models which all are particular cases of the Generalized Dynamic Factor Model (GDFM) introduced in Forniet al. (2000). That paper, however, rests on Brillinger’s dynamic principal components. The corresponding estimators are two-sided filters whose performance at the end of the observation period or for forecasting purposes is rather poor. No such problem arises with estimators based on standard principal components, which have been dominant in this literature. On the other hand, those estimators require the assumption that the space spanned by the factors has finite dimension. In the present paper, we argue that such an assumption is extremely restrictive and potentially quite harmful. Elaborating upon recent results by Anderson and Deistler (2008a, b) on singular stationary processes with rational spectrum, we obtain one-sided representations for the GDFM without assuming finite dimension of the factor space. Construction of the corresponding estimators is also briefly outlined. In a companion paper, we establish consistency and rates for such estimators, and provide Monte Carlo results further motivating our approach.}
}

@article{Gao_2021,
   title={Modeling High-Dimensional Time Series: A Factor Model With Dynamically Dependent Factors and Diverging Eigenvalues},
   volume={117},
   ISSN={1537-274X},
   url={http://dx.doi.org/10.1080/01621459.2020.1862668},
   DOI={10.1080/01621459.2020.1862668},
   number={539},
   journal={Journal of the American Statistical Association},
   publisher={Informa UK Limited},
   author={Gao, Zhaoxing and Tsay, Ruey S.},
   year={2021},
   month=feb, pages={1398–1414} }

@article{gao2019,
author = {Gao, Zhaoxing and Tsay, Ruey S.},
title = {A Structural-Factor Approach to Modeling High-Dimensional Time Series and Space-Time Data},
journal = {Journal of Time Series Analysis},
volume = {40},
number = {3},
pages = {343-362},
keywords = {Bayesian information criterion, canonical correlation analysis, factor model, high-dimensional time series, space-time data, PM2.5, seasonality, trend},
doi = {https://doi.org/10.1111/jtsa.12466},
url = {https://onlinelibrary.wiley.com/doi/abs/10.1111/jtsa.12466},
eprint = {https://onlinelibrary.wiley.com/doi/pdf/10.1111/jtsa.12466},
abstract = {This article considers a structural-factor approach to modeling high-dimensional time series and space-time data by decomposing individual series into trend, seasonal, and irregular components. For ease in analyzing many time series, we employ a time polynomial for the trend, a linear combination of trigonometric series for the seasonal component, and a new factor model for the irregular components. The new factor model simplifies the modeling process and achieves parsimony in parameterization. We propose a Bayesian information criterion to consistently select the order of the polynomial trend and the number of trigonometric functions, and use a test statistic to determine the number of common factors. The convergence rates for the estimators of the trend and seasonal components and the limiting distribution of the test statistic are established under the setting that the number of time series tends to infinity with the sample size, but at a slower rate. We study the finite-sample performance of the proposed analysis via simulation, and analyze two real examples. The first example considers modeling weekly PM2.5 data of 15 monitoring stations in the southern region of Taiwan and the second example consists of monthly value-weighted returns of 12 industrial portfolios.},
year = {2019}
}

@article{he2022matrixfactoranalysissquares,
author = {Yong He and Xinbing Kong and Long Yu and Xinsheng Zhang and Changwei Zhao},
title = {Matrix Factor Analysis: From Least Squares to Iterative Projection},
journal = {Journal of Business \& Economic Statistics},
volume = {42},
number = {1},
pages = {322--334},
year = {2024},
publisher = {ASA Website},
doi = {10.1080/07350015.2023.2191676},


URL = { 
    
        https://doi.org/10.1080/07350015.2023.2191676
    
    

},
eprint = { 
    
        https://doi.org/10.1080/07350015.2023.2191676
    
    

}

}

@article{he2022matrixkendallstauhighdimensions,
author = {Yong He and Yalin Wang and Long Yu and Wang Zhou and Wen-Xin Zhou},
title = {{A new non-parametric Kendall’s tau for matrix-valued elliptical observations}},
volume = {31},
journal = {Bernoulli},
number = {4},
publisher = {Bernoulli Society for Mathematical Statistics and Probability},
pages = {3331 -- 3355},
keywords = {elliptical distribution, Kendall’s tau, Matrix factor model, Principal Component Analysis},
year = {2025},
doi = {10.3150/24-BEJ1849},
URL = {https://doi.org/10.3150/24-BEJ1849}
}

@article{GAO202383,
title = {A Two-Way Transformed Factor Model for Matrix-Variate Time Series},
journal = {Econometrics and Statistics},
volume = {27},
pages = {83-101},
year = {2023},
issn = {2452-3062},
doi = {https://doi.org/10.1016/j.ecosta.2021.08.008},
url = {https://www.sciencedirect.com/science/article/pii/S2452306221001027},
author = {Zhaoxing Gao and Ruey S. Tsay},
keywords = {Common factor, Eigen-analysis, Projected principal component analysis, Kronecker product, Diverging eigenvalues, High-dimensional white noise test},
abstract = {A new framework is proposed for modeling high-dimensional matrix-variate time series via a two-way transformation, where the transformed data consist of a matrix-variate factor process, which is dynamically dependent, and three other blocks of white noises. For a given p1×p2 matrix-variate time series, nonsingular transformations are sought to project the rows and columns onto another p1 and p2 directions according to the strength of the dynamical dependence of the series on their past values. Consequently, the data are nonsingular linear row and column transformations of dynamically dependent common factors and white noise idiosyncratic components. A common orthonormal projection method is proposed to estimate the front and back loading matrices of the matrix-variate factors. Under the setting that the largest eigenvalues of the covariance of the vectorized idiosyncratic term diverge for large p1 and p2, a two-way projected Principal Component Analysis is introduced to estimate the associated loading matrices of the idiosyncratic terms to mitigate such diverging noise effects. A new white-noise testing procedure is proposed to estimate the dimension of the factor matrix. Asymptotic properties of the proposed method are established for both fixed and diverging dimensions as the sample size increases to infinity. Simulated and real examples are used to assess the performance of the proposed method. Comparisons of the proposed method with some existing ones in the literature concerning the forecastability of the factors are studied and it is found that the proposed approach not only provides interpretable results, but also performs well in out-of-sample forecasting.}
}

@inproceedings{garber2017communicationefficientalgorithmsdistributedstochastic,
author = {Garber, Dan and Shamir, Ohad and Srebro, Nathan},
title = {Communication-efficient algorithms for distributed stochastic principal component analysis},
year = {2017},
publisher = {JMLR.org},
abstract = {We study the fundamental problem of Principal Component Analysis in a statistical distributed setting in which each machine out of m stores a sample of n points sampled i.i.d. from a single unknown distribution. We study algorithms for estimating the leading principal component of the population covariance matrix that are both communication-efficient and achieve estimation error of the order of the centralized ERM solution that uses all mn samples. On the negative side, we show that in contrast to results obtained for distributed estimation under convexity assumptions, for the PCA objective, simply averaging the local ERM solutions cannot guarantee error that is consistent with the centralized ERM. We show that this unfortunate phenomena can be remedied by performing a simple correction step which correlates between the individual solutions, and provides an estimator that is consistent with the centralized ERM for sufficiently-large n. We also introduce an iterative distributed algorithm that is applicable in any regime of n, which is based on distributed matrix-vector products. The algorithm gives significant acceleration in terms of communication rounds over previous distributed algorithms, in a wide regime of parameters.},
booktitle = {Proceedings of the 34th International Conference on Machine Learning - Volume 70},
pages = {1203–1212},
numpages = {10},
location = {Sydney, NSW, Australia},
series = {ICML'17}
}

@article{chen2021distributedestimationprincipalcomponent,
author = {Xi Chen and Jason D. Lee and He Li and Yun Yang},
title = {Distributed Estimation for Principal Component Analysis: An Enlarged Eigenspace Analysis},
journal = {Journal of the American Statistical Association},
volume = {117},
number = {540},
pages = {1775--1786},
year = {2022},
publisher = {ASA Website},
doi = {10.1080/01621459.2021.1886937},


URL = { 
    
        https://doi.org/10.1080/01621459.2021.1886937
    
    

},
eprint = { 
    
        https://doi.org/10.1080/01621459.2021.1886937
    
    

}

}

@article{Kargupta2000,
author = {Kargupta, Hillol and Huang, Weiyun and Sivakumar, Krishnamoorthy and Johnson, Erik},
year = {2000},
month = {10},
pages = {},
title = {Distributed Clustering Using Collective Principal Component Analysis},
volume = {3},
journal = {Knowledge and Information Systems},
doi = {10.1007/PL00011677}
}

@ARTICLE{Li2011,
  author={Li, Lin and Scaglione, Anna and Manton, Jonathan H.},
  journal={IEEE Journal of Selected Topics in Signal Processing}, 
  title={Distributed Principal Subspace Estimation in Wireless Sensor Networks}, 
  year={2011},
  volume={5},
  number={4},
  pages={725-738},
  keywords={Estimation;Eigenvalues and eigenfunctions;Covariance matrix;Protocols;Algorithm design and analysis;Wireless sensor networks;Convergence;Asynchronous time;average consensus;distributed algorithm;gossiping;signal detection;stochastic approximation;subspace estimation;subspace tracking;synchronous time},
  doi={10.1109/JSTSP.2011.2118742}}

@article{BERTRAND2014120,
title = {Distributed adaptive estimation of covariance matrix eigenvectors in wireless sensor networks with application to distributed PCA},
journal = {Signal Processing},
volume = {104},
pages = {120-135},
year = {2014},
issn = {0165-1684},
doi = {https://doi.org/10.1016/j.sigpro.2014.03.037},
url = {https://www.sciencedirect.com/science/article/pii/S016516841400142X},
author = {Alexander Bertrand and Marc Moonen},
keywords = {Wireless sensor networks, Distributed estimation, Eigenvalue decomposition, Eigenvectors, Principal component analysis},
abstract = {We describe a distributed adaptive algorithm to estimate the eigenvectors corresponding to the Q largest or smallest eigenvalues of the network-wide sensor signal covariance matrix in a wireless sensor network. The proposed algorithm recursively updates the eigenvector estimates without explicitly constructing the full covariance matrix that defines them, i.e., without centralizing all the raw sensor signal observations. By only sharing fused Q-dimensional observations, each node obtains estimates of (a) the node-specific entries of the Q covariance matrix eigenvectors, and (b) Q-dimensional projections of the full set of sensor signal observations onto the Q eigenvectors. We also explain how the latter can be used for, e.g., compression and reconstruction of the sensor signal observations based on principal component analysis (PCA), in which each node acts as a data sink. We describe a version of the algorithm for fully-connected networks, as well as for partially-connected networks. In the latter case, we assume that the network has been pruned to a tree topology to avoid cycles in the network. We provide convergence proofs, as well as numerical simulations to demonstrate the convergence and optimality of the algorithm.}
}

@article{chen2025distributedtensorprincipalcomponent,
author = {Elynn Chen and Xi Chen and Wenbo Jing and Yichen Zhang},
title = {Distributed Tensor Principal Component Analysis with Data Heterogeneity},
journal = {Journal of the American Statistical Association},
pages = {1--13},
year = {2025},
publisher = {ASA Website},
doi = {10.1080/01621459.2025.2483481},


URL = { 
    
        https://doi.org/10.1080/01621459.2025.2483481
    
    

},
eprint = { 
    
        https://doi.org/10.1080/01621459.2025.2483481
    
    

}

}

@article{JIANG2027,
author = {Hangjin Jiang and Baining Shen and Yuzhou Li and Zhaoxing Gao},
title = {Regularized Estimation of High-Dimensional Matrix-Variate Autoregressive Models [in press]},
journal = {Statistica Sinica},
volume = {37},
number = {4},
pages = {1--40},
year = {2027},
publisher = {Academia Sinica and International Chinese Statistical Association},
doi = {10.5705/ss.202024.0341},


URL = { 
    
        https://doi.org/10.5705/ss.202024.0341
    
    

},
eprint = { 
    
        https://doi.org/10.5705/ss.202024.0341
    
    

}

}

@article{promax,
author = {Hendrickson, Alan E. and White, Paul Owen},
title = {PROMAX: A QUICK METHOD FOR ROTATION TO OBLIQUE SIMPLE STRUCTURE},
journal = {British Journal of Statistical Psychology},
volume = {17},
number = {1},
pages = {65-70},
doi = {https://doi.org/10.1111/j.2044-8317.1964.tb00244.x},
url = {https://bpspsychub.onlinelibrary.wiley.com/doi/abs/10.1111/j.2044-8317.1964.tb00244.x},
eprint = {https://bpspsychub.onlinelibrary.wiley.com/doi/pdf/10.1111/j.2044-8317.1964.tb00244.x},
abstract = {A new method for analytical rotation to oblique simple structure is described. Orthogonal simple structure is achieved by means of any of several existing rotation methods and this is then transformed into an oblique solution.},
year = {1964}
}

@article{PENA20061237,
title = {Nonstationary dynamic factor analysis},
journal = {Journal of Statistical Planning and Inference},
volume = {136},
number = {4},
pages = {1237-1257},
year = {2006},
issn = {0378-3758},
doi = {https://doi.org/10.1016/j.jspi.2004.08.020},
url = {https://www.sciencedirect.com/science/article/pii/S0378375804003659},
author = {Daniel Peña and Pilar Poncela},
keywords = {Canonical correlation, Cointegration and common factors, Eigenvectors and eigenvalues, Generalized covariance matrices, Vector time series, Wiener processes},
abstract = {In this paper, we present a procedure to build a dynamic factor model for a vector of time series. We assume a model in which the common dynamic structure of the time series vector is explained through a set of common factors, which may be nonstationary, as in the case of common trends. Identification of the nonstationary I(d) factors is made through the common eigenstructure of the generalized covariance matrices, properly normalized. The number of common nonstationary factors is the number of nonzero eigenvalues of the above matrices. A chi-square statistic is proposed to test for the number of factors, stationary or not. The estimation of the model is carried out in state space form. This proposal is illustrated through several simulations and a real data set.}
}

@misc{li2025factormodelsmatrixvaluedtime,
      title={Factor Models of Matrix-Valued Time Series: Nonstationarity and Cointegration [preprint, arXiv:2508.11358]}, 
      author={Degui Li and Yayi Yan and Qiwei Yao},
      year={2025},
      eprint={2508.11358},
      archivePrefix={arXiv},
      primaryClass={econ.EM},
      url={https://arxiv.org/abs/2508.11358}, 
}

@article{gao2021divideandconquerdistributedhierarchicalfactor,
author = {Zhaoxing Gao and Ruey S. Tsay},
title = {Divide-and-Conquer: A Distributed Hierarchical Factor Approach to Modeling Large-Scale Time Series Data},
journal = {Journal of the American Statistical Association},
volume = {118},
number = {544},
pages = {2698--2711},
year = {2023},
publisher = {ASA Website},
doi = {10.1080/01621459.2022.2071279},


URL = { 
    
        https://doi.org/10.1080/01621459.2022.2071279
    
    

},
eprint = { 
    
        https://doi.org/10.1080/01621459.2022.2071279
    
    

}

}

@article{BARIGOZZI2021455,
title = {Large-dimensional Dynamic Factor Models: Estimation of Impulse–Response Functions with I(1) cointegrated factors},
journal = {Journal of Econometrics},
volume = {221},
number = {2},
pages = {455-482},
year = {2021},
issn = {0304-4076},
doi = {https://doi.org/10.1016/j.jeconom.2020.05.004},
url = {https://www.sciencedirect.com/science/article/pii/S0304407620302219},
author = {Matteo Barigozzi and Marco Lippi and Matteo Luciani},
keywords = {Dynamic Factor Models, Unit root processes, Cointegration, Impulse–Response Functions},
abstract = {We study a large-dimensional Dynamic Factor Model where: (i) the vector of factors Ft is I(1) and driven by a number of shocks that is smaller than the dimension of Ft; and, (ii) the idiosyncratic components are either I(1) or I(0). Under (i), the factors Ft are cointegrated and can be modeled as a Vector Error Correction Model (VECM). Under (i) and (ii), we provide consistent estimators, as both the cross-sectional size n and the time dimension T go to infinity, for the factors, the loadings, the shocks, the coefficients of the VECM and therefore the Impulse–Response Functions (IRF) of the observed variables to the shocks. Furthermore, possible deterministic linear trends are fully accounted for, and the case of an unrestricted VAR in the levels Ft, instead of a VECM, is also studied. The finite-sample properties the proposed estimators are explored by means of a MonteCarlo exercise. Finally, we revisit two distinct and widely studied empirical applications. By correctly modeling the long-run dynamics of the factors, our results partly overturn those obtained by recent literature. Specifically, we find that: (i) oil price shocks have just a temporary effect on US real activity; and, (ii) in response to a positive news shock, the economy first experiences a significant boom, and then a milder recession.}
}

@techreport{Chamberlain1981,
 title = "Arbitrage and Mean-Variance Analysis on Large Asset Markets",
 author = "Chamberlain, Gary and Rothschild, Michael",
 institution = "National Bureau of Economic Research",
 type = "Working Paper",
 series = "Technical Working Paper Series",
 number = "15",
 year = "1981",
 month = "July",
 doi = {10.3386/t0015},
 URL = "http://www.nber.org/papers/t0015",
 abstract = {We examine the implications of arbitrage in a market with many assets. The absence of arbitrage opportunities implies that the linear functionals that give the mean and cost of a portfolio are continuous; hence there exist unique portfolios that represent these functionals. The mean variance efficient set is a cone generated by these portfolios. Ross [16, 18J showed that if there is a factor structure, then the distance between the vector or mean returns and the space spanned by the factor loadings is bounded as the number of assets increases. We show that if the covariance matrix of asset returns has only K unbounded eigenvalues, then the corresponding K eigenvectors converge and play the role of factor loadings in Ross' result. Hence only a principal components analysis is needed to test the arbitrage pricing theory. Our eigenvalue conditional can hold even though conventional measures of the approximation error in a K factor model are unbounded. We also resolve the question of when a market with many assets permits so much diversification that risk-free investment opportunities are available.},
}

@article{CAIADO20062668,
title = {A periodogram-based metric for time series classification},
journal = {Computational Statistics \& Data Analysis},
volume = {50},
number = {10},
pages = {2668-2684},
year = {2006},
issn = {0167-9473},
doi = {https://doi.org/10.1016/j.csda.2005.04.012},
url = {https://www.sciencedirect.com/science/article/pii/S0167947305000770},
author = {Jorge Caiado and Nuno Crato and Daniel Peña},
keywords = {Autocorrelation function, Classification, Clustering, Euclidean distance, Periodogram, Stationary and non-stationary time series},
abstract = {The statistical discrimination and clustering literature has studied the problem of identifying similarities in time series data. Some studies use non-parametric approaches for splitting a set of time series into clusters by looking at their Euclidean distances in the space of points. A new measure of distance between time series based on the normalized periodogram is proposed. Simulation results comparing this measure with others parametric and non-parametric metrics are provided. In particular, the classification of time series as stationary or as non-stationary is discussed. The use of both hierarchical and non-hierarchical clustering algorithms is considered. An illustrative example with economic time series data is also presented.}
}

@article{diebold1995comparing,
  title={Comparing Predictive Accuracy},
  author={Diebold, Francis X and Mariano, Roberto S},
  journal={Journal of Business \& Economic Statistics},
  volume={13},
  number={3},
  year={1995}
}

\newpage
\section*{Appendix A: Proof of Theorems}
\renewcommand{\thefigure}{A\arabic{figure}}
\renewcommand{\thetable}{A\arabic{table}}
\renewcommand{\theequation}{A\arabic{equation}}

\setcounter{table}{0}
\setcounter{figure}{0}
\setcounter{section}{0}
\setcounter{lemma}{0}
\setcounter{theorem}{0}
\setcounter{proposition}{0}
In this section, we establish the convergence and asymptotic normality of the proposed estimators for both stationary and unit-root nonstationary time series.
\subsection*{A.1 Propositions and lemmas for the stationary case}
We first present some propositions and lemmas for the stationary case. These results will be used to prove Theorems 1-5. We begin by establishing asymptotic properties of the local estimators and the associated auxiliary matrices. Following Lemma B.1 in \cite{yu2020}, we directly have Proposition \ref{prop1}.
\begin{proposition}\label{prop1}
Under Assumptions 1-6(a) with known $k$ and $r$, for any $i\in[s_1]$ and $j\in[s_2]$, we have
\begin{equation*}
    \frac{1}{p}\left\|\widehat{\mathbf{R}}_i-\mathbf{R H}_{\bR_i}\right\|_\mathsf{F}^2=\mathcal{O}_p\left(\frac{1}{m_iT}+\frac{1}{p^2}\right),\quad\text{as }m_i,p,T\to\infty,
\end{equation*}
and
\begin{equation*}
    \frac{1}{q}\left\|\widehat{\mathbf{C}}_j-\mathbf{C H}_{\bC_j}\right\|_\mathsf{F}^2=\mathcal{O}_p\left(\frac{1}{n_jT}+\frac{1}{q^2}\right),\quad\text{as }n_j,q,T\to\infty.
\end{equation*}
For any $l_1\in[p]$ and $l_2\in[q]$, we have
\begin{equation*}
    \left\|\widehat{\mathbf{R}}_i^{l_1}-\mathbf{R}^{l_1}\mathbf{ H}_{\bR_i}\right\|_2^2=\mathcal{O}_p\left(\frac{1}{m_iT}+\frac{1}{p^2}\right),\quad\text{as }m_i,p,T\to\infty,    
\end{equation*}
and
\begin{equation*}
    \left\|\widehat{\mathbf{C}}_j^{l_2}-\mathbf{C}^{l_2}\mathbf{ H}_{\bC_j}\right\|_2^2=\mathcal{O}_p\left(\frac{1}{n_jT}+\frac{1}{q^2}\right),\quad\text{as }n_j,q,T\to\infty.
\end{equation*}
\end{proposition}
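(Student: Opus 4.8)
The plan is to observe that the $i$-th sub-model~(\ref{submodel1}), $\mathbf{Y}_{it}=\mathbf{R}\mathbf{F}_t\mathbf{C}_i^\top+\mathbf{E}_{it}$, is itself a matrix factor model with $p$ rows and $m_i$ columns, row loading $\mathbf{R}$, column loading $\mathbf{C}_i$, common factor $\mathbf{F}_t$, and idiosyncratic component $\mathbf{E}_{it}$, and that $\widehat{\mathbf{R}}_i$ is precisely the $\alpha$-PCA row-loading estimator of \cite{chen2021} applied to $\{\mathbf{Y}_{it}\}_{t=1}^T$. Consequently, all four displayed bounds will follow by invoking Theorem~3.3 and Lemma~B.1 of \cite{yu2020} for this sub-model, with the roles of $q$ and $\mathbf{C}$ played by $m_i$ and $\mathbf{C}_i$, and, symmetrically, by applying the same results to sub-model~(\ref{submodel2}) with $n_j$ and $\mathbf{R}_j$ in place of $p$ and $\mathbf{R}$ for the $\widehat{\mathbf{C}}_j$ statements.

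The first step is to check that the hypotheses of the cited results hold for each sub-model. Assumptions~\ref{assum1} and~\ref{assum2} transfer verbatim, since $(\operatorname{vec}(\mathbf{F}_t),\operatorname{vec}(\mathbf{E}_{it}))$ is a subvector of $(\operatorname{vec}(\mathbf{F}_t),\operatorname{vec}(\mathbf{E}_t))$. The pervasiveness conditions required for the sub-model are $\|p^{-1}\mathbf{R}^\top\mathbf{R}-\boldsymbol{\Omega}_\bR\|_2\to0$, which is exactly the Proposition following Assumption~\ref{assum3}, and $\|m_i^{-1}\mathbf{C}_i^\top\mathbf{C}_i-\boldsymbol{\Omega}_{\bC_i}\|_2\to0$, which is Assumption~\ref{assum3}(b); the bounded-row condition Assumption~\ref{assum3}(a) for $\mathbf{C}_i$ is inherited from that for $\mathbf{C}$. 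Assumption~\ref{assum4}(a)--(d) restricted to the block $\mathbf{E}_{it}$ follow from their full-data counterparts, because each is a sum of non-negative terms obtained by restricting the column index set to the $m_i$ columns of block~$i$ and is hence bounded by the full sum. Likewise, Assumption~\ref{assum5} for the block is implied by Assumption~\ref{assum5} for the full data together with $\|\mathbf{C}_i^j\|_2=\mathcal{O}(1)$. Finally, Assumption~\ref{assum6}(a) supplies the distinct-eigenvalue condition for $\boldsymbol{\Omega}_\bR\widetilde{\boldsymbol{\Sigma}}_{\mathbf{FC}_i}$ that the eigendecomposition step of the $\alpha$-PCA procedure requires. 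With these checks in place, Theorem~3.3 of \cite{yu2020} applied to sub-model~(\ref{submodel1}) yields $\frac{1}{p}\|\widehat{\mathbf{R}}_i-\mathbf{R}\bH_{\bR_i}\|_\mathsf{F}^2=\mathcal{O}_p(\frac{1}{m_iT}+\frac{1}{p^2})$, and symmetrically for $\widehat{\mathbf{C}}_j$.

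For the per-row statements, I would follow the row-wise expansion underlying Lemma~B.1 of \cite{yu2020}. Using the eigenvector identity $\widehat{\mathbf{M}}_{\bR_i}\widehat{\mathbf{R}}_i=\widehat{\mathbf{R}}_i\widehat{\mathbf{V}}_{\bR_i}$ and the definition of $\bH_{\bR_i}$, one writes $\widehat{\mathbf{R}}_i-\mathbf{R}\bH_{\bR_i}=\frac{1}{pm_iT}\sum_{t=1}^T\big(\widetilde{\mathbf{Y}}_{it}\widetilde{\mathbf{Y}}_{it}^\top-\mathbf{R}\widetilde{\mathbf{F}}_t\mathbf{C}_i^\top\mathbf{C}_i\widetilde{\mathbf{F}}_t^\top\mathbf{R}^\top\big)\widehat{\mathbf{R}}_i\widehat{\mathbf{V}}_{\bR_i}^{-1}$; substituting $\widetilde{\mathbf{Y}}_{it}=\mathbf{R}\widetilde{\mathbf{F}}_t\mathbf{C}_i^\top+\widetilde{\mathbf{E}}_{it}$ expresses the right-hand side as a finite sum of cross terms, the $l_1$-th row of each being the $l_1$-th row of $\mathbf{R}$ or of $\widetilde{\mathbf{E}}_{it}$ multiplied by a factor of bounded operator norm. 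Taking $\ell_2$-norms row by row and bounding each cross term by Cauchy--Schwarz together with Assumptions~\ref{assum2},~\ref{assum4}, and~\ref{assum5}, exactly as in the cited lemma, gives $\|\widehat{\mathbf{R}}_i^{l_1}-\mathbf{R}^{l_1}\bH_{\bR_i}\|_2^2=\mathcal{O}_p(\frac{1}{m_iT}+\frac{1}{p^2})$, and the analogous bound for $\widehat{\mathbf{C}}_j^{l_2}$.

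The main obstacle I anticipate is organizational rather than conceptual: one must verify that all moment and weak-dependence conditions of \cite{chen2021} and \cite{yu2020} hold \emph{uniformly} over the blocks---in particular the uniform convergence in Assumption~\ref{assum3}(b) and the uniform eigenvalue separation built into Assumption~\ref{assum6}(a)---so that the $\mathcal{O}_p(\cdot)$ bounds carry constants independent of $i$ and $j$. Once this uniformity is secured, the proposition is an immediate specialization of the cited results, which is why the text records it as a direct consequence.
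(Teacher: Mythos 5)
Your proposal is correct and takes essentially the same route as the paper, which records this proposition as a direct consequence of Lemma B.1 (and Theorem 3.3) of Yu et al. (2020) applied to each sub-model. Your additional care in verifying that Assumptions 1--6(a) transfer to the blocks (and uniformly so, via Assumption 3(b) and Assumption 6(a), which the paper deliberately states at the block level) is exactly the implicit content behind the paper's one-line citation.
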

By applying Propositions 1-3 and Theorem 2 in \cite{chen2021}, we directly obtain Propositions \ref{prop2}-\ref{prop5}.
\begin{proposition}\label{prop2}
Under Assumptions 1-6(a) with known $k$ and $r$, for any $i\in[s_1]$ and $j\in[s_2]$, we have $\|\bH_{\bR_i}\|_2=\mathcal{O}_p(1)$, and $\|\bH_{\bC_j}\|_2=\mathcal{O}_p(1)$.
\end{proposition}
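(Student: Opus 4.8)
The plan is to factor the auxiliary matrix $\bH_{\bR_i}$ into three pieces and bound each in operator norm; the argument for $\bH_{\bC_j}$ is identical after transposing sub-model~(\ref{submodel2}), so I focus on $\bH_{\bR_i}$. Write
$$\bH_{\bR_i}=\bA_i\Big(\tfrac{1}{p}\bR^\top\wh{\bR}_i\Big)\widehat{\bV}_{\bR_i}^{-1},\qquad \bA_i:=\frac{1}{m_iT}\sum_{t=1}^T\widetilde{\bF}_t\bC_i^\top\bC_i\widetilde{\bF}_t^\top,$$
so that $\|\bH_{\bR_i}\|_2\le\|\bA_i\|_2\,\big\|\tfrac{1}{p}\bR^\top\wh{\bR}_i\big\|_2\,\big\|\widehat{\bV}_{\bR_i}^{-1}\big\|_2$, and it suffices to show each factor is $\mathcal{O}_p(1)$.

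For the first factor, Assumption~3(b) gives $\|m_i^{-1}\bC_i^\top\bC_i\|_2\le\|\boldsymbol{\Omega}_{\bC_i}\|_2+o(1)=\mathcal{O}(1)$, and Assumption~2(a) together with $\widetilde{\bF}_t=\bF_t+\tilde{\alpha}\overline{\bF}$ gives $\frac{1}{T}\sum_{t=1}^T\|\widetilde{\bF}_t\|_2^2=\mathcal{O}_p(1)$, whence $\|\bA_i\|_2\le\|m_i^{-1}\bC_i^\top\bC_i\|_2\cdot\frac{1}{T}\sum_t\|\widetilde{\bF}_t\|_2^2=\mathcal{O}_p(1)$. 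For the second factor, the columns of $\wh{\bR}_i$ are $\sqrt{p}$ times orthonormal eigenvectors, so $\|\wh{\bR}_i\|_2=\sqrt{p}$, while the Proposition established right after Assumption~3 gives $\|p^{-1}\bR^\top\bR-\boldsymbol{\Omega}_\bR\|_2\to0$ and hence $\|\bR\|_2\le\sqrt{p}(\|\boldsymbol{\Omega}_\bR\|_2^{1/2}+o(1))$; therefore $\big\|\tfrac{1}{p}\bR^\top\wh{\bR}_i\big\|_2\le p^{-1}\|\bR\|_2\|\wh{\bR}_i\|_2=\mathcal{O}(1)$.

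The third factor, $\|\widehat{\bV}_{\bR_i}^{-1}\|_2=\lambda_k(\widehat{\bM}_{\bR_i})^{-1}$, is the main obstacle, as it requires a lower bound on the $k$-th largest eigenvalue of $\widehat{\bM}_{\bR_i}$ that stays bounded away from zero with probability tending to one. I would obtain it from a signal-plus-noise decomposition: substituting $\widetilde{\bY}_{it}=\bR\widetilde{\bF}_t\bC_i^\top+\widetilde{\bE}_{it}$ into $\widehat{\bM}_{\bR_i}=\frac{1}{pm_iT}\sum_t\widetilde{\bY}_{it}\widetilde{\bY}_{it}^\top$ gives $\widehat{\bM}_{\bR_i}=\frac{1}{p}\bR\bA_i\bR^\top+\bDelta_i$, where $\bDelta_i$ collects the two cross terms and the idiosyncratic term $\frac{1}{pm_iT}\sum_t\widetilde{\bE}_{it}\widetilde{\bE}_{it}^\top$. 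Since $\bA_i\succeq0$, the nonzero eigenvalues of $\frac{1}{p}\bR\bA_i\bR^\top$ coincide with those of $\bA_i^{1/2}(p^{-1}\bR^\top\bR)\bA_i^{1/2}$, which converge in probability to the eigenvalues of $\widetilde{\boldsymbol{\Sigma}}_{\mathbf{FC}_i}^{1/2}\boldsymbol{\Omega}_\bR\widetilde{\boldsymbol{\Sigma}}_{\mathbf{FC}_i}^{1/2}$ (equivalently, of $\boldsymbol{\Omega}_\bR\widetilde{\boldsymbol{\Sigma}}_{\mathbf{FC}_i}$); by Assumption~6(a), together with positive-definiteness of $\boldsymbol{\Omega}_\bR$ and $\widetilde{\boldsymbol{\Sigma}}_{\mathbf{FC}_i}$, these $k$ eigenvalues are positive, bounded, and distinct, so $\lambda_k(\frac{1}{p}\bR\bA_i\bR^\top)\ge c>0$ with probability approaching one. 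Meanwhile $\|\bDelta_i\|_2=o_p(1)$: the idiosyncratic term has mean of operator norm $\mathcal{O}(p^{-1})$ by Assumption~4(a) with random fluctuation controlled by Assumptions~4(c)--(d), and the cross terms are $o_p(1)$ by Cauchy--Schwarz and the partial-sum moment bounds in Assumption~5. Weyl's inequality then yields $\lambda_k(\widehat{\bM}_{\bR_i})\ge c-o_p(1)$, hence $\|\widehat{\bV}_{\bR_i}^{-1}\|_2=\mathcal{O}_p(1)$. Multiplying the three bounds gives $\|\bH_{\bR_i}\|_2=\mathcal{O}_p(1)$, and the same steps applied to $\widehat{\bM}_{\bC_j}$ give $\|\bH_{\bC_j}\|_2=\mathcal{O}_p(1)$.

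Alternatively — and this is essentially the route indicated in the text — one may observe that sub-model~(\ref{submodel1}) is itself a matrix factor model in the sense of \cite{chen2021}, with row dimension $p$, column dimension $m_i$, and loading matrices $(\bR,\bC_i)$; the block-restricted versions of Assumptions~1--6(a) imply the hypotheses of their Propositions~1--3 and Theorem~2 (in particular Assumption~3(b) is precisely their pervasiveness condition on $\bC_i$, and the preceding Proposition supplies it for $\bR$), so the bound follows directly from those results. I expect the only point needing care is checking that the block quantities — notably $\bE_{it}$ as a sub-matrix of $\bE_t$ — inherit the weak-dependence and moment conditions, which is routine since all the relevant sums run over subsets of the original index sets.
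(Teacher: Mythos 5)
Your proposal is correct, and your second (``alternative'') route is in fact the paper's entire proof: the appendix disposes of this proposition in one line by observing that sub-model~(\ref{submodel1}) is a matrix factor model in the sense of \cite{chen2021} and invoking their Propositions~1--3 and Theorem~2, exactly as you describe, with Assumption~\ref{assum3}(b) (plus the Proposition following it) supplying the pervasiveness conditions for $(\bR,\bC_i)$ and the block-restricted dependence conditions inherited trivially because all sums run over sub-index sets. What you add beyond the paper is the self-contained first argument: the factorization $\bH_{\bR_i}=\bA_i\,(p^{-1}\bR^\top\wh{\bR}_i)\,\widehat{\bV}_{\bR_i}^{-1}$ with the three operator-norm bounds, of which only $\|\widehat{\bV}_{\bR_i}^{-1}\|_2=\mathcal{O}_p(1)$ is nontrivial, handled by the signal-plus-noise decomposition of $\widehat{\bM}_{\bR_i}$ and Weyl's inequality. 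That argument is sound (the first two factors are immediate from Assumptions~\ref{assum2}(a) and \ref{assum3} and from $\|\wh{\bR}_i\|_2=\sqrt{p}$; the eigenvalue lower bound needs only positive definiteness of $\boldsymbol{\Omega}_\bR$ and $\widetilde{\boldsymbol{\Sigma}}_{\mathbf{FC}_i}$ plus a law of large numbers for $\bA_i$, which follows from Assumptions~\ref{assum1}--\ref{assum2}), and it essentially reconstructs what the cited results of \cite{chen2021} prove — note the paper's own Proposition~4(a) in the appendix records $\|\widehat{\bV}_{\bR_i}^{-1}\|_2=\mathcal{O}_p(1)$ by the same citation. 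The only point to be explicit about, which you flag but do not carry out, is the $\|\bDelta_i\|_2=\mathbf{o}_p(1)$ control of the cross and idiosyncratic terms; this is routine under Assumptions~\ref{assum4}--\ref{assum5} as $m_i,p,T\to\infty$ but is where all the actual work hidden in the citation lives.
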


\begin{proposition}\label{prop5}
    Under Assumptions 1-6(a) with known $k$ and $r$, for any $i\in[s_1]$ and $j\in[s_2]$, we have
    \begin{enumerate}
    \item[(a)] \begin{equation*}
     \widehat{\mathbf{V}}_{\mathbf{R}_i}\overset{\mathcal{P}}{\longrightarrow}  \mathbf{V}_{\mathbf{R}_i},\quad\text{as }m_i,p,T\to\infty,\quad\text{and}\quad
     \widehat{\mathbf{V}}_{\mathbf{C}_j}\overset{\mathcal{P}}{\longrightarrow}  \mathbf{V}_{\mathbf{C}_j},\quad\text{as }n_j,q,T\to\infty,
    \end{equation*}
    with $\|\widehat{\mathbf{V}}_{\mathbf{R}_i}\|_2=\mathcal{O}_p(1)$, $\|\widehat{\mathbf{V}}_{\mathbf{R}_i}^{-1}\|_2=\mathcal{O}_p(1)$, $\|\widehat{\mathbf{V}}_{\mathbf{C}_j}\|_2=\mathcal{O}_p(1)$, and $\|\widehat{\mathbf{V}}_{\mathbf{C}_j}^{-1}\|_2=\mathcal{O}_p(1)$.
    \item[(b)] 
    $$\frac{\widehat{\mathbf{R}}_i^\top\mathbf{R}}{p}\overset{\mathcal{P}}{\longrightarrow}\mathbf{Q}_{\mathbf{R}_i},\quad\text{as }m_i,p,T\to\infty,\quad\text{and}\quad\frac{\widehat{\mathbf{C}}_j^\top\mathbf{C}}{q}\overset{\mathcal{P}}{\longrightarrow}\mathbf{Q}_{\mathbf{C}_j},\quad\text{as }n_j,q,T\to\infty.$$
    \item[(c)] 
    $$ \mathbf{H}_{\mathbf{R}_i}\overset{\mathcal{P}}{\longrightarrow}\mathbf{Q}_{\mathbf{R}_i}^{-1},\quad\text{as }m_i,p,T\to\infty,\quad\text{and}\quad\mathbf{H}_{\mathbf{C}_j}\overset{\mathcal{P}}{\longrightarrow}\mathbf{Q}_{\mathbf{C}_j}^{-1},\quad\text{as }n_j,q,T\to\infty.$$
    \item[(d)] 
    
    When ${\sqrt{m_iT}}/{p}=\mathbf{o}(1)$, for any $l_1\in[p]$,
    \begin{align*}
        \sqrt{m_iT}\left(\widehat{\bR}_i^{l_1}-\bR^{l_1}\bH_{\bR_i}\right)^\top&= \widehat{\mathbf{V}}_{\mathbf{R}_i}^{-1}\frac{\widehat{\bR}_i^\top\bR}{p}\frac{1}{\sqrt{m_iT}}\sum_{t=1}^T\widetilde{\bF}_t\bC_i^\top(\widetilde{\bE}_{it}^{l_1})^\top+\mathbf{o}_p(1) \\
        &\overset{\mathcal{D}}{\longrightarrow}\mathcal{N}\left(\mathbf{0},\boldsymbol{\Sigma}_{\bR_{i,{l_1}}}\right),\quad\text{as }m_i,p,T\to\infty.
    \end{align*}
    When ${\sqrt{n_jT}}/{q}=\mathbf{o}(1)$, for any $l_2\in[q]$,
    \begin{align*}
        \sqrt{n_jT}\left(\widehat{\bC}_j^{l_2}-\bC^{l_2}\bH_{\bC_j}\right)^\top&= \widehat{\mathbf{V}}_{\mathbf{C}_j}^{-1}\frac{\widehat{\bC}_j^\top\bC}{q}\frac{1}{\sqrt{n_jT}}\sum_{t=1}^T\widetilde{\bF}_t^\top\bR_j^\top[(\widetilde{\boldsymbol{\epsilon}}_{jt}^\top)^{l_2}]^\top+\mathbf{o}_p(1) \\
        &\overset{\mathcal{D}}{\longrightarrow}\mathcal{N}\left(\mathbf{0},\boldsymbol{\Sigma}_{\bC_{j,{l_2}}}\right),\quad\text{as }n_j,q,T\to\infty.
    \end{align*}
    The covariance matrices
$$\boldsymbol{\Sigma}_{\bR_{i,{l_1}}}=\bV_{\bR_{i}}^{-1}\bQ_{\bR_{i}}\left(\mathbf{\Phi}_{R_i,{l_1}}^{1,1}+\alpha\mathbf{\Phi}_{R_i,{l_1}}^{1,2}\overline{\bF}^\top+\alpha\overline{\bF}\mathbf{\Phi}_{R_i,{l_1}}^{2,1}+\alpha^2\overline{\bF}\mathbf{\Phi}_{R_i,{l_1}}^{2,2}\overline{\bF}^\top\right)\bQ_{\bR_{i}}^\top\bV_{\bR_{i}}^{-1},$$
    and
    $$\boldsymbol{\Sigma}_{\bC_{j,{l_2}}}=\bV_{\bC_{j}}^{-1}\bQ_{\bC_{j}}\left(\mathbf{\Phi}_{C_j,{l_2}}^{1,1}+\alpha\mathbf{\Phi}_{C_j,{l_2}}^{1,2}\overline{\bF}+\alpha\overline{\bF}^\top\mathbf{\Phi}_{C_j,{l_2}}^{2,1}+\alpha^2\overline{\bF}^\top\mathbf{\Phi}_{C_j,{l_2}}^{2,2}\overline{\bF}\right)\bQ_{\bC_{j}}^\top\bV_{\bC_{j}}^{-1},$$
    where
    $$\mathbf{\Phi}_{R_i,{l_1}}^{1,1}=\underset{q, T \rightarrow \infty}{\operatorname{plim}} \frac{1}{m_i T} \sum_{t=1}^{T} \sum_{s=1}^{T} \mathbb{E}\left[\mathbf{F}_{t} \mathbf{C}_i^\top ({\bE}_{it}^{l_1})^\top {\bE}_{is}^{l_1} \mathbf{C}_i \mathbf{F}_{s}^\top\right],$$
    $$\mathbf{\Phi}_{R_i,{l_1}}^{1,2}=(\mathbf{\Phi}_{R_i,{l_1}}^{2,1})^\top=\underset{q, T \rightarrow \infty}{\operatorname{plim}} \frac{1}{m_i T} \sum_{t=1}^{T} \sum_{s=1}^{T} \mathbb{E}\left[\mathbf{F}_{t} \mathbf{C}_i^\top ({\bE}_{it}^{l_1})^\top {\bE}_{is}^{l_1} \mathbf{C}_i\right], $$
    and
    $$\mathbf{\Phi}_{R_i,{l_1}}^{2,2}=\underset{q, T \rightarrow \infty}{\operatorname{plim}} \frac{1}{m_i T} \sum_{t=1}^{T} \sum_{s=1}^{T} \mathbb{E}\left[\mathbf{C}_i^\top ({\bE}_{it}^{l_1})^\top {\bE}_{is}^{l_1} \mathbf{C}_i\right],$$
    while $\mathbf{\Phi}_{C_j,{l_2}}^{1,1}$, $\mathbf{\Phi}_{C_j,{l_2}}^{1,2}$, $\mathbf{\Phi}_{C_j,{l_2}}^{2,1}$, and $\mathbf{\Phi}_{C_j,{l_2}}^{2,2}$ are defined similarly.
    \end{enumerate}
    Here the matrices $\mathbf{Q}_{\mathbf{R}_i}=\mathbf{V}_{\mathbf{R}_i}^{1/2}\Psi_{\mathbf{R}_i}^\top\widetilde{\boldsymbol{\Sigma}}_{\mathbf{FC}_i}^{-1/2}$ and $\mathbf{Q}_{\mathbf{C}_j}=\mathbf{V}_{\mathbf{C}_j}^{1/2}\Psi_{\mathbf{C}_j}^\top\widetilde{\boldsymbol{\Sigma}}_{\mathbf{FR}_j}^{-1/2}$, where $\mathbf{V}_{\mathbf{R}_i}$ and $\mathbf{V}_{\mathbf{C}_j}$ are diagonal matrices of the eigenvalues (ordered decreasingly) of $\widetilde{\boldsymbol{\Sigma}}_{\mathbf{FC}_i}^{1/2}\boldsymbol{\Omega}_{\mathbf{R}}\widetilde{\boldsymbol{\Sigma}}_{\mathbf{FC}_i}^{1/2}$ and $\widetilde{\boldsymbol{\Sigma}}_{\mathbf{FR}_j}^{1/2}\boldsymbol{\Omega}_{\mathbf{C}}\widetilde{\boldsymbol{\Sigma}}_{\mathbf{FR}_j}^{1/2}$, respectively, with the corresponding eigenvector matrices $\Psi_{\mathbf{R}_i}$ and $\Psi_{\mathbf{C}_j}$ satisfying $\Psi_{\mathbf{R}_i}^\top\Psi_{\mathbf{R}_i}=\mathbf{I}$ and $\Psi_{\mathbf{C}_j}^\top\Psi_{\mathbf{C}_j}=\mathbf{I}$, along with $\widetilde{\boldsymbol{\Sigma}}_{\mathbf{FC}_i}=\frac{1}{m_i}\mathbb{E}[\widetilde{\mathbf{F}}_t{\mathbf{C}_i^\top\mathbf{C}_i}\widetilde{\mathbf{F}}_t^\top]$ and $\widetilde{\boldsymbol{\Sigma}}_{\mathbf{FR}_j}=\frac{1}{n_j}\mathbb{E}[\widetilde{\mathbf{F}}_t^\top{\mathbf{R}_j^\top\mathbf{R}_j}\widetilde{\mathbf{F}}_t]$.
\end{proposition}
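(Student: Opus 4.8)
The plan is to observe that Proposition~\ref{prop5} concerns only the single-node estimators, and that each sub-model~(\ref{submodel1}), $\mathbf{Y}_{it}=\mathbf{R}\mathbf{F}_t\mathbf{C}_i^\top+\mathbf{E}_{it}$, is itself a complete matrix factor model in which the effective column dimension is $m_i$, the column loading is $\mathbf{C}_i$, and $\widehat{\mathbf{R}}_i$ is exactly the $\alpha$-PCA row-loading estimator formed from $\widehat{\mathbf{M}}_{\mathbf{R}_i}=\frac{1}{pm_iT}\sum_t\widetilde{\mathbf{Y}}_{it}\widetilde{\mathbf{Y}}_{it}^\top$. Consequently the entire statement reduces to the $\alpha$-PCA theory of \cite{chen2021} (their Propositions~1--3 and Theorem~2) read at the level of the $i$-th block. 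The first step is therefore to verify that Assumptions~\ref{assum1}--\ref{assum6}(a), restricted to block $i$, entail the hypotheses of those results. The only non-routine checks are that Assumption~\ref{assum3}(b) supplies the block normalization $\frac{1}{m_i}\mathbf{C}_i^\top\mathbf{C}_i\to\boldsymbol{\Omega}_{\bC_i}$, playing the role that $q^{-1}\mathbf{C}^\top\mathbf{C}\to\boldsymbol{\Omega}_\bC$ plays in the full model, and that Assumption~\ref{assum6}(a) furnishes the distinct-eigenvalue (pervasiveness/separation) condition on $\boldsymbol{\Omega}_\bR\widetilde{\boldsymbol{\Sigma}}_{\mathbf{FC}_i}$ that is needed for eigenvector identification. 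The $\mathbf{C}_j$ statements follow by the identical argument applied to the transposed sub-model~(\ref{submodel2}), so I would only remark on the symmetry at the end.

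Parts (a)--(c) are then standard eigen-perturbation arguments. First I would show that $\widehat{\mathbf{M}}_{\mathbf{R}_i}$ converges in operator norm to a population matrix whose $k$ nonzero eigenvalues coincide with those of $\widetilde{\boldsymbol{\Sigma}}_{\mathbf{FC}_i}^{1/2}\boldsymbol{\Omega}_\bR\widetilde{\boldsymbol{\Sigma}}_{\mathbf{FC}_i}^{1/2}$, that is, with the diagonal entries of $\mathbf{V}_{\mathbf{R}_i}$; Weyl's inequality then gives $\widehat{\mathbf{V}}_{\mathbf{R}_i}\overset{\mathcal{P}}{\longrightarrow}\mathbf{V}_{\mathbf{R}_i}$, and the positive-definiteness and boundedness of $\mathbf{V}_{\mathbf{R}_i}$ guaranteed by Assumption~\ref{assum6}(a) yield $\|\widehat{\mathbf{V}}_{\mathbf{R}_i}\|_2=\mathcal{O}_p(1)$ and $\|\widehat{\mathbf{V}}_{\mathbf{R}_i}^{-1}\|_2=\mathcal{O}_p(1)$. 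A Davis--Kahan argument under the same separation condition aligns the sample eigenvectors with the population ones and, after tracking the $\sqrt{p}$ normalization, identifies the limit of $p^{-1}\widehat{\mathbf{R}}_i^\top\mathbf{R}$ as $\mathbf{Q}_{\mathbf{R}_i}=\mathbf{V}_{\mathbf{R}_i}^{1/2}\Psi_{\mathbf{R}_i}^\top\widetilde{\boldsymbol{\Sigma}}_{\mathbf{FC}_i}^{-1/2}$, giving (b). Part (c) combines (a), (b), and the law of large numbers for $\frac{1}{m_iT}\sum_t\widetilde{\mathbf{F}}_t\mathbf{C}_i^\top\mathbf{C}_i\widetilde{\mathbf{F}}_t^\top\to\widetilde{\boldsymbol{\Sigma}}_{\mathbf{FC}_i}$ in the definition of $\bH_{\bR_i}$ to read off $\bH_{\bR_i}\overset{\mathcal{P}}{\longrightarrow}\mathbf{Q}_{\mathbf{R}_i}^{-1}$.

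Part (d) carries the substantive content and is where I expect the main difficulty. Starting from the eigen-equation $\widehat{\mathbf{M}}_{\mathbf{R}_i}\widehat{\mathbf{R}}_i=\widehat{\mathbf{R}}_i\widehat{\mathbf{V}}_{\mathbf{R}_i}$ and inserting $\widetilde{\mathbf{Y}}_{it}=\mathbf{R}\widetilde{\mathbf{F}}_t\mathbf{C}_i^\top+\widetilde{\mathbf{E}}_{it}$, I would expand the $l_1$-th row of $\widehat{\mathbf{R}}_i-\mathbf{R}\bH_{\bR_i}$ into a signal--noise cross term, a pure-noise term, and higher-order remainders, and show that the cross term produces the stated leading expression $\widehat{\mathbf{V}}_{\mathbf{R}_i}^{-1}\frac{\widehat{\mathbf{R}}_i^\top\mathbf{R}}{p}\frac{1}{\sqrt{m_iT}}\sum_t\widetilde{\mathbf{F}}_t\mathbf{C}_i^\top(\widetilde{\mathbf{E}}_{it}^{l_1})^\top$ while the remaining terms are $\mathbf{o}_p(1)$; this negligibility is exactly where the scaling $\sqrt{m_iT}/p=\mathbf{o}(1)$ and the weak-dependence and fourth-moment controls of Assumptions~\ref{assum4}--\ref{assum5} are used. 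The hard part is the central limit theorem for the leading sum: I would decompose $\widetilde{\mathbf{F}}_t=\mathbf{F}_t+\tilde\alpha\overline{\mathbf{F}}$ and $\widetilde{\mathbf{E}}_{it}=\mathbf{E}_{it}+\tilde\alpha\overline{\mathbf{E}}_i$, collect the four resulting pieces, and use the identity $2\tilde\alpha+\tilde\alpha^2=(1+\tilde\alpha)^2-1=\alpha$ to reduce the sum to $\frac{1}{\sqrt{m_iT}}\sum_t\big[\mathbf{F}_t\mathbf{C}_i^\top(\mathbf{E}_{it}^{l_1})^\top+\alpha\overline{\mathbf{F}}\mathbf{C}_i^\top(\mathbf{E}_{it}^{l_1})^\top\big]$ up to $\mathbf{o}_p(1)$. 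Applying an $\alpha$-mixing central limit theorem (licensed by Assumption~\ref{assum1}) after checking a Lindeberg/moment condition from Assumption~\ref{assum5}, the covariance of this sum splits into the four blocks $\mathbf{\Phi}_{R_i,l_1}^{1,1}$, $\alpha\mathbf{\Phi}_{R_i,l_1}^{1,2}\overline{\mathbf{F}}^\top$, $\alpha\overline{\mathbf{F}}\mathbf{\Phi}_{R_i,l_1}^{2,1}$, and $\alpha^2\overline{\mathbf{F}}\mathbf{\Phi}_{R_i,l_1}^{2,2}\overline{\mathbf{F}}^\top$; combining these with the prefactor limits $\widehat{\mathbf{V}}_{\mathbf{R}_i}^{-1}\overset{\mathcal{P}}{\longrightarrow}\mathbf{V}_{\mathbf{R}_i}^{-1}$ and $p^{-1}\widehat{\mathbf{R}}_i^\top\mathbf{R}\overset{\mathcal{P}}{\longrightarrow}\mathbf{Q}_{\mathbf{R}_i}$ from (a)--(b) and invoking Slutsky yields the asserted sandwich covariance $\boldsymbol{\Sigma}_{\bR_{i,l_1}}$. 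The statements for $\widehat{\mathbf{C}}_j$ are symmetric.
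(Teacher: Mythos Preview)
Your proposal is correct and matches the paper's approach exactly: the paper states that Proposition~\ref{prop5} follows directly by applying Propositions~1--3 and Theorem~2 of \cite{chen2021} to each block, which is precisely your reduction of the sub-model~(\ref{submodel1}) to a full $\alpha$-PCA model with effective column dimension $m_i$ and column loading $\mathbf{C}_i$. Your additional sketch of the internal mechanics (Weyl for (a), Davis--Kahan for (b), the $\tilde\alpha$ reduction and $\alpha$-mixing CLT for (d)) is a faithful unpacking of what is inside those cited results, but the paper itself provides no further detail beyond the citation.
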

We next derive asymptotic properties of the global estimators and the associated auxiliary matrices.
\begin{proposition}\label{prop6}
	Under Assumptions 1-6(a) with known $k$ and $r$, as $\underline{m},\underline{n},T\to \infty$, we have
 \begin{align*}
     \widehat{\mathbf{V}}_\mathbf{R}\overset{\mathcal{P}}{\longrightarrow}  \mathbf{V_R},\quad\text{and}\quad\widehat{\mathbf{V}}_\mathbf{C}\overset{\mathcal{P}}{\longrightarrow} \mathbf{V_C},
 \end{align*}
 with $\|\widehat{\mathbf{V}}_{\mathbf{R}}\|_2=\mathcal{O}_p(1)$, $\|\widehat{\mathbf{V}}_{\mathbf{R}}^{-1}\|_2=\mathcal{O}_p(1)$, $\|\widehat{\mathbf{V}}_{\mathbf{C}}\|_2=\mathcal{O}_p(1)$, and $\|\widehat{\mathbf{V}}_{\mathbf{C}}^{-1}\|_2=\mathcal{O}_p(1)$. Here the matrices $\mathbf{V_R}$ and $\mathbf{V_C}$ are diagonal matrices of eigenvalues (ordered decreasingly) of $\boldsymbol{\Sigma}_\mathbf{H_R}^{1/2}\boldsymbol{\Omega}_\mathbf{R}\boldsymbol{\Sigma}_\mathbf{H_R}^{1/2}$ and  $\boldsymbol{\Sigma}_\mathbf{H_C}^{1/2}\boldsymbol{\Omega}_\mathbf{C}\boldsymbol{\Sigma}_\mathbf{H_C}^{1/2}$, respectively, with $\boldsymbol{\Sigma}_\mathbf{H_R}=\frac{1}{s_1}\sum_{i=1}^{s_1}\mathbf{Q}_{\mathbf{R}_i}^{-1}(\mathbf{Q}_{\mathbf{R}_i}^{-1})^\top$ and $ \boldsymbol{\Sigma}_\mathbf{H_C}=\frac{1}{s_2}\sum_{j=1}^{s_2}\mathbf{Q}_{\mathbf{C}_j}^{-1}(\mathbf{Q}_{\mathbf{C}_j}^{-1})^\top$.
\end{proposition}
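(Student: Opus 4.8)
The plan is to reduce the spectral analysis of the $p\times p$ random matrix $\wh{\bM}_{\bR}=\frac{1}{ps_1}\sum_{i=1}^{s_1}\wh{\bR}_i\wh{\bR}_i^\top$ to that of a fixed $k\times k$ matrix and then invoke Weyl's inequality. First I would decompose each local estimator as $\wh{\bR}_i=\bR\bH_{\bR_i}+\bDelta_i$, where Proposition~\ref{prop1} gives $\frac{1}{p}\|\bDelta_i\|_\mathsf{F}^2=\mathcal{O}_p(\frac{1}{m_iT}+\frac{1}{p^2})=\mathbf{o}_p(1)$ uniformly over the fixed index set $i\in[s_1]$. Expanding,
$$\frac{1}{p}\wh{\bR}_i\wh{\bR}_i^\top=\frac{1}{p}\bR\bH_{\bR_i}\bH_{\bR_i}^\top\bR^\top+\frac{1}{p}\Big(\bR\bH_{\bR_i}\bDelta_i^\top+\bDelta_i\bH_{\bR_i}^\top\bR^\top+\bDelta_i\bDelta_i^\top\Big).$$
Using $\|\tfrac{1}{p}\bR^\top\bR\|_2=\mathcal{O}(1)$ (the Proposition following Assumption~\ref{assum3}), hence $\|\bR\|_2=\mathcal{O}(\sqrt p)$, together with $\|\bH_{\bR_i}\|_2=\mathcal{O}_p(1)$ from Proposition~\ref{prop2}, the cross terms are $\mathcal{O}_p(\frac{1}{p}\|\bR\|_2\|\bH_{\bR_i}\|_2\|\bDelta_i\|_\mathsf{F})=\mathcal{O}_p(\frac{1}{\sqrt{m_iT}}+\frac{1}{p})=\mathbf{o}_p(1)$ in operator norm, and the last term is $\mathcal{O}_p(\frac{1}{p}\|\bDelta_i\|_\mathsf{F}^2)=\mathbf{o}_p(1)$. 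Averaging over the $s_1$ nodes yields
$$\wh{\bM}_{\bR}=\frac{1}{p}\bR\,\bA_{\bR}\,\bR^\top+\mathbf{o}_p(1)\ \text{in operator norm},\qquad \bA_{\bR}:=\frac{1}{s_1}\sum_{i=1}^{s_1}\bH_{\bR_i}\bH_{\bR_i}^\top.$$

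Next I would pass to the population $k\times k$ object. By Proposition~\ref{prop5}(c), $\bH_{\bR_i}\overset{\mathcal{P}}{\longrightarrow}\bQ_{\bR_i}^{-1}$, so $\bA_{\bR}\overset{\mathcal{P}}{\longrightarrow}\frac{1}{s_1}\sum_{i=1}^{s_1}\bQ_{\bR_i}^{-1}(\bQ_{\bR_i}^{-1})^\top=\boldsymbol{\Sigma}_{\mathbf{H_R}}\succ0$. In particular $\bA_{\bR}\succ0$ with probability tending to one, so $\bA_{\bR}^{1/2}$ is well defined and $\bA_{\bR}^{1/2}\overset{\mathcal{P}}{\longrightarrow}\boldsymbol{\Sigma}_{\mathbf{H_R}}^{1/2}$ by continuity of the matrix square root on positive definite matrices. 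The nonzero eigenvalues of $\frac{1}{p}\bR\bA_{\bR}\bR^\top$ coincide with the eigenvalues of the $k\times k$ matrix $\bA_{\bR}^{1/2}\big(\frac{1}{p}\bR^\top\bR\big)\bA_{\bR}^{1/2}$, and since $\frac{1}{p}\bR^\top\bR\to\boldsymbol{\Omega}_{\bR}$ the latter converges in probability to $\boldsymbol{\Sigma}_{\mathbf{H_R}}^{1/2}\boldsymbol{\Omega}_{\bR}\boldsymbol{\Sigma}_{\mathbf{H_R}}^{1/2}$, whose decreasingly ordered eigenvalues are by definition the diagonal of $\mathbf{V_R}$. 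Since $\frac{1}{p}\bR\bA_{\bR}\bR^\top$ has rank at most $k$, its remaining $p-k$ eigenvalues are exactly zero, so in fact the eigenvalues of $\wh{\bM}_{\bR}$ ranked $k+1,\dots,p$ are all $\mathbf{o}_p(1)$.

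Finally, Weyl's inequality applied to $\wh{\bM}_{\bR}=\frac{1}{p}\bR\bA_{\bR}\bR^\top+\mathbf{o}_p(1)$ shows that each of its top $k$ eigenvalues differs from the corresponding eigenvalue of $\frac{1}{p}\bR\bA_{\bR}\bR^\top$ by $\mathbf{o}_p(1)$; combined with the previous step this gives $\wh{\bV}_{\bR}\overset{\mathcal{P}}{\longrightarrow}\mathbf{V_R}$. Because $\boldsymbol{\Sigma}_{\mathbf{H_R}}\succ0$ and $\boldsymbol{\Omega}_{\bR}\succ0$, the diagonal matrix $\mathbf{V_R}$ is positive definite with a fixed, finite condition number, so $\|\mathbf{V_R}\|_2$ and $\|\mathbf{V_R}^{-1}\|_2$ are constants and the continuous mapping theorem yields $\|\wh{\bV}_{\bR}\|_2=\mathcal{O}_p(1)$ and $\|\wh{\bV}_{\bR}^{-1}\|_2=\mathcal{O}_p(1)$. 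The statements for $\wh{\bV}_{\bC}$ follow by the identical argument, replacing $(\bR,\bH_{\bR_i},m_i,p,s_1,\boldsymbol{\Omega}_{\bR})$ by $(\bC,\bH_{\bC_j},n_j,q,s_2,\boldsymbol{\Omega}_{\bC})$ and using the column-wise halves of Propositions~\ref{prop1},~\ref{prop2}, and~\ref{prop5}. The main obstacle is the operator-norm bookkeeping in the first step: the only available control on $\bDelta_i$ is the Frobenius-norm rate of Proposition~\ref{prop1}, which must be paired with $\|\bR\|_2=\mathcal{O}(\sqrt p)$ to kill the cross terms, and one must also note that the eigenvalues of $\wh{\bM}_{\bR}$ beyond the $k$-th are asymptotically negligible precisely because the leading term $\frac{1}{p}\bR\bA_{\bR}\bR^\top$ is exactly rank $k$.
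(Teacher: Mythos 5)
Your proposal is correct and follows essentially the same route as the paper's proof: decompose $\wh{\bR}_i=\bR\bH_{\bR_i}+(\wh{\bR}_i-\bR\bH_{\bR_i})$, control the cross and quadratic error terms in operator norm via Proposition~\ref{prop1}, Proposition~\ref{prop2}, and $\|\bR\|_2=\mathcal{O}(\sqrt p)$, pass to the limit $\bH_{\bR_i}\bH_{\bR_i}^\top\to\bQ_{\bR_i}^{-1}(\bQ_{\bR_i}^{-1})^\top$ via Proposition~\ref{prop5}(c), reduce the nonzero spectrum of $\frac1p\bR\boldsymbol{\Sigma}_{\mathbf{H_R}}\bR^\top$ to that of $\boldsymbol{\Sigma}_{\mathbf{H_R}}^{1/2}\boldsymbol{\Omega}_{\bR}\boldsymbol{\Sigma}_{\mathbf{H_R}}^{1/2}$, and finish with Weyl's eigenvalue perturbation inequality. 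The only cosmetic difference is the order in which you swap $\bA_{\bR}$ for its limit versus reducing to the $k\times k$ matrix, which does not change the argument.
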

\begin{proof}
    We only prove the conclusion for $\hat{\bV}_\bR$, since the proof of results for $\hat{\bV}_\bC$ is similar. According to Equation (2.6) and $\frac{1}{p}\wh{\bR}^\top\wh{\bR}=\bI$, we have
    \begin{equation}
        \widehat{\mathbf{V}}_\mathbf{R}=\frac{1}{p}\widehat{\mathbf{R}}^\top\widehat{\mathbf{M}}_\mathbf{R}\widehat{\mathbf{R}}=\frac{1}{p}\widehat{\mathbf{R}}^\top\left( \frac{1}{ps_1}\sum_{i=1}^{s_1}\widehat{\mathbf{R}}_i\widehat{\mathbf{R}}_i^\top\right)\widehat{\mathbf{R}}. \label{equation1_prop6}
    \end{equation}
    Writing $\wh{\bR}_i$ as $\wh{\bR}_i=\wh{\bR}_i-\bR\bH_{\bR_i}+\bR\bH_{\bR_i}$, we obtain
    \begin{align*}
    \widehat{\mathbf{M}}_\mathbf{R}&=\frac{1}{ps_1}\sum_{i=1}^{s_1}(\widehat{\mathbf{R}}_i-\mathbf{R}\mathbf{H}_{\mathbf{R}_i})(\widehat{\mathbf{R}}_i-\mathbf{R}\mathbf{H}_{\mathbf{R}_i})^\top+\frac{1}{ps_1}\sum_{i=1}^{s_1}\mathbf{R}\mathbf{H}_{\mathbf{R}_i}(\widehat{\mathbf{R}}_i-\mathbf{R}\mathbf{H}_{\mathbf{R}_i})^\top\\
    &\ \ \ +\frac{1}{ps_1}\sum_{i=1}^{s_1}(\widehat{\mathbf{R}}_i-\mathbf{R}\mathbf{H}_{\mathbf{R}_i})\mathbf{H}_{\mathbf{R}_i}^\top\mathbf{R}^\top+\frac{1}{ps_1}\sum_{i=1}^{s_1}\mathbf{R}\mathbf{H}_{\mathbf{R}_i}\mathbf{H}_{\mathbf{R}_i}^\top\mathbf{R}^\top.
    \end{align*}
    By Propositions~\ref{prop1}-\ref{prop2} and Assumption 3, we have
    $$\left \| \frac{1}{ps_1}\sum_{i=1}^{s_1}(\widehat{\mathbf{R}}_i-\mathbf{R}\mathbf{H}_{\mathbf{R}_i})(\widehat{\mathbf{R}}_i-\mathbf{R}\mathbf{H}_{\mathbf{R}_i})^\top\right \|_2 =\mathcal{O}_p\left(\frac{1}{\underline{m}T}+\frac{1}{p^2} \right),$$
    $$\left \| \frac{1}{ps_1}\sum_{i=1}^{s_1}\mathbf{R}\mathbf{H}_{\mathbf{R}_i}(\widehat{\mathbf{R}}_i-\mathbf{R}\mathbf{H}_{\mathbf{R}_i})^\top\right \|_2 =\mathcal{O}_p\left(\left(\frac{1}{\underline{m}T}+\frac{1}{p^2} \right)^{1/2}\right),$$
    and
    $$ \left \| \frac{1}{ps_1}\sum_{i=1}^{s_1}(\widehat{\mathbf{R}}_i-\mathbf{R}\mathbf{H}_{\mathbf{R}_i})\mathbf{H}_{\mathbf{R}_i}^\top\mathbf{R}^\top\right \|_2 =\mathcal{O}_p\left(\left(\frac{1}{\underline{m}T}+\frac{1}{p^2} \right)^{1/2}\right).$$
    In other words,
    $$\left \| \widehat{\mathbf{M}}_\mathbf{R}-\frac{1}{ps_1}\sum_{i=1}^{s_1}\mathbf{R}\mathbf{H}_{\mathbf{R}_i}\mathbf{H}_{\mathbf{R}_i}^\top\mathbf{R}^\top\right \|_2=\mathcal{O}_p\left(\left(\frac{1}{\underline{m}T}+\frac{1}{p^2} \right)^{1/2}\right).$$
    On the other hand, we have
    \begin{align*}
        &\left \| \frac{1}{ps_1}\sum_{i=1}^{s_1}\mathbf{R}\mathbf{H}_{\mathbf{R}_i}\mathbf{H}_{\mathbf{R}_i}^\top\mathbf{R}^\top-\frac{1}{ps_1}\sum_{i=1}^{s_1}\mathbf{R}\mathbf{Q}_{\mathbf{R}_i}^{-1}(\mathbf{Q}_{\mathbf{R}_i}^{-1})^\top\mathbf{R}^\top\right \|_2\\
        &\le \frac{1}{s_1}\sum_{i=1}^{s_1}\left \| \mathbf{H}_{\mathbf{R}_i}\mathbf{H}_{\mathbf{R}_i}^\top-\mathbf{Q}_{\mathbf{R}_i}^{-1}(\mathbf{Q}_{\mathbf{R}_i}^{-1})^\top\right \|_2\frac{1}{p}\left \|\mathbf{R}\right \|_2^2\\ &= \mathbf{o}_p\left(1\right).
    \end{align*}
    The conclusion follows from Assumption 3 and Proposition~\ref{prop5}(c), where $\mathbf{H}_{\mathbf{R}_i}\overset{\mathcal{P}}{\longrightarrow}\mathbf{Q}_{\mathbf{R}_i}^{-1}$, and thus $\mathbf{H}_{\mathbf{R}_i}\mathbf{H}_{\mathbf{R}_i}^\top\overset{\mathcal{P}}{\longrightarrow}\mathbf{Q}_{\mathbf{R}_i}^{-1}(\mathbf{Q}_{\mathbf{R}_i}^{-1})^\top$, as $m_i,p,T\to \infty$. Together, we have
    $$\left \| \widehat{\mathbf{M}}_\mathbf{R}-\frac{1}{ps_1}\sum_{i=1}^{s_1}\mathbf{R}\mathbf{Q}_{\mathbf{R}_i}^{-1}(\mathbf{Q}_{\mathbf{R}_i}^{-1})^\top\mathbf{R}^\top\right \|_2=\left \| \widehat{\mathbf{M}}_\mathbf{R}-\frac{1}{p}\mathbf{R}\boldsymbol{\Sigma}_\mathbf{H_R}\mathbf{R}^\top\right \|_2=\mathbf{o}_p\left(1\right).$$
    By the inequality that for the $j$-th largest eigenvalue of any square matrices $\bA$ and $\wh{\bA}$, $|\lambda_j(\widehat{\mathbf{A}})-\lambda_j(\mathbf{A})|\le \| \widehat{\mathbf{A}}-\mathbf{A}\|_2$, as $\underline{m},p,T\to \infty$, we have
    $$\widehat{\mathbf{V}}_\mathbf{R}\overset{\mathcal{P}}{\longrightarrow}  \mathbf{V_R},$$
    where the eigenvalues of $\frac{1}{p}\mathbf{R}\boldsymbol{\Sigma}_\mathbf{H_R}\mathbf{R}^\top$ and $\frac{1}{p}\mathbf{R}^\top\mathbf{R}\boldsymbol{\Sigma}_\mathbf{H_R}$ are the same, $\frac{1}{p}\mathbf{R}^\top\mathbf{R}\to\boldsymbol{\Omega}_\mathbf{R}$ by Assumption 3, and the eigenvalues of $\boldsymbol{\Omega}_\mathbf{R}\boldsymbol{\Sigma}_\mathbf{H_R}$ and $\boldsymbol{\Sigma}_\mathbf{H_R}^{1/2}\boldsymbol{\Omega}_\mathbf{R}\boldsymbol{\Sigma}_\mathbf{H_R}^{1/2}$ are also the same since $\boldsymbol{\Omega}_\mathbf{R}$ and $\boldsymbol{\Sigma}_\mathbf{H_R}$ are both positive definite. Furthermore, the top $k$ eigenvalues of $\widehat{\mathbf{M}}_\mathbf{R}$ are bounded away from zero and infinity, and therefore we have $\|\widehat{\mathbf{V}}_{\mathbf{R}}\|_2=\mathcal{O}_p(1)$ and $\|\widehat{\mathbf{V}}_{\mathbf{R}}^{-1}\|_2=\mathcal{O}_p(1)$.
\end{proof}

\begin{lemma}\label{lem2}
	Under Assumptions 1-6(a) with known $k$ and $r$, we have $\|\bH_{\bR}\|_2=\mathcal{O}_p(1)$ and $\|\bH_{\bC}\|_2=\mathcal{O}_p(1)$. 
\end{lemma}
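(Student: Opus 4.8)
The plan is to bound $\|\bH_\bR\|_2$ directly from its definition by the operator norms of its factors, using the results already established for the local auxiliary matrices and for $\widehat{\bV}_\bR$. Recall
$$\bH_{\bR}=\frac{1}{ps_1}\sum_{i=1}^{s_1} \bH_{\bR_i} \bH_{\bR_i}^\top\bR^\top\wh{\bR}\widehat{\bV}_{\bR}^{-1}.$$
Applying the triangle inequality over the (fixed-length) sum and submultiplicativity of the operator norm gives
$$\|\bH_\bR\|_2\le \frac{1}{ps_1}\sum_{i=1}^{s_1}\|\bH_{\bR_i}\|_2^2\,\|\bR\|_2\,\|\wh{\bR}\|_2\,\|\widehat{\bV}_\bR^{-1}\|_2 .$$
Now I would control each factor. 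First, $\|\bH_{\bR_i}\|_2=\mathcal{O}_p(1)$ uniformly in $i\in[s_1]$ by Proposition~\ref{prop2}, and since $s_1$ is fixed the average of the $s_1$ terms $\|\bH_{\bR_i}\|_2^2$ is again $\mathcal{O}_p(1)$. Second, $\|\widehat{\bV}_\bR^{-1}\|_2=\mathcal{O}_p(1)$ by Proposition~\ref{prop6}. The only point requiring a little care is the scaling of $\|\bR\|_2$ and $\|\wh{\bR}\|_2$: from Assumption~\ref{assum3} and the Proposition following it, $p^{-1}\bR^\top\bR\to\boldsymbol{\Omega}_\bR$ with $\boldsymbol{\Omega}_\bR$ positive definite and bounded, hence $\|\bR\|_2=\|\bR\|_2=\mathcal{O}(\sqrt p)$; and from the normalization $p^{-1}\wh{\bR}^\top\wh{\bR}=\mathbf{I}$ we have exactly $\|\wh{\bR}\|_2=\sqrt p$.

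Combining these bounds, the prefactor $1/p$ cancels against $\|\bR\|_2\,\|\wh{\bR}\|_2=\mathcal{O}(\sqrt p)\cdot\sqrt p=\mathcal{O}(p)$, leaving
$$\|\bH_\bR\|_2\le \frac{1}{ps_1}\cdot s_1\cdot\mathcal{O}_p(1)\cdot\mathcal{O}(p)\cdot\mathcal{O}_p(1)=\mathcal{O}_p(1),$$
which is the claimed bound. The argument for $\bH_\bC$ is identical after replacing $(\bR,\wh{\bR},\widehat{\bV}_\bR,\bH_{\bR_i},s_1,p)$ by $(\bC,\wh{\bC},\widehat{\bV}_\bC,\bH_{\bC_j},s_2,q)$ and invoking the $\bC$-counterparts of Propositions~\ref{prop2} and~\ref{prop6} together with $q^{-1}\bC^\top\bC\to\boldsymbol{\Omega}_\bC$ and $q^{-1}\wh{\bC}^\top\wh{\bC}=\mathbf{I}$, so it is omitted.

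There is essentially no substantive obstacle here: the lemma is a bookkeeping step whose only subtlety is tracking the $\sqrt p$ (resp.\ $\sqrt q$) normalization of the loading matrices, which is exactly cancelled by the $1/(ps_1)$ (resp.\ $1/(qs_2)$) normalization built into the definition of $\bH_\bR$ (resp.\ $\bH_\bC$). The result is what makes $\bH_\bR$ and $\bH_\bC$ admissible rotation matrices and is used repeatedly in the proofs of Theorems~\ref{them1}--\ref{them5}.
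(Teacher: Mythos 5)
Your proof is correct and follows the same route as the paper's: the paper likewise bounds $\|\bH_\bR\|_2$ directly from its definition, citing Assumption~\ref{assum3} for $\|\bR\|_2=\mathcal{O}(\sqrt p)$, Proposition~\ref{prop2} for $\|\bH_{\bR_i}\|_2=\mathcal{O}_p(1)$, and Proposition~\ref{prop6} for $\|\widehat{\bV}_\bR^{-1}\|_2=\mathcal{O}_p(1)$. You have simply written out the bookkeeping (triangle inequality, submultiplicativity, and the cancellation of the $1/p$ prefactor against $\|\bR\|_2\|\wh\bR\|_2$) that the paper leaves implicit.
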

\begin{proof}
	We only prove the result for $\|\bH_{\bR}\|_2$, since the proof for $\|\bH_{\bC}\|_2$ is similar.
	Recall that $\bH_\bR=\frac{1}{ps_1}\sum_{i=1}^{s_1} \bH_{\bR_i}\bH_{\bR_i}^\top\bR^\top\hat{\bR}\widehat{\bV}_{\bR}^{-1}$. The conclusion follows directly from Assumption 3, Propositions \ref{prop2} and \ref{prop6}.
\end{proof}

\begin{lemma}\label{lem3}
    Under Assumptions 1-6(a) with known $k$ and $r$, for any $i\in[p]$, as $\underline{m},p,T\to\infty$, we have
    \begin{enumerate}
    \item[(a)] $\frac{1}{ps_1}\sum_{j=1}^{s_1}(\widehat{\mathbf{R}}_j^i-\mathbf{R}^i\mathbf{H}_{\mathbf{R}_j})(\widehat{\mathbf{R}}_{j}-\mathbf{R}\mathbf{H}_{\mathbf{R}_j})^\top\widehat{\mathbf{R}}=\mathcal{O}_p(\frac{1}{\underline{m}T}+\frac{1}{p^2})$,
    \item[(b)] $\frac{1}{ps_1}\sum_{j=1}^{s_1}(\widehat{\mathbf{R}}_j^i-\mathbf{R}^i\mathbf{H}_{\mathbf{R}_j})\mathbf{H}_{\mathbf{R}_j}^\top\mathbf{R}^\top\widehat{\mathbf{R}}=\mathcal{O}_p((\frac{1}{\underline{m}T}+\frac{1}{p^2})^{1/2})$,
    \item[(c)] $\frac{1}{ps_1}\sum_{j=1}^{s_1}\mathbf{R}^i\mathbf{H}_{\mathbf{R}_j}(\widehat{\mathbf{R}}_{j}-\mathbf{R}\mathbf{H}_{\mathbf{R}_j})^\top\widehat{\mathbf{R}}=\mathcal{O}_p((\frac{1}{\underline{m}T}+\frac{1}{p^2})^{1/2})$.
    \end{enumerate}
\end{lemma}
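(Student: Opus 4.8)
The plan is to prove each of the three bounds by a direct application of the norm inequalities together with Proposition~\ref{prop1} (which controls $\|\widehat{\mathbf{R}}_j^i-\mathbf{R}^i\mathbf{H}_{\mathbf{R}_j}\|_2$ and $\frac{1}{p}\|\widehat{\mathbf{R}}_j-\mathbf{R}\mathbf{H}_{\mathbf{R}_j}\|_\mathsf{F}^2$), Proposition~\ref{prop2} (boundedness of $\|\mathbf{H}_{\mathbf{R}_j}\|_2$), Assumption~3 (so $\frac{1}{\sqrt p}\|\mathbf{R}\|_2=\mathcal{O}(1)$ and likewise for each row $\|\mathbf{R}^i\|_2=\mathcal{O}(1)$), and the normalization $\frac{1}{p}\widehat{\mathbf{R}}^\top\widehat{\mathbf{R}}=\mathbf{I}$, which gives $\frac{1}{\sqrt p}\|\widehat{\mathbf{R}}\|_2=1$. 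The key observation is that each of (a)--(c) is a sum over $j\in[s_1]$ of a product of three (or four) factors, and we simply bound the operator norm of the sum by $\frac{1}{s_1}$ times the maximum over $j$ of the product of operator norms, since $s_1$ is fixed.

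For part (a), I would write the $j$-th summand as $\frac{1}{p}(\widehat{\mathbf{R}}_j^i-\mathbf{R}^i\mathbf{H}_{\mathbf{R}_j})(\widehat{\mathbf{R}}_{j}-\mathbf{R}\mathbf{H}_{\mathbf{R}_j})^\top\widehat{\mathbf{R}}$ and bound its norm by $\|\widehat{\mathbf{R}}_j^i-\mathbf{R}^i\mathbf{H}_{\mathbf{R}_j}\|_2 \cdot \frac{1}{p}\|\widehat{\mathbf{R}}_{j}-\mathbf{R}\mathbf{H}_{\mathbf{R}_j}\|_2 \cdot \|\widehat{\mathbf{R}}\|_2$. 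Using $\|\cdot\|_2\le\|\cdot\|_\mathsf{F}$ on the middle factor and then Proposition~\ref{prop1}, the first factor is $\mathcal{O}_p((\tfrac{1}{m_jT}+\tfrac1{p^2})^{1/2})$, the middle is $\frac{1}{p}\|\widehat{\mathbf{R}}_j-\mathbf{R}\mathbf{H}_{\mathbf{R}_j}\|_\mathsf{F} = \frac{1}{\sqrt p}\big(\frac1p\|\widehat{\mathbf{R}}_j-\mathbf{R}\mathbf{H}_{\mathbf{R}_j}\|_\mathsf{F}^2\big)^{1/2}=\mathcal{O}_p(\frac1{\sqrt p}(\tfrac1{m_jT}+\tfrac1{p^2})^{1/2})$, and $\|\widehat{\mathbf{R}}\|_2=\sqrt p$; multiplying and taking the maximum over $j$ (so $m_j$ is replaced by $\underline m$) yields $\mathcal{O}_p(\tfrac{1}{\underline m T}+\tfrac1{p^2})$. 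For parts (b) and (c), the analogous three-factor bound uses $\|\mathbf{H}_{\mathbf{R}_j}\|_2 \|\mathbf{R}^\top\widehat{\mathbf{R}}\|_2/p$ in place of one of the "error" factors: in (b) the error factor is $\|\widehat{\mathbf{R}}_j^i-\mathbf{R}^i\mathbf{H}_{\mathbf{R}_j}\|_2=\mathcal{O}_p((\tfrac1{\underline m T}+\tfrac1{p^2})^{1/2})$, $\|\mathbf{H}_{\mathbf{R}_j}\|_2=\mathcal{O}_p(1)$, and $\frac1p\|\mathbf{R}^\top\widehat{\mathbf{R}}\|_2 \le \frac1p\|\mathbf{R}\|_2\|\widehat{\mathbf{R}}\|_2 = \frac1{\sqrt p}\|\mathbf{R}\|_2 = \mathcal{O}(1)$; in (c) the error factor is $\frac1p\|(\widehat{\mathbf{R}}_j-\mathbf{R}\mathbf{H}_{\mathbf{R}_j})^\top\widehat{\mathbf{R}}\|_2 \le \frac1{\sqrt p}\|\widehat{\mathbf{R}}_j-\mathbf{R}\mathbf{H}_{\mathbf{R}_j}\|_\mathsf{F} = \mathcal{O}_p(\tfrac1{\sqrt p}(\tfrac1{\underline m T}+\tfrac1{p^2})^{1/2})$, multiplied by $\|\mathbf{R}^i\|_2\|\mathbf{H}_{\mathbf{R}_j}\|_2=\mathcal{O}_p(1)$ — but note this leaves an extra $\frac1{\sqrt p}$, so one must instead bound $\frac1p\|\mathbf{R}^i\mathbf{H}_{\mathbf{R}_j}(\widehat{\mathbf{R}}_j-\mathbf{R}\mathbf{H}_{\mathbf{R}_j})^\top\widehat{\mathbf{R}}\|_2$ by first writing it as $\|\mathbf{R}^i\mathbf{H}_{\mathbf{R}_j}\|_2\cdot\frac1p\|\widehat{\mathbf{R}}_j-\mathbf{R}\mathbf{H}_{\mathbf{R}_j}\|_2\cdot\|\widehat{\mathbf{R}}\|_2 = \mathcal{O}_p(1)\cdot\mathcal{O}_p(\tfrac1{\sqrt p}(\tfrac1{\underline m T}+\tfrac1{p^2})^{1/2})\cdot\sqrt p$, which is the stated $\mathcal{O}_p((\tfrac1{\underline m T}+\tfrac1{p^2})^{1/2})$.

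The main subtlety — not really an obstacle, but the point requiring care — is bookkeeping the powers of $p$: each of these quantities is a $1\times k$ row vector (an inner index $i$ picks a single row of $\widehat{\mathbf{R}}_j$), and the $\frac{1}{ps_1}$ prefactor together with $\|\widehat{\mathbf{R}}\|_2=\sqrt p$ and $\|\mathbf{R}\|_2=\mathcal{O}(\sqrt p)$ must combine so that the spurious $\sqrt p$ factors cancel against the $1/p$. Concretely, in (a) we have $\tfrac1p \times (\text{row error}) \times (\text{full } \mathsf{F}\text{-error}) \times \sqrt p$, and the full $\mathsf{F}$-error carries a $\sqrt p$ in its natural normalization, so everything balances to give the squared rate; in (b) and (c), the row error (or the projected-error term) carries no $p$-power, the $\mathbf{R}^\top\widehat{\mathbf{R}}/p$ factor is $\mathcal{O}(1)$, and we are left with the square-root rate. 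I would also remark that since $s_1$ is a fixed constant, passing from $\sum_{j=1}^{s_1}$ to $s_1\max_j$ costs only a constant, and replacing $m_j$ by $\underline m=\min_j m_j$ in the resulting bound is valid because $\tfrac1{m_jT}\le\tfrac1{\underline m T}$ for every $j$. The proofs for $\widehat{\mathbf{C}}$-analogues of (a)--(c) are identical with $p,\underline m,\mathbf{R}$ replaced by $q,\underline n,\mathbf{C}$ and are omitted.
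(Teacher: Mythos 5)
Your proof is correct, and it takes a genuinely different route from the paper's. The paper proves each of (a)--(c) by first decomposing $\widehat{\mathbf{R}}=(\widehat{\mathbf{R}}-\mathbf{R}\mathbf{H}_{\mathbf{R}})+\mathbf{R}\mathbf{H}_{\mathbf{R}}$, splitting each expression into two pieces, and bounding them separately; this makes the lemma depend on Theorem~1 (for $\tfrac{1}{p}\|\widehat{\mathbf{R}}-\mathbf{R}\mathbf{H}_{\mathbf{R}}\|_2^2$) and on Lemma~2 (for $\|\mathbf{H}_{\mathbf{R}}\|_2=\mathcal{O}_p(1)$). You instead exploit the exact normalization $\tfrac{1}{p}\widehat{\mathbf{R}}^\top\widehat{\mathbf{R}}=\mathbf{I}_k$, so that $\|\widehat{\mathbf{R}}\|_2=\sqrt{p}$ deterministically, and then a single submultiplicativity bound per summand suffices. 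This is shorter, needs only Propositions~1--2 and Assumption~3, and removes the forward dependence on Theorem~1 entirely; your power-of-$p$ bookkeeping (the $\sqrt{p}$ from $\|\widehat{\mathbf{R}}\|_2$ cancelling against the $1/\sqrt{p}$ hiding in the normalized Frobenius error, and the fix in part (c) where the naive three-factor bound leaves a stray $1/\sqrt{p}$) is exactly right. What the paper's decomposition buys, and what your version gives up, is the explicit identification of which piece dominates each sum --- e.g.\ in (a) the cross term with $\mathbf{R}\mathbf{H}_{\mathbf{R}}$ carries the rate $\tfrac{1}{\underline{m}T}+\tfrac{1}{p^2}$ while the pure error term is of the smaller order $(\tfrac{1}{\underline{m}T}+\tfrac{1}{p^2})^{3/2}$ --- which is the structure reused later in the proofs of Proposition~7 and Theorem~4 to extract leading terms. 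For the rate statement of this lemma alone, your argument is sufficient and cleaner. One minor point to make explicit: when you pass from $\tfrac{1}{s_1}\sum_{j}$ to $\max_j$ and replace $m_j$ by $\underline{m}$, you are implicitly using that the $\mathcal{O}_p$ bounds from Proposition~1 hold uniformly over the finitely many $j\in[s_1]$; the paper makes the same implicit use when it asserts $\tfrac{1}{s_1}\sum_j\|\widehat{\mathbf{R}}_j^i-\mathbf{R}^i\mathbf{H}_{\mathbf{R}_j}\|_2^2=\mathcal{O}_p(\tfrac{1}{\underline{m}T}+\tfrac{1}{p^2})$, so this is not a defect relative to the paper, but a sentence acknowledging it would not hurt.
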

\begin{proof}
(a) First, we expand the last two terms as follows
\begin{align*}
    \mathrm{I}&=\frac{1}{ps_1}\sum_{j=1}^{s_1}(\widehat{\mathbf{R}}_j^i-\mathbf{R}^i\mathbf{H}_{\mathbf{R}_j})(\widehat{\mathbf{R}}_{j}-\mathbf{R}\mathbf{H}_{\mathbf{R}_j})^\top\widehat{\mathbf{R}}\\
    &=\frac{1}{ps_1}\sum_{j=1}^{s_1}(\widehat{\mathbf{R}}_j^i-\mathbf{R}^i\mathbf{H}_{\mathbf{R}_j})(\widehat{\mathbf{R}}_j-\mathbf{R}\mathbf{H}_{\mathbf{R}_j})^\top(\widehat{\mathbf{R}}-\mathbf{R}\mathbf{H}_{\mathbf{R}})\\
    &\ \ \ +\frac{1}{ps_1}\sum_{j=1}^{s_1}\sum_{l=1}^{p}(\widehat{\mathbf{R}}_j^i-\mathbf{R}^i\mathbf{H}_{\mathbf{R}_j})(\widehat{\mathbf{R}}_j-\mathbf{R}\mathbf{H}_{\mathbf{R}_j})^\top\mathbf{R}\mathbf{H}_{\mathbf{R}}\\
    &=\mathrm{I}_1+\mathrm{I}_2.
\end{align*}
We bound $\mathrm{I}_1$ by properties of norms and the Cauchy–Schwarz inequality as
\begin{align*}
    \left \|  \mathrm{I}_1 \right \|_2 &\le   \frac{1}{ps_1}\sum_{j=1}^{s_1}\left \|(\widehat{\mathbf{R}}_j^i-\mathbf{R}^i\mathbf{H}_{\mathbf{R}_j})(\widehat{\mathbf{R}}_j-\mathbf{R}\mathbf{H}_{\mathbf{R}_j})^\top(\widehat{\mathbf{R}}-\mathbf{R}\mathbf{H}_{\mathbf{R}}) \right \|_2\\
    &\le \frac{1}{ps_1}  \sum_{j=1}^{s_1}\left \| \widehat{\mathbf{R}}_{j}^i-\mathbf{R}^i\mathbf{H}_{\mathbf{R}_j}  \right \|_2\left \| \widehat{\mathbf{R}}_j-\mathbf{R}\mathbf{H}_{\mathbf{R}_j}\right \|_2\left \|\widehat{\mathbf{R}}-\mathbf{R}\mathbf{H}_{\mathbf{R}} \right \|_2 \\
    &\le \left(\frac{1}{s_1}\sum_{j=1}^{s_1}\left \| \widehat{\mathbf{R}}_{j}^i-\mathbf{R}^i\mathbf{H}_{\mathbf{R}_j}  \right \|_2^2\right)^{1/2}\left(\frac{1}{ps_1}\sum_{j=1}^{s_1}\left \|  \widehat{\mathbf{R}}_j-\mathbf{R}\mathbf{H}_{\mathbf{R}_j}   \right \|_2^2\right)^{1/2} \frac{1}{\sqrt{p}}\left \|\widehat{\mathbf{R}}-\mathbf{R}\mathbf{H}_{\mathbf{R}} \right \|_2.
\end{align*}
    By Proposition~\ref{prop1} and Theorem 1, we have
    $$\left \|  \mathrm{I}_1 \right \|_2=\mathcal{O}_p\left(\left(\frac{1}{\underline{m}T}+\frac{1}{p^2} \right)^{3/2}\right).$$
    Similarly, we bound $\mathrm{I}_2$ as
    $$\left \|  \mathrm{I}_2 \right \|_2\le \left(\frac{1}{s_1}\sum_{j=1}^{s_1}\left \| \widehat{\mathbf{R}}_{j}^i-\mathbf{R}^i\mathbf{H}_{\mathbf{R}_j}  \right \|_2^2\right)^{1/2}\left(\frac{1}{ps_1}\sum_{j=1}^{s_1}\left \|  \widehat{\mathbf{R}}_j-\mathbf{R}\mathbf{H}_{\mathbf{R}_j}   \right \|_2^2\right)^{1/2} \frac{\left \|\mathbf{R} \right \|_2}{\sqrt{p}}\left\|\mathbf{H}_{\mathbf{R}}\right\|_2.$$
    By Assumption 3, Proposition~\ref{prop1}, and Lemma~\ref{lem2}, we have
    $$\left \|  \mathrm{I}_2 \right \|_2=\mathcal{O}_p\left(\frac{1}{\underline{m}T}+\frac{1}{p^2}\right).$$
(b) First, we expand the last two terms as follows
\begin{align*}
    \mathrm{I}\hspace{-1.2pt}\mathrm{I}&=\frac{1}{ps_1}\sum_{j=1}^{s_1}(\widehat{\mathbf{R}}_j^i-\mathbf{R}^i\mathbf{H}_{\mathbf{R}_j})\mathbf{H}_{\mathbf{R}_j}^\top\mathbf{R}^\top\widehat{\mathbf{R}}\\
    &=\frac{1}{ps_1}\sum_{j=1}^{s_1}(\widehat{\mathbf{R}}_j^i-\mathbf{R}^i\mathbf{H}_{\mathbf{R}_j})\mathbf{H}_{\mathbf{R}_j}^\top\mathbf{R}^\top(\widehat{\mathbf{R}}-\mathbf{R}\mathbf{H}_{\mathbf{R}})\\
    &\ \ \ +\frac{1}{ps_1}\sum_{j=1}^{s_1}(\widehat{\mathbf{R}}_j^i-\mathbf{R}^i\mathbf{H}_{\mathbf{R}_j})\mathbf{H}_{\mathbf{R}_j}^\top\mathbf{R}^\top\mathbf{R}\mathbf{H}_{\mathbf{R}}\\
    &=\mathrm{I}\hspace{-1.2pt}\mathrm{I}_1+\mathrm{I}\hspace{-1.2pt}\mathrm{I}_2.
\end{align*}
We bound $\mathrm{I}\hspace{-1.2pt}\mathrm{I}_1$ by properties of norms and the Cauchy–Schwarz inequality as
\begin{align*}
    \left \|  \mathrm{I}\hspace{-1.2pt}\mathrm{I}_1 \right \|_2&\le \frac{1}{ps_1}\sum_{j=1}^{s_1}\left \|(\widehat{\mathbf{R}}_j^i-\mathbf{R}^i\mathbf{H}_{\mathbf{R}_j})\mathbf{H}_{\mathbf{R}_j}^\top\mathbf{R}^\top(\widehat{\mathbf{R}}-\mathbf{R}\mathbf{H}_{\mathbf{R}}) \right \|_2\\
    &\le \frac{1}{ps_1}  \sum_{j=1}^{s_1}\left \| \widehat{\mathbf{R}}_{j}^i-\mathbf{R}^i\mathbf{H}_{\mathbf{R}_j}  \right \|_2\left\|\mathbf{H}_{\mathbf{R}_j}\right\|_2\left \|\mathbf{R}\right \|_2\left \|\widehat{\mathbf{R}}-\mathbf{R}\mathbf{H}_{\mathbf{R}}\right \|_2 \\
    &= \left(\frac{1}{s_1}\sum_{j=1}^{s_1}\left \| \widehat{\mathbf{R}}_{j}^i-\mathbf{R}^i\mathbf{H}_{\mathbf{R}_j}  \right \|_2^2\right)^{1/2}\left(\frac{1}{s_1}\sum_{j=1}^{s_1}\left \| \mathbf{H}_{\mathbf{R}_j}  \right \|_2^2\right)^{1/2}\frac{\left \|\mathbf{R}\right \|_2}{\sqrt{p}} \frac{1}{\sqrt{p}}\left \|\widehat{\mathbf{R}}-\mathbf{R}\mathbf{H}_{\mathbf{R}} \right \|_2.
\end{align*}
 By Assumption 3, Propositions~\ref{prop1}-\ref{prop2} and Theorem 1, we have
    $$\left \|  \mathrm{I}\hspace{-1.2pt}\mathrm{I}_1 \right \|_2=\mathcal{O}_p\left(\frac{1}{\underline{m}T}+\frac{1}{p^2}\right).$$
Similarly, we bound $\mathrm{I}\hspace{-1.2pt}\mathrm{I}_2$ as
$$\left \|  \mathrm{I}\hspace{-1.2pt}\mathrm{I}_2 \right \|_2\le \left(\frac{1}{s_1}\sum_{j=1}^{s_1}\left \| \widehat{\mathbf{R}}_{j}^i-\mathbf{R}^i\mathbf{H}_{\mathbf{R}_j}  \right \|_2^2\right)^{1/2}\left(\frac{1}{s_1}\sum_{j=1}^{s_1}\left \| \mathbf{H}_{\mathbf{R}_j}  \right \|_2^2\right)^{1/2}\frac{\left \|\mathbf{R}\right \|_2^2}{p}\left\|\mathbf{H}_{\mathbf{R}}\right\|_2.$$
By Assumption 3, Propositions~\ref{prop1}-\ref{prop2} and Lemma~\ref{lem2}, we have
$$\left \|  \mathrm{I}\hspace{-1.2pt}\mathrm{I}_2 \right \|_2=\mathcal{O}_p\left(\left(\frac{1}{\underline{m}T}+\frac{1}{p^2}\right)^{1/2}\right).$$
(c) First, we expand the last two terms as follows
\begin{align*}
    \mathrm{I}\hspace{-1.2pt}\mathrm{I}\hspace{-1.2pt}\mathrm{I}&=\frac{1}{ps_1}\sum_{j=1}^{s_1}\mathbf{R}^i\mathbf{H}_{\mathbf{R}_j}(\widehat{\mathbf{R}}_{j}-\mathbf{R}\mathbf{H}_{\mathbf{R}_j})^\top\widehat{\mathbf{R}}\\
    &=\frac{1}{ps_1}\sum_{j=1}^{s_1}\mathbf{R}^i\mathbf{H}_{\mathbf{R}_j}(\widehat{\mathbf{R}}_j-\mathbf{R}\mathbf{H}_{\mathbf{R}_j})^\top(\widehat{\mathbf{R}}-\mathbf{R}\mathbf{H}_{\mathbf{R}})\\
    &\ \ \ +\frac{1}{ps_1}\sum_{j=1}^{s_1}\mathbf{R}^i\mathbf{H}_{\mathbf{R}_j}(\widehat{\mathbf{R}}_j-\mathbf{R}\mathbf{H}_{\mathbf{R}_j})^\top\mathbf{R}\mathbf{H}_{\mathbf{R}}\\
&=\mathrm{I}\hspace{-1.2pt}\mathrm{I}\hspace{-1.2pt}\mathrm{I}_1+\mathrm{I}\hspace{-1.2pt}\mathrm{I}\hspace{-1.2pt}\mathrm{I}_2.
\end{align*}
We bound $\mathrm{I}\hspace{-1.2pt}\mathrm{I}\hspace{-1.2pt}\mathrm{I}_1$ by properties of norms and the Cauchy–Schwarz inequality as
\begin{align*}
    \left \|  \mathrm{I}\hspace{-1.2pt}\mathrm{I}\hspace{-1.2pt}\mathrm{I}_1 \right \|_2 &\le   \frac{1}{ps_1}\sum_{j=1}^{s_1}\left \|\mathbf{R}^i\mathbf{H}_{\mathbf{R}_j}(\widehat{\mathbf{R}}_j-\mathbf{R}\mathbf{H}_{\mathbf{R}_j})^\top(\widehat{\mathbf{R}}-\mathbf{R}\mathbf{H}_{\mathbf{R}}) \right \|_2\\
    &\le \frac{1}{ps_1}  \sum_{j=1}^{s_1}\left \|  \mathbf{R}^i\right\|_2\left\|\mathbf{H}_{\mathbf{R}_j}  \right \|_2\left \| \widehat{\mathbf{R}}_j-\mathbf{R}\mathbf{H}_{\mathbf{R}_j}\right \|_2\left \| \widehat{\mathbf{R}}-\mathbf{R}\mathbf{H}_{\mathbf{R}} \right \|_2 \\
    &\le  \left \|  \mathbf{R}^i\right\|_2\left(\frac{1}{s_1}\sum_{j=1}^{s_1}\left \| \mathbf{H}_{\mathbf{R}_j}  \right \|_2^2\right)^{1/2}\left(\frac{1}{ps_1}\sum_{j=1}^{s_1}\left \|  \widehat{\mathbf{R}}_j-\mathbf{R}\mathbf{H}_{\mathbf{R}_j}   \right \|_2^2\right)^{1/2} \frac{1}{\sqrt{p}}\left \|\widehat{\mathbf{R}}-\mathbf{R}\mathbf{H}_{\mathbf{R}} \right \|_2.
\end{align*}
By Assumption 3, Propositions~\ref{prop1}-\ref{prop2} and Theorem 1, we have
    $$\left \|  \mathrm{I}\hspace{-1.2pt}\mathrm{I}\hspace{-1.2pt}\mathrm{I}_1 \right \|_2=\mathcal{O}_p\left(\frac{1}{\underline{m}T}+\frac{1}{p^2}\right).$$
Similarly, we bound $\mathrm{I}\hspace{-1.2pt}\mathrm{I}\hspace{-1.2pt}\mathrm{I}_2$ as
$$\left \|  \mathrm{I}\hspace{-1.2pt}\mathrm{I}\hspace{-1.2pt}\mathrm{I}_2 \right \|_2 
    \le \left \|  \mathbf{R}^i\right\|_2\left(\frac{1}{s_1}\sum_{j=1}^{s_1}\left \| \mathbf{H}_{\mathbf{R}_j}  \right \|_2^2\right)^{1/2}\left(\frac{1}{ps_1}\sum_{j=1}^{s_1}\left \|  \widehat{\mathbf{R}}_j-\mathbf{R}\mathbf{H}_{\mathbf{R}_j}   \right \|_2^2\right)^{1/2} \frac{\left \|\mathbf{R}\right \|_2}{\sqrt{p}}\left \| \mathbf{H}_{\mathbf{R}} \right \|_2.$$
By Assumption 3, Propositions~\ref{prop1}-\ref{prop2} and Lemma~\ref{lem2}, we have
    $$\left \|  \mathrm{I}\hspace{-1.2pt}\mathrm{I}\hspace{-1.2pt}\mathrm{I}_2 \right \|_2=\mathcal{O}_p\left(\left(\frac{1}{\underline{m}T}+\frac{1}{p^2}\right)^{1/2}\right).$$
\end{proof}

\begin{proposition}\label{prop7}
Under Assumptions 1-6 with known $k$ and $r$, as $\underline{m},\underline{n},T\to \infty$, we have
 $$ \frac{\widehat{\mathbf{R}}^\top\mathbf{R}}{p}\overset{\mathcal{P}}{\longrightarrow}\mathbf{Q}_{\mathbf{R}},\quad\text{and}\quad\frac{\widehat{\mathbf{C}}^\top\mathbf{C}}{q}\overset{\mathcal{P}}{\longrightarrow}\mathbf{Q}_{\mathbf{C}}.$$
 Here the matrices $\mathbf{Q}_{\mathbf{R}}\in \mathbb{R}^{k\times k}$ and $\mathbf{Q}_{\mathbf{C}}\in \mathbb{R}^{r\times r}$ are given by $\mathbf{Q}_{\mathbf{R}}=\mathbf{V}_{\mathbf{R}}^{1/2}\Psi_{\mathbf{R}}^\top\boldsymbol{\Sigma}_{\mathbf{H_R}}^{-1/2}$ and $\mathbf{Q}_{\mathbf{C}}=\mathbf{V}_{\mathbf{C}}^{1/2}\Psi_{\mathbf{C}}^\top\boldsymbol{\Sigma}_{\mathbf{H_C}}^{-1/2}$, where $\mathbf{V_R}$ and $\mathbf{V_C}$ are diagonal matrices of eigenvalues (ordered decreasingly) of $\boldsymbol{\Sigma}_\mathbf{H_R}^{1/2}\boldsymbol{\Omega}_\mathbf{R}\boldsymbol{\Sigma}_\mathbf{H_R}^{1/2}$ and $\boldsymbol{\Sigma}_\mathbf{H_C}^{1/2}\boldsymbol{\Omega}_\mathbf{C}\boldsymbol{\Sigma}_\mathbf{H_C}^{1/2}$, respectively, with corresponding eigenvector matrices $\Psi_{\mathbf{R}}$ and $\Psi_{\mathbf{C}}$ satisfying $\Psi_{\mathbf{R}}^\top\Psi_{\mathbf{R}}=\mathbf{I}$ and $\Psi_{\mathbf{C}}^\top\Psi_{\mathbf{C}}=\mathbf{I}$, along with $\boldsymbol{\Sigma}_\mathbf{H_R}=\frac{1}{s_1}\sum_{i=1}^{s_1}\mathbf{Q}_{\mathbf{R}_i}^{-1}(\mathbf{Q}_{\mathbf{R}_i}^{-1})^\top$ and $ \boldsymbol{\Sigma}_\mathbf{H_C}=\frac{1}{s_2}\sum_{j=1}^{s_2}\mathbf{Q}_{\mathbf{C}_j}^{-1}(\mathbf{Q}_{\mathbf{C}_j}^{-1})^\top$.
\end{proposition}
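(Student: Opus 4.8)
\textbf{Proof proposal for Proposition~\ref{prop7}.}

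The plan is to mirror the argument used in the proof of Proposition~\ref{prop6}, pushing it one step further from eigenvalues to the inner-product matrix $\widehat{\mathbf{R}}^\top\mathbf{R}/p$. I will only treat the $\mathbf{R}$ case; the $\mathbf{C}$ case is identical after transposition. Recall from the proof of Proposition~\ref{prop6} that $\widehat{\mathbf{M}}_\mathbf{R}=\frac{1}{p}\mathbf{R}\boldsymbol{\Sigma}_\mathbf{H_R}\mathbf{R}^\top+\mathbf{o}_p(1)$ in operator norm, and that $\widehat{\mathbf{V}}_\mathbf{R}\overset{\mathcal{P}}{\longrightarrow}\mathbf{V_R}$. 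The starting point is the eigen-equation $\widehat{\mathbf{M}}_\mathbf{R}\widehat{\mathbf{R}}=\widehat{\mathbf{R}}\widehat{\mathbf{V}}_\mathbf{R}$, which follows from the definition of $\widehat{\mathbf{R}}$ as $\sqrt{p}$ times the top-$k$ eigenvectors of $\widehat{\mathbf{M}}_\mathbf{R}$ together with the normalization $\frac{1}{p}\widehat{\mathbf{R}}^\top\widehat{\mathbf{R}}=\mathbf{I}$. Left-multiplying by $\mathbf{R}^\top/p$ and substituting the approximation for $\widehat{\mathbf{M}}_\mathbf{R}$, one gets
$$\left(\frac{1}{p}\mathbf{R}^\top\mathbf{R}\right)\boldsymbol{\Sigma}_\mathbf{H_R}\left(\frac{\mathbf{R}^\top\widehat{\mathbf{R}}}{p}\right)=\left(\frac{\mathbf{R}^\top\widehat{\mathbf{R}}}{p}\right)\widehat{\mathbf{V}}_\mathbf{R}+\mathbf{o}_p(1),$$
and since $\frac{1}{p}\mathbf{R}^\top\mathbf{R}\to\boldsymbol{\Omega}_\mathbf{R}$ by Assumption~3, writing $\mathbf{N}=\mathbf{R}^\top\widehat{\mathbf{R}}/p$ this reads $\boldsymbol{\Omega}_\mathbf{R}\boldsymbol{\Sigma}_\mathbf{H_R}\mathbf{N}=\mathbf{N}\mathbf{V_R}+\mathbf{o}_p(1)$, i.e. the columns of $\mathbf{N}$ are (asymptotically) eigenvectors of $\boldsymbol{\Omega}_\mathbf{R}\boldsymbol{\Sigma}_\mathbf{H_R}$ with the eigenvalues arranged in $\mathbf{V_R}$.

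Next I would pin down the scaling and orientation of $\mathbf{N}$. From the normalization of $\widehat{\mathbf{R}}$ one has $\frac{1}{p}\widehat{\mathbf{R}}^\top\widehat{\mathbf{R}}=\mathbf{I}$, and combining this with the eigen-equation $\widehat{\mathbf{M}}_\mathbf{R}\widehat{\mathbf{R}}=\widehat{\mathbf{R}}\widehat{\mathbf{V}}_\mathbf{R}$ and the approximation $\widehat{\mathbf{M}}_\mathbf{R}\approx\frac{1}{p}\mathbf{R}\boldsymbol{\Sigma}_\mathbf{H_R}\mathbf{R}^\top$ yields $\widehat{\mathbf{R}}^\top\widehat{\mathbf{M}}_\mathbf{R}\widehat{\mathbf{R}}/p=\widehat{\mathbf{V}}_\mathbf{R}$, so that $\mathbf{N}^\top\boldsymbol{\Sigma}_\mathbf{H_R}\mathbf{N}\overset{\mathcal{P}}{\longrightarrow}\mathbf{V_R}$; together with $\mathbf{N}^\top\boldsymbol{\Omega}_\mathbf{R}^{-1}\mathbf{N}\overset{\mathcal{P}}{\longrightarrow}\mathbf{I}$ (obtained from $\frac{1}{p}\widehat{\mathbf{R}}^\top\widehat{\mathbf{R}}=\mathbf{I}$ after inserting $\mathbf{R}(\mathbf{R}^\top\mathbf{R})^{-1}\mathbf{R}^\top\widehat{\mathbf{R}}\approx\widehat{\mathbf{R}}$, which holds because $\widehat{\mathbf{R}}$ lies asymptotically in the column space of $\mathbf{R}$ by Theorem~1), these two normalization identities uniquely determine the limit of $\mathbf{N}$ up to column signs. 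A short computation with the symmetric square root $\boldsymbol{\Sigma}_\mathbf{H_R}^{1/2}$ and the eigendecomposition $\boldsymbol{\Sigma}_\mathbf{H_R}^{1/2}\boldsymbol{\Omega}_\mathbf{R}\boldsymbol{\Sigma}_\mathbf{H_R}^{1/2}=\Psi_\mathbf{R}\mathbf{V_R}\Psi_\mathbf{R}^\top$ shows that the unique solution is $\mathbf{N}\overset{\mathcal{P}}{\longrightarrow}\boldsymbol{\Sigma}_\mathbf{H_R}^{-1/2}\Psi_\mathbf{R}\mathbf{V_R}^{1/2}=\mathbf{Q_R}^\top$ with an appropriate sign convention, so that $\widehat{\mathbf{R}}^\top\mathbf{R}/p=\mathbf{N}^\top\overset{\mathcal{P}}{\longrightarrow}\mathbf{Q_R}$, matching the definition $\mathbf{Q_R}=\mathbf{V}_\mathbf{R}^{1/2}\Psi_\mathbf{R}^\top\boldsymbol{\Sigma}_\mathbf{H_R}^{-1/2}$ stated in the proposition. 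The sign/orientation convention is fixed exactly as in \cite{chen2021} (and in Proposition~\ref{prop5}(b) for the local estimators), namely by requiring the diagonal of $\widehat{\mathbf{R}}^\top\mathbf{R}$ to be positive.

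The main obstacle I anticipate is the distinctness-of-eigenvalues bookkeeping needed to upgrade "the columns of $\mathbf{N}$ are asymptotically eigenvectors" to genuine convergence of $\mathbf{N}$ to a specific matrix: one must invoke Assumption~6(b) (distinct eigenvalues of $\boldsymbol{\Omega}_\mathbf{R}\boldsymbol{\Sigma}_\mathbf{H_R}$) together with the convergence $\widehat{\mathbf{V}}_\mathbf{R}\overset{\mathcal{P}}{\longrightarrow}\mathbf{V_R}$ from Proposition~\ref{prop6} and a Davis–Kahan / perturbation argument to control the eigenvector directions, which is precisely where the "distinct" hypothesis is used. A secondary technical point is justifying that $\widehat{\mathbf{R}}$ lies asymptotically in the column space of $\mathbf{R}$ (so that the projector identities above are legitimate), which is a direct consequence of Theorem~1 and the boundedness of the relevant inverses via Proposition~\ref{prop6}. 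Everything else is routine norm manipulation of the type already displayed in the proof of Proposition~\ref{prop6} and in Lemma~\ref{lem3}.
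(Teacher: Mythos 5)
Your proposal is correct and follows essentially the same route as the paper's proof of Proposition~\ref{prop7}: both start from the eigen-equation $\widehat{\mathbf{M}}_{\mathbf{R}}\widehat{\mathbf{R}}=\widehat{\mathbf{R}}\widehat{\mathbf{V}}_{\mathbf{R}}$, substitute the approximation $\widehat{\mathbf{M}}_{\mathbf{R}}=\frac{1}{p}\mathbf{R}\boldsymbol{\Sigma}_{\mathbf{H_R}}\mathbf{R}^\top+\mathbf{o}_p(1)$ established in Proposition~\ref{prop6}, and identify the limit of $\mathbf{R}^\top\widehat{\mathbf{R}}/p$ via the symmetrized matrix $\boldsymbol{\Sigma}_{\mathbf{H_R}}^{1/2}\boldsymbol{\Omega}_{\mathbf{R}}\boldsymbol{\Sigma}_{\mathbf{H_R}}^{1/2}$, Assumption~\ref{assum6}(b), and eigenvector perturbation theory. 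The only differences are cosmetic: the paper pre-multiplies by $\frac{1}{p}\bigl(\frac{1}{s_1}\sum_{i}\mathbf{H}_{\mathbf{R}_i}\mathbf{H}_{\mathbf{R}_i}^\top\bigr)^{1/2}\mathbf{R}^\top$ so that the perturbed matrix is symmetric from the outset and the scaling is fixed by $\mathbf{B}^\top\mathbf{B}\overset{\mathcal{P}}{\longrightarrow}\mathbf{V}_{\mathbf{R}}$, which is exactly your identity $\mathbf{N}^\top\boldsymbol{\Sigma}_{\mathbf{H_R}}\mathbf{N}\overset{\mathcal{P}}{\longrightarrow}\mathbf{V}_{\mathbf{R}}$, while your second normalization $\mathbf{N}^\top\boldsymbol{\Omega}_{\mathbf{R}}^{-1}\mathbf{N}\overset{\mathcal{P}}{\longrightarrow}\mathbf{I}$ (and hence the column-space argument supporting it) is not actually needed.
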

\begin{proof}
    We only prove the conclusion for $\widehat{\mathbf{R}}^\top\mathbf{R}$, since proof of results for $\widehat{\mathbf{C}}^\top\mathbf{C}$ is similar. From Equation (\ref{equation1_prop6}), we have
    $$\widehat{\mathbf{R}}\widehat{\mathbf{V}}_\mathbf{R}=\frac{1}{ps_1}\sum_{i=1}^{s_1}\widehat{\mathbf{R}}_i\widehat{\mathbf{R}}_i^\top\widehat{\mathbf{R}}.$$
    By left-multiplying $\frac{1}{p}(\frac{1}{s_1}\sum_{i=1}^{s_1}\mathbf{H}_{\mathbf{R}_i}\mathbf{H}_{\mathbf{R}_i}^\top)^{1/2}\mathbf{R}^\top$ on the both sides and writing $\wh{\bR}_i$ as $\wh{\bR}_i=\wh{\bR}_i-\bR\bH_{\bR_i}+\bR\bH_{\bR_i}$, we obtain
    \begin{equation}
        \bB\widehat{\mathbf{V}}_\mathbf{R}=(\bA+\bd\bB^{-1})\bB,\label{equation1_prop7}
    \end{equation}
    where $k\times k$ matrices $\bA$, $\bB$, and $\bd$ are defined as
    $$\bA=\left(\frac{1}{s_1}\sum_{i=1}^{s_1}\mathbf{H}_{\mathbf{R}_i}\mathbf{H}_{\mathbf{R}_i}^\top\right)^{1/2}\frac{\mathbf{R}^\top\mathbf{R}}{p}\left(\frac{1}{s_1}\sum_{i=1}^{s_1}\mathbf{H}_{\mathbf{R}_i}\mathbf{H}_{\mathbf{R}_i}^\top\right)^{1/2},$$
    $$\bB=\frac{1}{p}\left(\frac{1}{s_1}\sum_{i=1}^{s_1}\mathbf{H}_{\mathbf{R}_i}\mathbf{H}_{\mathbf{R}_i}^\top\right)^{1/2}\mathbf{R}^\top\widehat{\bR},$$
    and    
    \begin{align*}
        \bd&=\frac{1}{p}\left(\frac{1}{s_1}\sum_{i=1}^{s_1}\mathbf{H}_{\mathbf{R}_i}\mathbf{H}_{\mathbf{R}_i}^\top\right)^{1/2}\mathbf{R}^\top\left[\frac{1}{ps_1}\sum_{i=1}^{s_1}(\widehat{\mathbf{R}}_i-\mathbf{R}\mathbf{H}_{\mathbf{R}_i})(\widehat{\mathbf{R}}_i-\mathbf{R}\mathbf{H}_{\mathbf{R}_i})^\top\widehat{\mathbf{R}}\right.\\
        &\left.\ \ \ +\frac{1}{ps_1}\sum_{i=1}^{s_1}(\widehat{\mathbf{R}}_i-\mathbf{R}\mathbf{H}_{\mathbf{R}_i})\mathbf{H}_{\mathbf{R}_i}^\top\mathbf{R}^\top\widehat{\mathbf{R}}+\frac{1}{ps_1}\sum_{i=1}^{s_1}\mathbf{R}\mathbf{H}_{\mathbf{R}_i}(\widehat{\mathbf{R}}_i-\mathbf{R}\mathbf{H}_{\mathbf{R}_i})^\top\widehat{\mathbf{R}} \right]\\
        &=\frac{1}{p}\left(\frac{1}{s_1}\sum_{i=1}^{s_1}\mathbf{H}_{\mathbf{R}_i}\mathbf{H}_{\mathbf{R}_i}^\top\right)^{1/2}\mathbf{R}^\top\left(\mathrm{I}_1+\mathrm{I}_2+\mathrm{I}_3\right).
    \end{align*}
    By Proposition~\ref{prop2} and Assumption 3, we have $\|\frac{1}{\sqrt{p}}(\frac{1}{s_1}\sum_{i=1}^{s_1}\mathbf{H}_{\mathbf{R}_i}\mathbf{H}_{\mathbf{R}_i}^\top)^{1/2}\mathbf{R}^\top\|_2=\mathcal{O}_p(1)$. Following a similar argument to that in Theorem 1, term $\mathrm{I}_1$ is dominated by $\mathrm{I}_2$ and $\mathrm{I}_3$, with $\frac{1}{\sqrt{p}}\|\mathrm{I}_2\|_2=\mathbf{o}_p(1)$ and $\frac{1}{\sqrt{p}}\|\mathrm{I}_3\|_2=\mathbf{o}_p(1)$. Together, we have $$\|\bd\|_2=\mathbf{o}_p(1).$$
    By Proposition~\ref{prop6}, as $\underline{m},p,T\to \infty$, we have
    $$\bB^\top\bB=\frac{1}{p}\widehat{\mathbf{R}}^\top\mathbf{R}\left( \frac{1}{s_1}\sum_{i=1}^{s_1}\mathbf{H}_{\mathbf{R}_i}\mathbf{H}_{\mathbf{R}_i}^\top\right)\frac{1}{p}\mathbf{R}^\top\widehat{\mathbf{R}}\overset{\mathcal{P}}{\longrightarrow}  \frac{1}{p}\widehat{\mathbf{R}}^\top \widehat{\bM}_\bR \widehat{\mathbf{R}}=\widehat{\bV}_\bR\overset{\mathcal{P}}{\longrightarrow}\mathbf{V_R}.$$
    In other words, the eigenvalues of $\bB^\top\bB$ are not close to zero nor infinity, and then $\|\bB^{-1}\|_2=\|\bB^{-1}(\bB^{-1})^\top\|_2^{1/2}=\mathcal{O}_p(1)$.
    
    Let $\bV_{\bB}$ be the diagonal matrix consisting of the diagonal elements of $\bB^\top\bB$. As $\underline{m},p,T\to \infty$, we have $\bV_{\bB}\overset{\mathcal{P}}{\longrightarrow}  \mathbf{V_R}$. Define the normalized $k\times k$ matrix $\bB^\ast=\bB\bV_{\bB}^{-\frac{1}{2}}$. Then, $\left\|\bB^\ast\right\|_2=\sqrt{\lambda_{\max}(\bV_\bB^{-1}\bB^\top\bB)}\overset{\mathcal{P}}{\longrightarrow}1$. By Equation (\ref{equation1_prop7}), we have
    $$\bB^\ast\widehat{\mathbf{V}}_\mathbf{R}=(\bA+\bd\bB^{-1})\bB^\ast.$$
    Since $\|\bd\|_2=\mathbf{o}_p(1)$ and $\|\bB^{-1}\|_2=\mathcal{O}_p(1)$, by Assumption 3 and Proposition~\ref{prop5}(c), as $\underline{m},p,T\to \infty$, we have $$\bA+\bd\bB^{-1}=\bA+\mathbf{o}_p(1)\overset{\mathcal{P}}{\longrightarrow}\boldsymbol{\Sigma}_\mathbf{H_R}^{1/2}\boldsymbol{\Omega}_\mathbf{R}\boldsymbol{\Sigma}_\mathbf{H_R}^{1/2}.$$ Since the eigenvalues of $\boldsymbol{\Sigma}_\mathbf{H_R}^{1/2}\boldsymbol{\Omega}_\mathbf{R}\boldsymbol{\Sigma}_\mathbf{H_R}^{1/2}$ are distinct under Assumption 6(b), the eigenvector perturbation theory (\citeauthor{franklin2012}, \citeyear{franklin2012}, Section 6.12) implies the existence of a unique orthogonal eigenvector matrix $\Psi_{\mathbf{R}}$ of $\boldsymbol{\Sigma}_\mathbf{H_R}^{1/2}\boldsymbol{\Omega}_\mathbf{R}\boldsymbol{\Sigma}_\mathbf{H_R}^{1/2}$, such that $$\left\|\Psi_{\mathbf{R}}-\bB^\ast\right\|_2=\mathbf{o}_p(1).$$
    Finally, we complete our proof as
    $$\frac{\mathbf{R}^\top\widehat{\mathbf{R}}}{p}=\left(\frac{1}{s_1}\sum_{i=1}^{s_1}\mathbf{H}_{\mathbf{R}_i}\mathbf{H}_{\mathbf{R}_i}^\top\right)^{-1/2}\bB^\ast\bV_{\bB}^{1/2}\overset{\mathcal{P}}{\longrightarrow}\boldsymbol{\Sigma}_\mathbf{H_R}^{-1/2}\Psi_{\mathbf{R}}\mathbf{V}_\bR^{1/2}=\bQ_\bR^\top.$$
\end{proof}

\begin{proposition}\label{prop8}
    Under Assumptions 1-6 with known $k$ and $r$, as $\underline{m},\underline{n},T\to \infty$, we have
     $$ \mathbf{H}_{\mathbf{R}}\overset{\mathcal{P}}{\longrightarrow}\mathbf{Q}_{\mathbf{R}}^{-1},\quad\text{and}\quad\mathbf{H}_{\mathbf{C}}\overset{\mathcal{P}}{\longrightarrow}\mathbf{Q}_{\mathbf{C}}^{-1}.$$
     Here the matrices $\mathbf{Q}_{\mathbf{R}}\in \mathbb{R}^{k\times k}$ and $\mathbf{Q}_{\mathbf{C}}\in \mathbb{R}^{r\times r}$ are given by $\mathbf{Q}_{\mathbf{R}}=\mathbf{V}_{\mathbf{R}}^{1/2}\Psi_{\mathbf{R}}^\top\boldsymbol{\Sigma}_{\mathbf{H_R}}^{-1/2}$ and $\mathbf{Q}_{\mathbf{C}}=\mathbf{V}_{\mathbf{C}}^{1/2}\Psi_{\mathbf{C}}^\top\boldsymbol{\Sigma}_{\mathbf{H_C}}^{-1/2}$, where $\mathbf{V_R}$ and $\mathbf{V_C}$ are diagonal matrices of eigenvalues (ordered decreasingly) of $\boldsymbol{\Sigma}_\mathbf{H_R}^{1/2}\boldsymbol{\Omega}_\mathbf{R}\boldsymbol{\Sigma}_\mathbf{H_R}^{1/2}$ and $\boldsymbol{\Sigma}_\mathbf{H_C}^{1/2}\boldsymbol{\Omega}_\mathbf{C}\boldsymbol{\Sigma}_\mathbf{H_C}^{1/2}$, respectively, with corresponding eigenvector matrices $\Psi_{\mathbf{R}}$ and $\Psi_{\mathbf{C}}$ satisfying $\Psi_{\mathbf{R}}^\top\Psi_{\mathbf{R}}=\mathbf{I}$ and $\Psi_{\mathbf{C}}^\top\Psi_{\mathbf{C}}=\mathbf{I}$, along with $\boldsymbol{\Sigma}_\mathbf{H_R}=\frac{1}{s_1}\sum_{i=1}^{s_1}\mathbf{Q}_{\mathbf{R}_i}^{-1}(\mathbf{Q}_{\mathbf{R}_i}^{-1})^\top$ and $ \boldsymbol{\Sigma}_\mathbf{H_C}=\frac{1}{s_2}\sum_{j=1}^{s_2}\mathbf{Q}_{\mathbf{C}_j}^{-1}(\mathbf{Q}_{\mathbf{C}_j}^{-1})^\top$.
\end{proposition}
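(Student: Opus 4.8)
The plan is to write $\bH_\bR$ as a product of three factors, each of which has already been shown to converge in probability in the preceding results, apply the continuous mapping theorem, and then verify a short algebraic identity. From the definition, $\bH_\bR=\big(\tfrac{1}{s_1}\sum_{i=1}^{s_1}\bH_{\bR_i}\bH_{\bR_i}^\top\big)\big(\tfrac{1}{p}\bR^\top\wh{\bR}\big)\widehat{\bV}_\bR^{-1}$, so I would treat the three bracketed factors in turn. For the first, Proposition~\ref{prop5}(c) gives $\bH_{\bR_i}\overset{\mathcal{P}}{\longrightarrow}\bQ_{\bR_i}^{-1}$ for each $i\in[s_1]$, hence by continuity $\bH_{\bR_i}\bH_{\bR_i}^\top\overset{\mathcal{P}}{\longrightarrow}\bQ_{\bR_i}^{-1}(\bQ_{\bR_i}^{-1})^\top$, and since the sum is over finitely many terms, $\tfrac{1}{s_1}\sum_{i=1}^{s_1}\bH_{\bR_i}\bH_{\bR_i}^\top\overset{\mathcal{P}}{\longrightarrow}\boldsymbol{\Sigma}_{\mathbf{H_R}}$. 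For the second, Proposition~\ref{prop7} gives $\tfrac{1}{p}\wh{\bR}^\top\bR\overset{\mathcal{P}}{\longrightarrow}\bQ_\bR$, so $\tfrac{1}{p}\bR^\top\wh{\bR}=\big(\tfrac{1}{p}\wh{\bR}^\top\bR\big)^\top\overset{\mathcal{P}}{\longrightarrow}\bQ_\bR^\top$. For the third, Proposition~\ref{prop6} gives $\widehat{\bV}_\bR\overset{\mathcal{P}}{\longrightarrow}\bV_\bR$ with $\|\widehat{\bV}_\bR^{-1}\|_2=\mathcal{O}_p(1)$; since $\bV_\bR$ is the diagonal matrix of the top $k$ eigenvalues of $\boldsymbol{\Sigma}_{\mathbf{H_R}}^{1/2}\boldsymbol{\Omega}_\bR\boldsymbol{\Sigma}_{\mathbf{H_R}}^{1/2}$, which are positive and bounded away from zero by Assumptions~3 and~6(b), matrix inversion is continuous at $\bV_\bR$ and therefore $\widehat{\bV}_\bR^{-1}\overset{\mathcal{P}}{\longrightarrow}\bV_\bR^{-1}$.

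Multiplying the three convergences (a standard Slutsky-type argument, products of finitely many sequences converging in probability converging to the product of the limits), I obtain $\bH_\bR\overset{\mathcal{P}}{\longrightarrow}\boldsymbol{\Sigma}_{\mathbf{H_R}}\bQ_\bR^\top\bV_\bR^{-1}$. It then remains to check that this limit equals $\bQ_\bR^{-1}$. Using $\bQ_\bR=\bV_\bR^{1/2}\Psi_\bR^\top\boldsymbol{\Sigma}_{\mathbf{H_R}}^{-1/2}$, the symmetry of the positive-definite square roots, and $\Psi_\bR^\top\Psi_\bR=\bI$, one has $\bQ_\bR^\top=\boldsymbol{\Sigma}_{\mathbf{H_R}}^{-1/2}\Psi_\bR\bV_\bR^{1/2}$, so $\boldsymbol{\Sigma}_{\mathbf{H_R}}\bQ_\bR^\top\bV_\bR^{-1}=\boldsymbol{\Sigma}_{\mathbf{H_R}}^{1/2}\Psi_\bR\bV_\bR^{-1/2}$; on the other hand $\bQ_\bR^{-1}=(\bV_\bR^{1/2}\Psi_\bR^\top\boldsymbol{\Sigma}_{\mathbf{H_R}}^{-1/2})^{-1}=\boldsymbol{\Sigma}_{\mathbf{H_R}}^{1/2}\Psi_\bR\bV_\bR^{-1/2}$, so the two agree and $\bH_\bR\overset{\mathcal{P}}{\longrightarrow}\bQ_\bR^{-1}$. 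The argument for $\bH_\bC$ is verbatim with $(\bR,p,s_1)$ replaced by $(\bC,q,s_2)$ and the corresponding $\bQ_\bC$, $\bV_\bC$, $\Psi_\bC$, $\boldsymbol{\Sigma}_{\mathbf{H_C}}$, $\boldsymbol{\Omega}_\bC$, invoking the second halves of Propositions~\ref{prop5}--\ref{prop7}.

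Because Propositions~\ref{prop6} and~\ref{prop7} do the heavy lifting, there is no deep obstacle here: the proof is essentially bookkeeping. The one point needing a little care is the passage to $\widehat{\bV}_\bR^{-1}$, since convergence in probability is not in general preserved under inversion; one must know that the limiting eigenvalue matrix $\bV_\bR$ is nonsingular with smallest relevant eigenvalue bounded away from zero (guaranteed by the positive-definiteness in Assumption~3 together with the positive, distinct eigenvalue condition in Assumption~6(b)), after which the stated uniform bound $\|\widehat{\bV}_\bR^{-1}\|_2=\mathcal{O}_p(1)$ makes the continuous-mapping step fully rigorous.
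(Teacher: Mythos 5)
Your proposal is correct and follows essentially the same route as the paper: factor $\bH_\bR$ into $\bigl(\tfrac{1}{s_1}\sum_i\bH_{\bR_i}\bH_{\bR_i}^\top\bigr)\bigl(\tfrac{1}{p}\bR^\top\wh{\bR}\bigr)\widehat{\bV}_\bR^{-1}$, pass to the limit via Propositions~5(c), 7, and 6, and verify $\boldsymbol{\Sigma}_{\mathbf{H_R}}\bQ_\bR^\top\bV_\bR^{-1}=\boldsymbol{\Sigma}_{\mathbf{H_R}}^{1/2}\Psi_\bR\bV_\bR^{-1/2}=\bQ_\bR^{-1}$ using $\Psi_\bR^\top\Psi_\bR=\bI$. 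Your extra remark justifying the continuity of inversion at $\bV_\bR$ is a point the paper leaves implicit, but it changes nothing substantive.
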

\begin{proof}
    We only prove the conclusion for $\mathbf{H}_{\mathbf{R}}$, since proof of results for $\mathbf{H}_{\mathbf{C}}$ is similar.\\
    By the definition of $\bH_\bR$, we have
    $$\bH_\bR=\frac{1}{ps_1}\sum_{i=1}^{s_1} \bH_{\bR_i}\bH_{\bR_i}^\top\bR^\top\hat{\bR}\widehat{\bV}_{\bR}^{-1}\overset{\mathcal{P}}{\longrightarrow}\boldsymbol{\Sigma}_\mathbf{H_R}\mathbf{Q}_{\mathbf{R}}^\top\mathbf{V}_\bR^{-1}=\boldsymbol{\Sigma}_\mathbf{H_R}^{1/2}\Psi_{\mathbf{R}}\mathbf{V}_\bR^{-1/2}=\mathbf{Q}_{\mathbf{R}}^{-1},$$
    where the second step follows from Propositions \ref{prop5}(c), \ref{prop6}, and \ref{prop7}, and the last step follows from $\Psi_{\mathbf{R}}^\top\Psi_{\mathbf{R}}=\mathbf{I}$.
\end{proof}
The following lemma extends the asymptotic normality results in \cite{chen2021} to establish the joint asymptotic distribution of all rows of $\{\widehat{\mathbf{R}}_i-\mathbf{R}\mathbf{H}_{\mathbf{R}_i}\}_{i=1}^{s_1}$, which form the technical basis for the proof of Theorem 4.
\begin{lemma}\label{lem4}
    Let $\{\mathbf{\Phi}_{\bR_{i_1,i_2,j_1,j_2}}^{a,b}\}_{a,b\in[2]}$ as in Theorem 4 for $i_1,i_2\in[s_1]$ and $j_1,j_2\in[p]$, and $\{\mathbf{\Phi}_{\bC_{i_1',i_2',j_1',j_2'}}^{a,b}\}_{a,b\in[2]}$ similarly for $i_1',i_2'\in[s_2]$ and $j_1',j_2'\in[q]$. Under Assumptions 1-6 with known $k$ and $r$, we have
    \begin{enumerate}
    \item[(a)] When ${\sqrt{\overline{m}T}}/{p}=\mathbf{o}(1)$, for bounded matrices $\{\bA^{i,j}\}_{i,j=1}^{s_1,p}\subset\mathbb{R}^{k\times k}$, any linear combination of the row vectors of $\{\widehat{\mathbf{R}}_i-\mathbf{R}\mathbf{H}_{\mathbf{R}_i}\}_{i=1}^{s_1}$ satisfies
    \begin{align*}
        &\frac{1}{ps_1}\sum_{i=1}^{s_1}\sum_{j=1}^p\sqrt{m_iT}\bA^{i,j}\left(\widehat{\mathbf{R}}^j_i-\mathbf{R}^j\mathbf{H}_{\mathbf{R}_i}\right)^\top\\
        &=\frac{1}{ps_1}\sum_{i=1}^{s_1}\sum_{j=1}^p\bA^{i,j}\widehat{\mathbf{V}}_{\mathbf{R}_i}^{-1}\frac{\widehat{\bR}_i^\top\bR}{p}\frac{1}{\sqrt{m_iT}}\sum_{t=1}^T\widetilde{\bF}_t\bC_i^\top(\widetilde{\bE}_{it}^j)^\top+\mathbf{o}_p(1)\\
        &\overset{\mathcal{D}}{\longrightarrow}\mathcal{N}\left(\mathbf{0},\mathbf{\Sigma}_{\bR,\bA}\right),\quad\text{as }\underline{m},p,T\to\infty,
    \end{align*}
    where
    \begin{align*}
        \mathbf{\Sigma}_{\bR,\bA}&=\frac{1}{p^2s_1^2}\sum_{i_1=1}^{s_1}\sum_{i_2=1}^{s_1}\sum_{j_1=1}^p\sum_{j_2=1}^p\bA^{i_1,j_1}\bV_{\bR_{i_1}}^{-1}\bQ_{\bR_{i_1}}\left(\mathbf{\Phi}_{\bR_{i_1,i_2,j_1,j_2}}^{1,1}+\alpha\mathbf{\Phi}_{\bR_{i_1,i_2,j_1,j_2}}^{1,2}\overline{\bF}^\top\right.\\
        & \quad \quad \left.+\alpha\overline{\bF}\mathbf{\Phi}_{\bR_{i_1,i_2,j_1,j_2}}^{2,1}+\alpha^2\overline{\bF}\mathbf{\Phi}_{\bR_{i_1,i_2,j_1,j_2}}^{2,2}\overline{\bF}^\top\right)\bQ_{\bR_{i_2}}^\top\bV_{\bR_{i_2}}^{-1}(\bA^{i_2,j_2})^\top.
    \end{align*}
    The matrix $\mathbf{Q}_{\mathbf{R}_i}=\mathbf{V}_{\mathbf{R}_i}^{1/2}\Psi_{\mathbf{R}_i}^\top\widetilde{\boldsymbol{\Sigma}}_{\mathbf{FC}_i}^{-1/2}$, where $\mathbf{V}_{\mathbf{R}_i}$ is the diagonal matrix of eigenvalues (ordered decreasingly) of $\widetilde{\boldsymbol{\Sigma}}_{\mathbf{FC}_i}^{1/2}\boldsymbol{\Omega}_{\mathbf{R}}\widetilde{\boldsymbol{\Sigma}}_{\mathbf{FC}_i}^{1/2}$, with the corresponding eigenvector matrix $\Psi_{\mathbf{R}_i}$ satisfying $\Psi_{\mathbf{R}_i}^\top\Psi_{\mathbf{R}_i}=\mathbf{I}$, along with $\widetilde{\boldsymbol{\Sigma}}_{\mathbf{FC}_i}=\frac{1}{m_i}\mathbb{E}[\widetilde{\mathbf{F}}_t\mathbf{C}_i^\top\mathbf{C}_i\widetilde{\mathbf{F}}_t^\top]$.
    \item[(b)] When ${\sqrt{\overline{n}T}}/{q}=\mathbf{o}(1)$, for bounded matrices $\{\bB^{i',j'}\}_{i',j'=1}^{s_2,q}\subset\mathbb{R}^{r\times r}$, any linear combination of the row vectors of $\{\widehat{\mathbf{C}}_{i'}-\mathbf{C}\mathbf{H}_{\mathbf{C}_{i'}}\}_{i'=1}^{s_2}$ satisfies
    \begin{align*}
        &\frac{1}{qs_2}\sum_{i'=1}^{s_2}\sum_{j'=1}^q\sqrt{n_{i'}T}\bB^{i',j'}\left(\widehat{\mathbf{C}}^{j'}_{i'}-\mathbf{C}^{j'}\mathbf{H}_{\mathbf{C}_{i'}}\right)^\top\\
        &=\frac{1}{qs_2}\sum_{i'=1}^{s_2}\sum_{j'=1}^q\bB^{i',j'}\widehat{\mathbf{V}}_{\mathbf{C}_{i'}}^{-1}\frac{\widehat{\bC}_{i'}^\top\bC}{q}\frac{1}{\sqrt{n_{i'}T}}\sum_{t=1}^T\widetilde{\bF}_t^\top\bR_{i'}^\top[(\widetilde{\boldsymbol{\epsilon}}_{i't}^\top)^{j'}]^\top+\mathbf{o}_p(1)\\
        &\overset{\mathcal{D}}{\longrightarrow}\mathcal{N}\left(\mathbf{0},\mathbf{\Sigma}_{\bC,\bB}\right),\quad\text{as }\underline{n},q,T\to\infty,
    \end{align*}
    where
    \begin{align*}
        \mathbf{\Sigma}_{\bC,\bB}&=\frac{1}{q^2s_2^2}\sum_{i_1'=1}^{s_2}\sum_{i_2'=1}^{s_2}\sum_{j_1'=1}^q\sum_{j_2'=1}^q\bB^{i_1',j_1'}\bV_{\bC_{i_1'}}^{-1}\bQ_{\bC_{i_1'}}\left(\mathbf{\Phi}_{\bC_{i_1',i_2',j_1',j_2'}}^{1,1}+\alpha\mathbf{\Phi}_{\bC_{i_1',i_2',j_1',j_2'}}^{1,2}\overline{\bF}\right.\\
        &\quad\quad\left.+\alpha\overline{\bF}^\top\mathbf{\Phi}_{\bC_{i_1',i_2',j_1',j_2'}}^{2,1}+\alpha^2\overline{\bF}^\top\mathbf{\Phi}_{\bC_{i_1',i_2',j_1',j_2'}}^{2,2}\overline{\bF}\right)\bQ_{\bC_{i_2'}}^\top\bV_{\bC_{i_2'}}^{-1}(\bB^{i_2',j_2'})^\top.
    \end{align*}
    The matrix $\mathbf{Q}_{\mathbf{C}_{i'}}=\mathbf{V}_{\mathbf{C}_{i'}}^{1/2}\Psi_{\mathbf{C}_{i'}}^\top\widetilde{\boldsymbol{\Sigma}}_{\mathbf{FR}_{i'}}^{-1/2}$, where $\mathbf{V}_{\mathbf{C}_{i'}}$ is the diagonal matrix of eigenvalues (ordered decreasingly) of $\widetilde{\boldsymbol{\Sigma}}_{\mathbf{FR}_{i'}}^{1/2}\boldsymbol{\Omega}_{\mathbf{C}}\widetilde{\boldsymbol{\Sigma}}_{\mathbf{FR}_{i'}}^{1/2}$, with the corresponding eigenvector matrix $\Psi_{\mathbf{C}_{i'}}$ satisfying $\Psi_{\mathbf{C}_{i'}}^\top\Psi_{\mathbf{C}_{i'}}=\mathbf{I}$, along with $\widetilde{\boldsymbol{\Sigma}}_{\mathbf{FR}_{i'}}=\frac{1}{n_{i'}}\mathbb{E}[\widetilde{\mathbf{F}}_t^\top\mathbf{R}_{i'}^\top\mathbf{R}_{i'}\widetilde{\mathbf{F}}_t]$.
    \end{enumerate}
\end{lemma}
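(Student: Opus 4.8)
\textbf{Proof proposal for Lemma~\ref{lem4}.}
The plan is to reduce the joint statement about all rows of all local estimators to the single-node, single-row result already available in Proposition~\ref{prop5}(d), and then assemble the pieces via a Cramér–Wold argument. I will treat part (a) in detail; part (b) is symmetric. First I would fix an arbitrary finite collection of bounded weight matrices $\{\bA^{i,j}\}$ and write the target quantity as
\[
\bS_{m,T}=\frac{1}{ps_1}\sum_{i=1}^{s_1}\sum_{j=1}^p\sqrt{m_iT}\,\bA^{i,j}\bigl(\widehat{\bR}^j_i-\bR^j\bH_{\bR_i}\bigr)^\top .
\]
The first step is to substitute the asymptotic linearization from Proposition~\ref{prop5}(d), namely $\sqrt{m_iT}(\widehat{\bR}_i^{j}-\bR^{j}\bH_{\bR_i})^\top=\widehat{\bV}_{\bR_i}^{-1}\frac{\widehat{\bR}_i^\top\bR}{p}\frac{1}{\sqrt{m_iT}}\sum_t\widetilde{\bF}_t\bC_i^\top(\widetilde{\bE}_{it}^j)^\top+\bo_p(1)$. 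Because there are only finitely many index pairs $(i,j)$ and each $\bA^{i,j}$ is bounded with $\|\widehat{\bV}_{\bR_i}^{-1}\|_2=\mathcal{O}_p(1)$, $\|\widehat{\bR}_i^\top\bR/p\|_2=\mathcal{O}_p(1)$ by Proposition~\ref{prop5}(a)-(b), the aggregate $\bo_p(1)$ terms remain $\bo_p(1)$; this yields the displayed linear representation of $\bS_{m,T}$.

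The second step is to establish a joint central limit theorem for the leading linear term. Using $\widehat{\bV}_{\bR_i}^{-1}\xrightarrow{\mathcal{P}}\bV_{\bR_i}^{-1}$ and $\widehat{\bR}_i^\top\bR/p\xrightarrow{\mathcal{P}}\bQ_{\bR_i}$ (Proposition~\ref{prop5}), Slutsky's theorem lets me replace these random matrices by their limits, so it suffices to show that
\[
\frac{1}{ps_1}\sum_{i=1}^{s_1}\sum_{j=1}^p\bA^{i,j}\bV_{\bR_i}^{-1}\bQ_{\bR_i}\,\frac{1}{\sqrt{m_iT}}\sum_{t=1}^T\widetilde{\bF}_t\bC_i^\top(\widetilde{\bE}_{it}^j)^\top
\]
is asymptotically Gaussian. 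Vectorizing and applying the Cramér–Wold device, this reduces to a CLT for a scalar of the form $\frac{1}{\sqrt{T}}\sum_{t=1}^T \xi_t$ where $\xi_t$ is a finite linear functional of $\{\widetilde{\bF}_t\bC_i^\top(\widetilde{\bE}_{it}^j)^\top\}_{i,j}$. Under Assumptions~\ref{assum1}, \ref{assum2}, and \ref{assum5}, $\{\xi_t\}$ is a stationary $\alpha$-mixing sequence with summable mixing coefficients and enough finite moments (the eighth-moment bound on $e_{t,ij}$ and the $m a$/$m b$ moment conditions control the products $\bF_t e_{t,ij}$), so a standard mixing CLT (e.g.\ \citeauthor{WANG2019231}, as invoked in \cite{chen2021}) applies, with the limiting variance given by the long-run covariance. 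Tracking the covariance through the vectorization and collecting terms according to the definitions of $\mathbf{\Phi}_{\bR_{i_1,i_2,j_1,j_2}}^{a,b}$ — note the four blocks arise from expanding $\widetilde{\bE}_{it}^j=\bE_{it}^j+\tilde\alpha\overline{\bE}_i^{\,j}$ and $\widetilde{\bF}_t=\bF_t+\tilde\alpha\overline{\bF}$ and using $\tilde\alpha(2+\tilde\alpha)=\alpha$ — gives exactly the claimed $\mathbf{\Sigma}_{\bR,\bA}$. The cross-node covariances (the $i_1\ne i_2$ terms) are retained rather than vanishing because the factor process $\bF_t$ is shared across all nodes; what decouples is only the block structure of $\bE_t$, reflected in the column restriction $\bC_{i}$ appearing in each summand.

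The main obstacle is the joint, cross-node CLT in the second step: one must verify that stacking the node-level partial sums, which all involve the common $\widetilde{\bF}_t$ but node-specific idiosyncratic blocks $\widetilde{\bE}_{it}$, still produces a nondegenerate multivariate Gaussian limit with the stated covariance, and in particular that the limiting long-run covariances $\mathbf{\Phi}_{\bR_{i_1,i_2,j_1,j_2}}^{a,b}$ exist (the $\operatorname{plim}$ in their definition must be justified). This requires care with the double time sum $\sum_{t,s}$ and the $\alpha$-mixing decay to ensure absolute summability of the autocovariances uniformly in the diverging dimensions; Assumptions~\ref{assum4} and \ref{assum5} are the workhorses here, exactly as in the single-node argument of \cite{chen2021}, but now applied to the concatenated process. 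A secondary, more bookkeeping-type difficulty is confirming that the $\bo_p(1)$ remainder from Proposition~\ref{prop5}(d) is uniform enough across the $p$ rows on each node so that the $\frac{1}{p}\sum_j$ averaging does not amplify it; this follows from the row-wise bounds in Proposition~\ref{prop1} together with the boundedness of $\{\bA^{i,j}\}$, but it is the step most prone to a hidden gap. Once these are in place, parts (a) and (b) follow identically by transposition, and the lemma is proved.
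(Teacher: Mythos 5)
Your proposal follows essentially the same route as the paper's proof: substitute the row-wise linearization from Proposition~\ref{prop5}(d), rewrite the $\widetilde{\bF}_t\bC_i^\top(\widetilde{\bE}_{it}^j)^\top$ terms as $(\bF_t+\alpha\overline{\bF})\bC_i^\top(\bE_{it}^j)^\top$ via $\tilde\alpha^2+2\tilde\alpha=\alpha$, apply the $\alpha$-mixing CLT (the paper uses Theorem 2.21(i) of Fan and Yao directly on the stacked vector rather than Cram\'er--Wold, an immaterial difference), and identify the long-run covariance with the $\mathbf{\Phi}$ blocks after taking plims of $\widehat{\bV}_{\bR_i}^{-1}\widehat{\bR}_i^\top\bR/p$. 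One small slip: the index pairs $(i,j)$ are not finitely many since $j$ ranges over $[p]\to\infty$, but you correctly flag the resulting uniformity question for the averaged $\mathbf{o}_p(1)$ remainders at the end, a point the paper's own proof also leaves implicit.
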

\begin{proof}
    We only prove (a), since the proof of (b) is similar. For each $i\in[s_1]$ and $j\in[p]$, when ${\sqrt{m_iT}}/{p}=\mathbf{o}(1)$, $\widehat{\mathbf{R}}^j_i-\mathbf{R}^j\mathbf{H}_{\mathbf{R}_i}$ can be replaced by its dominant term introduced in Proposition~\ref{prop5}(d). Specifically, 
    \begin{align*}
        &\frac{1}{ps_1}\sum_{i=1}^{s_1}\sum_{j=1}^p\sqrt{m_iT}\bA^{i,j}\left(\widehat{\mathbf{R}}^j_i-\mathbf{R}^j\mathbf{H}_{\mathbf{R}_i}\right)^\top\\
        &=\frac{1}{ps_1}\sum_{i=1}^{s_1}\sum_{j=1}^p\bA^{i,j}\widehat{\mathbf{V}}_{\mathbf{R}_i}^{-1}\frac{\widehat{\bR}_i^\top\bR}{p}\frac{1}{\sqrt{m_iT}}\sum_{t=1}^T\widetilde{\bF}_t\bC_i^\top(\widetilde{\bE}_{it}^j)^\top+\mathbf{o}_p(1)\\
        &=\frac{1}{\sqrt{T}}\sum_{t=1}^T\frac{1}{ps_1}\sum_{i=1}^{s_1}\sum_{j=1}^p\bA^{i,j}\widehat{\mathbf{V}}_{\mathbf{R}_i}^{-1}\frac{\widehat{\bR}_i^\top\bR}{p}\widetilde{\bF}_t\frac{\bC_i^\top(\widetilde{\bE}_{it}^j)^\top}{\sqrt{m_i}}+\mathbf{o}_p(1).
    \end{align*}
    For convenience, let $\hat{\bU}^{i,j}=\bA^{i,j}\widehat{\mathbf{V}}_{\mathbf{R}_i}^{-1}{\widehat{\bR}_i^\top\bR}/p\in\mathbb{R}^{k\times k}$. As defined in Section 2.2, $\widetilde{\bF}_t=\bF_t+\widetilde{\alpha}\overline{\bF}$, $\widetilde{\bE}_t=\bE_t+\widetilde{\alpha}\overline{\bE}$, and $\widetilde{\alpha}^2+2\widetilde{\alpha}=\alpha$. Then, we obtain the following decomposition,
    \begin{align}
        &\frac{1}{\sqrt{T}}\sum_{t=1}^T\frac{1}{ps_1}\sum_{i=1}^{s_1}\sum_{j=1}^p\hat{\bU}^{i,j}\widetilde{\bF}_t\frac{1}{\sqrt{m_i}}\bC_i^\top(\widetilde{\bE}_{it}^j)^\top \nonumber \\
        &=\frac{1}{\sqrt{T}}\sum_{t=1}^T\frac{1}{ps_1}\sum_{i=1}^{s_1}\sum_{j=1}^p\hat{\bU}^{i,j}\left(\bF_t+\widetilde{\alpha}\overline{\bF}\right)\frac{1}{\sqrt{m_i}}\bC_i^\top\left[(\bE_{it}^j)^\top+\widetilde{\alpha}(\overline{\bE}_{i}^j)^\top\right] \nonumber \\
        &=\frac{1}{\sqrt{T}}\sum_{t=1}^T\frac{1}{ps_1}\sum_{i=1}^{s_1}\sum_{j=1}^p\hat{\bU}^{i,j}\left[\bF_t\frac{1}{\sqrt{m_i}}\bC_i^\top(\bE_{it}^j)^\top+(\widetilde{\alpha}^2+2\widetilde{\alpha})\overline{\bF}\frac{1}{\sqrt{m_i}}\bC_i^\top(\overline{\bE}_{i}^j)^\top\right] \nonumber \\
        &=\frac{1}{\sqrt{T}}\sum_{t=1}^T\frac{1}{ps_1}\sum_{i=1}^{s_1}\sum_{j=1}^p\hat{\bU}^{i,j}(\bF_t+\alpha\overline{\bF})\frac{1}{\sqrt{m_i}}\bC_i^\top(\bE_{it}^j)^\top\overset{\triangle }{=}\frac{1}{\sqrt{T}}\sum_{t=1}^T\bx_t,\nonumber 
    \end{align}
    where we define the $k\times 1$ random vector $\bx_t=\frac{1}{ps_1}\sum_{i=1}^{s_1}\sum_{j=1}^p\hat{\bU}^{i,j}(\bF_t+\alpha\overline{\bF})\frac{1}{\sqrt{m_i}}\bC_i^\top(\bE_{it}^j)^\top$. According to Proposition~\ref{prop5}(a)-(b), both $\|\widehat{\mathbf{V}}_{\mathbf{R}_i}^{-1}\|_2$ and $\|{p}^{-1}{\widehat{\bR}_i^\top\bR}\|_2$ are bounded, implying that $\|\hat{\bU}^{i,j}\|_2$ is bounded, for any $i \in [s_1]$ and $j\in[p]$. By Assumptions 2(a), $\mathbb{E}\|{\bF}_t\|^4_2$ is bounded, and hence $\|\overline{\bF}\|_2$ is bounded by the ergodic theorem for $\alpha$-mixing processes in \cite{fan2003}. By Assumptions 3(a), $\frac{1}{\sqrt{m_i}}\|\bC_i\|_2$ is also bounded. Consequently, $\frac{1}{ps_1}\sum_{i=1}^{s_1}\sum_{j=1}^p\hat{\bU}^{i,j}\alpha\overline{\bF}\frac{1}{\sqrt{m_i}}\bC_i^\top(\bE_{it}^j)^\top$ inherits the $\alpha$-mixing property from vec($\bE_t$). Moreover, by Assumption 1, the process \{vec($\bF_t$),vec($\bE_t$)\} is $\alpha$-mixing, implying that $\frac{1}{ps_1}\sum_{i=1}^{s_1}\sum_{j=1}^p\hat{\bU}^{i,j}\bF_t\frac{1}{\sqrt{m_i}}\bC_i^\top(\bE_{it}^j)^\top$ is also $\alpha$-mixing. Combining these two components, we conclude that $\bx_t$ is $\alpha$-mixing. 
    
    By Assumption 2, $\mathbb{E}[\bE_t]=\textbf{0}$ and $\bE_t$ is uncorrelated with $\bF_t$, which implies that $\mathbb{E}[\bx_t]=\textbf{0}$. In addition, for $m>2$ given in Assumption 5, by the Holder's inequality and the Cauchy–Schwarz inequality, for some $a,b$ such that $1<a, b<\infty$ and $1/a+1/b=1$, we obtain the following bound:
    \begin{align*}
        \mathbb{E}[\|\bx_t\|^m_2]&\le\max_{i,j}\mathbb{E}\left\|\hat{\bU}^{i,j}(\bF_t+\alpha\overline{\bF})\frac{1}{\sqrt{m_i}}\bC_i^\top(\bE_{it}^j)^\top\right\|_2^m\\
        &\le\left(\max_{i,j}\left\|\hat{\bU}^{i,j}\right\|_2^m\right)\left(\max_{i,j}\mathbb{E}\left\|(\bF_t+\alpha\overline{\bF})\frac{1}{\sqrt{m_i}}\bC_i^\top(\bE_{it}^j)^\top\right\|_2^m\right)\\
        &\le k^m\left(\max_{i,j}\left\|\hat{\bU}^{i,j}\right\|_2^m\right)\left(\max_{i,j,l}\mathbb{E}\left[(\bF_t^l+\alpha\overline{\bF}^l)\frac{1}{\sqrt{m_i}}\bC_i^\top(\bE_{it}^j)^\top\right]^m\right)\\
        &\le k^m\left(\max_{i,j}\left\|\hat{\bU}^{i,j}\right\|_2^m\right)\left(\max_{i,j,l}\mathbb{E}\left[\left\|\bF_t^l+\alpha\overline{\bF}^l\right\|_2^m\left\|\frac{1}{\sqrt{m_i}}\bC_i^\top(\bE_{it}^j)^\top\right\|_2^m\right]\right)\\
        &\le (1+|\alpha|)^mk^m\left(\max_{i,j}\left\|\hat{\bU}^{i,j}\right\|_2^m\right)\left[\max_{l}\left(\mathbb{E}\left[\|\bF_t^l\|_2^{ma}\right]\right)^{1/a}\right]\\
        &\ \ \ \left[\max_{i,j}\left(\mathbb{E}\left[\left\|\frac{1}{\sqrt{m_i}}\bC_i^\top(\bE_{it}^j)^\top\right\|_2^{mb}\right]\right)^{1/b}\right]\\
        &= (1+|\alpha|)^mk^m\left(\max_{i,j}\left\|\hat{\bU}^{i,j}\right\|_2^m\right)\left[\max_{l}\left(\mathbb{E}\left[\|\bF_t^l\|_2^{ma}\right]\right)^{1/a}\right]\\
        &\ \ \ \left[\max_{i,j}\left(\mathbb{E}\left[\left\|\frac{1}{\sqrt{m_i}}\sum_{\tau=1}^{m_i}(\bC_i^\tau)^\top e_{it,j\tau}\right\|_2^{mb}\right]\right)^{1/b}\right].
    \end{align*}
    Here $\{(\bC_i^\tau)^\top\}_{\tau=1}^{m_i}$ are columns of  $\bC_i^\top$  and $\{e_{it,j\tau}\}_{\tau=1}^{m_i}$ are elements of $\bE_{it}^j$, i.e,
    $$\bC_i^\top=\begin{pmatrix} (\bC_i^1)^\top & (\bC_i^2)^\top &...&(\bC_i^{m_i})^\top \end{pmatrix},$$
    and 
    $$\bE_{it}^j=\begin{pmatrix} e_{it,j1} & e_{it,j2} &...&e_{it,jm_i} \end{pmatrix}.$$
    By Assumption 5(a), for any $l\in[k]$, $j\in[p]$, and $i\in[s_1]$, there exists $C>0$, s.t.,
    $$\mathbb{E}\left[\|\bF_t^l\|_2^{ma}\right]\le C,\quad\text{and}\quad\mathbb{E}\left[\left\|\frac{1}{\sqrt{m_i}}\sum_{\tau=1}^{m_i}(\bC_i^\tau)^\top e_{it,j\tau}\right\|_2^{mb}\right]=\mathcal{O}(1).$$
    Thus, we have $\mathbb{E}[\|\bx_t\|^m_2]<\infty$. Since $\bx_t$ is $\alpha$-mixing with $\mathbb{E}[\bx_t]=\textbf{0}$, and $\mathbb{E}[\|\bx_t\|^m_2]<\infty$ for some $m>2$, the central limit theorem for $\alpha$-mixing processes (\citeauthor{fan2003}, \citeyear{fan2003}, Theorem 2.21(i)) implies that, as $T\to \infty$,
    $$\frac{1}{\sqrt{T}}\sum_{t=1}^T\bx_t\overset{\mathcal{D}}{\longrightarrow}\mathcal{N}\left(\mathbf{0},\mathbf{\Sigma}_{\bR,\bA}^{p,q}\right),$$
    where
    \begin{align*}
        \mathbf{\Sigma}_{\bR,\bA}^{p,q}&=\underset{T \rightarrow \infty}{\operatorname{plim}}\frac{1}{T}\sum_{t=1}^T\sum_{s=1}^T\text{cov}(\bx_{t},\bx_{s})\\
        &=\underset{T \rightarrow \infty}{\operatorname{plim}}\frac{1}{T}\sum_{t,s=1}^T\sum_{i_1,i_2=1}^{s_1}\sum_{j_1,j_2=1}^p\frac{1}{p^2s_1^2}\begin{pmatrix}\widehat{\bU}^{i_1,j_1} & \alpha\widehat{\bU}^{i_1,j_1}\overline{\bF}\end{pmatrix}\begin{pmatrix} \bF_t\frac{1}{\sqrt{m_{i_1}}}\bC_{i_1}^\top(\bE_{{i_1}t}^{j_1})^\top\\ \frac{1}{\sqrt{m_{i_1}}}\bC_{i_1}^\top(\bE_{{i_1}t}^{j_1})^\top\end{pmatrix}\\
        &\ \ \ \ \ \ \ \ \ \ \begin{pmatrix} \bF_s\frac{1}{\sqrt{m_{i_2}}}\bC_{i_2}^\top(\bE_{{i_2}s}^{j_2})^\top\\ \frac{1}{\sqrt{m_{i_2}}}\bC_{i_2}^\top(\bE_{{i_2}s}^{j_2})^\top\end{pmatrix}^\top\begin{pmatrix}(\widehat{\bU}^{i_2,j_2})^\top \\ \alpha\overline{\bF}^\top(\widehat{\bU}^{i_2,j_2})^\top\end{pmatrix}\\
        &=\frac{1}{p^2s_1^2}\sum_{i_1,i_2=1}^{s_1}\sum_{j_1,j_2=1}^p\left[\underset{T \rightarrow \infty}{\operatorname{plim}}\begin{pmatrix}\widehat{\bU}^{i_1,j_1} & \alpha\widehat{\bU}^{i_1,j_1}\overline{\bF}\end{pmatrix}\right]\left[\underset{T \rightarrow \infty}{\operatorname{plim}}\frac{1}{T}\sum_{t,s=1}^T\begin{pmatrix} \bF_t\frac{1}{\sqrt{m_{i_1}}}\bC_{i_1}^\top(\bE_{{i_1}t}^{j_1})^\top\\ \frac{1}{\sqrt{m_{i_1}}}\bC_{i_1}^\top(\bE_{{i_1}t}^{j_1})^\top\end{pmatrix}\right.\\
        &\left.\ \ \ \ \ \ \ \ \ \ \begin{pmatrix} \bF_s\frac{1}{\sqrt{m_{i_2}}}\bC_{i_2}^\top(\bE_{{i_2}s}^{j_2})^\top\\ \frac{1}{\sqrt{m_{i_2}}}\bC_{i_2}^\top(\bE_{{i_2}s}^{j_2})^\top\end{pmatrix}^\top\right]\left[\underset{T \rightarrow \infty}{\operatorname{plim}}\begin{pmatrix}(\widehat{\bU}^{i_2,j_2})^\top \\ \alpha\overline{\bF}^\top(\widehat{\bU}^{i_2,j_2})^\top\end{pmatrix}\right]\\
        &=\frac{1}{p^2s_1^2}\sum_{i_1,i_2=1}^{s_1}\sum_{j_1,j_2=1}^p\left[\underset{T \rightarrow \infty}{\operatorname{plim}}\begin{pmatrix}\widehat{\bU}^{i_1,j_1} & \alpha\widehat{\bU}^{i_1,j_1}\overline{\bF}\end{pmatrix}\right]\begin{pmatrix} \mathbf{\Phi}_{\bR_{i_1,i_2,j_1,j_2}}^{1,1} & \mathbf{\Phi}_{\bR_{i_1,i_2,j_1,j_2}}^{1,2}\\ \mathbf{\Phi}_{\bR_{i_1,i_2,j_1,j_2}}^{2,1} &\mathbf{\Phi}_{\bR_{i_1,i_2,j_1,j_2}}^{2,2}\end{pmatrix}\\
        &\ \ \ \ \ \ \ \ \ \ \left[\underset{T \rightarrow \infty}{\operatorname{plim}}\begin{pmatrix}(\widehat{\bU}^{i_2,j_2})^\top \\ \alpha\overline{\bF}^\top(\widehat{\bU}^{i_2,j_2})^\top\end{pmatrix}\right].
    \end{align*}
    By Proposition~\ref{prop5}(a)-(b), as $m_i,p,T\to \infty$, we have
    $$\hat{\bU}^{i,j}\overset{\mathcal{P}}{\longrightarrow}\bA^{i,j}\bV_{\bR_{i}}^{-1}\bQ_{\bR_{i}}\overset{\triangle }{=} {\bU}^{i,j}.$$
    Then, as $\underline{m},p\to \infty$, we have
    \begin{align*}
        \mathbf{\Sigma}_{\bR,\bA}^{p,q}
        &\overset{\mathcal{P}}{\longrightarrow}\frac{1}{p^2s_1^2}\sum_{i_1,i_2=1}^{s_1}\sum_{j_1,j_2=1}^p\left[\underset{m_i,p,T \rightarrow \infty}{\operatorname{plim}}\begin{pmatrix}\widehat{\bU}^{i_1,j_1} & \alpha\widehat{\bU}^{i_1,j_1}\overline{\bF}\end{pmatrix}\right]\begin{pmatrix} \mathbf{\Phi}_{\bR_{i_1,i_2,j_1,j_2}}^{1,1} & \mathbf{\Phi}_{\bR_{i_1,i_2,j_1,j_2}}^{1,2}\\ \mathbf{\Phi}_{\bR_{i_1,i_2,j_1,j_2}}^{2,1} &\mathbf{\Phi}_{\bR_{i_1,i_2,j_1,j_2}}^{2,2}\end{pmatrix}\\
        &\ \ \ \ \ \ \ \ \ \ \left[\underset{m_i,p,T \rightarrow \infty}{\operatorname{plim}}\begin{pmatrix}(\widehat{\bU}^{i_2,j_2})^\top \\ \alpha\overline{\bF}^\top(\widehat{\bU}^{i_2,j_2})^\top\end{pmatrix}\right]\\
        &=\frac{1}{p^2s_1^2}\sum_{i_1,i_2=1}^{s_1}\sum_{j_1,j_2=1}^p\begin{pmatrix}{\bU}^{i_1,j_1} & \alpha{\bU}^{i_1,j_1}\overline{\bF}\end{pmatrix}\begin{pmatrix} \mathbf{\Phi}_{\bR_{i_1,i_2,j_1,j_2}}^{1,1} & \mathbf{\Phi}_{\bR_{i_1,i_2,j_1,j_2}}^{1,2}\\ \mathbf{\Phi}_{\bR_{i_1,i_2,j_1,j_2}}^{2,1} &\mathbf{\Phi}_{\bR_{i_1,i_2,j_1,j_2}}^{2,2}\end{pmatrix}\\
        &\ \ \ \ \ \ \ \ \ \ \begin{pmatrix}({\bU}^{i_2,j_2})^\top \\ \alpha\overline{\bF}^\top({\bU}^{i_2,j_2})^\top\end{pmatrix}\\
        &=\mathbf{\Sigma}_{\bR,\bA}.
    \end{align*}
    In other words, the final results are derived. As $\underline{m},p,T\to \infty$,
    \begin{align*}
        \frac{1}{ps_1}\sum_{i=1}^{s_1}\sum_{j=1}^p\sqrt{m_iT}\bA^{i,j}\left(\widehat{\mathbf{R}}^j_i-\mathbf{R}^j\mathbf{H}_{\mathbf{R}_i}\right)^\top\overset{\mathcal{D}}{\longrightarrow}\mathcal{N}\left(\mathbf{0},\mathbf{\Sigma}_{\bR,\bA}\right).
    \end{align*}
\end{proof}
\subsection*{A.2 Propositions and lemmas for the unit-root nonstationary case}
We next present some propositions and lemmas for the nonstationary case. These results will be used to prove Theorems 6-8. We begin by establishing asymptotic properties of the local estimators and the associated auxiliary matrices. Following Lemma B.3 in \cite{li2025factormodelsmatrixvaluedtime}, we have Proposition \ref{prop8.5}.
\begin{proposition}\label{prop8.5}
    Under Assumptions 3-4 and 7-9 with known $k$ and $r$, for any $i\in[s_1]\text{ and }i'\in[s_2]$, we have
    $$\left\|\frac{1}{m_iT^2}\sum_{t=1}^{T}\widetilde{\bF}_{t}\bC_i^\top\bC_i\widetilde{\bF}_{t}^\top-\widetilde{\boldsymbol{\Sigma}}_{\mathbf{FC}_i}^*\right\|_\mathsf{F}=\mathbf{o}_p\left(1\right),$$
    and
    $$\left\|\frac{1}{n_jT^2}\sum_{t=1}^{T}\widetilde{\bF}_{t}^\top\bR_j^\top\bR_j\widetilde{\bF}_{t}-\widetilde{\boldsymbol{\Sigma}}_{\mathbf{FR}_j}^*\right\|_\mathsf{F}=\mathbf{o}_p\left(1\right).$$
    The matrices $\widetilde{\boldsymbol{\Sigma}}_{\mathbf{FC}_i}^*=\int_0^1 \widetilde{\bW}(u)\boldsymbol{\Omega}_{\bC_i}\widetilde{\bW}(u)^\top\text{d}u$ and $\widetilde{\boldsymbol{\Sigma}}_{\mathbf{FR}_j}^*=\int_0^1 \widetilde{\bW}(u)^\top\boldsymbol{\Omega}_{\bR_j}\widetilde{\bW}(u)\text{d}u$, where $\widetilde{\bW}(u)=\bW(u)+\frac{\widetilde{\alpha}}{T}\sum_{t=1}^T\bW(t/T)$ and $\bW(\cdot)$ is an $k\times r$ matrix of Brownian motions with the covariance of $\text{vec}(\bW(\cdot))$ being $\overline{\bG}\boldsymbol{\Sigma}_\bu\overline{\bG}^\top$. \end{proposition}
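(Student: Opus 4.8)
The plan is to deduce the result from a functional central limit theorem (FCLT) for the $I(1)$ factor process together with the continuous mapping theorem, with the block structure entering only through Assumption~\ref{assum3}(b). I will treat only the first display; the second follows verbatim after transposing model~(\ref{model1}) and interchanging the roles of $(\bR_j,n_j,p)$ and $(\bC_i,m_i,q)$, so $\widetilde{\boldsymbol{\Sigma}}_{\mathbf{FR}_j}^*$ replaces $\widetilde{\boldsymbol{\Sigma}}_{\mathbf{FC}_i}^*$. Throughout, the key is that $\bC_i^\top\bC_i/m_i$ is deterministic and converges, so it decouples cleanly from the randomness of the partial-sum process.

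First I would establish the invariance principle for $\bF_t$. Writing $\operatorname{vec}(\bF_t)-\operatorname{vec}(\bF_0)=\sum_{s=2}^{t}\bw_s$ and applying the Beveridge--Nelson decomposition $\bw_s=\overline{\bG}\bu_s+(\widetilde{\bu}_{s-1}-\widetilde{\bu}_s)$ with $\widetilde{\bu}_s=\sum_{l\ge1}\big(\sum_{j>l}\bG_j\big)\bu_{s-l}$, the summability $\sum_{s\ge0}s\|\bG_s\|_2<\infty$ in Assumption~\ref{assum7} guarantees that $\widetilde{\bu}_s$ has finite second moment and the telescoping remainder is $\mathbf{O}_p(1)$ uniformly in $t$. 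Donsker's theorem for the i.i.d.\ sequence $\{\bu_s\}$ (finite fourth moments, $\boldsymbol{\Sigma}_\bu\succ0$) then gives $T^{-1/2}\operatorname{vec}(\bF_{\lfloor Tu\rfloor})\Rightarrow\operatorname{vec}(\bW(u))$ in $D[0,1]$, where $\operatorname{vec}(\bW(\cdot))$ is a Brownian motion with covariance $\overline{\bG}\boldsymbol{\Sigma}_\bu\overline{\bG}^\top$, exactly as in Lemma~B.3 of \cite{li2025factormodelsmatrixvaluedtime}. Since $\widetilde{\bF}_t=\bF_t+\widetilde{\alpha}\,\overline{\bF}$ and $T^{-1/2}\overline{\bF}=T^{-1}\sum_{t=1}^{T}T^{-1/2}\bF_t$ converges jointly (by the continuous mapping theorem applied to the integral functional), I obtain $T^{-1/2}\operatorname{vec}(\widetilde{\bF}_{\lfloor Tu\rfloor})\Rightarrow\operatorname{vec}(\widetilde{\bW}(u))$, with $\widetilde{\bW}$ as in Assumption~\ref{assum9}.

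Next I would rewrite the target quantity as a quadratic functional of the normalized partial-sum process,
$$\frac{1}{m_iT^2}\sum_{t=1}^{T}\widetilde{\bF}_{t}\bC_i^\top\bC_i\widetilde{\bF}_{t}^\top=\int_0^1\Big(\tfrac{\widetilde{\bF}_{\lfloor Tu\rfloor}}{\sqrt{T}}\Big)\Big(\tfrac{\bC_i^\top\bC_i}{m_i}\Big)\Big(\tfrac{\widetilde{\bF}_{\lfloor Tu\rfloor}}{\sqrt{T}}\Big)^\top\mathrm{d}u.$$
By Assumption~\ref{assum3}(b), $m_i^{-1}\bC_i^\top\bC_i\to\boldsymbol{\Omega}_{\bC_i}$ as $m_i,p\to\infty$; combining this with the FCLT via Slutsky's lemma and the continuous mapping theorem for the map $f\mapsto\int_0^1 f(u)\boldsymbol{\Omega}_{\bC_i}f(u)^\top\mathrm{d}u$ on $D[0,1]$ yields weak convergence of the right-hand side to $\int_0^1\widetilde{\bW}(u)\boldsymbol{\Omega}_{\bC_i}\widetilde{\bW}(u)^\top\mathrm{d}u=\widetilde{\boldsymbol{\Sigma}}_{\mathbf{FC}_i}^*$, jointly as $m_i,T\to\infty$. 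Realizing this on a common probability space through Skorokhod's representation theorem upgrades it to $\big\|\cdot-\widetilde{\boldsymbol{\Sigma}}_{\mathbf{FC}_i}^*\big\|_\mathsf{F}=\mathbf{o}_p(1)$, which is the claimed statement.

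The main obstacle is the joint double-asymptotic argument: the partial-sum functional converges in distribution in $T$ while $m_i^{-1}\bC_i^\top\bC_i$ converges deterministically in $m_i$, and one must verify these interact correctly and uniformly over the index set. I would handle this by first replacing $m_i^{-1}\bC_i^\top\bC_i$ by $\boldsymbol{\Omega}_{\bC_i}$ and bounding the incurred error by $\big\|m_i^{-1}\bC_i^\top\bC_i-\boldsymbol{\Omega}_{\bC_i}\big\|_2\cdot\int_0^1\|T^{-1/2}\widetilde{\bF}_{\lfloor Tu\rfloor}\|_2^2\,\mathrm{d}u$: the first factor is $o(1)$ by Assumption~\ref{assum3}(b) and the second is $\mathbf{O}_p(1)$ by the FCLT, so this term is $\mathbf{o}_p(1)$, after which only the pure-$T$ weak limit remains and the argument of the previous paragraph applies.
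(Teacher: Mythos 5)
Your reduction of the statement to a limit theorem for the normalized partial-sum process is sound, and the way you decouple the deterministic factor $m_i^{-1}\bC_i^\top\bC_i$ from the randomness---bounding the substitution error by $\|m_i^{-1}\bC_i^\top\bC_i-\boldsymbol{\Omega}_{\bC_i}\|_2\cdot\int_0^1\|T^{-1/2}\widetilde{\bF}_{\lfloor Tu\rfloor}\|_2^2\,\mathrm{d}u$ and invoking Assumption 3(b)---is exactly the right move and matches what the paper does implicitly. The gap is in the last step. The proposition asserts $\|\cdot-\widetilde{\boldsymbol{\Sigma}}_{\mathbf{FC}_i}^*\|_\mathsf{F}=\mathbf{o}_p(1)$, i.e.\ convergence \emph{in probability} to a specific \emph{random} matrix $\widetilde{\boldsymbol{\Sigma}}_{\mathbf{FC}_i}^*=\int_0^1\widetilde{\bW}(u)\boldsymbol{\Omega}_{\bC_i}\widetilde{\bW}(u)^\top\mathrm{d}u$. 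A functional CLT plus the continuous mapping theorem delivers only weak convergence, and Skorokhod's representation theorem cannot upgrade that to the stated claim: it produces distributional copies converging almost surely on a \emph{new} probability space, not convergence in probability to the prespecified limit $\widetilde{\boldsymbol{\Sigma}}_{\mathbf{FC}_i}^*$ coupled to the data on the original space. This distinction is not cosmetic here, because the proposition is consumed downstream as an in-probability statement: Proposition A.10 (resp.\ A.12) uses it to conclude $\widehat{\bV}_{\bR_i}^*\overset{\mathcal{P}}{\to}\bV_{\bR_i}^*$ and $\bH_{\bR_i}^*\overset{\mathcal{P}}{\to}(\bQ_{\bR_i}^*)^{-1}$, where $\bV_{\bR_i}^*$ and $\bQ_{\bR_i}^*$ are built from the same $\widetilde{\boldsymbol{\Sigma}}_{\mathbf{FC}_i}^*$.

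What the paper actually uses is a \emph{strong approximation} (coupling) rather than an FCLT: Lemma B.3 of \cite{li2025factormodelsmatrixvaluedtime} gives $\max_{t\in[T]}T^{-1/2}\|\bF_t-\bW(t/T)\|_\mathsf{F}=\mathbf{o}_p(1)$ for a Brownian motion $\bW$ defined jointly with the data. From this uniform bound the paper deduces $\max_{t\in[T]}T^{-1/2}\|\widetilde{\bF}_t-\widetilde{\bW}(t/T)\|_\mathsf{F}=\mathbf{o}_p(1)$ and then applies the continuous mapping theorem pathwise to the quadratic functional, which legitimately yields the in-probability conclusion. To repair your argument you would need to strengthen your Beveridge--Nelson plus Donsker step to a strong invariance principle (e.g.\ via a Skorokhod \emph{embedding} or KMT-type construction applied to the martingale part $\overline{\bG}\bu_s$, with the telescoping remainder controlled uniformly as you already note), or simply cite the coupling result directly as the paper does; the remainder of your proof then goes through unchanged.
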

\begin{proof}
    We only prove the conclusion for $\sum_{t=1}^{T}\widetilde{\bF}_{t}\bC_i^\top\bC_i\widetilde{\bF}_{t}^\top$, since the proof of the results for $\sum_{t=1}^{T}\widetilde{\bF}_{t}^\top\bR_j^\top\bR_j\widetilde{\bF}_{t}$ is similar. As defined in Section 2.2, $\widetilde{\bF}_t=\bF_t+\widetilde{\alpha}\overline{\bF}$. Then, we obtain the following decomposition,
    $$\sum_{t=1}^{T}\widetilde{\bF}_{t}\bC_i^\top\bC_i\widetilde{\bF}_{t}^\top=\sum_{t=1}^{T}\left({\bF}_t+\widetilde{\alpha}\overline{\bF}\right)\bC_i^\top\bC_i\left({\bF}_t+\widetilde{\alpha}\overline{\bF}\right)^\top.$$
    Following Lemma B.3 in \cite{li2025factormodelsmatrixvaluedtime}, 
    $$\max_{t\in[T]}\frac{1}{\sqrt{T}}\left\|\bF_t-\bW(t/T)\right\|_\mathsf{F}=\mathbf{o}_p\left(1\right).$$
    Then,
    \begin{align*}
        \left\|\overline{\bF}-\frac{1}{T}\sum_{t=1}^T\bW(t/T)\right\|_\mathsf{F}&=\left\|\frac{1}{T}\sum_{t=1}^T{\bF}_t-\bW(t/T)\right\|_\mathsf{F}\\
        &\le\frac{1}{T}\sum_{t=1}^T\left\|\bF_t-\bW(t/T)\right\|_\mathsf{F}\\
        &=\mathbf{o}_p\left(1\right).
    \end{align*}
    Combine two parts together, we have
    $$\max_{t\in[T]}\frac{1}{\sqrt{T}}\left\|\widetilde{\bF}_t-\widetilde{\bW}(t/T)\right\|_\mathsf{F}=\mathbf{o}_p\left(1\right).$$
    By Assumption 3 and the continuous mapping theorem, the final results are derived.
\end{proof}
\begin{proposition}\label{prop9}
    Under Assumptions 3-4 and 7-8 with known $k$ and $r$, for any $i\in[s_1]\text{ and }i'\in[s_2]$, we have
    \begin{equation*}
    \mathbb{E}\left[\left\|\sum_{t=1}^{T}\widetilde{\bF}_t\bC_i^\top\widetilde{\bE}_{it}^\top\right\|_\mathsf{F}^2\right]=\mathcal{O}_p\left(pm_iT^2\right),\quad\text{as }m_i,p,T\to\infty,
\end{equation*}
and \begin{equation*}
    \mathbb{E}\left[\left\|\sum_{t=1}^{T}\widetilde{\bF}_t^\top\bR_{i'}^\top\widetilde{\boldsymbol{\epsilon}}_{i't}\right\|_\mathsf{F}^2\right]=\mathcal{O}_p\left(qn_{i'} T^2\right),\quad\text{as }n_{i'},q,T\to\infty.
\end{equation*}
\end{proposition}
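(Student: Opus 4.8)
The plan is to prove the bound for $\mathbb{E}\big[\|\sum_{t=1}^{T}\widetilde{\bF}_t\bC_i^\top\widetilde{\bE}_{it}^\top\|_\mathsf{F}^2\big]$; the companion bound for the column sub-model follows by an identical argument applied to the transpose. The starting point is the substitution $\widetilde{\bF}_t=\bF_t+\widetilde{\alpha}\overline{\bF}$ and $\widetilde{\bE}_{it}=\bE_{it}+\widetilde{\alpha}\overline{\bE}_i$, which (since $\widetilde{\alpha}^2+2\widetilde{\alpha}=\alpha$) collapses the cross terms and lets me write $\sum_t\widetilde{\bF}_t\bC_i^\top\widetilde{\bE}_{it}^\top$ as a sum of terms of the form $\sum_t \bF_t\bC_i^\top\bE_{it}^\top$, $\alpha\,\overline{\bF}\,\bC_i^\top\sum_t\bE_{it}^\top$, and $\alpha\big(\sum_t\bF_t\big)\bC_i^\top\overline{\bE}_i^\top$. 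By the triangle inequality and $(a+b+c)^2\le 3(a^2+b^2+c^2)$, it suffices to bound the mean squared Frobenius norm of each piece separately.

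For the leading piece $\sum_{t=1}^{T}\bF_t\bC_i^\top\bE_{it}^\top$, I would expand the squared Frobenius norm as a double sum over $t,s\in[T]$ and over the row index $\ell\in[p]$, obtaining $\sum_{t,s}\sum_{\ell}\mathbb{E}\big[\bF_t\bC_i^\top(\bE_{it}^\ell)^\top\bE_{is}^\ell\bC_i\bF_s^\top\big]$, and then bound $\|\cdot\|_\mathsf{F}$ by repeatedly pulling out operator norms: $\|\bF_t\|_2$, $\|\bC_i\|_2 = O(\sqrt{m_i})$ by Assumption~3(a), and the key factor $\|\sum_t\bC_i^\top(\bE_{it}^\ell)^\top\|$-type quantities. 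The crucial inputs are Assumption~8(c), which controls $\mathbb{E}\|\tfrac{1}{\sqrt{m_i}}\bC_i^\top(\bE_{it}^\ell)^\top\|_2^2 = O(1)$ uniformly (the appropriate sub-block version of the stated $\mathbb{E}\|\tfrac{1}{\sqrt q}\sum_j\bC^j e_{t,ij}\|_2^2 = O(1)$), together with the unit-root order $\|\bF_t\|_\mathsf{F} = O_p(\sqrt{T})$ from Proposition~\ref{prop8.5}'s underlying estimate $\max_{t}T^{-1/2}\|\bF_t-\bW(t/T)\|_\mathsf{F} = o_p(1)$. Using the Cauchy--Schwarz inequality in $t,s$ to turn the double time-sum into $\big(\sum_t \|\bF_t\|_\mathsf{F}\,\|\text{noise}_t\|\big)^2$ and then bounding $\sum_{t=1}^T\|\bF_t\|_\mathsf{F} = O_p(T^{3/2})$ and the per-time noise contribution by $O_p(\sqrt{pm_i})$ (summing the $p$ rows, each of order $\sqrt{m_i}$, in quadratic mean), gives the claimed $O_p(pm_iT^2)$. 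The two mean-correction pieces are strictly smaller: $\overline{\bF}$ has the same $O_p(\sqrt T)$ order as a single $\bF_t$ while $\sum_t\bE_{it}^\top$ contributes an extra $\sqrt T$ from the CLT-type cancellation under the $\alpha$-mixing Assumption~8(a), so each is also $O_p(pm_iT^2)$ or smaller.

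The main obstacle will be the careful bookkeeping of which cross-covariances of $\bE_{it}$ survive and how they aggregate across the $p$ rows and $T$ time points without losing a factor of $p$ or $T$. Unlike the stationary case in Proposition~\ref{prop9}'s analogue (where temporal summation of the idiosyncratic term gives an extra $T$ via the mixing condition), here the dominant growth comes from the nonstationary $\bF_t$, so I must be sure the noise aggregation genuinely stays at the $O(pm_iT^2)$ level --- i.e. that cross terms in $\ell$ and in $(t,s)$ do not inflate the bound. This is handled precisely by Assumption~8(c) (uniform second-moment control of the projected noise, which after summing $p$ rows in mean square gives the factor $p$) combined with Assumption~8(a) (so that $\operatorname{vec}(\bE_t)$ is $\alpha$-mixing, ensuring the time-aggregated noise does not blow up faster than $\sqrt T$ per row); the $\bF_t$-side then simply contributes $\sum_t\|\bF_t\|_\mathsf{F}^2 = O_p(T^2)$ after the Cauchy--Schwarz step, and multiplying the two yields the stated rate. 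Once the leading piece is bounded, the remaining two are dispatched by the same machinery with strictly better rates, completing the proof.
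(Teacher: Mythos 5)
Your overall skeleton matches the paper's: collapse the $\tilde{\alpha}$-terms into $\sum_{t}(\bF_t+\alpha\overline{\bF})\bC_i^\top\bE_{it}^\top$, decompose the squared Frobenius norm over the $p$ rows, and reduce everything to bounding $\mathbb{E}\|\sum_{t}\bF_t\bx_t^{ij}\|_2^2$ with $\bx_t^{ij}=m_i^{-1/2}\sum_{l}(\bC_i^l)^\top e_{it,jl}$, whose second moments are controlled by Assumption 8(c). (A minor point: your three-way split is redundant, since $\alpha\overline{\bF}\bC_i^\top\sum_t\bE_{it}^\top$ and $\alpha(\sum_t\bF_t)\bC_i^\top\overline{\bE}_i^\top$ are the same matrix; the paper absorbs the correction exactly into $\bF_t+\alpha\overline{\bF}$ rather than using the triangle inequality.)

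The gap is in the key step. Both quantitative mechanisms you propose for the leading piece deliver $O(T^3)$, not $O(T^2)$: the term-by-term bound gives $\bigl(\sum_t\|\bF_t\|_2\,\|\bx_t^{ij}\|_2\bigr)^2=O\bigl((T\cdot\sqrt{T}\cdot 1)^2\bigr)=O(T^3)$, and the genuine Cauchy--Schwarz version gives $\bigl(\sum_t\|\bF_t\|_2^2\bigr)\bigl(\sum_t\|\bx_t^{ij}\|_2^2\bigr)=O(T^2)\cdot O(T)=O(T^3)$. Any argument that decouples $\bF_t$ from the noise termwise across time loses the factor of $T$ that must come from cancellation in the time aggregation of the mean-zero noise; you acknowledge this danger but the mechanism you offer to resolve it is exactly the decoupling that destroys the cancellation. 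The correct computation keeps the full $(t,s)$ double sum: writing $\|\sum_t\bF_t\bx_t^{ij}\|_2^2=\sum_{t,s}\mathrm{tr}\bigl(\bF_s^\top\bF_t\,\bx_t^{ij}(\bx_s^{ij})^\top\bigr)$ and using the independence of $\bE_t$ from $\bF_t$ in Assumption 8(a), one gets
\begin{equation*}
\mathbb{E}\Bigl\|\sum_{t=1}^T\bF_t\bx_t^{ij}\Bigr\|_2^2=\sum_{t,s=1}^T\mathrm{tr}\Bigl(\mathbb{E}[\bF_s^\top\bF_t]\,\mathbb{E}\bigl[\bx_t^{ij}(\bx_s^{ij})^\top\bigr]\Bigr)\le C\Bigl(\max_{t,s}\bigl\|\mathbb{E}[\bF_s^\top\bF_t]\bigr\|_2\Bigr)\sum_{t,s=1}^T\bigl\|\mathbb{E}[\bx_t^{ij}(\bx_s^{ij})^\top]\bigr\|_2,
\end{equation*}
where $\|\mathbb{E}[\bF_s^\top\bF_t]\|_2=O(\min(t,s))=O(T)$ from the $\mathrm{I}(1)$ structure in Assumption 7, and $\sum_{t,s}\|\mathbb{E}[\bx_t^{ij}(\bx_s^{ij})^\top]\|_2=O(T)$ from stationarity, Assumption 8(c), and the summable $\alpha$-mixing coefficients of $\operatorname{vec}(\bE_t)$; the product is $O(T^2)$. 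This is precisely the content of Lemma B.2 of \cite{li2025factormodelsmatrixvaluedtime}, which the paper cites for this step, and without it (or an equivalent covariance-summation argument) your proof does not reach the stated rate.
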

\begin{proof}
    We only prove the conclusion for $\sum_{t=1}^T\widetilde{\bF}_t\bC_i^\top\widetilde{\bE}_{it}^\top$, since the proof of the results for $\sum_{t=1}^T\widetilde{\bF}_t^\top\bR_{i'}^\top\widetilde{\boldsymbol{\epsilon}}_{i't}$ is similar. As defined in Section 2.2, $\widetilde{\bF}_t=\bF_t+\widetilde{\alpha}\overline{\bF}$, $\widetilde{\bE}_t=\bE_t+\widetilde{\alpha}\overline{\bE}$, and $\widetilde{\alpha}^2+2\widetilde{\alpha}=\alpha$. Then, we obtain the following decomposition,
    \begin{align*}
        \mathbb{E}\left\|\sum_{t=1}^{T}\widetilde{\bF}_t\bC_i^\top\widetilde{\bE}_{it}^\top\right\|_\mathsf{F}^2&=\mathbb{E}\left\|\sum_{t=1}^{T}\left({\bF}_t+\widetilde{\alpha}\overline{\bF}\right)\bC_i^\top\left({\bE}_{it}^\top+\widetilde{\alpha}\overline{{\bE}}_{i}^\top\right)\right\|_\mathsf{F}^2\\
        &=\mathbb{E}\left\|\sum_{t=1}^{T}{\bF}_t\bC_i^\top{\bE}_{it}^\top+(2\widetilde{\alpha}+\widetilde{\alpha}^2)\overline{\bF}\bC_i^\top{{\bE}}_{it}^\top\right\|_\mathsf{F}^2\\
        &=\mathbb{E}\left\|\sum_{t=1}^{T}({\bF}_t+\alpha\overline{\bF})\bC_i^\top{\bE}_{it}^\top\right\|_\mathsf{F}^2\\
        &=\sum_{j=1}^p\mathbb{E}\left\|\sum_{t=1}^{T}({\bF}_t+\alpha\overline{\bF})\bC_i^\top({\bE}_{it}^j)^\top\right\|_2^2\\
        &=\sum_{j=1}^p\mathbb{E}\left\|\sum_{t=1}^{T}({\bF}_t+\alpha\overline{\bF})\Big[\sum_{l=1}^{m_i}(\bC_i^l)^\top e_{it,jl}\Big]\right\|_2^2.
    \end{align*}
    Let $\bx_t^{ij}=\frac{1}{\sqrt{m_i}}\sum_{l=1}^{m_i}(\bC_i^l)^\top e_{it,jl}\in\mathbb{R}^{r\times 1}$. By Assumptions 7-8, following Lemma B.2. in \cite{li2025factormodelsmatrixvaluedtime}, we have
    $$\frac{1}{T^2}\mathbb{E}\left\|\sum_{t=1}^T{\bF}_t\bx_t^{ij}\right\|_2=\mathcal{O}(1).$$
    Additionally, by Assumption 8, we have
    \begin{align*}
        \frac{1}{T^2}\mathbb{E}\left\|\sum_{t=1}^T\overline{\bF}\bx_t^{ij}\right\|_2&=\frac{1}{T^3}\mathbb{E}\left\|\sum_{t=1}^T\sum_{s=1}^T{\bF}_s\bx_t^{ij}\right\|_2\\
        &\le\frac{1}{T^3}\mathbb{E}\left[\left\|\sum_{s=1}^T\bF_s\right\|_2\left\|\sum_{t=1}^T\bx_t^{ij}\right\|_2\right]\\
        &=\frac{1}{T^3}\mathbb{E}\left\|\sum_{s=1}^T\bF_s\right\|_2\mathbb{E}\left\|\sum_{t=1}^T\bx_t^{ij}\right\|_2\\
        &\le\frac{1}{T}\max_{s\in[T]}\mathbb{E}\left\|\bF_s\right\|_2\max_{t\in[T]}\mathbb{E}\left\|\bx_t^{ij}\right\|_2\\
        &=\mathcal{O}(1),
    \end{align*}
    where $\max_{t\in[T]}\frac{1}{\sqrt{T}}\left\|\bF_t-\bW(t/T)\right\|_\mathsf{F}=\mathbf{o}_p\left(1\right)$ by Lemma B.3 in \cite{li2025factormodelsmatrixvaluedtime}. Together, we have
    \begin{align*}
        \mathbb{E}\left\|\sum_{t=1}^{T}\widetilde{\bF}_t\bC_i^\top\widetilde{\bE}_{it}^\top\right\|_\mathsf{F}^2&=\sum_{j=1}^p\mathbb{E}\left\|\sum_{t=1}^{T}({\bF}_t+\alpha\overline{\bF})\Big[\sum_{l=1}^{m_i}(\bC_i^l)^\top e_{it,jl}\Big]\right\|_2^2\\
        &=m_i\sum_{j=1}^p\mathbb{E}\left\|\sum_{t=1}^{T}({\bF}_t+\alpha\overline{\bF})\bx_t^{ij}\right\|_2^2\\
        &\le m_i\sum_{j=1}^p\mathbb{E}\left\|\sum_{t=1}^{T}{\bF}_t\bx_t^{ij}\right\|_2^2+m_i\alpha\sum_{j=1}^p\mathbb{E}\left\|\sum_{t=1}^{T}\overline{\bF}\bx_t^{ij}\right\|_2^2\\
        &=\mathcal{O}_p\left(pm_iT^2\right).
    \end{align*}
\end{proof}

\begin{proposition}\label{prop10}
    Under Assumptions 3-4 and 7-9 with known $k$ and $r$, for any $i\in[s_1]$ and $j\in[s_2]$, as $m_i,n_j,T\to\infty$, we have
    $$\widehat{\mathbf{V}}_{\mathbf{R}_i}^*\overset{\mathcal{P}}{\longrightarrow}\bV_{\bR_i}^*\quad\text{and}\quad\widehat{\mathbf{V}}_{\mathbf{C}_j}^*\overset{\mathcal{P}}{\longrightarrow}\bV_{\bC_j}^*,$$
    with $\|\widehat{\mathbf{V}}_{\mathbf{R}_i}^*\|_2=\mathcal{O}_p(1)$, $\|(\widehat{\mathbf{V}}_{\mathbf{R}_i}^*)^{-1}\|_2=\mathcal{O}_p(1)$, $\|\widehat{\mathbf{V}}_{\mathbf{C}_j}^*\|_2=\mathcal{O}_p(1)$, and $\|(\widehat{\mathbf{V}}_{\mathbf{C}_j}^*)^{-1}\|_2=\mathcal{O}_p(1)$. Here $\mathbf{V}_{\mathbf{R}_i}^*$ and $\mathbf{V}_{\mathbf{C}_j}^*$ are diagonal matrices of eigenvalues (ordered decreasingly) of $(\widetilde{\boldsymbol{\Sigma}}_{\mathbf{FC}_i}^*)^{1/2}\boldsymbol{\Omega}_\mathbf{R}(\widetilde{\boldsymbol{\Sigma}}_{\mathbf{FC}_i}^*)^{1/2}$ and $(\widetilde{\boldsymbol{\Sigma}}_{\mathbf{FR}_j}^*)^{1/2}\boldsymbol{\Omega}_\mathbf{C}(\widetilde{\boldsymbol{\Sigma}}_{\mathbf{FR}_j}^*)^{1/2}$, respectively. The matrices $\widetilde{\boldsymbol{\Sigma}}_{\mathbf{FC}_i}^*=\int_0^1 \widetilde{\bW}(u)\boldsymbol{\Omega}_{\bC_i}\widetilde{\bW}(u)^\top\text{d}u$ and $\widetilde{\boldsymbol{\Sigma}}_{\mathbf{FR}_j}^*=\int_0^1 \widetilde{\bW}(u)^\top\boldsymbol{\Omega}_{\bR_j}\widetilde{\bW}(u)\text{d}u$, where $\widetilde{\bW}(u)=\bW(u)+\frac{\widetilde{\alpha}}{T}\sum_{t=1}^T\bW(t/T)$ and $\bW(\cdot)$ is an $k\times r$ matrix of Brownian motions with the covariance of $\text{vec}(\bW(\cdot))$ being $\overline{\bG}\boldsymbol{\Sigma}_\bu\overline{\bG}^\top$.
\end{proposition}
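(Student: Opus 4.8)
The plan is to follow the stationary-case argument behind Proposition~\ref{prop5}(a), adapted to the unit-root scaling in which $\widehat{\mathbf{V}}_{\mathbf{R}_i}^*$ collects the top $k$ eigenvalues of $\frac{1}{T}\widehat{\mathbf{M}}_{\mathbf{R}_i}$ rather than of $\widehat{\mathbf{M}}_{\mathbf{R}_i}$ itself; I would only treat $\widehat{\mathbf{V}}_{\mathbf{R}_i}^*$, since the argument for $\widehat{\mathbf{V}}_{\mathbf{C}_j}^*$ is symmetric. First I would substitute $\widetilde{\mathbf{Y}}_{it}=\mathbf{R}\widetilde{\mathbf{F}}_t\mathbf{C}_i^\top+\widetilde{\mathbf{E}}_{it}$ into $\widehat{\mathbf{M}}_{\mathbf{R}_i}=\frac{1}{pm_iT}\sum_{t=1}^T\widetilde{\mathbf{Y}}_{it}\widetilde{\mathbf{Y}}_{it}^\top$ and write
$$\frac{1}{T}\widehat{\mathbf{M}}_{\mathbf{R}_i}=\frac{1}{p}\mathbf{R}\Big(\frac{1}{m_iT^2}\sum_{t=1}^T\widetilde{\mathbf{F}}_t\mathbf{C}_i^\top\mathbf{C}_i\widetilde{\mathbf{F}}_t^\top\Big)\mathbf{R}^\top+\mathbf{\Delta}_1+\mathbf{\Delta}_1^\top+\mathbf{\Delta}_2,$$
where $\mathbf{\Delta}_1=\frac{1}{pm_iT^2}\sum_{t=1}^T\mathbf{R}\widetilde{\mathbf{F}}_t\mathbf{C}_i^\top\widetilde{\mathbf{E}}_{it}^\top$ and $\mathbf{\Delta}_2=\frac{1}{pm_iT^2}\sum_{t=1}^T\widetilde{\mathbf{E}}_{it}\widetilde{\mathbf{E}}_{it}^\top$, and then argue that the first term is the limit and the three remainders are $\mathbf{o}_p(1)$ in operator norm.

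For the leading term I would apply Proposition~\ref{prop8.5} to get $\|\frac{1}{m_iT^2}\sum_t\widetilde{\mathbf{F}}_t\mathbf{C}_i^\top\mathbf{C}_i\widetilde{\mathbf{F}}_t^\top-\widetilde{\boldsymbol{\Sigma}}_{\mathbf{FC}_i}^*\|_\mathsf{F}=\mathbf{o}_p(1)$, and combine it with $\frac{1}{p}\|\mathbf{R}\|_2^2=\mathcal{O}(1)$ (a consequence of Assumption 3(a), since $\|\mathbf{R}\|_\mathsf{F}^2=\sum_l\|\mathbf{R}^l\|_2^2=\mathcal{O}(p)$) to obtain, once the remainders are handled, $\|\frac{1}{T}\widehat{\mathbf{M}}_{\mathbf{R}_i}-\frac{1}{p}\mathbf{R}\widetilde{\boldsymbol{\Sigma}}_{\mathbf{FC}_i}^*\mathbf{R}^\top\|_2=\mathbf{o}_p(1)$. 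The nonzero eigenvalues of $\frac{1}{p}\mathbf{R}\widetilde{\boldsymbol{\Sigma}}_{\mathbf{FC}_i}^*\mathbf{R}^\top$ coincide with those of $\widetilde{\boldsymbol{\Sigma}}_{\mathbf{FC}_i}^*\cdot\frac{1}{p}\mathbf{R}^\top\mathbf{R}$, which converges to $\widetilde{\boldsymbol{\Sigma}}_{\mathbf{FC}_i}^*\boldsymbol{\Omega}_\mathbf{R}$ by Assumption 3, and since $\widetilde{\boldsymbol{\Sigma}}_{\mathbf{FC}_i}^*\succ0$, conjugation by $(\widetilde{\boldsymbol{\Sigma}}_{\mathbf{FC}_i}^*)^{1/2}$ shows this limit is similar to $(\widetilde{\boldsymbol{\Sigma}}_{\mathbf{FC}_i}^*)^{1/2}\boldsymbol{\Omega}_\mathbf{R}(\widetilde{\boldsymbol{\Sigma}}_{\mathbf{FC}_i}^*)^{1/2}$. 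Then the eigenvalue perturbation inequality $|\lambda_j(\widehat{\mathbf{A}})-\lambda_j(\mathbf{A})|\le\|\widehat{\mathbf{A}}-\mathbf{A}\|_2$ gives $\widehat{\mathbf{V}}_{\mathbf{R}_i}^*\overset{\mathcal{P}}{\longrightarrow}\mathbf{V}_{\mathbf{R}_i}^*$; Assumption 9 forces the entries of $\mathbf{V}_{\mathbf{R}_i}^*$ to be positive and bounded, hence bounded away from $0$ and $\infty$, yielding $\|\widehat{\mathbf{V}}_{\mathbf{R}_i}^*\|_2=\mathcal{O}_p(1)$ and $\|(\widehat{\mathbf{V}}_{\mathbf{R}_i}^*)^{-1}\|_2=\mathcal{O}_p(1)$.

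It remains to dispatch the remainders. For $\mathbf{\Delta}_1$ I would use $\|\mathbf{\Delta}_1\|_2\le\frac{1}{pm_iT^2}\|\mathbf{R}\|_2\,\|\sum_t\widetilde{\mathbf{F}}_t\mathbf{C}_i^\top\widetilde{\mathbf{E}}_{it}^\top\|_\mathsf{F}$ and invoke Proposition~\ref{prop9}, which via Markov's inequality gives $\|\sum_t\widetilde{\mathbf{F}}_t\mathbf{C}_i^\top\widetilde{\mathbf{E}}_{it}^\top\|_\mathsf{F}=\mathcal{O}_p(\sqrt{pm_i}\,T)$; with $\|\mathbf{R}\|_2=\mathcal{O}(\sqrt p)$ this yields $\|\mathbf{\Delta}_1\|_2=\mathcal{O}_p((\sqrt{m_i}\,T)^{-1})=\mathbf{o}_p(1)$. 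For $\mathbf{\Delta}_2$ I would split $\widetilde{\mathbf{E}}_{it}=\mathbf{E}_{it}+\widetilde{\alpha}\overline{\mathbf{E}}_i$ and show $\|\frac{1}{m_iT}\sum_t\widetilde{\mathbf{E}}_{it}\widetilde{\mathbf{E}}_{it}^\top\|_2=\mathcal{O}_p(1)$, bounding the operator norm by the (max column-sum) $\ell_1$-norm, whose mean is bounded because $\mathbb{E}[\frac{1}{m_iT}\sum_t\mathbf{E}_{it}\mathbf{E}_{it}^\top]$ is a column-block restriction of $\mathbf{U}_\mathbf{E}$ and Assumption 4(b) controls the relevant row sums, while the fluctuation is handled by Assumption 4(c)--(d), the eighth-moment bound in Assumption 8(b), and the $\alpha$-mixing decay in Assumption 8(a); the $\widetilde{\alpha}\overline{\mathbf{E}}_i$ cross-terms are of smaller order. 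Dividing by the extra $T^{-1}$ then gives $\|\mathbf{\Delta}_2\|_2=\mathcal{O}_p(T^{-1})=\mathbf{o}_p(1)$, which closes the argument. I expect $\mathbf{\Delta}_2$ to be the main obstacle: one must check that passing from $\mathbf{U}_\mathbf{E},\mathbf{V}_\mathbf{E}$ to their column- and row-block restrictions preserves $\ell_1$-boundedness \emph{uniformly} over the partition, and that Assumption 8 (which replaces Assumptions 1--2 for $\mathbf{E}_t$ in the nonstationary regime) supplies enough temporal decay for the concentration step that was imported wholesale from \cite{chen2021} in the stationary case.
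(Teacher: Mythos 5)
Your proposal is correct and follows essentially the same route as the paper's proof: the same decomposition of $\frac{1}{T}\widehat{\mathbf{M}}_{\mathbf{R}_i}$ into the signal term plus two cross terms and a noise-quadratic term, the same invocation of Proposition~\ref{prop8.5} for the limit $\frac{1}{p}\mathbf{R}\widetilde{\boldsymbol{\Sigma}}_{\mathbf{FC}_i}^*\mathbf{R}^\top$ and of Proposition~\ref{prop9} for the cross terms, and the same similarity/eigenvalue-perturbation argument with Assumption 9 supplying the uniform bounds on $\mathbf{V}_{\mathbf{R}_i}^*$ and its inverse. The only deviation is cosmetic: for the term $\frac{1}{pm_iT^2}\sum_t\widetilde{\mathbf{E}}_{it}\widetilde{\mathbf{E}}_{it}^\top$ the paper simply cites the Frobenius-norm moment bound $\mathbb{E}\|\sum_t\widetilde{\mathbf{E}}_{it}\widetilde{\mathbf{E}}_{it}^\top\|_\mathsf{F}^2=\mathcal{O}(p^2m_iT+pm_i^2T^2)$ from Lemma 8 of \cite{chen2021} (yielding $\mathcal{O}_p(m_i^{-1/2}T^{-3/2}+p^{-1/2}T^{-1})$), whereas you re-derive an operator-norm bound from Assumptions 4 and 8; both give $\mathbf{o}_p(1)$, so the argument closes either way.
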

\begin{proof}
    We only prove the conclusion for $\hat{\bV}_{\bR_i}^*$, since the proof of results for $\hat{\bV}_{\bC_j}^*$ is similar. According to Equation (2.5) and $\frac{1}{p}\wh{\bR}_i^\top\wh{\bR}_i=\bI$, we have
    \begin{equation}
        \widehat{\mathbf{V}}_{\mathbf{R}_i}^*=\frac{1}{pT}\widehat{\mathbf{R}}_i^\top\widehat{\mathbf{M}}_{\mathbf{R}_i}\widehat{\mathbf{R}}_i=\frac{1}{p}\widehat{\mathbf{R}}_i^\top\left( \frac{1}{pm_iT^2}\sum_{t=1}^{T}\widetilde{\bY}_{it}\widetilde{\bY}_{it}^\top\right)\widehat{\mathbf{R}}_i. 
    \end{equation}
    According to model (2.4), $\widetilde{\bY}_{it}=\bR\widetilde{\bF}_{t}\bC_i^\top+\widetilde{\bE}_{it}^\top$, we obtain
    \begin{align*}
    \frac{1}{T}\widehat{\mathbf{M}}_{\mathbf{R}_i}&=\frac{1}{pm_iT^2}\sum_{t=1}^{T}\bR\widetilde{\bF}_{t}\bC_i^\top\bC_i\widetilde{\bF}_{t}^\top\bR^\top+\frac{1}{pm_iT^2}\sum_{t=1}^{T}\bR\widetilde{\bF}_{t}\bC_i^\top\widetilde{\bE}_{it}^\top\\
    &\ \ \ +\frac{1}{pm_iT^2}\sum_{t=1}^{T}\widetilde{\bE}_{it}\bC_i\widetilde{\bF}_{t}^\top\bR^\top+\frac{1}{pm_iT^2}\sum_{t=1}^{T}\widetilde{\bE}_{it}\widetilde{\bE}_{it}^\top.
    \end{align*}
    By Assumption 3 and Proposition~\ref{prop9}, $\frac{1}{\sqrt{p}}\|\bR\|_2=\mathcal{O}(1)$ and $\mathbb{E}[\|\sum_{t=1}^{T}\widetilde{\bF}_t\bC_i^\top\widetilde{\bE}_{it}^\top\|_\mathsf{F}^2]=\mathcal{O}_p\left(pm_iT^2\right)$. In addition, by Lemma 8 in \cite{chen2021}, $\mathbb{E}[\|\sum_{t=1}^{T}\widetilde{\bE}_{it}\widetilde{\bE}_{it}^\top\|_\mathsf{F}^2]=\mathcal{O}_p\left(p^2m_iT+pm_i^2T^2\right)$. Therefore, we obtain bounds of the last three terms of $\frac{1}{T}\widehat{\mathbf{M}}_{\mathbf{R}_i}$,
    $$\left \| \frac{1}{pm_iT^2}\sum_{t=1}^{T}\bR\widetilde{\bF}_{t}\bC_i^\top\widetilde{\bE}_{it}^\top\right \|_2 =\mathcal{O}_p\left(\frac{1}{\sqrt{m_i}T} \right),$$
    $$\left \| \frac{1}{pm_iT^2}\sum_{t=1}^{T}\widetilde{\bE}_{it}\bC_i\widetilde{\bF}_{t}^\top\bR^\top\right \|_2 =\mathcal{O}_p\left(\frac{1}{\sqrt{m_i}T} \right),$$
    and 
    $$\left \| \frac{1}{pm_iT^2}\sum_{t=1}^{T}\widetilde{\bE}_{it}\widetilde{\bE}_{it}^\top\right \|_2 =\mathcal{O}_p\left(\frac{1}{\sqrt{m_i}T^{3/2}}+\frac{1}{\sqrt{p}T} \right).$$
    In other words,
    $$\left \| \frac{1}{T}\widehat{\mathbf{M}}_{\mathbf{R}_i}-\frac{1}{pm_iT^2}\sum_{t=1}^{T}\bR\widetilde{\bF}_{t}\bC_i^\top\bC_i\widetilde{\bF}_{t}^\top\bR^\top\right \|_2=\mathcal{O}_p\left(\frac{1}{\sqrt{m_i}T}+\frac{1}{\sqrt{p}T} \right).$$
    On the other hand, by Assumption 3 and Proposition \ref{prop8.5}, we obtain
    $$\left\|\frac{1}{pm_iT^2}\sum_{t=1}^{T}\bR\widetilde{\bF}_{t}\bC_i^\top\bC_i\widetilde{\bF}_{t}^\top\bR^\top-\frac{1}{p}\bR\widetilde{\boldsymbol{\Sigma}}_{\mathbf{FC}_i}^*\bR^\top\right\|_\mathsf{F}=\mathbf{o}_p\left(1\right).$$
    Together, we have
    $$\left \| \frac{1}{T}\widehat{\mathbf{M}}_{\mathbf{R}_i}-\frac{1}{p}\bR\widetilde{\boldsymbol{\Sigma}}_{\mathbf{FC}_i}^*\bR^\top\right \|_2=\mathbf{o}_p\left(1\right).$$
    By the inequality that for the $j$-th largest eigenvalue of any square matrices $\bA$ and $\wh{\bA}$, $|\lambda_j(\widehat{\mathbf{A}})-\lambda_j(\mathbf{A})|\le \| \widehat{\mathbf{A}}-\mathbf{A}\|_2$, as $m_i,p,T\to \infty$, we have
    $$\widehat{\mathbf{V}}_{\mathbf{R}_i}^*\overset{\mathcal{P}}{\longrightarrow}  \mathbf{V}_{\mathbf{R}_i}^*,$$
    where the eigenvalues of $\frac{1}{p}\mathbf{R}\widetilde{\boldsymbol{\Sigma}}_{\mathbf{FC}_i}^*\mathbf{R}^\top$ and $\frac{1}{p}\mathbf{R}^\top\mathbf{R}\widetilde{\boldsymbol{\Sigma}}_{\mathbf{FC}_i}^*$ are the same, $\frac{1}{p}\mathbf{R}^\top\mathbf{R}\to\boldsymbol{\Omega}_\mathbf{R}$ by Assumption 3, and the eigenvalues of $\boldsymbol{\Omega}_\mathbf{R}\widetilde{\boldsymbol{\Sigma}}_{\mathbf{FC}_i}^*$ and $(\widetilde{\boldsymbol{\Sigma}}_{\mathbf{FC}_i}^*)^{1/2}\boldsymbol{\Omega}_\mathbf{R}(\widetilde{\boldsymbol{\Sigma}}_{\mathbf{FC}_i}^*)^{1/2}$ are also the same since $\boldsymbol{\Omega}_\mathbf{R}$ and $\widetilde{\boldsymbol{\Sigma}}_{\mathbf{FC}_i}^*$ are both positive definite. Furthermore, the top $k$ eigenvalues of $\frac{1}{T}\widehat{\mathbf{M}}_{\mathbf{R}_i}$ are bounded away from zero and infinity, and therefore we obtain $\|\widehat{\mathbf{V}}_{\mathbf{R}_i}^*\|_2=\mathcal{O}_p(1)$ and $\|(\widehat{\mathbf{V}}_{\mathbf{R}_i}^*)^{-1}\|_2=\mathcal{O}_p(1)$.
\end{proof}

\begin{lemma}\label{lem4.5}
	Under Assumptions 3-4 and 7-9 with known $k$ and $r$, for any $i\in[s_1]$ and $j\in[s_2]$, we have $\|\bH_{\bR_i}^*\|_2=\mathcal{O}_p(1)$ and $\|\bH_{\bC_j}^*\|_2=\mathcal{O}_p(1)$. 
\end{lemma}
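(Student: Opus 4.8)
The plan is to establish $\|\bH_{\bR_i}^*\|_2=\mathcal{O}_p(1)$ by factoring the defining product and bounding each piece separately; the bound for $\|\bH_{\bC_j}^*\|_2$ will then follow by an entirely symmetric argument and can be omitted. Recalling that
\begin{equation*}
\bH_{\bR_i}^*=\frac{1}{pm_iT^2}\sum_{t=1}^T\widetilde{\bF}_t\bC_i^\top\bC_i\widetilde{\bF}_t^\top\bR^\top\wh{\bR}_i(\widehat{\bV}_{\bR_i}^*)^{-1},
\end{equation*}
submultiplicativity of the operator norm gives
\begin{equation*}
\|\bH_{\bR_i}^*\|_2\le\left\|\frac{1}{m_iT^2}\sum_{t=1}^T\widetilde{\bF}_t\bC_i^\top\bC_i\widetilde{\bF}_t^\top\right\|_2\cdot\frac{\|\bR\|_2}{\sqrt{p}}\cdot\frac{\|\wh{\bR}_i\|_2}{\sqrt{p}}\cdot\left\|(\widehat{\bV}_{\bR_i}^*)^{-1}\right\|_2,
\end{equation*}
so it suffices to show that each of the four factors on the right-hand side is $\mathcal{O}_p(1)$.

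First I would handle the averaged quadratic term. Proposition~\ref{prop8.5} shows $\|\frac{1}{m_iT^2}\sum_{t=1}^T\widetilde{\bF}_t\bC_i^\top\bC_i\widetilde{\bF}_t^\top-\widetilde{\boldsymbol{\Sigma}}_{\mathbf{FC}_i}^*\|_\mathsf{F}=\mathbf{o}_p(1)$, and since the eigenvalues of $\boldsymbol{\Omega}_\bR^{1/2}\widetilde{\boldsymbol{\Sigma}}_{\mathbf{FC}_i}^*\boldsymbol{\Omega}_\bR^{1/2}$ are bounded by Assumption~\ref{assum9}, while $\boldsymbol{\Omega}_\bR$ is positive definite with bounded spectrum (a consequence of Assumption~\ref{assum3}), the limit matrix satisfies $\|\widetilde{\boldsymbol{\Sigma}}_{\mathbf{FC}_i}^*\|_2=\mathcal{O}_p(1)$; hence the first factor is $\mathcal{O}_p(1)$. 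For the second factor, Assumption~\ref{assum3} gives $\|\bR\|_2^2=p\,\lambda_{\max}(p^{-1}\bR^\top\bR)\to p\,\lambda_{\max}(\boldsymbol{\Omega}_\bR)$, so $\|\bR\|_2/\sqrt{p}=\mathcal{O}(1)$. For the third factor, the columns of $\wh{\bR}_i$ are $\sqrt{p}$ times orthonormal eigenvectors of $\widehat{\bM}_{\bR_i}$, so $p^{-1}\wh{\bR}_i^\top\wh{\bR}_i=\bI_k$ and $\|\wh{\bR}_i\|_2/\sqrt{p}=1$. For the fourth factor, $\|(\widehat{\bV}_{\bR_i}^*)^{-1}\|_2=\mathcal{O}_p(1)$ is exactly Proposition~\ref{prop10}. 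Multiplying the four $\mathcal{O}_p(1)$ bounds yields the claim, and the proof for $\bH_{\bC_j}^*$ is identical after replacing $(\bR,\wh{\bR}_i,\bC_i,m_i,\boldsymbol{\Omega}_\bR)$ with $(\bC,\wh{\bC}_j,\bR_j,n_j,\boldsymbol{\Omega}_\bC)$ and using the transposed sub-model~\eqref{submodel2} together with $q^{-1}\wh{\bC}_j^\top\wh{\bC}_j=\bI_r$.

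The only genuinely delicate point is the averaged quadratic term: because $\bF_t$ is unit-root nonstationary, $\frac{1}{m_iT^2}\sum_t\widetilde{\bF}_t\bC_i^\top\bC_i\widetilde{\bF}_t^\top$ converges not to a deterministic matrix but to the random functional $\widetilde{\boldsymbol{\Sigma}}_{\mathbf{FC}_i}^*$ of the Brownian limit, so ``$\mathcal{O}_p(1)$'' rather than ``$\mathcal{O}(1)$'' is the sharp statement, and one must lean on the boundedness part of Assumption~\ref{assum9} — rather than a deterministic eigenvalue condition — together with Assumption~\ref{assum7} (which ensures $\overline{\bG}\succ0$ and $\sum_s s\|\bG_s\|_2<\infty$, so that $\bW(\cdot)$ is a well-defined nondegenerate Brownian motion and the integral defining $\widetilde{\boldsymbol{\Sigma}}_{\mathbf{FC}_i}^*$ is a.s.\ finite). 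Everything else reduces to norm bounds already packaged in Propositions~\ref{prop8.5} and~\ref{prop10}, and the role of this lemma is simply to feed those controls into the subsequent analysis of the global estimators $\wh{\bR}$ and $\wh{\bC}$ in the unit-root case, exactly as Lemma~\ref{lem2} does for the stationary case.
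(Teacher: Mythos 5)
Your proof is correct and takes essentially the same route as the paper's: both factor $\bH_{\bR_i}^*$ into the averaged quadratic term, $p^{-1}\bR^\top\wh{\bR}_i$, and $(\widehat{\bV}_{\bR_i}^*)^{-1}$, and bound each via Proposition~\ref{prop8.5} (resp.\ its limit $\widetilde{\boldsymbol{\Sigma}}_{\mathbf{FC}_i}^*$), Assumption~\ref{assum3} with the normalization $p^{-1}\wh{\bR}_i^\top\wh{\bR}_i=\bI_k$, and Proposition~\ref{prop10}. Your added justification that $\|\widetilde{\boldsymbol{\Sigma}}_{\mathbf{FC}_i}^*\|_2=\mathcal{O}_p(1)$ via the boundedness clause of Assumption~\ref{assum9} is a detail the paper leaves implicit, but it does not change the argument.
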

\begin{proof}
	We only prove the result for $\|\bH_{\bR_i}^*\|_2$, since the proof for $\|\bH_{\bC_j}^*\|_2$ is similar.
	Recall that 
    $$\bH_\bR^*=\frac{1}{pm_iT^2}\sum_{t=1}^T\widetilde{\bF}_t\bC_i^\top\bC_i\widetilde{\bF}_t^\top\bR^\top\wh{\bR}_i(\widehat{\bV}_{\bR_i}^*)^{-1}.$$ 
    By Proposition \ref{prop8.5}, as $p,m_i,T\to\infty$, we have
    $$\frac{1}{m_iT^2}\sum_{t=1}^T\widetilde{\bF}_t\bC_i^\top\bC_i\widetilde{\bF}_t^\top\overset{\mathcal{P}}{\longrightarrow}\widetilde{\boldsymbol{\Sigma}}_{\mathbf{FC}_i}^*.$$ 
    Moreover, $\frac{1}{\sqrt{p}}\bR$ and $(\wh{\bV}_{\bR_i}^*)^{-1}$ are bounded according to Assumption 3 and Proposition \ref{prop10}, while the estimator $\frac{1}{\sqrt{p}}\wh{\bR}_i$ is also bounded. Therefore, the final results are derived.
\end{proof}

\begin{proposition}\label{prop11}
    Under Assumptions 3-4 and 7-9 with known $k$ and $r$, for any $i\in[s_1]$ and $j\in[s_2]$, as $m_i,n_j,T\to\infty$, we have
    $$\frac{\widehat{\bR}_i^\top\bR}{p}\overset{\mathcal{P}}{\longrightarrow}\bQ_{\bR_i}^*\quad\text{and}\quad\frac{\widehat{\bC}_j^\top\bC}{q}\overset{\mathcal{P}}{\longrightarrow}\bQ_{\bC_j}^*.$$
    The matrices $\bQ_{\bR_i}^*=(\bV_{\bR_i}^*)^{1/2}(\boldsymbol{\Psi}_{\mathbf{R}_i}^*)^\top(\widetilde{\boldsymbol{\Sigma}}_{\mathbf{FC}_i}^*)^{-1/2}$ and $\bQ_{\bC_j}^*=(\bV_{\bC_j}^*)^{1/2}(\boldsymbol{\Psi}_{\mathbf{C}_j}^*)^\top(\widetilde{\boldsymbol{\Sigma}}_{\mathbf{FR}_j}^*)^{-1/2}$, where the matrices $\mathbf{V}_{\mathbf{R}_i}^*$ and $\mathbf{V}_{\mathbf{C}_j}^*$ are diagonal matrices of eigenvalues (ordered decreasingly) of $(\widetilde{\boldsymbol{\Sigma}}_{\mathbf{FC}_i}^*)^{1/2}\boldsymbol{\Omega}_\mathbf{R}(\widetilde{\boldsymbol{\Sigma}}_{\mathbf{FC}_i}^*)^{1/2}$ and $(\widetilde{\boldsymbol{\Sigma}}_{\mathbf{FR}_j}^*)^{1/2}\boldsymbol{\Omega}_\mathbf{C}(\widetilde{\boldsymbol{\Sigma}}_{\mathbf{FR}_j}^*)^{1/2}$, respectively, with the corresponding eigenvectors $\boldsymbol{\Psi}_{\mathbf{R}_i}^*$ and $\boldsymbol{\Psi}_{\mathbf{C}_j}^*$ satisfying $(\boldsymbol{\Psi}_{\mathbf{R}_i}^*)^\top\boldsymbol{\Psi}_{\mathbf{R}_i}^*=\mathbf{I}$ and $(\boldsymbol{\Psi}_{\mathbf{C}_j}^*)^\top\boldsymbol{\Psi}_{\mathbf{C}_j}^*=\mathbf{I}$. Here $\widetilde{\boldsymbol{\Sigma}}_{\mathbf{FC}_i}^*=\int_0^1 \widetilde{\bW}(u)\boldsymbol{\Omega}_{\bC_i}\widetilde{\bW}(u)^\top\text{d}u$ and $\widetilde{\boldsymbol{\Sigma}}_{\mathbf{FR}_j}^*=\int_0^1 \widetilde{\bW}(u)^\top\boldsymbol{\Omega}_{\bR_j}\widetilde{\bW}(u)\text{d}u$, where $\widetilde{\bW}(u)=\bW(u)+\frac{\widetilde{\alpha}}{T}\sum_{t=1}^T\bW(t/T)$ and $\bW(\cdot)$ is an $k\times r$ matrix of Brownian motions with the covariance of $\text{vec}(\bW(\cdot))$ being $\overline{\bG}\boldsymbol{\Sigma}_\bu\overline{\bG}^\top$.
\end{proposition}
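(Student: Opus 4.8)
The plan is to adapt the eigenvector--perturbation argument used for Proposition~\ref{prop7} (the stationary, centralized case) to the local estimator $\widehat{\bR}_i$ in the unit--root regime, reusing the intermediate bound obtained inside the proof of Proposition~\ref{prop10}. I would treat only $\widehat{\bR}_i^\top\bR/p$, since the argument for $\widehat{\bC}_j^\top\bC/q$ is symmetric after swapping $(\bR,m_i,\widetilde{\boldsymbol{\Sigma}}_{\mathbf{FC}_i}^*,\boldsymbol{\Omega}_\bR)$ for $(\bC,n_j,\widetilde{\boldsymbol{\Sigma}}_{\mathbf{FR}_j}^*,\boldsymbol{\Omega}_\bC)$. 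Because $\widehat{\bR}_i$ equals $\sqrt p$ times the top-$k$ orthonormal eigenvectors of $\widehat{\bM}_{\bR_i}$, I have the normalization $p^{-1}\widehat{\bR}_i^\top\widehat{\bR}_i=\bI_k$ together with the eigen-relation $\tfrac1T\widehat{\bM}_{\bR_i}\widehat{\bR}_i=\widehat{\bR}_i\widehat{\bV}_{\bR_i}^*$; and from the proof of Proposition~\ref{prop10} I already have $\boldsymbol{\Delta}_i:=\tfrac1T\widehat{\bM}_{\bR_i}-\tfrac1p\bR\widetilde{\boldsymbol{\Sigma}}_{\mathbf{FC}_i}^*\bR^\top$ with $\|\boldsymbol{\Delta}_i\|_2=\mathbf{o}_p(1)$.

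First I would left-multiply the eigen-relation by $\tfrac1p(\widetilde{\boldsymbol{\Sigma}}_{\mathbf{FC}_i}^*)^{1/2}\bR^\top$ and substitute $\tfrac1T\widehat{\bM}_{\bR_i}=\tfrac1p\bR\widetilde{\boldsymbol{\Sigma}}_{\mathbf{FC}_i}^*\bR^\top+\boldsymbol{\Delta}_i$ to obtain $\bB_i\widehat{\bV}_{\bR_i}^*=\bA_i\bB_i+\bd_i$, where $\bB_i=\tfrac1p(\widetilde{\boldsymbol{\Sigma}}_{\mathbf{FC}_i}^*)^{1/2}\bR^\top\widehat{\bR}_i$, $\bA_i=(\widetilde{\boldsymbol{\Sigma}}_{\mathbf{FC}_i}^*)^{1/2}\tfrac{\bR^\top\bR}{p}(\widetilde{\boldsymbol{\Sigma}}_{\mathbf{FC}_i}^*)^{1/2}$, and $\bd_i=\tfrac1p(\widetilde{\boldsymbol{\Sigma}}_{\mathbf{FC}_i}^*)^{1/2}\bR^\top\boldsymbol{\Delta}_i\widehat{\bR}_i$. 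Using that $p^{-1/2}\bR$ and $p^{-1/2}\widehat{\bR}_i$ are bounded (Assumption 3 and orthonormality), I get $\|\bd_i\|_2=\mathbf{o}_p(1)$ and, invoking Proposition~\ref{prop8.5} for the random limit $\widetilde{\boldsymbol{\Sigma}}_{\mathbf{FC}_i}^*$, $\bA_i\overset{\mathcal{P}}{\longrightarrow}(\widetilde{\boldsymbol{\Sigma}}_{\mathbf{FC}_i}^*)^{1/2}\boldsymbol{\Omega}_\bR(\widetilde{\boldsymbol{\Sigma}}_{\mathbf{FC}_i}^*)^{1/2}$. Next, from $\tfrac1p\bR\widetilde{\boldsymbol{\Sigma}}_{\mathbf{FC}_i}^*\bR^\top=\tfrac1T\widehat{\bM}_{\bR_i}-\boldsymbol{\Delta}_i$, $p^{-1}\widehat{\bR}_i^\top\widehat{\bR}_i=\bI_k$ and $\widehat{\bV}_{\bR_i}^*=\tfrac1p\widehat{\bR}_i^\top(\tfrac1T\widehat{\bM}_{\bR_i})\widehat{\bR}_i$, I would show $\bB_i^\top\bB_i=\widehat{\bV}_{\bR_i}^*+\mathbf{o}_p(1)\overset{\mathcal{P}}{\longrightarrow}\bV_{\bR_i}^*$ by Proposition~\ref{prop10}; since $\bV_{\bR_i}^*$ is positive-definite diagonal, $\|\bB_i^{-1}\|_2=\mathcal{O}_p(1)$, hence $\|\bd_i\bB_i^{-1}\|_2=\mathbf{o}_p(1)$ and $\bB_i\widehat{\bV}_{\bR_i}^*=(\bA_i+\bd_i\bB_i^{-1})\bB_i$. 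Writing $\bV_{\bB_i}$ for the diagonal part of $\bB_i^\top\bB_i$ (so $\bV_{\bB_i}\overset{\mathcal{P}}{\longrightarrow}\bV_{\bR_i}^*$) and $\bB_i^*=\bB_i\bV_{\bB_i}^{-1/2}$ (so $\|\bB_i^*\|_2\overset{\mathcal{P}}{\longrightarrow}1$), and using that $\bV_{\bB_i}^{1/2}$ commutes with the diagonal $\widehat{\bV}_{\bR_i}^*$, I rewrite this as $\bB_i^*\widehat{\bV}_{\bR_i}^*=(\bA_i+\bd_i\bB_i^{-1})\bB_i^*$. Because the eigenvalues of $(\widetilde{\boldsymbol{\Sigma}}_{\mathbf{FC}_i}^*)^{1/2}\boldsymbol{\Omega}_\bR(\widetilde{\boldsymbol{\Sigma}}_{\mathbf{FC}_i}^*)^{1/2}$ coincide with those of $\boldsymbol{\Omega}_\bR^{1/2}\widetilde{\boldsymbol{\Sigma}}_{\mathbf{FC}_i}^*\boldsymbol{\Omega}_\bR^{1/2}$, which are positive, bounded, and distinct by Assumption~\ref{assum9}, the eigenvector-perturbation theory (\citeauthor{franklin2012}, \citeyear{franklin2012}, Section 6.12) gives a unique orthogonal eigenvector matrix $\boldsymbol{\Psi}_{\mathbf{R}_i}^*$ with $\|\boldsymbol{\Psi}_{\mathbf{R}_i}^*-\bB_i^*\|_2=\mathbf{o}_p(1)$. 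Finally $\tfrac{\bR^\top\widehat{\bR}_i}{p}=(\widetilde{\boldsymbol{\Sigma}}_{\mathbf{FC}_i}^*)^{-1/2}\bB_i^*\bV_{\bB_i}^{1/2}\overset{\mathcal{P}}{\longrightarrow}(\widetilde{\boldsymbol{\Sigma}}_{\mathbf{FC}_i}^*)^{-1/2}\boldsymbol{\Psi}_{\mathbf{R}_i}^*(\bV_{\bR_i}^*)^{1/2}=(\bQ_{\bR_i}^*)^\top$, and transposing gives the claim.

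The main obstacle is that, unlike in the stationary case, the limiting matrices $\widetilde{\boldsymbol{\Sigma}}_{\mathbf{FC}_i}^*$, $\bV_{\bR_i}^*$ and $\boldsymbol{\Psi}_{\mathbf{R}_i}^*$ are \emph{random} functionals of the limiting Brownian matrix $\bW$, so every ``$\overset{\mathcal{P}}{\longrightarrow}$'' above and, crucially, the eigenvector-perturbation step must be carried out on the common probability space furnished by the strong approximation $\max_{t\in[T]}T^{-1/2}\|\bF_t-\bW(t/T)\|_\mathsf{F}=\mathbf{o}_p(1)$ (Lemma B.3 of \cite{li2025factormodelsmatrixvaluedtime}). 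Concretely, one needs the \emph{joint} convergence of $\widehat{\bV}_{\bR_i}^*$, $\tfrac1T\widehat{\bM}_{\bR_i}$ and $\bA_i$ to functionals of the same $\bW$ (obtained by the continuous mapping theorem applied to Propositions~\ref{prop8.5} and~\ref{prop10}), and the almost-sure distinctness of the eigenvalues from Assumption~\ref{assum9}, so that $\boldsymbol{\Psi}_{\mathbf{R}_i}^*$ is a.s.\ well defined and the perturbation bound can be applied pathwise; the rest of the calculation is routine and parallels the proof of Proposition~\ref{prop7}.
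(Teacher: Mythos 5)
Your proposal is correct and follows essentially the same route as the paper's proof: the eigen-relation for $\widehat{\bR}_i$, left-multiplication to produce the $\bB_i\widehat{\bV}_{\bR_i}^*=(\bA_i+\bd_i\bB_i^{-1})\bB_i$ identity, the bounds from Proposition~\ref{prop10}, convergence of $\bB_i^\top\bB_i$ to $\bV_{\bR_i}^*$, normalization by the diagonal part, and the eigenvector-perturbation step under Assumption~\ref{assum9}. The only cosmetic difference is that you left-multiply by the limiting random matrix $(\widetilde{\boldsymbol{\Sigma}}_{\mathbf{FC}_i}^*)^{1/2}$ from the outset (which is why you must invoke the strong approximation explicitly, as you correctly note), whereas the paper multiplies by the sample analogue $(\tfrac{1}{m_iT^2}\sum_{t}\widetilde{\bF}_{t}\bC_i^\top\bC_i\widetilde{\bF}_{t}^\top)^{1/2}$ and passes to the limit only at the end via Proposition~\ref{prop8.5}.
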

\begin{proof}
    We only prove the conclusion for $\widehat{\bR}_i^\top\bR$, since the proof of results for $\widehat{\bC}_j^\top\bC$ is similar. According to Equation (2.5) and the definition of $\widehat{\bV}_{\bR_i}^*$ in Section 4, we have
    $$\widehat{\bR}_i\widehat{\bV}_{\bR_i}^*=\frac{1}{pm_iT^2}\sum_{t=1}^T\widetilde{\bY}_{it}\widetilde{\bY}_{it}^\top\widehat{\bR}_i.$$
    By left-multiplying $\frac{1}{p}(\frac{1}{m_iT^2}\sum_{t=1}^{T}\widetilde{\bF}_{t}\bC_i^\top\bC_i\widetilde{\bF}_{t}^\top)^{1/2}\bR^\top$ on the both sides and expanding $\widetilde{\bY}_{it}$ as $\widetilde{\bY}_{it}=\bR\widetilde{\bF}_{t}\bC_i^\top+\widetilde{\bE}_{it}^\top$, we have
    \begin{equation}
        \bB_i^*\widehat{\bV}_{\bR_i}^*=\left[\bA_i^*+\bd_i^*(\bB_i^*)^{-1}\right]\bB_i^*,\label{prop9eq1}
    \end{equation}
    where $k\times k$ matrices $\bA_i^*$, $\bB_i^*$, and $\bd_i^*$ are defined as
    $$\bA_i^*=\left(\frac{1}{m_iT^2}\sum_{t=1}^{T}\widetilde{\bF}_{t}\bC_i^\top\bC_i\widetilde{\bF}_{t}^\top\right)^{1/2}\frac{\bR^\top\bR}{p}\left(\frac{1}{m_iT^2}\sum_{t=1}^{T}\widetilde{\bF}_{t}\bC_i^\top\bC_i\widetilde{\bF}_{t}^\top\right)^{1/2},$$
    $$\bB_i^*=\left(\frac{1}{m_iT^2}\sum_{t=1}^{T}\widetilde{\bF}_{t}\bC_i^\top\bC_i\widetilde{\bF}_{t}^\top\right)^{1/2}\frac{\bR^\top\widehat{\bR}_i}{p},$$
    and 
    \begin{align*}
        \bd_i^*&=\frac{1}{p}\left(\frac{1}{m_iT^2}\sum_{t=1}^{T}\widetilde{\bF}_{t}\bC_i^\top\bC_i\widetilde{\bF}_{t}^\top\right)^{1/2}\bR^\top\left[\frac{1}{pm_iT^2}\sum_{t=1}^{T}\bR\widetilde{\bF}_{t}\bC_i^\top\widetilde{\bE}_{it}^\top\widehat{\bR}_i\right.\\
        &\ \ \ \left.+\frac{1}{pm_iT^2}\sum_{t=1}^{T}\widetilde{\bE}_{it}\bC_i\widetilde{\bF}_{t}^\top\bR^\top\widehat{\bR}_i+\frac{1}{pm_iT^2}\sum_{t=1}^{T}\widetilde{\bE}_{it}\widetilde{\bE}_{it}^\top\widehat{\bR}_i\right]\\
        &=\frac{1}{p}\left(\frac{1}{m_iT^2}\sum_{t=1}^{T}\widetilde{\bF}_{t}\bC_i^\top\bC_i\widetilde{\bF}_{t}^\top\right)^{1/2}\bR^\top\left(\mathrm{I}_1+\mathrm{I}_2+\mathrm{I}_3\right)\widehat{\bR}_i.
    \end{align*}
    According to Assumption 3 and Proposition \ref{prop8.5}, we have
    $$\left\|\frac{1}{\sqrt{p}}\left(\frac{1}{m_iT^2}\sum_{t=1}^{T}\widetilde{\bF}_{t}\bC_i^\top\bC_i\widetilde{\bF}_{t}^\top\right)^{1/2}\bR^\top\right\|_2=\mathcal{O}_p(1).$$
    Following Proposition~\ref{prop10}, we have $\|\mathrm{I}_1\|_2=\mathbf{o}_p\left(1\right)$, $\|\mathrm{I}_2\|_2=\mathbf{o}_p\left(1\right)$, and $\|\mathrm{I}_3\|_2=\mathbf{o}_p\left(1\right)$. Moreover, $\|\frac{1}{\sqrt{p}}\widehat{\bR}_i\|_2=\mathcal{O}_p(1)$. Together, we have $$\|\bd^*_i\|_2=\mathbf{o}_p\left(1\right).$$
    According to Proposition~\ref{prop10}, as $p,m_i,T\to\infty$, we have
    $$(\bB_i^*)^\top\bB_i^*=\frac{\widehat{\bR}_i^\top\bR}{p}\left(\frac{1}{m_iT^2}\sum_{t=1}^{T}\widetilde{\bF}_{t}\bC_i^\top\bC_i\widetilde{\bF}_{t}^\top\right)\frac{\bR^\top\widehat{\bR}_i}{p}\overset{\mathcal{P}}{\longrightarrow}\frac{1}{pT}\widehat{\bR}_i^\top\widehat{\bM}_{\bR_i}\widehat{\bR}_i=\widehat{\bV}_{\bR_i}^*\overset{\mathcal{P}}{\longrightarrow}{\bV}_{\bR_i}^*.$$
    In other words, the eigenvalues of $(\bB_i^*)^\top\bB_i^*$ are not close to zero nor infinity, and therefore $\|(\bB_i^*)^{-1}\|_2=\|(\bB_i^*)^{-1}[(\bB_i^*)^{-1}]^\top\|_2^{1/2}=\mathcal{O}_p(1)$. 
    
    Let $\bV_{\bB_i}^*$ denote the diagonal matrix consisting of diagonal elements of $(\bB_i^*)^\top\bB_i^*$, satisfying $\bV_{\bB_i}^*\overset{\mathcal{P}}{\longrightarrow}{\bV}_{\bR_i}^*$, as $\underline{m},p,T\to\infty$. Denote the normalized $k\times k$ matrix $\boldsymbol{\Psi}_{\bB_i}^*=\bB_i^*(\bV_{\bB_i}^*)^{-1/2}$. Then, $\|\boldsymbol{\Psi}_{\bB_i}^*\|_2=\sqrt{\lambda_{\max} {[(\bV_{\bB_i}^*)^{-1}(\bB_i^*)^\top\bB_i^*]}} \overset{\mathcal{P}}{\longrightarrow} 1$. According to Equation~(\ref{prop9eq1}), we have
    $$\boldsymbol{\Psi}_{\bB_i}^*\widehat{\bV}_{\bR_i}^*=\left[\bA_i^*+\bd_i^*(\bB_i^*)^{-1}\right]\boldsymbol{\Psi}_{\bB_i}^*.$$
    Since $\|\bd^*_i\|_2=\mathbf{o}_p\left(1\right)$ and $\|(\bB_i^*)^{-1}\|_2=\mathcal{O}_p(1)$, by Assumption 3 and Proposition \ref{prop8.5}, as $\underline{m},p,T\to \infty$, we have
    $$\bA_i^*+\bd_i^*(\bB_i^*)^{-1}=\bA_i^*+\mathbf{o}_p\left(1\right)\overset{\mathcal{P}}{\longrightarrow}(\widetilde{\boldsymbol{\Sigma}}_{\mathbf{FC}_i}^*)^{1/2}\boldsymbol{\Omega}_\mathbf{R}(\widetilde{\boldsymbol{\Sigma}}_{\mathbf{FC}_i}^*)^{1/2}.$$
    According to the eigenvector perturbation theory (\citeauthor{franklin2012}, \citeyear{franklin2012}) and Assumption 9, there exists a unique orthogonal eigenvector matrix $\Psi_{\mathbf{R}_i}^*$ of $(\widetilde{\boldsymbol{\Sigma}}_{\mathbf{FC}_i}^*)^{1/2}\boldsymbol{\Omega}_\mathbf{R}(\widetilde{\boldsymbol{\Sigma}}_{\mathbf{FC}_i}^*)^{1/2}$, such that $$\left\|\boldsymbol{\Psi}_{\mathbf{R}_i}^*-\boldsymbol{\Psi}_{\bB_i}^*\right\|_2=\mathbf{o}_p(1).$$
    Finally, we complete our proof as
    \begin{align*}
        \frac{\mathbf{R}^\top\widehat{\mathbf{R}}_i}{p}&=\left(\frac{1}{m_iT^2}\sum_{t=1}^{T}\widetilde{\bF}_{t}\bC_i^\top\bC_i\widetilde{\bF}_{t}^\top\right)^{-1/2}\boldsymbol{\Psi}_{\bB_i}^*(\bV_{\bB_i}^*)^{1/2}\\&\overset{\mathcal{P}}{\longrightarrow}(\widetilde{\boldsymbol{\Sigma}}_{\mathbf{FC}_i}^*)^{-1/2}\boldsymbol{\Psi}_{\mathbf{R}_i}^*(\mathbf{V}_{\bR_i}^*)^{1/2}=(\bQ_{\bR_i}^*)^\top.
    \end{align*}
\end{proof}

\begin{proposition}\label{prop12}
    Under Assumptions 3-4 and 7-9 with known $k$ and $r$, for any $i\in[s_1]$ and $j\in[s_2]$, as $m_i,n_j,T\to\infty$, we have
    $$\bH_{\bR_i}^*\overset{\mathcal{P}}{\longrightarrow}(\bQ_{\bR_i}^*)^{-1}\quad\text{and}\quad\bH_{\bC_j}^*\overset{\mathcal{P}}{\longrightarrow}(\bQ_{\bC_j}^*)^{-1}.$$
    The matrices $\bQ_{\bR_i}^*=(\bV_{\bR_i}^*)^{1/2}(\boldsymbol{\Psi}_{\mathbf{R}_i}^*)^\top(\widetilde{\boldsymbol{\Sigma}}_{\mathbf{FC}_i}^*)^{-1/2}$ and $\bQ_{\bC_j}^*=(\bV_{\bC_j}^*)^{1/2}(\boldsymbol{\Psi}_{\mathbf{C}_j}^*)^\top(\widetilde{\boldsymbol{\Sigma}}_{\mathbf{FR}_j}^*)^{-1/2}$, where the matrices $\mathbf{V}_{\mathbf{R}_i}^*$ and $\mathbf{V}_{\mathbf{C}_j}^*$ are diagonal matrices of eigenvalues (ordered decreasingly) of $(\widetilde{\boldsymbol{\Sigma}}_{\mathbf{FC}_i}^*)^{1/2}\boldsymbol{\Omega}_\mathbf{R}(\widetilde{\boldsymbol{\Sigma}}_{\mathbf{FC}_i}^*)^{1/2}$ and $(\widetilde{\boldsymbol{\Sigma}}_{\mathbf{FR}_j}^*)^{1/2}\boldsymbol{\Omega}_\mathbf{C}(\widetilde{\boldsymbol{\Sigma}}_{\mathbf{FR}_j}^*)^{1/2}$, respectively, with the corresponding eigenvectors $\boldsymbol{\Psi}_{\mathbf{R}_i}^*$ and $\boldsymbol{\Psi}_{\mathbf{C}_j}^*$ satisfying $(\boldsymbol{\Psi}_{\mathbf{R}_i}^*)^\top\boldsymbol{\Psi}_{\mathbf{R}_i}^*=\mathbf{I}$ and $(\boldsymbol{\Psi}_{\mathbf{C}_j}^*)^\top\boldsymbol{\Psi}_{\mathbf{C}_j}^*=\mathbf{I}$. Here $\widetilde{\boldsymbol{\Sigma}}_{\mathbf{FC}_i}^*=\int_0^1 \widetilde{\bW}(u)\boldsymbol{\Omega}_{\bC_i}\widetilde{\bW}(u)^\top\text{d}u$ and $\widetilde{\boldsymbol{\Sigma}}_{\mathbf{FR}_j}^*=\int_0^1 \widetilde{\bW}(u)^\top\boldsymbol{\Omega}_{\bR_j}\widetilde{\bW}(u)\text{d}u$, where $\widetilde{\bW}(u)=\bW(u)+\frac{\widetilde{\alpha}}{T}\sum_{t=1}^T\bW(t/T)$ and $\bW(\cdot)$ is an $k\times r$ matrix of Brownian motions with the covariance of $\text{vec}(\bW(\cdot))$ being $\overline{\bG}\boldsymbol{\Sigma}_\bu\overline{\bG}^\top$.
\end{proposition}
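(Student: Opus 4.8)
The plan is to mirror the one-line structure of the proof of Proposition~\ref{prop8} (the stationary global analog), replacing its inputs by the unit-root building blocks established in Propositions~\ref{prop8.5}, \ref{prop10}, and \ref{prop11}. I would prove only the statement for $\bH_{\bR_i}^*$, since the argument for $\bH_{\bC_j}^*$ is symmetric, with $\widetilde{\boldsymbol{\Sigma}}_{\mathbf{FC}_i}^*$, $\bR$, $\wh{\bR}_i$, and $\widehat{\bV}_{\bR_i}^*$ replaced by their column-side counterparts and with the second halves of Propositions~\ref{prop8.5}, \ref{prop10}, \ref{prop11} invoked.

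First I would write the definition as a product of three factors,
\[
\bH_{\bR_i}^*=\left(\frac{1}{m_iT^2}\sum_{t=1}^T\widetilde{\bF}_t\bC_i^\top\bC_i\widetilde{\bF}_t^\top\right)\frac{\bR^\top\wh{\bR}_i}{p}\,(\widehat{\bV}_{\bR_i}^*)^{-1},
\]
and identify the probability limit of each factor: by Proposition~\ref{prop8.5} the leading average converges to $\widetilde{\boldsymbol{\Sigma}}_{\mathbf{FC}_i}^*$; by Proposition~\ref{prop11}, $p^{-1}\bR^\top\wh{\bR}_i=(p^{-1}\wh{\bR}_i^\top\bR)^\top\overset{\mathcal{P}}{\longrightarrow}(\bQ_{\bR_i}^*)^\top=(\widetilde{\boldsymbol{\Sigma}}_{\mathbf{FC}_i}^*)^{-1/2}\boldsymbol{\Psi}_{\mathbf{R}_i}^*(\bV_{\bR_i}^*)^{1/2}$; and by Proposition~\ref{prop10}, $(\widehat{\bV}_{\bR_i}^*)^{-1}\overset{\mathcal{P}}{\longrightarrow}(\bV_{\bR_i}^*)^{-1}$ with $\|(\widehat{\bV}_{\bR_i}^*)^{-1}\|_2=\mathcal{O}_p(1)$. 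Since each of the three factors is $\mathcal{O}_p(1)$ in operator norm (Lemma~\ref{lem4.5}, Proposition~\ref{prop10}, Assumption 3, Proposition~\ref{prop8.5}, and $\|p^{-1/2}\wh{\bR}_i\|_2=\mathcal{O}_p(1)$), a product of matrices converging in probability converges to the product of the limits, so that as $m_i,p,T\to\infty$,
\[
\bH_{\bR_i}^*\overset{\mathcal{P}}{\longrightarrow}\widetilde{\boldsymbol{\Sigma}}_{\mathbf{FC}_i}^*(\widetilde{\boldsymbol{\Sigma}}_{\mathbf{FC}_i}^*)^{-1/2}\boldsymbol{\Psi}_{\mathbf{R}_i}^*(\bV_{\bR_i}^*)^{1/2}(\bV_{\bR_i}^*)^{-1}=(\widetilde{\boldsymbol{\Sigma}}_{\mathbf{FC}_i}^*)^{1/2}\boldsymbol{\Psi}_{\mathbf{R}_i}^*(\bV_{\bR_i}^*)^{-1/2}.
\]
To conclude I would use the orthogonality $(\boldsymbol{\Psi}_{\mathbf{R}_i}^*)^\top\boldsymbol{\Psi}_{\mathbf{R}_i}^*=\mathbf{I}$, i.e.\ $\boldsymbol{\Psi}_{\mathbf{R}_i}^*=[(\boldsymbol{\Psi}_{\mathbf{R}_i}^*)^\top]^{-1}$, and read off the last display as
\[
(\widetilde{\boldsymbol{\Sigma}}_{\mathbf{FC}_i}^*)^{1/2}[(\boldsymbol{\Psi}_{\mathbf{R}_i}^*)^\top]^{-1}(\bV_{\bR_i}^*)^{-1/2}=\left[(\bV_{\bR_i}^*)^{1/2}(\boldsymbol{\Psi}_{\mathbf{R}_i}^*)^\top(\widetilde{\boldsymbol{\Sigma}}_{\mathbf{FC}_i}^*)^{-1/2}\right]^{-1}=(\bQ_{\bR_i}^*)^{-1},
\]
which is the claim.

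I do not expect a genuinely hard step here: the functional central limit / Brownian-limit arguments and the eigenvector-perturbation construction of $\boldsymbol{\Psi}_{\mathbf{R}_i}^*$ have already been carried out in Propositions~\ref{prop8.5}, \ref{prop10}, and \ref{prop11}. The only points demanding care are (i) justifying that the product of three matrices each converging in probability converges to the product of the limits, which relies on the uniform operator-norm boundedness noted above; and (ii) keeping the symmetric square-root and inverse-square-root factors of $\widetilde{\boldsymbol{\Sigma}}_{\mathbf{FC}_i}^*$ and $\bV_{\bR_i}^*$ in the correct order so that the algebra collapses exactly onto $(\bQ_{\bR_i}^*)^{-1}$. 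Here the positive-definiteness of $\widetilde{\boldsymbol{\Sigma}}_{\mathbf{FC}_i}^*$, guaranteed by Assumption~\ref{assum9}, ensures that these symmetric square roots and their inverses are well defined.
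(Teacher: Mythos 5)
Your proposal is correct and follows essentially the same route as the paper's proof: factor $\bH_{\bR_i}^*$ into the time-average, the cross-product $p^{-1}\bR^\top\wh{\bR}_i$, and $(\widehat{\bV}_{\bR_i}^*)^{-1}$, pass to the limits $\widetilde{\boldsymbol{\Sigma}}_{\mathbf{FC}_i}^*$, $(\bQ_{\bR_i}^*)^\top$, and $(\bV_{\bR_i}^*)^{-1}$ via Propositions~\ref{prop8.5}, \ref{prop11}, and \ref{prop10}, then collapse the algebra using $(\boldsymbol{\Psi}_{\mathbf{R}_i}^*)^\top\boldsymbol{\Psi}_{\mathbf{R}_i}^*=\mathbf{I}$. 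The only cosmetic difference is that you spell out the operator-norm boundedness justifying the product-of-limits step, which the paper leaves implicit.
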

\begin{proof}
    We only prove the conclusion for $\mathbf{H}_{\mathbf{R}_i}^*$, since proof of results for $\mathbf{H}_{\mathbf{C}_j}^*$ is similar. By the definition of $\mathbf{H}_{\mathbf{R}_i}^*$ in Section 4, we have
    \begin{align*}
        \bH_{\bR_i}^*&=\frac{1}{pm_iT^2}\sum_{t=1}^T\widetilde{\bF}_t\bC_i^\top\bC_i\widetilde{\bF}_t^\top\bR^\top\wh{\bR}_i(\widehat{\bV}_{\bR_i}^*)^{-1}\\&\overset{\mathcal{P}}{\longrightarrow}\widetilde{\boldsymbol{\Sigma}}_{\mathbf{FC}_i}^*(\bQ_{\bR_i}^*)^\top({\bV}_{\bR_i}^*)^{-1}=(\widetilde{\boldsymbol{\Sigma}}_{\mathbf{FC}_i}^*)^{1/2}\boldsymbol{\Psi}_{\mathbf{R}_i}^*(\mathbf{V}_{\bR_i}^*)^{-1/2}=(\bQ_{\bR_i}^*)^{-1},
    \end{align*}
    where the second step follows from Lemma B.3. in \cite{li2025factormodelsmatrixvaluedtime} and Propositions~\ref{prop10}-\ref{prop11}, and the last step follows from $(\boldsymbol{\Psi}_{\mathbf{R}_i}^*)^\top\boldsymbol{\Psi}_{\mathbf{R}_i}^*=\mathbf{I}$.
\end{proof}

\begin{lemma}\label{lem5}
    Under Assumptions 3-4 and 7-9 with known $k$ and $r$, for any $i\in[s_1]\text{ and }j\in[s_2]$, we have
\begin{equation*}
    \frac{1}{p}\left\|\widehat{\mathbf{R}}_i-\mathbf{R H}_{\bR_i}^*\right\|_\mathsf{F}^2=\mathcal{O}_p\left(\frac{1}{m_iT^2}+\frac{1}{p^2T^2}\right),\quad\text{as }m_i,p,T\to\infty,
\end{equation*}
and
\begin{equation*}
    \frac{1}{q}\left\|\widehat{\mathbf{C}}_j-\mathbf{C H}_{\bC_j}^*\right\|_\mathsf{F}^2=\mathcal{O}_p\left(\frac{1}{n_jT^2}+\frac{1}{q^2T^2}\right),\quad\text{as }n_j,q,T\to\infty.
\end{equation*}
For any $l_1\in[p]$ and $l_2\in[q]$, we have
\begin{equation*}
    \left\|\widehat{\mathbf{R}}_i^{l_1}-\mathbf{R}^{l_1}\bH_{\bR_i}^*\right\|_2^2=\mathcal{O}_p\left(\frac{1}{m_iT^2}+\frac{1}{p^2T^2}\right),\quad\text{as }m_i,p,T\to\infty,
\end{equation*}
and
\begin{equation*}
    \left\|\widehat{\mathbf{C}}_j^{l_2}-\mathbf{C}^{l_2}\bH_{\bC_j}^*\right\|_2^2=\mathcal{O}_p\left(\frac{1}{n_jT^2}+\frac{1}{q^2T^2}\right),\quad\text{as }n_j,q,T\to\infty.
\end{equation*}
\end{lemma}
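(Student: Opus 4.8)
The plan is to mirror the proof of Proposition 1 (the stationary analogue, Lemma B.1 in \cite{yu2020}), but with the scaling adapted to the unit-root regime, using the nonstationary auxiliary quantities $\widehat{\bV}_{\bR_i}^*$, $\bH_{\bR_i}^*$ and the estimates in Propositions \ref{prop8.5}--\ref{prop12}. I will only treat $\widehat{\bR}_i$; the argument for $\widehat{\bC}_j$ is identical after transposing the submodel (\ref{submodel1}) into (\ref{submodel2}). First I would recall the eigen-equation $\widehat{\bR}_i\widehat{\bV}_{\bR_i}^* = \frac{1}{pm_iT^2}\sum_{t=1}^T\widetilde{\bY}_{it}\widetilde{\bY}_{it}^\top\widehat{\bR}_i$ and substitute $\widetilde{\bY}_{it} = \bR\widetilde{\bF}_t\bC_i^\top + \widetilde{\bE}_{it}^\top$ to obtain the standard four-term expansion. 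Right-multiplying by $(\widehat{\bV}_{\bR_i}^*)^{-1}$ and isolating the leading term, one gets
$$\widehat{\bR}_i - \bR\bH_{\bR_i}^* = \frac{1}{pm_iT^2}\Big(\bR\sum_{t}\widetilde{\bF}_t\bC_i^\top\widetilde{\bE}_{it}^\top\widehat{\bR}_i + \sum_{t}\widetilde{\bE}_{it}\bC_i\widetilde{\bF}_t^\top\bR^\top\widehat{\bR}_i + \sum_{t}\widetilde{\bE}_{it}\widetilde{\bE}_{it}^\top\widehat{\bR}_i\Big)(\widehat{\bV}_{\bR_i}^*)^{-1},$$
where $\bH_{\bR_i}^*$ is precisely the definition given in Section \ref{sec3.5} and absorbs the first (signal) term. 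The task is then to bound the three cross/noise terms in Frobenius norm divided by $p$, and (for the rowwise statement) in $\ell_2$ norm for a fixed row.

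The key inputs are already assembled. The boundedness $\|(\widehat{\bV}_{\bR_i}^*)^{-1}\|_2 = \mathcal{O}_p(1)$ comes from Proposition \ref{prop10}, and $\frac1{\sqrt p}\|\bR\|_2 = \mathcal{O}(1)$, $\frac1{\sqrt p}\|\widehat{\bR}_i\|_2 = \mathcal{O}_p(1)$ from Assumption 3 and the normalization $\frac1p\widehat{\bR}_i^\top\widehat{\bR}_i = \bI$. For the first two cross terms I would invoke Proposition \ref{prop9}, which gives $\mathbb{E}\|\sum_t\widetilde{\bF}_t\bC_i^\top\widetilde{\bE}_{it}^\top\|_\mathsf{F}^2 = \mathcal{O}(pm_iT^2)$, hence $\frac1{pm_iT^2}\|\bR\sum_t\widetilde{\bF}_t\bC_i^\top\widetilde{\bE}_{it}^\top\widehat{\bR}_i\|_2 = \mathcal{O}_p(p^{1/2}\cdot (pm_iT^2)^{1/2}\cdot p^{1/2})/(pm_iT^2) = \mathcal{O}_p((m_iT^2)^{-1/2}\cdot p^{1/2})$; squaring and dividing by $p$ yields the $\mathcal{O}_p(\frac{1}{m_iT^2})$ contribution. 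For the purely idiosyncratic term $\sum_t\widetilde{\bE}_{it}\widetilde{\bE}_{it}^\top$, I would use Lemma 8 of \cite{chen2021} (cited already in the proof of Proposition \ref{prop10}), namely $\mathbb{E}\|\sum_t\widetilde{\bE}_{it}\widetilde{\bE}_{it}^\top\|_\mathsf{F}^2 = \mathcal{O}(p^2m_iT + pm_i^2T^2)$, which after the $\frac1{pm_iT^2}$ normalization and division by $p$ produces terms of order $\frac{1}{pm_iT^3} + \frac{1}{p^2T^2}$, both absorbed into $\mathcal{O}_p(\frac{1}{m_iT^2} + \frac{1}{p^2T^2})$. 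Summing the three bounds gives the Frobenius statement.

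For the rowwise bound, I would not re-run the whole argument but instead extract the $l_1$-th row from the same identity: $\widehat{\bR}_i^{l_1} - \bR^{l_1}\bH_{\bR_i}^* = \frac{1}{pm_iT^2}\big(\bR^{l_1}\sum_t\widetilde{\bF}_t\bC_i^\top\widetilde{\bE}_{it}^\top\widehat{\bR}_i + \widetilde{\bE}_{it}^{l_1}\bC_i\widetilde{\bF}_t^\top\bR^\top\widehat{\bR}_i + \widetilde{\bE}_{it}^{l_1}\widetilde{\bE}_{it}^\top\widehat{\bR}_i\big)(\widehat{\bV}_{\bR_i}^*)^{-1}$ where $\widetilde{\bE}_{it}^{l_1}$ is the $l_1$-th row of $\widetilde{\bE}_{it}$. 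Here $\|\bR^{l_1}\|_2 = \mathcal{O}(1)$ by Assumption 3(a), and the second moment of $\|\sum_t\widetilde{\bF}_t\bC_i^\top(\widetilde{\bE}_{it}^{l_1})^\top\|_2^2$ is $\mathcal{O}(m_iT^2)$ — this is the single-$j$ summand inside the Proposition \ref{prop9} sum over $j\in[p]$ — while $\mathbb{E}\|\sum_t(\widetilde{\bE}_{it}^{l_1})^\top\widetilde{\bE}_{it}^\top\|_2^2$ can be controlled by the Lemma 8 argument restricted to one row, giving $\mathcal{O}(pm_iT + m_i^2T^2)$. Carrying the bookkeeping through yields $\|\widehat{\bR}_i^{l_1} - \bR^{l_1}\bH_{\bR_i}^*\|_2^2 = \mathcal{O}_p(\frac{1}{m_iT^2} + \frac{1}{p^2T^2})$ as claimed. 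The main obstacle I anticipate is not any single bound but getting the powers of $T$ right in the idiosyncratic terms: in the unit-root regime the $\frac1{T^2}$ normalization of $\widehat{\bM}_{\bR_i}$ means the noise-noise term, which in the stationary case sat at the boundary of the rate, now contributes a strictly faster $T^{-3}$ piece plus a $p^{-2}T^{-2}$ piece, and one must check carefully that the $\bF_t$-involving terms — which grow like $T$ because $\bF_t$ is $\text{I}(1)$ — are exactly what upgrades $\frac1{qT}$-type stationary rates to $\frac1{qT^2}$-type rates, consistent with Proposition \ref{prop9}. Everything else is a routine application of the Cauchy--Schwarz inequality and the already-established boundedness of $\widehat{\bR}_i$, $\bR$, and $(\widehat{\bV}_{\bR_i}^*)^{-1}$.
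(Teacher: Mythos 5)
Your overall route is the same as the paper's: isolate $\bH_{\bR_i}^*$ as the signal term in the eigen-equation, reduce to the two cross terms plus the noise--noise term, bound the cross terms via Proposition \ref{prop9}, and extract the $l_1$-th row for the elementwise statement. The cross-term bookkeeping is correct and matches the paper exactly.

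There is, however, a genuine gap in your treatment of the idiosyncratic term $\frac{1}{pm_iT^2}\sum_t\widetilde{\bE}_{it}\widetilde{\bE}_{it}^\top\widehat{\bR}_i(\widehat{\bV}_{\bR_i}^*)^{-1}$. Plugging Lemma 8 of \cite{chen2021}, $\mathbb{E}\|\sum_t\widetilde{\bE}_{it}\widetilde{\bE}_{it}^\top\|_\mathsf{F}^2=\mathcal{O}(p^2m_iT+pm_i^2T^2)$, into a plain Cauchy--Schwarz bound with $\|\widehat{\bR}_i\|_2^2=\mathcal{O}(p)$ gives
\begin{equation*}
\frac{1}{p}\Big\|\tfrac{1}{pm_iT^2}\sum_t\widetilde{\bE}_{it}\widetilde{\bE}_{it}^\top\widehat{\bR}_i(\widehat{\bV}_{\bR_i}^*)^{-1}\Big\|_\mathsf{F}^2
=\mathcal{O}_p\Big(\frac{1}{m_iT^3}+\frac{1}{pT^2}\Big),
\end{equation*}
not the $\mathcal{O}_p(\frac{1}{pm_iT^3}+\frac{1}{p^2T^2})$ you assert --- you have gained an unexplained factor of $1/p$. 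The term $\frac{1}{pT^2}$ is not absorbed into the target rate $\frac{1}{m_iT^2}+\frac{1}{p^2T^2}$ unless $m_i\lesssim p$, which is nowhere assumed. The extra factor of $1/p$ cannot come from a norm count alone; it requires the structural argument of Lemma B.2 in \cite{yu2020} (and Equation (B.4) there for the row version), which the paper invokes at exactly this point: decompose $\widehat{\bR}_i=(\widehat{\bR}_i-\bR\bH_{\bR_i}^*)+\bR\bH_{\bR_i}^*$, use the weak cross-sectional dependence in Assumption \ref{assum4} to show that the $\bR$-projected noise covariance $\frac{1}{pm_iT}\sum_t\widetilde{\bE}_{it}\widetilde{\bE}_{it}^\top\bR$ is genuinely smaller than the crude bound, and then move the $(\widehat{\bR}_i-\bR\bH_{\bR_i}^*)$ piece to the left-hand side as an $\mathbf{o}_p(1)$-coefficient self-bounding term. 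This yields
$\mathcal{O}_p(\frac{1}{p^2T^2}+\frac{1}{pm_iT^3})+\mathbf{o}_p(\frac{1}{T^2})\times\frac{1}{p}\|\widehat{\bR}_i-\bR\bH_{\bR_i}^*\|_\mathsf{F}^2$ and hence the claimed rate. Your proof needs this step (or an equivalent exploitation of Assumption \ref{assum4}) to be complete; the same repair is needed in your rowwise bound.
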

\begin{proof}
    We only prove the conclusion for $\widehat{\bR}_i-\bR\bH_{\bR_i}^*$, since the proof of the results for $\widehat{\bC}_j-\bC\bH_{\bC_j}^*$ is similar. From the estimation procedure of $\widehat{\bR}_i$ and the definition of $\mathbf{H}_{\mathbf{R}_i}^*$ in Section 4, we have
    \begin{align*}
        \widehat{\bR}_i-\bR\bH_{\bR_i}^*&=\frac{1}{pm_iT^2}\sum_{t=1}^{T}\widetilde{\mathbf{Y}}_{it}\widetilde{\mathbf{Y}}_{it}^\top\widehat{\bR}_i(\widehat{\bV}_{\bR_i}^*)^{-1}-\frac{1}{pm_iT^2}\sum_{t=1}^T\bR\widetilde{\bF}_t\bC_i^\top\bC_i\widetilde{\bF}_t^\top\bR^\top\wh{\bR}_i(\widehat{\bV}_{\bR_i}^*)^{-1}\\
        &=\frac{1}{pm_iT^2}\sum_{t=1}^{T}\left(\bR\widetilde{\bF}_t\bC_i^\top+\widetilde{\bE}_{it}\right)\left(\bR\widetilde{\bF}_t\bC_i^\top+\widetilde{\bE}_{it}\right)^\top\widehat{\bR}_i(\widehat{\bV}_{\bR_i}^*)^{-1}\\
        &\ \ \ -\frac{1}{pm_iT^2}\sum_{t=1}^{T}\bR\widetilde{\bF}_t\bC_i^\top\bC_i\widetilde{\bF}_t^\top\bR^\top\widehat{\bR}_i(\widehat{\bV}_{\bR_i}^*)^{-1}\\
        &=\frac{1}{pm_iT^2}\sum_{t=1}^{T}\left(\bR\widetilde{\bF}_t\bC_i^\top\widetilde{\bE}_{it}^\top+\widetilde{\bE}_{it}\bC_i\widetilde{\bF}_t^\top\bR^\top+\widetilde{\bE}_{it}\widetilde{\bE}_{it}^\top\right)\widehat{\bR}_i(\widehat{\bV}_{\bR_i}^*)^{-1}\\
        &=\mathrm{I}_1+\mathrm{I}_2+\mathrm{I}_3.
    \end{align*}
    According to Assumption 3 and Propositions~\ref{prop9}-\ref{prop10}, $\text{as }m_i,p,T\to\infty$, we have,
    \begin{align*}
        \frac{1}{p}\left\|\mathrm{I}_1\right\|_\mathsf{F}^2&=\frac{1}{p}\left\|\frac{1}{pm_iT^2}\sum_{t=1}^{T}\bR\widetilde{\bF}_t\bC_i^\top\widetilde{\bE}_{it}^\top\widehat{\bR}_i(\widehat{\bV}_{\bR_i}^*)^{-1}\right\|_\mathsf{F}^2\\
        &\le\frac{1}{m_iT^2}\frac{\|\bR\|_\mathsf{F}^2}{p}\frac{\left\|\sum_{t=1}^{T}\widetilde{\bF}_t\bC_i^\top\widetilde{\bE}_{it}^\top\right\|_\mathsf{F}^2}{pm_iT^2}\frac{\|\widehat{\bR}_i\|_\mathsf{F}^2}{p}\left\|(\widehat{\bV}_{\bR_i}^*)^{-1}\right\|_\mathsf{F}^2\\
        &=\mathcal{O}_p\left(\frac{1}{m_iT^2}\right),
    \end{align*}
    and 
    \begin{align*}
        \frac{1}{p}\left\|\mathrm{I}_2\right\|_\mathsf{F}^2&=\frac{1}{p}\left\|\frac{1}{pm_iT^2}\sum_{t=1}^{T}\widetilde{\bE}_{it}\bC_i\widetilde{\bF}_t^\top\bR^\top\widehat{\bR}_i(\widehat{\bV}_{\bR_i}^*)^{-1}\right\|_\mathsf{F}^2\\
        &\le\frac{1}{m_iT^2}\frac{\left\|\sum_{t=1}^{T}\widetilde{\bE}_{it}\bC_i\widetilde{\bF}_t^\top\right\|_\mathsf{F}^2}{pm_iT^2}\frac{\|\bR^\top\|_\mathsf{F}^2}{p}\frac{\|\widehat{\bR}_i\|_\mathsf{F}^2}{p}\left\|(\widehat{\bV}_{\bR_i}^*)^{-1}\right\|_\mathsf{F}^2\\
        &=\mathcal{O}_p\left(\frac{1}{m_iT^2}\right).
    \end{align*}
    Since $\bE_t$ is stationary and $\alpha$-mixing, by Lemma B.2. in \cite{yu2020}, as $m_i,p,T\to\infty$, we have $\frac{1}{p}\|\frac{1}{pm_iT}\sum_{t=1}^{T}\widetilde{\bE}_{it}\widetilde{\bE}_{it}^\top\widehat{\bR}_i\|_\mathsf{F}^2=\mathcal{O}_p(\frac{1}{p^2}+\frac{1}{pm_iT})+\mathbf{o}_p(1)\times\frac{1}{p}\|\widehat{\mathbf{R}}_i-\mathbf{R H}_{\bR_i}^*\|_\mathsf{F}^2$. Accordingly, we have
    $$\frac{1}{p}\left\|\mathrm{I}_3\right\|_\mathsf{F}^2=\mathcal{O}_p\left(\frac{1}{p^2T^2}+\frac{1}{pm_iT^3}\right)+\mathbf{o}_p\left(\frac{1}{T^2}\right)\times\frac{1}{p}\left\|\widehat{\mathbf{R}}_i-\mathbf{R H}_{\bR_i}^*\right\|_\mathsf{F}^2.$$
    Together, as $m_i,p,T\to\infty$, we have
    $$\frac{1}{p}\left\|\widehat{\mathbf{R}}_i-\mathbf{R H}_{\bR_i}^*\right\|_\mathsf{F}^2=\mathcal{O}_p\left(\frac{1}{m_iT^2}+\frac{1}{p^2T^2}\right).$$
    In addition, we consider the $l_1$-th row vector,
    \begin{align*}
        \widehat{\bR}_i^{l_1}-\bR^{l_1}\bH_{\bR_i}^*
        &=\frac{1}{pm_iT^2}\sum_{t=1}^{T}\left(\bR^{l_1}\widetilde{\bF}_t\bC_i^\top\widetilde{\bE}_{it}^\top+\widetilde{\bE}_{it}^{l_1}\bC_i\widetilde{\bF}_t^\top\bR^\top+\widetilde{\bE}_{it}^{l_1}\widetilde{\bE}_{it}^\top\right)\widehat{\bR}_i(\widehat{\bV}_{\bR_i}^*)^{-1}\\
        &=\mathrm{I}\hspace{-1.2pt}\mathrm{I}_1+\mathrm{I}\hspace{-1.2pt}\mathrm{I}_2+\mathrm{I}\hspace{-1.2pt}\mathrm{I}_3.
    \end{align*}
    According to Assumption 3 and Propositions~\ref{prop9}-\ref{prop10}, $\text{as }m_i,p,T\to\infty$, we have,
    \begin{align*}
        \left\|\mathrm{I}\hspace{-1.2pt}\mathrm{I}_1\right\|_2^2&=\left\|\frac{1}{pm_iT^2}\sum_{t=1}^{T}\bR^{l_1}\widetilde{\bF}_t\bC_i^\top\widetilde{\bE}_{it}^\top\widehat{\bR}_i(\widehat{\bV}_{\bR_i}^*)^{-1}\right\|_2^2\\
        &\le\frac{1}{m_iT^2}\|\bR^{l_1}\|_2^2\frac{\left\|\sum_{t=1}^{T}\widetilde{\bF}_t\bC_i^\top\widetilde{\bE}_{it}^\top\right\|_2^2}{pm_iT^2}\frac{\|\widehat{\bR}_i\|_2^2}{p}\left\|(\widehat{\bV}_{\bR_i}^*)^{-1}\right\|_2^2\\
        &=\mathcal{O}_p\left(\frac{1}{m_iT^2}\right),
    \end{align*}
    and 
    \begin{align*}
        \left\|\mathrm{I}\hspace{-1.2pt}\mathrm{I}_2\right\|_2^2&=\left\|\frac{1}{pm_iT^2}\sum_{t=1}^{T}\widetilde{\bE}_{it}^{l_1}\bC_i\widetilde{\bF}_t^\top\bR^\top\widehat{\bR}_i(\widehat{\bV}_{\bR_i}^*)^{-1}\right\|_2^2\\
        &\le\frac{1}{m_iT^2}\frac{\left\|\sum_{t=1}^{T}\widetilde{\bE}_{it}^{l_1}\bC_i\widetilde{\bF}_t^\top\right\|_2^2}{m_iT^2}\frac{\|\bR^\top\|_2^2}{p}\frac{\|\widehat{\bR}_i\|_2^2}{p}\left\|(\widehat{\bV}_{\bR_i}^*)^{-1}\right\|_2^2\\
        &=\mathcal{O}_p\left(\frac{1}{m_iT^2}\right),
    \end{align*}
    where Proposition~\ref{prop9} implies that
    $$\left\|\sum_{t=1}^{T}\widetilde{\bE}_{it}^{l_1}\bC_i\widetilde{\bF}_t^\top\right\|_2^2=\left\|\sum_{t=1}^{T}({\bF}_t+\alpha\overline{\bF})\Big[\sum_{\tau=1}^{m_i}(\bC_i^{\tau})^\top e_{it,l_1\tau}\Big]\right\|_2^2=\mathcal{O}_p\left({m_iT^2}\right).$$
    Moreover, Equation (B.4) in \cite{yu2020} implies that $\|\frac{1}{pm_iT}\sum_{t=1}^{T}\widetilde{\bE}_{it}^{l_1}\widetilde{\bE}_{it}^\top\widehat{\bR}_i\|_2^2=\mathcal{O}_p(\frac{1}{p^2}+\frac{1}{pm_iT})+\mathbf{o}_p(1)\times\frac{1}{p}\|\widehat{\mathbf{R}}_i-\mathbf{R H}_{\bR_i}^*\|_2^2$. Accordingly, we have
    $$\left\|\mathrm{I}\hspace{-1.2pt}\mathrm{I}_3\right\|_2^2=\mathcal{O}_p\left(\frac{1}{p^2T^2}+\frac{1}{pm_iT^3}\right)+\mathbf{o}_p\left(\frac{1}{T^2}\right)\times\frac{1}{p}\left\|\widehat{\mathbf{R}}_i-\mathbf{R H}_{\bR_i}^*\right\|_2^2.$$
    Together, for any $l_1\in[p]$, as $m_i,p,T\to\infty$, we have
    $$\left\|\widehat{\mathbf{R}}_i^{l_1}-\mathbf{R}^{l_1}\bH_{\bR_i}^*\right\|_2^2=\mathcal{O}_p\left(\frac{1}{m_iT^2}+\frac{1}{p^2T^2}\right).$$
\end{proof}
We next derive asymptotic properties of the global estimators and the associated auxiliary matrices.
\begin{proposition}\label{prop13}
	Under Assumptions 3-4 and 7-9 with known $k$ and $r$, as $\underline{m},\underline{n},T\to \infty$, we have
 \begin{align*}
     \widehat{\mathbf{V}}_\mathbf{R}\overset{\mathcal{P}}{\longrightarrow}  \mathbf{V}_\bR^*\quad\text{and}\quad
     \widehat{\mathbf{V}}_\mathbf{C}\overset{\mathcal{P}}{\longrightarrow} \mathbf{V}_\bC^*,
 \end{align*}
 with $\|\widehat{\mathbf{V}}_{\mathbf{R}}\|_2=\mathcal{O}_p(1)$, $\|\widehat{\mathbf{V}}_{\mathbf{R}}^{-1}\|_2=\mathcal{O}_p(1)$, $\|\widehat{\mathbf{V}}_{\mathbf{C}}\|_2=\mathcal{O}_p(1)$, and $\|\widehat{\mathbf{V}}_{\mathbf{C}}^{-1}\|_2=\mathcal{O}_p(1)$. Here $\mathbf{V}_\bR^*$ and $\mathbf{V}_\bC^*$ are diagonal matrices of eigenvalues (ordered decreasingly) of $(\boldsymbol{\Sigma}_\mathbf{H_R}^*)^{1/2}\boldsymbol{\Omega}_\mathbf{R}(\boldsymbol{\Sigma}_\mathbf{H_R}^*)^{1/2}$ and $(\boldsymbol{\Sigma}_\mathbf{H_C}^*)^{1/2}\boldsymbol{\Omega}_\mathbf{C}(\boldsymbol{\Sigma}_\mathbf{H_C}^*)^{1/2}$, respectively, with $\boldsymbol{\Sigma}_\mathbf{H_R}^*=\frac{1}{s_1}\sum_{i=1}^{s_1}(\mathbf{Q}_{\mathbf{R}_i}^*)^{-1}[(\mathbf{Q}_{\mathbf{R}_i}^*)^{-1}]^\top$ and $ \boldsymbol{\Sigma}_\mathbf{H_C}^*=\frac{1}{s_2}\sum_{j=1}^{s_2}(\mathbf{Q}_{\mathbf{C}_j}^*)^{-1}[(\mathbf{Q}_{\mathbf{C}_j}^*)^{-1}]^\top$.
\end{proposition}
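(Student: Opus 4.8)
The plan is to mirror the argument for Proposition~\ref{prop6}, but tracking the sharper $T^{-2}$ rates that characterize the nonstationary regime. I treat only $\widehat{\bV}_\bR$, since the argument for $\widehat{\bV}_\bC$ is symmetric. The local estimators $\widehat{\bR}_i$ and the global estimator $\widehat{\bR}$ are still normalized so that $p^{-1}\widehat{\bR}_i^\top\widehat{\bR}_i=\bI$ and $p^{-1}\widehat{\bR}^\top\widehat{\bR}=\bI$, so, exactly as in Equation (\ref{equation1_prop6}), $\widehat{\bV}_\bR=p^{-1}\widehat{\bR}^\top\widehat{\bM}_\bR\widehat{\bR}=p^{-1}\widehat{\bR}^\top\big(\tfrac{1}{ps_1}\sum_{i=1}^{s_1}\widehat{\bR}_i\widehat{\bR}_i^\top\big)\widehat{\bR}$. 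Hence it suffices to pin down the operator-norm limit of $\widehat{\bM}_\bR$ and then invoke an eigenvalue perturbation bound.

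First I would insert $\widehat{\bR}_i=(\widehat{\bR}_i-\bR\bH_{\bR_i}^*)+\bR\bH_{\bR_i}^*$ and expand $\widehat{\bM}_\bR$ into four terms. The ``square'' cross term $\tfrac{1}{ps_1}\sum_i(\widehat{\bR}_i-\bR\bH_{\bR_i}^*)(\widehat{\bR}_i-\bR\bH_{\bR_i}^*)^\top$ is $\mathcal{O}_p\big(\tfrac{1}{\underline{m}T^2}+\tfrac{1}{p^2T^2}\big)$ by Lemma~\ref{lem5}, while the two mixed terms are $\mathcal{O}_p\big((\tfrac{1}{\underline{m}T^2}+\tfrac{1}{p^2T^2})^{1/2}\big)$ after bounding $p^{-1/2}\|\bR\|_2$ via Assumption 3 and $\|\bH_{\bR_i}^*\|_2$ via Lemma~\ref{lem4.5}. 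Thus $\big\|\widehat{\bM}_\bR-\tfrac{1}{ps_1}\sum_{i=1}^{s_1}\bR\bH_{\bR_i}^*(\bH_{\bR_i}^*)^\top\bR^\top\big\|_2=\mathbf{o}_p(1)$. Next, since $\bH_{\bR_i}^*\overset{\mathcal{P}}{\longrightarrow}(\bQ_{\bR_i}^*)^{-1}$ by Proposition~\ref{prop12}, and hence $\bH_{\bR_i}^*(\bH_{\bR_i}^*)^\top\overset{\mathcal{P}}{\longrightarrow}(\bQ_{\bR_i}^*)^{-1}[(\bQ_{\bR_i}^*)^{-1}]^\top$, combined with $p^{-1}\|\bR\|_2^2=\mathcal{O}_p(1)$ and the fact that there are only finitely many nodes, I can replace the average of $\bH_{\bR_i}^*(\bH_{\bR_i}^*)^\top$ by $\boldsymbol{\Sigma}_\mathbf{H_R}^*$, which yields $\big\|\widehat{\bM}_\bR-\tfrac1p\bR\boldsymbol{\Sigma}_\mathbf{H_R}^*\bR^\top\big\|_2=\mathbf{o}_p(1)$.

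Then the inequality $|\lambda_j(\widehat{\bA})-\lambda_j(\bA)|\le\|\widehat{\bA}-\bA\|_2$ forces the $k$ largest eigenvalues of $\widehat{\bM}_\bR$, collected in $\widehat{\bV}_\bR$, to converge in probability to the $k$ largest eigenvalues of $\tfrac1p\bR\boldsymbol{\Sigma}_\mathbf{H_R}^*\bR^\top$. These equal the nonzero eigenvalues of $\tfrac1p\bR^\top\bR\,\boldsymbol{\Sigma}_\mathbf{H_R}^*$, which by Assumption 3 ($\tfrac1p\bR^\top\bR\to\boldsymbol{\Omega}_\bR$) converge to those of $\boldsymbol{\Omega}_\bR\boldsymbol{\Sigma}_\mathbf{H_R}^*$, and since $\boldsymbol{\Omega}_\bR$ and $\boldsymbol{\Sigma}_\mathbf{H_R}^*$ are both positive definite — the latter because each $\bQ_{\bR_i}^*$ is invertible under Assumption 9 — these coincide with the eigenvalues of $(\boldsymbol{\Sigma}_\mathbf{H_R}^*)^{1/2}\boldsymbol{\Omega}_\bR(\boldsymbol{\Sigma}_\mathbf{H_R}^*)^{1/2}$, i.e.\ with $\mathbf{V}_\bR^*$. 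Because these limiting eigenvalues are bounded away from $0$ and $\infty$, the top $k$ eigenvalues of $\widehat{\bM}_\bR$ are too with probability tending to one, giving $\|\widehat{\bV}_\bR\|_2=\mathcal{O}_p(1)$ and $\|\widehat{\bV}_\bR^{-1}\|_2=\mathcal{O}_p(1)$. The proof for $\widehat{\bV}_\bC$ is identical with Lemma~\ref{lem5}, Lemma~\ref{lem4.5}, and Proposition~\ref{prop12} applied to the column-side objects.

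The argument is essentially mechanical, and I expect no genuine obstacle: Lemma~\ref{lem5}, Lemma~\ref{lem4.5}, and Proposition~\ref{prop12} already supply every ingredient. The only point requiring care is the bookkeeping of the $T^{-2}$-rescaled error bounds — in particular verifying that every cross term still vanishes as $\underline{m},\underline{n},T\to\infty$ and that the replacement of $\tfrac{1}{s_1}\sum_i\bH_{\bR_i}^*(\bH_{\bR_i}^*)^\top$ by $\boldsymbol{\Sigma}_\mathbf{H_R}^*$ is legitimate, which it is since the sum runs over a fixed finite index set.
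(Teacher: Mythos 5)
Your proposal matches the paper's own proof of this proposition essentially step for step: the same identity $\widehat{\bV}_\bR=p^{-1}\widehat{\bR}^\top\widehat{\bM}_\bR\widehat{\bR}$, the same four-term expansion of $\widehat{\bM}_\bR$ via $\widehat{\bR}_i=(\widehat{\bR}_i-\bR\bH_{\bR_i}^*)+\bR\bH_{\bR_i}^*$ controlled by Lemmas~\ref{lem4.5} and~\ref{lem5}, the replacement of $\tfrac{1}{s_1}\sum_i\bH_{\bR_i}^*(\bH_{\bR_i}^*)^\top$ by $\boldsymbol{\Sigma}_{\mathbf{H_R}}^*$ via Proposition~\ref{prop12}, and the Weyl-inequality plus eigenvalue-identification argument at the end. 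The proof is correct and requires no further comment.
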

\begin{proof}
    We only prove the conclusion for $\hat{\bV}_\bR$, since the proof of results for $\hat{\bV}_\bC$ is similar. According to Equation (2.6) and $\frac{1}{p}\wh{\bR}^\top\wh{\bR}=\bI$, we have
    \begin{equation}
        \widehat{\mathbf{V}}_\mathbf{R}=\frac{1}{p}\widehat{\mathbf{R}}^\top\widehat{\mathbf{M}}_\mathbf{R}\widehat{\mathbf{R}}=\frac{1}{p}\widehat{\mathbf{R}}^\top\left( \frac{1}{ps_1}\sum_{i=1}^{s_1}\widehat{\mathbf{R}}_i\widehat{\mathbf{R}}_i^\top\right)\widehat{\mathbf{R}}. 
    \end{equation}
    Writing $\wh{\bR}_i$ as $\wh{\bR}_i=\wh{\bR}_i-\bR\bH_{\bR_i}^*+\bR\bH_{\bR_i}^*$, we obtain
    \begin{align*}
    \widehat{\mathbf{M}}_\mathbf{R}&=\frac{1}{ps_1}\sum_{i=1}^{s_1}(\widehat{\mathbf{R}}_i-\mathbf{R}\mathbf{H}_{\mathbf{R}_i}^*)(\widehat{\mathbf{R}}_i-\mathbf{R}\mathbf{H}_{\mathbf{R}_i}^*)^\top+\frac{1}{ps_1}\sum_{i=1}^{s_1}\mathbf{R}\mathbf{H}_{\mathbf{R}_i}^*(\widehat{\mathbf{R}}_i-\mathbf{R}\mathbf{H}_{\mathbf{R}_i}^*)^\top\\
    &\ \ \ +\frac{1}{ps_1}\sum_{i=1}^{s_1}(\widehat{\mathbf{R}}_i-\mathbf{R}\mathbf{H}_{\mathbf{R}_i}^*)(\mathbf{H}_{\mathbf{R}_i}^*)^\top\mathbf{R}^\top+\frac{1}{ps_1}\sum_{i=1}^{s_1}\mathbf{R}\mathbf{H}_{\mathbf{R}_i}^*(\mathbf{H}_{\mathbf{R}_i}^*)^\top\mathbf{R}^\top.
    \end{align*}
    By Assumption 3 and Lemmas \ref{lem4.5}-\ref{lem5}, we have $\frac{1}{p}\|\widehat{\bR}_i-\bR\bH_{\bR_i}^*\|_\mathsf{F}^2=\mathcal{O}_p(\frac{1}{m_iT^2}+\frac{1}{p^2T^2})$ and $\frac{1}{p}\|\mathbf{R}\mathbf{H}_{\mathbf{R}_i}^*\|_\mathsf{F}^2=\mathcal{O}_p(1)$, for any $i\in[s_1]$. We then bound the first three terms of $\widehat{\mathbf{M}}_\mathbf{R}$ as
    $$\left \| \frac{1}{ps_1}\sum_{i=1}^{s_1}(\widehat{\mathbf{R}}_i-\mathbf{R}\mathbf{H}_{\mathbf{R}_i}^*)(\widehat{\mathbf{R}}_i-\mathbf{R}\mathbf{H}_{\mathbf{R}_i}^*)^\top\right \|_2 =\mathcal{O}_p\left(\frac{1}{\underline{m}T^2}+\frac{1}{p^2T^2} \right),$$
    $$\left \| \frac{1}{ps_1}\sum_{i=1}^{s_1}\mathbf{R}\mathbf{H}_{\mathbf{R}_i}^*(\widehat{\mathbf{R}}_i-\mathbf{R}\mathbf{H}_{\mathbf{R}_i}^*)^\top\right \|_2 =\mathcal{O}_p\left(\frac{1}{\sqrt{\underline{m}}T}+\frac{1}{pT}\right),$$
    and 
    $$\left \| \frac{1}{ps_1}\sum_{i=1}^{s_1}(\widehat{\mathbf{R}}_i-\mathbf{R}\mathbf{H}_{\mathbf{R}_i}^*)(\mathbf{H}_{\mathbf{R}_i}^*)^\top\mathbf{R}^\top\right \|_2 =\mathcal{O}_p\left(\frac{1}{\sqrt{\underline{m}}T}+\frac{1}{pT}\right).$$
    In other words,
    $$\left \| \widehat{\mathbf{M}}_\mathbf{R}-\frac{1}{ps_1}\sum_{i=1}^{s_1}\mathbf{R}\mathbf{H}_{\mathbf{R}_i}^*(\mathbf{H}_{\mathbf{R}_i}^*)^\top\mathbf{R}^\top\right \|_2=\mathcal{O}_p\left(\frac{1}{\sqrt{\underline{m}}T}+\frac{1}{pT}\right).$$
    On the other hand, according to Proposition~\ref{prop12}, we have $\mathbf{H}_{\mathbf{R}_i}^*\overset{\mathcal{P}}{\longrightarrow}(\mathbf{Q}_{\mathbf{R}_i}^*)^{-1}$ and thus $ \mathbf{H}_{\mathbf{R}_i}^*(\mathbf{H}_{\mathbf{R}_i}^*)^\top\overset{\mathcal{P}}{\longrightarrow}(\mathbf{Q}_{\mathbf{R}_i}^*)^{-1}[(\mathbf{Q}_{\mathbf{R}_i}^*)^{-1}]^\top$, as $m_i,p,T\to \infty$, for any $i\in[s_1]$. Then, by Assumption 3, we have
    \begin{align*}
        &\left \| \frac{1}{ps_1}\sum_{i=1}^{s_1}\mathbf{R}\mathbf{H}_{\mathbf{R}_i}^*(\mathbf{H}_{\mathbf{R}_i}^*)^\top\mathbf{R}^\top-\frac{1}{ps_1}\sum_{i=1}^{s_1}\mathbf{R}(\mathbf{Q}_{\mathbf{R}_i}^*)^{-1}[(\mathbf{Q}_{\mathbf{R}_i}^*)^{-1}]^\top\mathbf{R}^\top\right \|_2\\
        &\le \frac{1}{s_1}\sum_{i=1}^{s_1}\left \| \mathbf{H}_{\mathbf{R}_i}^*(\mathbf{H}_{\mathbf{R}_i}^*)^\top-(\mathbf{Q}_{\mathbf{R}_i}^*)^{-1}[(\mathbf{Q}_{\mathbf{R}_i}^*)^{-1}]^\top\right \|_2\frac{1}{p}\left \|\mathbf{R}\right \|_2^2\\
        &= \mathbf{o}_p\left(1\right).
    \end{align*}
    Together, we have
    $$\left \| \widehat{\mathbf{M}}_\mathbf{R}-\frac{1}{ps_1}\sum_{i=1}^{s_1}\mathbf{R}(\mathbf{Q}_{\mathbf{R}_i}^*)^{-1}[(\mathbf{Q}_{\mathbf{R}_i}^*)^{-1}]^\top\mathbf{R}^\top\right \|_2=\left \| \widehat{\mathbf{M}}_\mathbf{R}-\frac{1}{p}\mathbf{R}\boldsymbol{\Sigma}_\mathbf{H_R}^*\mathbf{R}^\top\right \|_2=\mathbf{o}_p\left(1\right).$$
    By the inequality that for the $j$-th largest eigenvalue of any square matrices $\bA$ and $\wh{\bA}$, $|\lambda_j(\widehat{\mathbf{A}})-\lambda_j(\mathbf{A})|\le \| \widehat{\mathbf{A}}-\mathbf{A}\|_2$, as $\underline{m},p,T\to \infty$, we have
    $$\widehat{\mathbf{V}}_\mathbf{R}\overset{\mathcal{P}}{\longrightarrow}  \mathbf{V}_\bR^*,$$
    where the eigenvalues of $\frac{1}{p}\mathbf{R}\boldsymbol{\Sigma}_\mathbf{H_R}^*\mathbf{R}^\top$ and $\frac{1}{p}\mathbf{R}^\top\mathbf{R}\boldsymbol{\Sigma}_\mathbf{H_R}^*$ are the same, $\frac{1}{p}\mathbf{R}^\top\mathbf{R}\to\boldsymbol{\Omega}_\mathbf{R}$ by Assumption 3, and the eigenvalues of $\boldsymbol{\Omega}_\mathbf{R}\boldsymbol{\Sigma}_\mathbf{H_R}^*$ and $(\boldsymbol{\Sigma}_\mathbf{H_R}^*)^{1/2}\boldsymbol{\Omega}_\mathbf{R}(\boldsymbol{\Sigma}_\mathbf{H_R}^*)^{1/2}$ are also the same since $\boldsymbol{\Omega}_\mathbf{R}$ and $\boldsymbol{\Sigma}_\mathbf{H_R}^*$ are both positive definite. Furthermore, the top $k$ eigenvalues of $\widehat{\mathbf{M}}_\mathbf{R}$ are bounded away from zero and infinity, and therefore we have $\|\widehat{\mathbf{V}}_{\mathbf{R}}\|_2=\mathcal{O}_p(1)$ and $\|\widehat{\mathbf{V}}_{\mathbf{R}}^{-1}\|_2=\mathcal{O}_p(1)$.
\end{proof}

\begin{lemma}\label{lem6}
	Under Assumptions 3-4 and 7-9 with known $k$ and $r$, we have $\|\bH_{\bR}^*\|_2=\mathcal{O}_p(1)$ and $\|\bH_{\bC}^*\|_2=\mathcal{O}_p(1)$. 
\end{lemma}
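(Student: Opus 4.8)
The plan is to mirror the argument used for Lemma~\ref{lem2} in the stationary case, substituting the unit-root auxiliary quantities $\bH_{\bR_i}^*$ and $\widehat{\bV}_\bR$ for their stationary analogues. I will prove only the bound for $\|\bH_{\bR}^*\|_2$, since the argument for $\|\bH_{\bC}^*\|_2$ is entirely symmetric. Recall that
$$\bH_{\bR}^*=\frac{1}{ps_1}\sum_{i=1}^{s_1} \bH_{\bR_i}^* (\bH_{\bR_i}^*)^\top\bR^\top\wh{\bR}\widehat{\bV}_{\bR}^{-1}.$$
By the triangle inequality and submultiplicativity of the operator norm, together with $\|\bR^\top\|_2=\|\bR\|_2$,
$$\|\bH_{\bR}^*\|_2\le \frac{1}{s_1}\sum_{i=1}^{s_1}\|\bH_{\bR_i}^*\|_2^2\cdot\frac{\|\bR\|_2}{\sqrt p}\cdot\frac{\|\wh{\bR}\|_2}{\sqrt p}\cdot\|\widehat{\bV}_{\bR}^{-1}\|_2.$$

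Then I would collect the four factors, each already established earlier in the excerpt. First, Lemma~\ref{lem4.5} gives $\|\bH_{\bR_i}^*\|_2=\mathcal{O}_p(1)$, and since $[s_1]$ is a fixed finite index set this holds uniformly in $i$, so $\frac{1}{s_1}\sum_{i=1}^{s_1}\|\bH_{\bR_i}^*\|_2^2=\mathcal{O}_p(1)$. Second, Assumption~3 implies $p^{-1}\bR^\top\bR\to\boldsymbol{\Omega}_\bR$, hence $p^{-1/2}\|\bR\|_2=\mathcal{O}(1)$. Third, the construction of $\wh{\bR}$ forces $p^{-1}\wh{\bR}^\top\wh{\bR}=\bI_k$, so $p^{-1/2}\|\wh{\bR}\|_2=1$. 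Fourth, Proposition~\ref{prop13} gives $\|\widehat{\bV}_{\bR}^{-1}\|_2=\mathcal{O}_p(1)$. Multiplying these together yields $\|\bH_{\bR}^*\|_2=\mathcal{O}_p(1)$, and the same chain of inequalities with $\bH_{\bC_j}^*$, $\bC$, $\wh{\bC}$, and $\widehat{\bV}_{\bC}$ in place of the row quantities — invoking Lemma~\ref{lem4.5} and Proposition~\ref{prop13} again — gives $\|\bH_{\bC}^*\|_2=\mathcal{O}_p(1)$.

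This is essentially routine bookkeeping with no substantive obstacle, as all constituent estimates have already been proved. The only points to be careful about are that Proposition~\ref{prop13} (supplying $\|\widehat{\bV}_\bR^{-1}\|_2=\mathcal{O}_p(1)$) is indeed available at this stage — which it is, since it was established just above via Lemmas~\ref{lem4.5}--\ref{lem5} and Proposition~\ref{prop12} — and that the $\mathcal{O}_p(1)$ bound on $\|\bH_{\bR_i}^*\|_2$ is uniform in $i$, which follows because $s_1$ is finite and fixed, so a finite maximum of $\mathcal{O}_p(1)$ terms remains $\mathcal{O}_p(1)$.
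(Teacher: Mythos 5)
Your proof is correct and takes essentially the same approach as the paper: both bound $\|\bH_{\bR}^*\|_2$ directly from its definition via submultiplicativity, invoking Lemma~\ref{lem4.5} for $\|\bH_{\bR_i}^*\|_2=\mathcal{O}_p(1)$, Assumption~3 for $p^{-1/2}\|\bR\|_2=\mathcal{O}(1)$, and Proposition~\ref{prop13} for $\|\widehat{\bV}_{\bR}^{-1}\|_2=\mathcal{O}_p(1)$. You simply spell out the chain of inequalities that the paper leaves implicit.
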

\begin{proof}
	We only prove the result for $\|\bH_{\bR}^*\|_2$, since the proof for $\|\bH_{\bC}^*\|_2$ is similar.
	Recall that $\bH_\bR^*=\frac{1}{ps_1}\sum_{i=1}^{s_1} \bH_{\bR_i}^*(\bH_{\bR_i}^*)^\top\bR^\top\hat{\bR}\widehat{\bV}_{\bR}^{-1}$. The conclusion follows directly from Assumption 3, Proposition~\ref{prop13}, and Lemma~\ref{lem4.5}.
\end{proof}

\subsection*{A.3 Proof of  Theorem 1}

\begin{proof}
We only prove the conclusion for $\hat{\bR}$, since the proof of results for $\hat{\bC}$ is similar.
As $\widehat{\bV}_{\bR} \in \mathbb{R}^{k\times k}$ is consisting of the $k$ largest eigenvalues of $\widehat{\mathbf{M}}_{\mathbf{R}}$ (2.6), we have,
\begin{equation*}
    \hat{\bR}=\widehat{\mathbf{M}}_{\mathbf{R}} \hat{\bR}\widehat{\bV}_{\bR}^{-1}=\frac{1}{ps_1}\sum_{i=1}^{s_1} \wh{\bR}_i\wh{\bR}_i^\top\hat{\bR}\widehat{\bV}_{\bR}^{-1}.
\end{equation*}
Writing $\wh{\bR}_i$ as $\wh{\bR}_i=\wh{\bR}_i-\bR\bH_{\bR_i}+\bR\bH_{\bR_i}$, we obtain
\begin{equation}\label{r-rh}
	\begin{split}
		\hat{\bR}-\bR\bH_\bR&=\frac{1}{ps_1}\sum_{i=1}^{s_1} (\wh{\bR}_i-\bR\bH_{\bR_i})(\wh{\bR}_i-\bR\bH_{\bR_i})^\top\hat{\bR}\widehat{\bV}_{\bR}^{-1} \\
		&\ \ \ +\frac{1}{ps_1}\sum_{i=1}^{s_1} \bR\bH_{\bR_i}(\wh{\bR}_i-\bR\bH_{\bR_i})^\top\hat{\bR}\widehat{\bV}_{\bR}^{-1} \\
		&\ \ \ + \frac{1}{ps_1}\sum_{i=1}^{s_1} (\wh{\bR}_i-\bR\bH_{\bR_i})\bH_{\bR_i}^\top\bR^\top\hat{\bR}\widehat{\bV}_{\bR}^{-1} \\
		&= \mathrm{I}_1+\mathrm{I}_2+\mathrm{I}_3.
	\end{split}
\end{equation}
Next, we bound $\mathrm{I}_1$, $\mathrm{I}_2$, and $\mathrm{I}_3$. We observe that $\mathrm{I}_1$ is dominated by $\mathrm{I}_2$ and $\mathrm{I}_3$, and $\|\mathrm{I}_2\|_\mathsf{F}\asymp \|\mathrm{I}_3\|_\mathsf{F}$. For $\mathrm{I}_2$, by Assumption 3, Propositions~\ref{prop1}-\ref{prop2} and \ref{prop6}, we have
$$\frac{1}{p} \|\mathrm{I}_2\|_\mathsf{F}^2 \leq \frac{C}{s_1}\sum_{i=1}^{s_1} \frac{\|\bR\|_\mathsf{F}^2}{p} \| \bH_{\bR_i}\|_\mathsf{F}^2 \frac{\|\wh{\bR}_i-\bR\bH_{\bR_i}\|_\mathsf{F}^2}{p}\frac{\|\hat{\bR}\|_\mathsf{F}^2}{p}\| \widehat{\bV}_{\bR}^{-1}\|_\mathsf{F}^2=\mathcal{O}_p\left(\frac{1}{\underline{m}T}+\frac{1}{p^2}\right),$$
where $C$ is a positive constant. Thus, as $\underline{m},p,T\to\infty$,
$$\frac{1}{p} \left\|\hat{\bR}-\bR\bH_\bR\right\|_\mathsf{F}^2=\mathcal{O}_p\left(\frac{1}{\underline{m}T}+\frac{1}{p^2}\right).$$

\end{proof}

\subsection*{A.4 Proof of Theorem 2}

\begin{proof}
 From the estimation procedure (2.10), we have
$$
\widehat{\mathbf{F}}_t=\frac{1}{p q} \widehat{\bR}^{\top} \mathbf{Y}_t \widehat{\mathbf{C}}=\frac{1}{p q} \widehat{\mathbf{R}}^{\top} \mathbf{R} \mathbf{F}_t \mathbf{C}^{\top} \widehat{\mathbf{C}}+\frac{1}{p q} \widehat{\mathbf{R}}^{\top} \mathbf{E}_t \widehat{\mathbf{C}} .
$$
Writing $\mathbf{R}=\mathbf{R}-\widehat{\mathbf{R}} \mathbf{H}_\bR^{-1}+\widehat{\mathbf{R}} \mathbf{H}_\bR^{-1}$ and $\mathbf{C}={\mathbf{C}}-\widehat{\mathbf{C}} \mathbf{H}_\bC^{-1}+\widehat{\mathbf{C}} \mathbf{H}_\bC^{-1}$, we obtain
$$
\begin{aligned}
	\widehat{\mathbf{F}}_t-\mathbf{H}_\bR^{-1} \mathbf{F}_t (\mathbf{H}_\bC^{-1})^{\top} & =\frac{1}{p q} \widehat{\mathbf{R}}^{\top}(\mathbf{R}-\widehat{\mathbf{R}} \mathbf{H}_\bR^{-1}) \mathbf{F}_t(\mathbf{C}-\widehat{\mathbf{C}} \mathbf{H}_\bC^{-1})^{\top} \widehat{\mathbf{C}} \\
	&\ \ \  +\frac{1}{p} \widehat{\mathbf{R}}^{\top}(\mathbf{R}-\widehat{\mathbf{R}} \mathbf{H}_\bR^{-1}) \mathbf{F}_t (\mathbf{H}_\bC^{-1})^{\top} \\
	&\ \ \  +\frac{1}{q} \mathbf{H}_\bR^{-1} \mathbf{F}_t(\mathbf{C}-\widehat{\mathbf{C}} \mathbf{H}_\bC^{-1})^{\top} \widehat{\mathbf{C}}  +\frac{1}{p q} \widehat{\mathbf{R}}^{\top} \mathbf{E}_t \widehat{\mathbf{C}} .
\end{aligned}
$$
We further decompose $\widehat{\mathbf{R}}=\widehat{\mathbf{R}}-\mathbf{R H}_\bR+\mathbf{R H}_\bR$ and $\widehat{\mathbf{C}}=\widehat{\mathbf{C}}-\mathbf{C H}_\bC+\mathbf{C H}_\bC$ in the last term of the above equation and rearrange it. We have
\begin{equation}\label{extendFt}
    \begin{split}
        \widehat{\mathbf{F}}_t-\mathbf{H}_\bR^{-1} \mathbf{F}_t (\mathbf{H}_\bC^{-1})^{\top}&=  \frac{1}{p q} \widehat{\mathbf{R}}^{\top}(\mathbf{R}-\widehat{\mathbf{R}} \mathbf{H}_\bR^{-1}) \mathbf{F}_t(\mathbf{C}-\widehat{\mathbf{C}} \mathbf{H}_\bC^{-1})^{\top} \widehat{\mathbf{C}} \\
	&\ \ \ +\frac{1}{p} \widehat{\mathbf{R}}^{\top}(\mathbf{R}-\widehat{\mathbf{R}} \mathbf{H}_\bR^{-1}) \mathbf{F}_t (\mathbf{H}_\bC^{-1})^{\top}  +\frac{1}{q} \mathbf{H}_\bR^{-1} \mathbf{F}_t(\mathbf{C}-\widehat{\mathbf{C}} \mathbf{H}_\bC^{-1})^{\top} \widehat{\mathbf{C}} \\
	&\ \ \  +\frac{1}{p q}(\widehat{\mathbf{R}}-\mathbf{R} \mathbf{H}_\bR)^{\top} \mathbf{E}_t(\widehat{\mathbf{C}}-\mathbf{C H}_\bC)  +\frac{1}{p q}(\widehat{\mathbf{R}}-\mathbf{R} \mathbf{H}_\bR)^{\top} \mathbf{E}_t \mathbf{C H}_\bC \\
	&\ \ \  +\frac{1}{p q} \bH_\bR^\top{\mathbf{R}}^{\top} \mathbf{E}_t(\widehat{\mathbf{C}}-\mathbf{C} \mathbf{H}_\bC) +\frac{1}{p q} \mathbf{H}_\bR^{\top} \mathbf{R}^{\top} \mathbf{E}_t \mathbf{C} \mathbf{H}_\bC \\
	& =\sum_{i=1}^7 \mathrm{I}_i .
    \end{split}	
\end{equation}
Since $\frac{1}{\sqrt{p}}\|\mathbf{R}-\widehat{\mathbf{R}} \mathbf{H}_\bR^{-1}\|_2=\mathbf{o}_p(1)$ and $\frac{1}{\sqrt{q}}\|\mathbf{C}-\widehat{\mathbf{C}} \mathbf{H}_\bC^{-1}\|_2=\mathbf{o}_p(1)$ as $\underline{m},\underline{n},T\to\infty$ by Theorem 1, term $\mathrm{I}_1$ is dominated by $\mathrm{I}_2$ and $\mathrm{I}_3$, and term $\mathrm{I}_4$ is dominated by $\mathrm{I}_5$ and $\mathrm{I}_6$. Now we bound $\mathrm{I}_2, \mathrm{I}_3, \mathrm{I}_5, \mathrm{I}_6$, and $\mathrm{I}_7$.
$$
\begin{aligned}
	\|\mathrm{I}_2\|_\text{2} & =\left\|\frac{1}{p}(\widehat{\mathbf{R}}-\mathbf{R} \mathbf{H}_\bR)^{\top}(\mathbf{R}-\widehat{\mathbf{R}} \mathbf{H}_\bR^{-1}) \mathbf{F}_t (\mathbf{H}_\bC^{-1})^{\top}+\frac{1}{p} \mathbf{H}_\bR^{\top} \mathbf{R}^{\top}(\mathbf{R}-\widehat{\mathbf{R}} \mathbf{H}_\bR^{-1}) \mathbf{F}_t (\mathbf{H}_\bC^{-1})^{\top}\right\|_2 \\
	& =\mathcal{O}_p \left(\left(\frac{1}{\underline{m}T}+\frac{1}{p^2}\right)^{1/2}\right),
\end{aligned}
$$
by Assumptions 2-3, Lemma~\ref{lem2}, and Theorem 1. Similarly, adding Assumption 5, we have
\begin{align*}
    \|\mathrm{I}_3\|_2&=\mathcal{O}_p \left(\left(\frac{1}{\underline{n}T}+\frac{1}{q^2}\right)^{1/2}\right),\quad  \|\mathrm{I}_5\|_2=\mathcal{O}_p\left(\frac{1}{\sqrt{q}}\left(\frac{1}{\underline{m}T}+\frac{1}{p^2}\right)^{1/2} \right), \\
    \|\mathrm{I}_6\|_2&=\mathcal{O}_p\left(\frac{1}{\sqrt{p}}\left(\frac{1}{\underline{n}T}+\frac{1}{q^2}\right)^{1/2}\right), \quad \|\mathrm{I}_7\|_2=\mathcal{O}_p(\frac{1}{\sqrt{pq}}).
\end{align*}
Finally, we have, as $\underline{m},\underline{n},T\to\infty$,
$$
\left\|\widehat{\mathbf{F}}_t-\mathbf{H}_\bR^{-1} \mathbf{F}_t (\mathbf{H}_\bC^{-1})^{\top}\right\|_2=\mathcal{O}_p\left(\frac{1}{\sqrt{\underline{m}T}}+\frac{1}{\sqrt{\underline{n}T}}+\frac{1}{p}+\frac{1}{q}\right).
$$
\end{proof}

\subsection*{A.5 Proof of Theorem 3}
\begin{proof}
From the estimation procedure (2.11), we have $	\wh{\bS}_t=\wh{\bR}\wh{\bF}_t\wh{\bC}^\top$ and $\bS_t=\bR\bF_t\bC^\top$. Define $\bR^\text{H}=\bR\bH_{\bR}$, ${\bC}^\text{H}=\bC\bH_{\bC}$, and $\bF_t^\text{H}=\bH_{\bR}^{-1}\bF_t(\bH_{\bC}^{-1})^\top$. Then, $\bS_t={\bR}^\text{H}{\bF}_t^\text{H}({\bC}^\text{H})^\top$. Under this definition, we have
\begin{equation*}
	\begin{split}
		\wh{\bS}_t-\bS_t&=\wh{\bR}\wh{\bF}_t\wh{\bC}^\top- {\bR}^\text{H}{\bF}_t^\text{H}({\bC}^\text{H})^\top \\
		&=(\wh{\bR}- {\bR}^\text{H})(\wh{\bF}_t-{\bF}_t^\text{H})(\wh{\bC}-{\bC}^\text{H})^\top+(\wh{\bR}- {\bR}^\text{H}){\bF}_t^\text{H}(\wh{\bC}-{\bC}^\text{H})^\top \\
		&\ \ \ + {\bR}^\text{H}(\wh{\bF}_t-{\bF}_t^\text{H})(\wh{\bC}-{\bC}^\text{H})^\top +(\wh{\bR}- {\bR}^\text{H})(\wh{\bF}_t-{\bF}_t^\text{H})({\bC}^\text{H})^\top\\
		&\ \ \ +(\wh{\bR}- {\bR}^\text{H}){\bF}_t^\text{H}({\bC}^\text{H})^\top+{\bR}^\text{H}(\wh{\bF}_t-{\bF}_t^\text{H})({\bC}^\text{H})^\top+{\bR}^\text{H}{\bF}_t^\text{H}(\wh{\bC}-{\bC}^\text{H} )^\top.
	\end{split}
\end{equation*} 
Then, for any $i\in[p]$ and $j\in[q]$, we have
\begin{equation}\label{extendst}
    \begin{split}
        \wh{\bS}_{t,ij}-\bS_{t,ij}&=[\wh{\bR}^i- ({\bR}^\text{H})^i](\wh{\bF}_t-{\bF}_t^\text{H})[\wh{\bC}^j-({\bC}^\text{H})^j]^\top+[\wh{\bR}^i- ({\bR}^\text{H})^i]{\bF}_t^\text{H}[\wh{\bC}^j-({\bC}^\text{H})^j]^\top\\
		&\ \ \ + ({\bR}^\text{H})^i(\wh{\bF}_t-{\bF}_t^\text{H})[\wh{\bC}^j-({\bC}^\text{H})^j]^\top +[\wh{\bR}^i- ({\bR}^\text{H})^i](\wh{\bF}_t-{\bF}_t^\text{H})[({\bC}^\text{H})^j]^\top\\
        &\ \ \ +({\bR}^\text{H})^i(\wh{\bF}_t-{\bF}_t^\text{H})[({\bC}^\text{H})^j]^\top+({\bR}^\text{H})^i{\bF}_t^\text{H}[\wh{\bC}^j-({\bC}^\text{H})^j]^\top\\
        &\ \ \ +[\wh{\bR}^i- ({\bR}^\text{H})^i]{\bF}_t^\text{H}[({\bC}^\text{H})^j]^\top.
    \end{split}
\end{equation}
Since $\|\widehat{\mathbf{R}}^i-{\mathbf{R}}^i \mathbf{H}_\bR\|_2=\mathbf{o}_p(1)$, $\|\widehat{\mathbf{C}}^j-{\mathbf{C}}^j \mathbf{H}_\bC\|_2=\mathbf{o}_p(1)$, and $\|\widehat{\mathbf{F}}_t-\mathbf{H}_\bR^{-1} \mathbf{F}_t (\mathbf{H}_\bC^{-1})^{\top}\|_2=\mathbf{o}_p(1)$ as $\underline{m},\underline{n},T\to\infty$ by Theorems 1-2, the last three terms dominate $\wh{\bS}_{t,ij}-\bS_{t,ij}$. By the Cauchy-Schwarz inequality,
$$({\bR}^\text{H})^i(\wh{\bF}_t-{\bF}_t^\text{H})[({\bC}^\text{H})^j]^\top\le\|({\bR}^\text{H})^i\|_2\|\wh{\bF}_t-{\bF}_t^\text{H}\|_2\|({\bC}^\text{H})^j\|_2,$$
$$({\bR}^\text{H})^i{\bF}_t^\text{H}[\wh{\bC}^j-({\bC}^\text{H})^j]^\top\le\|({\bR}^\text{H})^i\|_2\|{\bF}_t^\text{H}\|_2\|\wh{\bC}^j-({\bC}^\text{H})^j\|_2,$$
and
$$[\wh{\bR}^i- ({\bR}^\text{H})^i]{\bF}_t^\text{H}[({\bC}^\text{H})^j]^\top\le\|\wh{\bR}^i- ({\bR}^\text{H})^i\|_2\|{\bF}_t^\text{H}\|_2\|({\bC}^\text{H})^j\|_2.$$
By Assumptions 2-3, Lemma~\ref{lem2}, and Theorems 1-2, we have
$$
\wh{\bS}_{t,ij}-\bS_{t,ij}=\mathcal{O}_p\left(\frac{1}{\sqrt{\underline{m}T}}+\frac{1}{\sqrt{\underline{n}T}}+\frac{1}{p}+\frac{1}{q}\right).
$$
\end{proof}

\subsection*{A.6 Proof of Theorem 4}

\begin{proof}
We only prove the conclusion for $\widehat{\bR}^{i_1}-\bR^{i_1}\bH_\bR$, since the proof of results for $\widehat{\bC}^{i_2}-\bC^{i_2}\bH_\bC$ is similar.
    First, we expand $\widehat{\bR}^{i_1}-\bR^{i_1}\bH_\bR$ as Equation~(\ref{r-rh}),
    \begin{align*}
        \widehat{\bR}^{i_1}-\bR^{i_1}\bH_\bR&=\left[\frac{1}{ps_1}\sum_{j=1}^{s_1}(\widehat{\mathbf{R}}^{i_1}_j-\mathbf{R}^{i_1}\mathbf{H}_{\mathbf{R}_j})(\widehat{\mathbf{R}}_{j}-\mathbf{R}\mathbf{H}_{\mathbf{R}_j})^\top\widehat{\mathbf{R}}\right. \\  &\left.\ \ \ +\frac{1}{ps_1}\sum_{j=1}^{s_1}(\widehat{\mathbf{R}}^{i_1}_j-\mathbf{R}^{i_1}\mathbf{H}_{\mathbf{R}_j})\mathbf{H}_{\mathbf{R}_j}^\top\mathbf{R}^\top\widehat{\mathbf{R}}\right. \\  &\left.\ \ \ +\frac{1}{ps_1}\sum_{j=1}^{s_1}\mathbf{R}^{i_1}\mathbf{H}_{\mathbf{R}_j}(\widehat{\mathbf{R}}_{j}-\mathbf{R}\mathbf{H}_{\mathbf{R}_j})^\top\widehat{\mathbf{R}}
        \right]\widehat{\bV}_\bR^{-1}\\
        &=\left(\mathrm{I}+\mathrm{I}\hspace{-1.2pt}\mathrm{I}+\mathrm{I}\hspace{-1.2pt}\mathrm{I}\hspace{-1.2pt}\mathrm{I}\right)\widehat{\bV}_\bR^{-1}.
    \end{align*}
    According to Lemma~\ref{lem3}, term $\mathrm{I}$ is dominated by $\mathrm{I}\hspace{-1.2pt}\mathrm{I}$ and $\mathrm{I}\hspace{-1.2pt}\mathrm{I}\hspace{-1.2pt}\mathrm{I}$, i.e.,
    \begin{align*}
        &\sqrt{\overline{m}T}\left(\widehat{\bR}^{i_1}-\bR^{i_1}\bH_\bR\right)\\
        &=\frac{\sqrt{\overline{m}T}}{ps_1}\sum_{j=1}^{s_1}\left[(\widehat{\mathbf{R}}^{i_1}_j-\mathbf{R}^{i_1}\mathbf{H}_{\mathbf{R}_j})\mathbf{H}_{\mathbf{R}_j}^\top\mathbf{R}^\top+\mathbf{R}^{i_1}\mathbf{H}_{\mathbf{R}_j}(\widehat{\mathbf{R}}_{j}-\mathbf{R}\mathbf{H}_{\mathbf{R}_j})^\top\right]\widehat{\bR}\widehat{\bV}_\bR^{-1}+\mathbf{o}_p(1).
    \end{align*}
    Considering term $\mathrm{I}\hspace{-1.2pt}\mathrm{I}$, we expand $\mathbf{R}^\top\widehat{\bR}$ as below,
    \begin{align*}
        &\frac{\sqrt{\overline{m}T}}{ps_1}\sum_{j=1}^{s_1}(\widehat{\mathbf{R}}^{i_1}_j-\mathbf{R}^{i_1}\mathbf{H}_{\mathbf{R}_j})\mathbf{H}_{\mathbf{R}_j}^\top\mathbf{R}^\top\widehat{\bR}\widehat{\bV}_\bR^{-1}\\
        &=\frac{\sqrt{\overline{m}T}}{ps_1}\sum_{j=1}^{s_1}\sum_{l=1}^{p}(\widehat{\mathbf{R}}^{i_1}_j-\mathbf{R}^{i_1}\mathbf{H}_{\mathbf{R}_j})\mathbf{H}_{\mathbf{R}_j}^\top(\mathbf{R}^l)^\top\widehat{\bR}^l\widehat{\bV}_\bR^{-1}\\
        &=\frac{\sqrt{\overline{m}T}}{ps_1}\sum_{j=1}^{s_1}\sum_{l=1}^{p}(\widehat{\mathbf{R}}^{i_1}_j-\mathbf{R}^{i_1}\mathbf{H}_{\mathbf{R}_j})\mathbf{H}_{\mathbf{R}_j}^\top(\mathbf{R}^l)^\top\bR^l\mathbf{H}_{\mathbf{R}}\widehat{\bV}_\bR^{-1}+\mathbf{o}_p(1),
    \end{align*}
    where the last step follows from Theorem 1. Next, we transform term $\mathrm{I}\hspace{-1.2pt}\mathrm{I}\hspace{-1.2pt}\mathrm{I}$ to match its dominant part with the above one of $\mathrm{I}\hspace{-1.2pt}\mathrm{I}$ as
    \begin{align*}
        &\frac{\sqrt{\overline{m}T}}{ps_1}\sum_{j=1}^{s_1}\mathbf{R}^{i_1}\mathbf{H}_{\mathbf{R}_j}(\widehat{\mathbf{R}}_{j}-\mathbf{R}\mathbf{H}_{\mathbf{R}_j})^\top\widehat{\bR}\widehat{\bV}_\bR^{-1}\\
        &=\frac{\sqrt{\overline{m}T}}{ps_1}\sum_{j=1}^{s_1}\sum_{l=1}^{p}\mathbf{R}^{i_1}\mathbf{H}_{\mathbf{R}_j}(\widehat{\mathbf{R}}_{j}^l-\mathbf{R}^l\mathbf{H}_{\mathbf{R}_j})^\top\widehat{\bR}^l\widehat{\bV}_\bR^{-1}\\
        &=\frac{\sqrt{\overline{m}T}}{ps_1}\sum_{j=1}^{s_1}\sum_{l=1}^{p}(\widehat{\mathbf{R}}_{j}^l-\mathbf{R}^l\mathbf{H}_{\mathbf{R}_j})\mathbf{H}_{\mathbf{R}_j}^\top(\mathbf{R}^{i_1})^\top\widehat{\bR}^l\widehat{\bV}_\bR^{-1}\\
        &=\frac{\sqrt{\overline{m}T}}{ps_1}\sum_{j=1}^{s_1}\sum_{l=1}^{p}(\widehat{\mathbf{R}}_{j}^l-\mathbf{R}^l\mathbf{H}_{\mathbf{R}_j})\mathbf{H}_{\mathbf{R}_j}^\top(\mathbf{R}^{i_1})^\top\bR^l\mathbf{H}_{\mathbf{R}}\widehat{\bV}_\bR^{-1}+\mathbf{o}_p(1),
    \end{align*}
    where $\mathbf{R}^{i_1}\mathbf{H}_{\mathbf{R}_j}(\widehat{\mathbf{R}}_{j}^l-\mathbf{R}^l\mathbf{H}_{\mathbf{R}_j})^\top\in\mathbb{R}$, therefore the second equality above holds, i.e., $\mathbf{R}^{i_1}\mathbf{H}_{\mathbf{R}_j}(\widehat{\mathbf{R}}_{j}^l-\mathbf{R}^l\mathbf{H}_{\mathbf{R}_j})^\top=(\widehat{\mathbf{R}}_{j}^l-\mathbf{R}^l\mathbf{H}_{\mathbf{R}_j})\mathbf{H}_{\mathbf{R}_j}^\top(\mathbf{R}^{i_1})^\top$, and the last step follows from Theorem 1. Combining the above two equations, we deduce the following results:
    \begin{align*}
         &\sqrt{\overline{m}T}\left(\widehat{\bR}^{i_1}-\bR^{i_1}\bH_\bR\right)\\
         &=\frac{\sqrt{\overline{m}T}}{ps_1}\sum_{j=1}^{s_1}\sum_{l=1}^{p}(\widehat{\mathbf{R}}^{i_1}_j-\mathbf{R}^{i_1}\mathbf{H}_{\mathbf{R}_j})\mathbf{H}_{\mathbf{R}_j}^\top(\mathbf{R}^l)^\top\bR^l\mathbf{H}_{\mathbf{R}}\widehat{\bV}_\bR^{-1}\\
         &\ \ \ +\frac{\sqrt{\overline{m}T}}{ps_1}\sum_{j=1}^{s_1}\sum_{l=1}^{p}(\widehat{\mathbf{R}}_{j}^l-\mathbf{R}^l\mathbf{H}_{\mathbf{R}_j})\mathbf{H}_{\mathbf{R}_j}^\top(\mathbf{R}^{i_1})^\top\bR^l\mathbf{H}_{\mathbf{R}}\widehat{\bV}_\bR^{-1}+\mathbf{o}_p(1)\\
         &=\frac{\sqrt{\overline{m}T}}{ps_1}\sum_{j=1}^{s_1}\sum_{\stackrel{l=1}{l\neq i_1}}^{p}\left[(\widehat{\mathbf{R}}^{i_1}_j-\mathbf{R}^{i_1}\mathbf{H}_{\mathbf{R}_j})(\mathbf{R}^l\mathbf{H}_{\mathbf{R}_j})^\top+(\widehat{\mathbf{R}}_{j}^l-\mathbf{R}^l\mathbf{H}_{\mathbf{R}_j})(\mathbf{R}^{i_1}\mathbf{H}_{\mathbf{R}_j})^\top\right]\bR^l\mathbf{H}_{\mathbf{R}}\widehat{\bV}_\bR^{-1}\\
         &\ \ \ +\frac{2\sqrt{\overline{m}T}}{ps_1}\sum_{j=1}^{s_1}(\widehat{\mathbf{R}}^{i_1}_j-\mathbf{R}^{i_1}\mathbf{H}_{\mathbf{R}_j})(\mathbf{R}^{i_1}\mathbf{H}_{\mathbf{R}_j})^\top\bR^{i_1}\mathbf{H}_{\mathbf{R}}\widehat{\bV}_\bR^{-1}+\mathbf{o}_p(1)\\
         &=\frac{1}{ps_1}\sum_{j=1}^{s_1}\sum_{\stackrel{l=1}{l\neq {i_1}}}^{p}\sqrt{m_jT}\sqrt{\frac{\overline{m}}{m_j}}\left[(\widehat{\mathbf{R}}^{i_1}_j-\mathbf{R}^{i_1}\mathbf{H}_{\mathbf{R}_j})(\mathbf{R}^l\mathbf{H}_{\mathbf{R}_j})^\top+(\widehat{\mathbf{R}}_{j}^l-\mathbf{R}^l\mathbf{H}_{\mathbf{R}_j})(\mathbf{R}^{i_1}\mathbf{H}_{\mathbf{R}_j})^\top\right]\\
         &\ \ \ \times\bR^l\mathbf{H}_{\mathbf{R}}\widehat{\bV}_\bR^{-1}+\frac{2}{ps_1}\sum_{j=1}^{s_1}\sqrt{m_jT}\sqrt{\frac{\overline{m}}{m_j}}(\widehat{\mathbf{R}}^{i_1}_j-\mathbf{R}^{i_1}\mathbf{H}_{\mathbf{R}_j})(\mathbf{R}^{i_1}\mathbf{H}_{\mathbf{R}_j})^\top\bR^{i_1}\mathbf{H}_{\mathbf{R}}\widehat{\bV}_\bR^{-1}+\mathbf{o}_p(1)\\
         &=\frac{1}{ps_1}\sum_{j=1}^{s_1}\sum_{l=1}^{p}\sqrt{m_jT}(\widehat{\mathbf{R}}^l_j-\mathbf{R}^l\mathbf{H}_{\mathbf{R}_j})\bG_{{i_1},p,q,T}^{j,l}+\mathbf{o}_p(1),
    \end{align*}
    where $\bG_{{i_1},p,q,T}^{j,l}\in \mathbb{R}^{k\times k}$ are defined as
    $$\bG_{{i_1},p,q,T}^{j,l}=\begin{cases}
        \sqrt{\frac{\overline{m}}{m_j}}(\mathbf{R}^{i_1}\mathbf{H}_{\mathbf{R}_j})^\top\bR^{i_1}\mathbf{H}_{\mathbf{R}}\widehat{\bV}_\bR^{-1}+\sqrt{\frac{\overline{m}}{m_j}}\sum_{\tau=1}^p(\mathbf{R}^\tau\mathbf{H}_{\mathbf{R}_j})^\top\bR^\tau\mathbf{H}_{\mathbf{R}}\widehat{\bV}_\bR^{-1},\quad&l={i_1},\\
        \sqrt{\frac{\overline{m}}{m_j}}(\mathbf{R}^{i_1}\mathbf{H}_{\mathbf{R}_j})^\top\bR^l\mathbf{H}_{\mathbf{R}}\widehat{\bV}_\bR^{-1},\ &l\neq {i_1}.
    \end{cases}$$
    As $\underline{m},p,T\to \infty$, by Propositions~\ref{prop5}(c), \ref{prop6}, and \ref{prop8}, we have the following asymptotic properties:
    $$\bH_{\bR_j}\overset{\mathcal{P}}{\longrightarrow}\bQ_{\bR_j}^{-1},\quad\widehat{\bV}_{\bR}\overset{\mathcal{P}}{\longrightarrow}\bV_{\bR},\quad\bH_{\bR}\overset{\mathcal{P}}{\longrightarrow}\bQ_{\bR}^{-1},\quad\forall j\in[s_1].$$
    Then, we can derive asymptotic behavior of $\bG_{i_1,p,q,T}^{j,l}$, as $\underline{m},p,T\to \infty$,
    $$\bG_{{i_1},p,q,T}^{j,l}\overset{\mathcal{P}}{\longrightarrow}\bG_{i_1}^{j,l},$$
    where
    $$\bG_{i_1}^{j,l}=\begin{cases}
        \sqrt{\frac{\overline{m}}{m_j}}(\mathbf{R}^{i_1}\bQ_{\bR_j}^{-1})^\top\bR^{i_1}\bQ_{\bR}^{-1}{\bV}_\bR^{-1}+\sqrt{\frac{\overline{m}}{m_j}}\sum_{\tau=1}^p(\mathbf{R}^\tau \bQ_{\bR_j}^{-1})^\top\bR^\tau\bQ_{\bR}^{-1}{\bV}_\bR^{-1},&\quad l={i_1},\\
        \sqrt{\frac{\overline{m}}{m_j}}(\mathbf{R}^{i_1}\bQ_{\bR_j}^{-1})^\top\bR^l\bQ_{\bR}^{-1}{\bV}_\bR^{-1},&\quad l\neq {i_1}.
    \end{cases}$$
    As $\overline{m}/\underline{m}=\mathcal{O}(1)$, $\bG_{i_1}^{j,l}$ is bounded for any $j\in[s_1]\text{ and }l\in[p]$. Thus, according to Lemma~\ref{lem4} and Slutsky's theorem, we have
    \begin{align*}
       \sqrt{\overline{m}T}\left(\widehat{\bR}^{i_1}-\bR^{i_1}\bH_\bR\right)^\top&=\frac{1}{ps_1}\sum_{j=1}^{s_1}\sum_{l=1}^{p}\sqrt{m_jT}(\bG_{{i_1},p,q,T}^{j,l})^\top(\widehat{\mathbf{R}}^l_j-\mathbf{R}^l\mathbf{H}_{\mathbf{R}_j})^\top+\mathbf{o}_p(1)\\&\overset{\mathcal{D}}{\longrightarrow}\mathcal{N}\left(\mathbf{0},\mathbf{\Sigma}_{\bR_{i_1}}\right), 
    \end{align*}
    where
    \begin{align*}
        \mathbf{\Sigma}_{\bR_{i_1}}=\mathbf{\Sigma}_{\bR,\bG_{i_1}^\top}=&\frac{1}{p^2s_1^2}\sum_{j_1=1}^{s_1}\sum_{j_2=1}^{s_1}\sum_{l_1=1}^p\sum_{l_2=1}^p(\bG_{i_1}^{j_1,l_1})^\top\bV_{\bR_{j_1}}^{-1}\bQ_{\bR_{j_1}}\left(\mathbf{\Phi}_{\bR_{j_1,j_2,l_1,l_2}}^{1,1}+\alpha\mathbf{\Phi}_{\bR_{j_1,j_2,l_1,l_2}}^{1,2}\overline{\bF}^\top\right.\\
        &\left.\ \ \ +\alpha\overline{\bF}\mathbf{\Phi}_{\bR_{j_1,j_2,l_1,l_2}}^{2,1}+\alpha^2\overline{\bF}\mathbf{\Phi}_{\bR_{j_1,j_2,l_1,l_2}}^{2,2}\overline{\bF}^\top\right)\bQ_{\bR_{j_2}}^\top\bV_{\bR_{j_2}}^{-1}\bG_{i_1}^{j_2,l_2}.
    \end{align*}
    
\end{proof}

\subsection*{A.7 Proof of Theorem 5}
\begin{proof}
    We only prove the conclusion for $\wh{\boldsymbol{\Sigma}}_{\bR_{i_1}}$, since proof of results for $\wh{\boldsymbol{\Sigma}}_{\bC_{i_2}}$ is similar. According to Propositions~\ref{prop1} and \ref{prop5}(c) and Theorems 3.3 and 3.5 in \cite{yu2020}, 
    $$\wh{\bF}_{j,t}(\wh{\bC}_{j}^*)^\top(\wh{\bE}_{jt}^{l})^\top\overset{\mathcal{P}}{\longrightarrow}\bH_{\bR_{j}}^{-1}\bF_t(\bH_{\bC_{j}^*}^{-1})^\top(\bC_{j}\bH_{\bC_{j}^*})^\top({\bE}_{jt}^{l})^\top\overset{\mathcal{P}}{\longrightarrow}\bQ_{\bR_{j}}\bF_t\bC_{j}^\top(\bE_{jt}^{l})^\top,$$
    and similarly
    $$\overline{\wh{\bF}}_{j}(\wh{\bC}_{j}^*)^\top(\wh{\bE}_{jt}^{l})^\top\overset{\mathcal{P}}{\longrightarrow}\bQ_{\bR_{j}}\overline{\bF}\bC_{j}^\top(\bE_{jt}^{l})^\top,$$
    where $\frac{1}{m_j}\|\wh{\bC}_{j}^*-\bC_{j}\bH_{\bC_{j}^*}\|_2^2=\mathcal{O}_p(\frac{1}{pT}+\frac{1}{m_j^2})$, $\|\wh{\bF}_{j,t}-\bH_{\bR_{j}}^{-1}\bF_t(\bH_{\bC_{j}^*}^{-1})^\top\|_2^2=\mathcal{O}_p(\frac{1}{pT}+\frac{1}{m_jT}+\frac{1}{pm_j})$, and $\wh{\bE}_{jt,l\tau}-{\bE}_{jt,l\tau}=\mathcal{O}_p(\frac{1}{\sqrt{pT}}+\frac{1}{\sqrt{m_jT}}+\frac{1}{\sqrt{pm_j}})$, for any $j\in[s_1]$, $l\in[p]$, and $\tau\in[q]$, as $m_j,p,T\to\infty$. Then, the HAC estimator $\wh{\boldsymbol{\Sigma}}_{\bR_{j_1,j_2,l_1,l_2}}$ is a consistent estimator of ${\boldsymbol{\Sigma}}_{\bR_{j_1,j_2,l_1,l_2}}$ by Theorem 2 in \cite{Newey1986}, as $\underline{m},\underline{n},T\to\infty$. 
    
    Next, by Propositions~\ref{prop1}, \ref{prop5}(a) and (c), \ref{prop6} and \ref{prop8}, we have $\wh{\bV}_{\bR_{j}}\overset{\mathcal{P}}{\longrightarrow}{\bV}_{\bR_{j}}$, 
    $\wh{\bV}_{\bR}\overset{\mathcal{P}}{\longrightarrow}{\bV}_{\bR}$,
    and
    $$(\wh{\bR}_j^{i_1})^\top\wh{\bR}^l\overset{\mathcal{P}}{\longrightarrow}(\bR^{i_1}\bH_{\bR_{j}})^\top\bR^{l}\bH_{\bR}\overset{\mathcal{P}}{\longrightarrow}(\bR^{i_1}\bQ_{\bR_{j}}^{-1})^\top\bR^{l}\bQ_{\bR}^{-1}.$$
    Therefore, for any $j\in[s_1]$ and $l\in[p]$, as $\underline{m},\underline{n},T\to\infty$, we have
    $$\wh{\bG}_{i_1}^{j,l}\overset{\mathcal{P}}{\longrightarrow}{\bG}_{i_1}^{j,l}.$$
    Thus, by Slutsky's theorem, $\wh{\boldsymbol{\Sigma}}_{\bR_{i_1}}$ is a consistent estimator of ${\boldsymbol{\Sigma}}_{\bR_{i_1}}$.
\end{proof}

\subsection*{A.8 Proof of  Theorem 6}

\begin{proof}
We only prove the conclusion for $\hat{\bR}$, since the proof of results for $\hat{\bC}$ is similar.
Following Equation~(\ref{r-rh}), we have,
\begin{equation*}
	\begin{split}
		\hat{\bR}-\bR\bH_\bR^*&=\frac{1}{ps_1}\sum_{i=1}^{s_1} (\wh{\bR}_i-\bR\bH_{\bR_i}^*)(\wh{\bR}_i-\bR\bH_{\bR_i}^*)^\top\hat{\bR}\widehat{\bV}_{\bR}^{-1} \\
		&\ \ \ +\frac{1}{ps_1}\sum_{i=1}^{s_1} \bR\bH_{\bR_i}^*(\wh{\bR}_i-\bR\bH_{\bR_i}^*)^\top\hat{\bR}\widehat{\bV}_{\bR}^{-1} \\
		&\ \ \ + \frac{1}{ps_1}\sum_{i=1}^{s_1} (\wh{\bR}_i-\bR\bH_{\bR_i}^*)(\bH_{\bR_i}^*)^\top\bR^\top\hat{\bR}\widehat{\bV}_{\bR}^{-1} \\
		&= \mathrm{I}_1+\mathrm{I}_2+\mathrm{I}_3.
	\end{split}
\end{equation*}
Next, we bound $\mathrm{I}_1$, $\mathrm{I}_2$, and $\mathrm{I}_3$. We observe that $\mathrm{I}_1$ is dominated by $\mathrm{I}_2$ and $\mathrm{I}_3$, and $\|\mathrm{I}_2\|_\mathsf{F}\asymp \|\mathrm{I}_3\|_\mathsf{F}$. For $\mathrm{I}_2$, by Assumption 3, Proposition~\ref{prop13} and Lemmas~\ref{lem4.5} and \ref{lem5}, we have
$$\frac{1}{p} \|\mathrm{I}_2\|_\mathsf{F}^2 \leq \frac{C}{s_1}\sum_{i=1}^{s_1} \frac{\|\bR\|_\mathsf{F}^2}{p} \| \bH_{\bR_i}^*\|_\mathsf{F}^2 \frac{\|\wh{\bR}_i-\bR\bH_{\bR_i}^*\|_\mathsf{F}^2}{p}\frac{\|\hat{\bR}\|_\mathsf{F}^2}{p}\| \widehat{\bV}_{\bR}^{-1}\|_\mathsf{F}^2=\mathcal{O}_p\left(\frac{1}{\underline{m}T^2}+\frac{1}{p^2T^2}\right),$$
where $C$ is a positive constant. Thus, as $\underline{m},p,T\to\infty$,
$$\frac{1}{p} \left\|\hat{\bR}-\bR\bH_\bR^*\right\|_\mathsf{F}^2=\mathcal{O}_p\left(\frac{1}{\underline{m}T^2}+\frac{1}{p^2T^2}\right).$$

\end{proof}

\subsection*{A.9 Proof of Theorem 7}

\begin{proof}
Following Equation~(\ref{extendFt}), we have,
\begin{align*}
	\widehat{\mathbf{F}}_t-(\mathbf{H}_\bR^*)^{-1} \mathbf{F}_t [(\mathbf{H}_\bC^*)^{-1}]^{\top}&=  \frac{1}{p q} \widehat{\mathbf{R}}^{\top}[\mathbf{R}-\widehat{\mathbf{R}} (\mathbf{H}_\bR^*)^{-1}] \mathbf{F}_t[\mathbf{C}-\widehat{\mathbf{C}} (\mathbf{H}_\bC^*)^{-1}]^{\top} \widehat{\mathbf{C}} \\
	&\ \ \ +\frac{1}{p} \widehat{\mathbf{R}}^{\top}[\mathbf{R}-\widehat{\mathbf{R}} (\mathbf{H}_\bR^*)^{-1}] \mathbf{F}_t [(\mathbf{H}_\bC^*)^{-1}]^{\top} \\
    &\ \ \ +\frac{1}{q} (\mathbf{H}_\bR^*)^{-1} \mathbf{F}_t[\mathbf{C}-\widehat{\mathbf{C}} (\mathbf{H}_\bC^*)^{-1}]^{\top} \widehat{\mathbf{C}} \\
	&\ \ \  +\frac{1}{p q}(\widehat{\mathbf{R}}-\mathbf{R} \mathbf{H}_\bR^*)^{\top} \mathbf{E}_t(\widehat{\mathbf{C}}-\mathbf{C H}_\bC^*)   +\frac{1}{p q} (\mathbf{H}_\bR^*)^{\top} \mathbf{R}^{\top} \mathbf{E}_t \mathbf{C} \mathbf{H}_\bC^* \\
	&\ \ \  +\frac{1}{p q} (\bH_\bR^*)^\top{\mathbf{R}}^{\top} \mathbf{E}_t(\widehat{\mathbf{C}}-\mathbf{C} \mathbf{H}_\bC^*)+\frac{1}{p q}(\widehat{\mathbf{R}}-\mathbf{R} \mathbf{H}_\bR^*)^{\top} \mathbf{E}_t \mathbf{C H}_\bC^* \\
	& =\sum_{i=1}^7 \mathrm{I}_i .
\end{align*}
Since $\frac{1}{\sqrt{p}}\|\mathbf{R}-\widehat{\mathbf{R}} (\mathbf{H}_\bR^*)^{-1}\|_2=\mathbf{o}_p(1)$ and $\frac{1}{\sqrt{q}}\|\mathbf{C}-\widehat{\mathbf{C}} (\mathbf{H}_\bC^*)^{-1}\|_2=\mathbf{o}_p(1)$ as $\underline{m},\underline{n},T\to\infty$ by Theorem 6, term $\mathrm{I}_1$ is dominated by $\mathrm{I}_2$ and $\mathrm{I}_3$, and term $\mathrm{I}_4$ is dominated by $\mathrm{I}_6$ and $\mathrm{I}_7$. Therefore we bound $\mathrm{I}_2$, $\mathrm{I}_3$, $\mathrm{I}_5$, $\mathrm{I}_6$, and $\mathrm{I}_7$ as:
\begin{align*}
    \|\mathrm{I}_2\|_\text{2} & =\left\|\frac{1}{p}(\widehat{\mathbf{R}}-\mathbf{R} \mathbf{H}_\bR^*)^{\top}[\mathbf{R}-\widehat{\mathbf{R}} (\mathbf{H}_\bR^*)^{-1}] \mathbf{F}_t [(\mathbf{H}_\bC^*)^{-1}]^{\top}\right.\\
    &\ \ \ \left.+\frac{1}{p} (\mathbf{H}_\bR^*)^{\top} \mathbf{R}^{\top}[\mathbf{R}-\widehat{\mathbf{R}} (\mathbf{H}_\bR^*)^{-1}] \mathbf{F}_t [(\mathbf{H}_\bC^*)^{-1}]^{\top}\right\|_2 \\
	& =\mathcal{O}_p \left(\frac{1}{\sqrt{\underline{m}}T}+\frac{1}{pT}\right),
\end{align*}
by Assumptions 3 and 7, Lemma~\ref{lem6}, and Theorem 6. Similarly, adding Assumption 8(c), we have
\begin{align*}
    \|\mathrm{I}_3\|_2&=\mathcal{O}_p \left(\frac{1}{\sqrt{\underline{n}}T}+\frac{1}{qT}\right),\quad  \|\mathrm{I}_5\|_2=\mathcal{O}_p\left(\frac{1}{\sqrt{pq}}\right), \\
    \|\mathrm{I}_6\|_2&=\mathcal{O}_p\left(\frac{1}{\sqrt{p}}\left(\frac{1}{\sqrt{\underline{n}}T}+\frac{1}{qT}\right)\right), \quad \|\mathrm{I}_7\|_2=\mathcal{O}_p\left(\frac{1}{\sqrt{q}}\left(\frac{1}{\sqrt{\underline{m}}T}+\frac{1}{pT}\right)\right).
\end{align*}
Finally, as $\underline{m},\underline{n},T\to\infty$, we have
$$
\left\|\widehat{\mathbf{F}}_t-(\mathbf{H}_\bR^*)^{-1} \mathbf{F}_t [(\mathbf{H}_\bC^*)^{-1}]^{\top}\right\|_2=\mathcal{O}_p\left(\frac{1}{\sqrt{\underline{m}}T}+\frac{1}{\sqrt{\underline{n}}T}+\frac{1}{\sqrt{pq}}\right).
$$
\end{proof}

\subsection*{A.10 Proof of Theorem 8}
\begin{proof}
Let $\bR^{\text{H}*}=\bR\bH_{\bR}^*$, ${\bC}^{\text{H}*}=\bC\bH_{\bC}^*$, and $\bF_t^{\text{H}*}=(\bH_{\bR}^*)^{-1}\bF_t[(\bH_{\bC}^*)^{-1}]^\top$. Following Equation~(\ref{extendst}), for any $i\in[p]$ and $j\in[q]$, we have,
\begin{align*}
        \wh{\bS}_{t,ij}-\bS_{t,ij}&=[\wh{\bR}^i- ({\bR}^{\text{H}*})^i](\wh{\bF}_t-{\bF}_t^{\text{H}*})[\wh{\bC}^j-({\bC}^{\text{H}*})^j]^\top\\
        &\ \ \ +[\wh{\bR}^i- ({\bR}^{\text{H}*})^i]{\bF}_t^{\text{H}*}[\wh{\bC}^j-({\bC}^{\text{H}*})^j]^\top+ ({\bR}^{\text{H}*})^i(\wh{\bF}_t-{\bF}_t^{\text{H}*})[\wh{\bC}^j-({\bC}^{\text{H}*})^j]^\top \\
        &\ \ \ +[\wh{\bR}^i- ({\bR}^{\text{H}*})^i](\wh{\bF}_t-{\bF}_t^{\text{H}*})[({\bC}^{\text{H}*})^j]^\top
        +({\bR}^{\text{H}*})^i(\wh{\bF}_t-{\bF}_t^{\text{H}*})[({\bC}^{\text{H}*})^j]^\top\\
        &\ \ \ +({\bR}^{\text{H}*})^i{\bF}_t^{\text{H}*}[\wh{\bC}^j-({\bC}^{\text{H}*})^j]^\top+[\wh{\bR}^i- ({\bR}^{\text{H}*})^i]{\bF}_t^{\text{H}*}[({\bC}^{\text{H}*})^j]^\top.
\end{align*}
Since $\|\widehat{\mathbf{R}}^i-({\bR}^{\text{H}*})^i\|_2=\mathbf{o}_p(1)$, $\|\widehat{\mathbf{C}}^j-({\bC}^{\text{H}*})^i\|_2=\mathbf{o}_p(1)$, and $\|\widehat{\mathbf{F}}_t-{\bF}_t^{\text{H}*}\|_2=\mathbf{o}_p(1)$ by Theorems 6-7, the last three terms dominate $\wh{\bS}_{t,ij}-\bS_{t,ij}$. By the Cauchy-Schwarz inequality, we bound these terms as
$$({\bR}^{\text{H}*})^i(\wh{\bF}_t-{\bF}_t^{\text{H}*})[({\bC}^{\text{H}*})^j]^\top\le\|({\bR}^{\text{H}*})^i\|_2\|\wh{\bF}_t-{\bF}_t^{\text{H}*}\|_2\|({\bC}^{\text{H}*})^j\|_2,$$
$$({\bR}^{\text{H}*})^i{\bF}_t^{\text{H}*}[\wh{\bC}^j-({\bC}^{\text{H}*})^j]^\top\le\|({\bR}^{\text{H}*})^i\|_2\|{\bF}_t^{\text{H}*}\|_2\|\wh{\bC}^j-({\bC}^{\text{H}*})^j\|_2,$$
and
$$[\wh{\bR}^i- ({\bR}^{\text{H}*})^i]{\bF}_t^{\text{H}*}[({\bC}^{\text{H}*})^j]^\top\le\|\wh{\bR}^i- ({\bR}^{\text{H}*})^i\|_2\|{\bF}_t^{\text{H}*}\|_2\|({\bC}^{\text{H}*})^j\|_2.$$
According to Assumptions 3 and 7, Lemma~\ref{lem6}, and Theorems 6-7, we then have
$$
\wh{\bS}_{t,ij}-\bS_{t,ij}=\mathcal{O}_p\left(\frac{1}{\sqrt{\underline{m}}T}+\frac{1}{\sqrt{\underline{n}}T}+\frac{1}{\sqrt{pq}}\right).
$$
\end{proof}
\newpage
\section*{Appendix B: Additional Numerical Results}
\renewcommand{\thefigure}{B\arabic{figure}}
\renewcommand{\thetable}{B\arabic{table}}
\renewcommand{\theequation}{B\arabic{equation}}
\begin{figure}[h]
\begin{center}
\includegraphics[width=\linewidth]{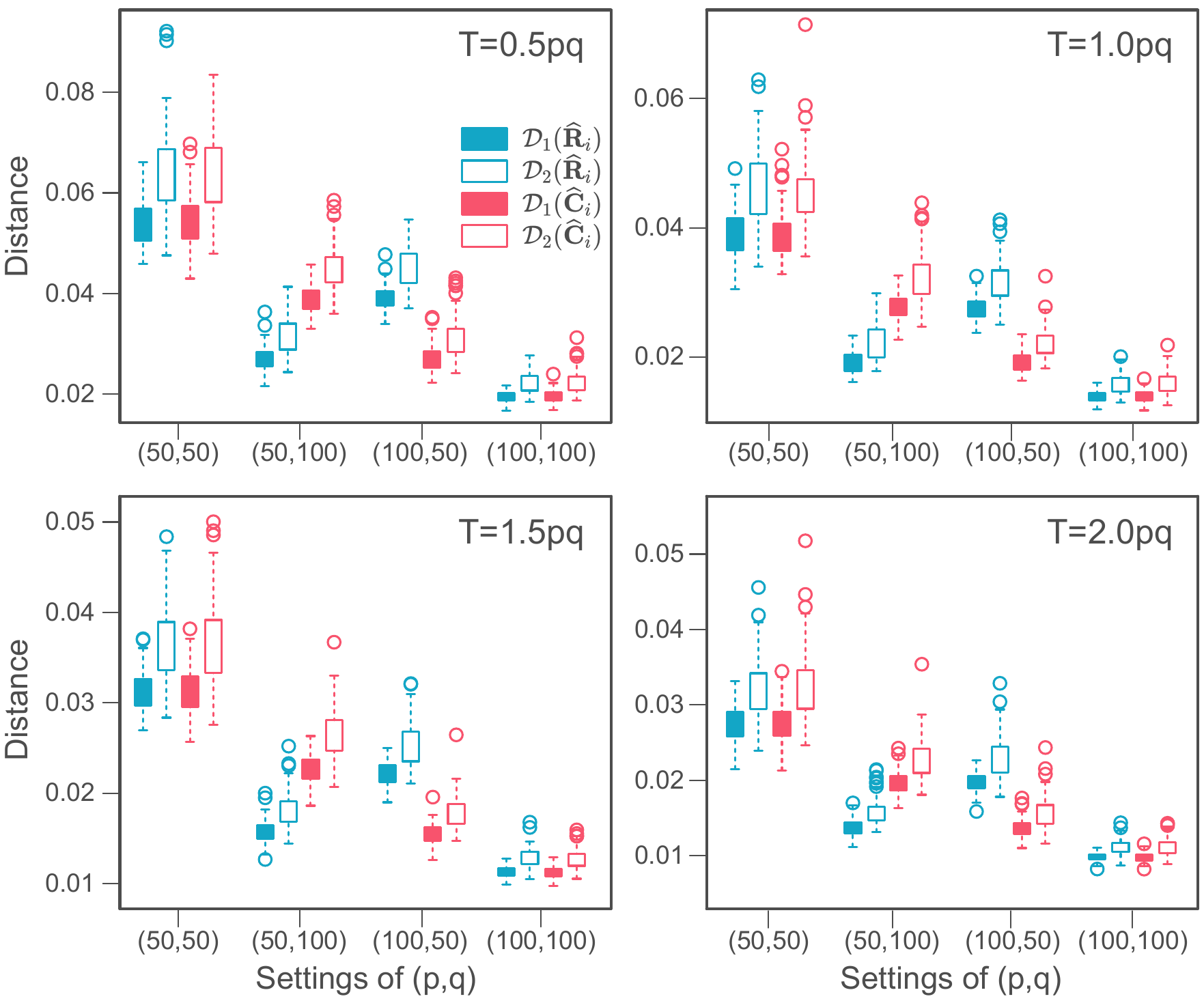}   
\end{center}
\caption{\small Boxplots of the average and maximum column space distances $\mathcal{D}_1(\widehat{\mathbf{R}}_i)$, $\mathcal{D}_2(\widehat{\mathbf{R}}_i)$, $\mathcal{D}_1(\widehat{\mathbf{C}}_j)$, and $\mathcal{D}_2(\widehat{\mathbf{C}}_j)$, defined in Equation (5.3). Data are generated from model (2.1), $\bY_t=\bR\bF_t\bC^\top+\bE_t$, where $\bF_t$ and $\bE_t$ follow the AR(1) processes in Equation (5.2) with $(k,r)=(3,3)$ and $\psi=0.5$, and entries of $\bR$ and $\bC$ are drawn independently from $\mathcal{U}(-1, 1)$. We consider four dimension settings, $(p,q)\in\{(50,50),(50,100),(100,50),(100,100)\}$, and four sample size settings, $T\in\{0.5pq,1.0pq,1.5pq,2.0pq\}$. Results are based on 200 repetitions, with $\alpha=0$, $s_1=s_2=5$, and even partitions.  
}
\label{figure_rici2}
\end{figure}


\begin{figure}[p]
\centering
\includegraphics[width=\linewidth]{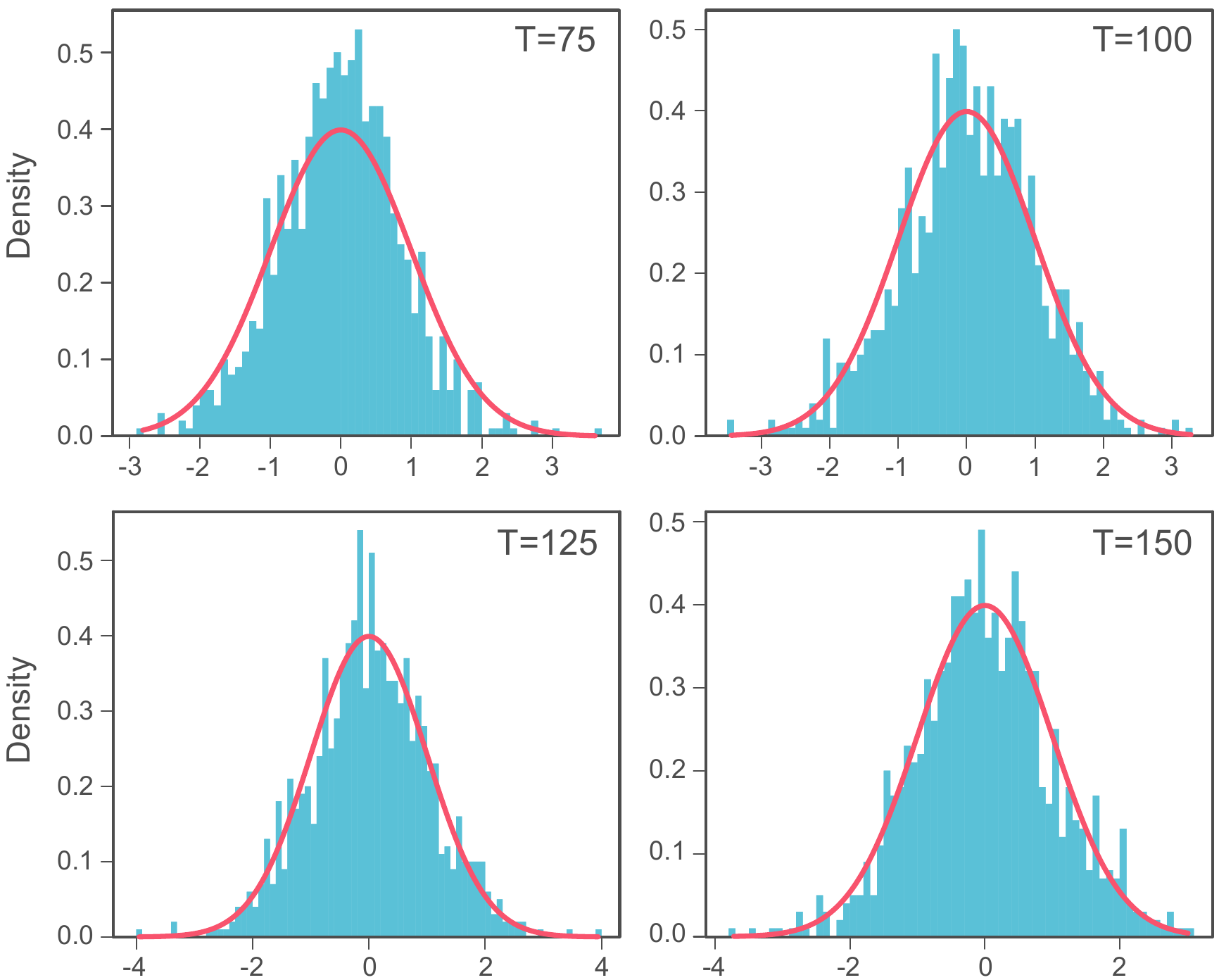}        
\caption{\small Density histograms of the first element of $\widehat{\Sigma}_{\bR_1}^{-1/2}\sqrt{\overline{m}T}(\widehat{\bR}^1-\bR^1\bH_\bR)^\top$. Red curves denote the standard normal distribution. Data are generated from model (2.1), $\bY_t=\bR\bF_t\bC^\top+\bE_t$, where $\bF_t$ and $\bE_t$ follow the matrix normal distributions in Equation (5.4) with $(k,r)=(3,3)$ and $(p,q)=(200,200)$, and entries of $\bR$ and $\bC$ are drawn independently from $\mathcal{U}(-1, 1)$. We consider four sample size settings, $T\in\{0.5pq,1.0pq,1.5pq,2.0pq\}$. Results are based on 1000 repetitions, with $\alpha=0$, $s_1=s_2=3$, and even partitions.   
}
\label{figure_asymptotic_1}
\end{figure}



\begin{sidewaystable}[p]
\caption{\raggedright Values of rotated row loading matrix estimators for the CPI dataset.
}
\centering
\resizebox{\linewidth}{!}{
\centering
\Huge
\begin{tabular}{cccccccccccccccccccccccccccc}
\hline
Estimator & Column & G20 & OECD & OECDE & EU27 & EA20 & G7 & USA & CHN & DEU & JPN & IND & GBR & FRA & ITA & BRA & CAN & RUS & MEX & AUS & KOR & ESP & IDN & NLD & TUR & SAU & CHE \\ \hline
\multirow{3}{*}{$\widehat{\bR}_\text{ours,rot}$} & Column1 & 0 & 0 & 0 & 0 & 0 & 0 & 0 & 0 & 0 & -2 & 0 & 0 & 0 & 0 & 0 & 0 & 0 & 0 & 0 & 0 & 0 & 0 & 0 & 9 & 0 & -1 \\
 & Column2 & 0 & 0 & 0 & 0 & 0 & 0 & 0 & 0 & -2 & -3 & 0 & -2 & -2 & -2 & 0 & 0 & 0 & -1 & 0 & -2 & -1 & 0 & -2 & -1 & 0 & -3 \\
 & Column3 & 0 & 1 & 1 & 1 & 1 & 1 & 2 & 1 & -1 & -3 & 1 & -1 & 0 & 1 & 1 & 2 & 2 & 3 & 0 & 0 & 2 & 0 & 1 & -2 & 2 & -2 \\ \hline
\multicolumn{2}{c}{Cluster} & 3 & 3 & 3 & 3 & 3 & 3 & 4 & 3 & 2 & 2 & 3 & 2 & 2 & 4 & 3 & 4 & 3 & 4 & 3 & 2 & 4 & 3 & 4 & 1 & 3 & 2 \\ \hline
 &  &  &  &  &  &  &  &  &  &  &  &  &  &  &  &  &  &  &  &  &  &  &  &  &  &  &  \\ \hline
Estimator & Column & G20 & OECD & OECDE & EU27 & EA20 & G7 & USA & CHN & DEU & JPN & IND & GBR & FRA & ITA & BRA & CAN & RUS & MEX & AUS & KOR & ESP & IDN & NLD & TUR & SAU & CHE \\ \hline
\multirow{3}{*}{$\widehat{\bR}_{\alpha\text{,rot}}$} & Column1 & 0 & 0 & 0 & 0 & 0 & 0 & 0 & 0 & -2 & -4 & 0 & -2 & -2 & -1 & 0 & -1 & 0 & 0 & 0 & -2 & -1 & 0 & -2 & 0 & 0 & -4 \\
 & Column2 & 0 & 0 & 0 & 0 & 0 & 0 & 0 & 0 & 0 & 1 & 0 & 0 & 0 & 0 & 1 & 0 & 1 & 1 & 0 & 0 & 0 & 0 & 0 & -10 & 0 & 0 \\
 & Column3 & 0 & 0 & 0 & -1 & -1 & 0 & -2 & 0 & 0 & 2 & -1 & 0 & 0 & -1 & -1 & -1 & -2 & -2 & 0 & 0 & -2 & 0 & -1 & 0 & -1 & 1 \\ \hline
\multicolumn{2}{c}{Cluster} & 3 & 3 & 3 & 3 & 3 & 3 & 4 & 3 & 2 & 2 & 3 & 2 & 2 & 3 & 3 & 3 & 4 & 4 & 3 & 2 & 4 & 3 & 3 & 1 & 4 & 2 \\ \hline
 &  &  &  &  &  &  &  &  &  &  &  &  &  &  &  &  &  &  &  &  &  &  &  &  &  &  &  \\ \hline
Estimator & Column & POL & BEL & SWE & IRL & ISR & AUT & NOR & DNK & ZAF & COL & CZE & CHL & FIN & PRT & GRC & HUN & SVK & BGR & CRI & LUX & HRV & LTU & SVN & LVA & EST & ISL \\ \hline
\multirow{3}{*}{$\widehat{\bR}_\text{ours,rot}$} & Column1 & 1 & 0 & 0 & -1 & 0 & 0 & 0 & 0 & 0 & 1 & 0 & 0 & 1 & 0 & 0 & 1 & 0 & 1 & -1 & 0 & 0 & 0 & 0 & 0 & 1 & 1 \\
 & Column2 & -1 & -2 & -2 & -2 & -2 & -2 & -2 & -2 & 0 & 0 & -2 & 0 & -1 & -2 & -1 & -1 & -1 & -1 & -1 & -2 & -1 & -2 & -1 & -1 & -1 & -1 \\
 & Column3 & 2 & 1 & -1 & -1 & 0 & 1 & -2 & 1 & 2 & 2 & 1 & 3 & 2 & 0 & 1 & 2 & 1 & 1 & 2 & 0 & 1 & 1 & 1 & 1 & 1 & 1 \\ \hline
\multicolumn{2}{c}{Cluster} & 4 & 4 & 2 & 2 & 2 & 4 & 2 & 4 & 4 & 4 & 4 & 4 & 4 & 2 & 4 & 4 & 4 & 4 & 4 & 4 & 4 & 4 & 4 & 4 & 4 & 4 \\ \hline
 &  &  &  &  &  &  &  &  &  &  &  &  &  &  &  &  &  &  &  &  &  &  &  &  &  &  &  \\ \hline
Estimator & Column & POL & BEL & SWE & IRL & ISR & AUT & NOR & DNK & ZAF & COL & CZE & CHL & FIN & PRT & GRC & HUN & SVK & BGR & CRI & LUX & HRV & LTU & SVN & LVA & EST & ISL \\ \hline
\multirow{3}{*}{$\widehat{\bR}_{\alpha\text{,rot}}$} & Column1 & 0 & -1 & -3 & -3 & -2 & -1 & -3 & -1 & 0 & 0 & -1 & 0 & 0 & -2 & -1 & 0 & -1 & -1 & 0 & -1 & -1 & -1 & 0 & -1 & 0 & 0 \\
 & Column2 & 0 & 0 & 0 & 0 & 0 & 0 & 0 & 0 & 0 & 0 & 0 & 1 & 0 & 0 & 0 & -1 & 0 & 0 & 1 & 0 & 0 & 0 & 0 & 0 & 0 & 0 \\
 & Column3 & -2 & -1 & 1 & 1 & 0 & -1 & 1 & -1 & -2 & -2 & -1 & -3 & -2 & 0 & -2 & -2 & -2 & -2 & -2 & -1 & -1 & -1 & -2 & -2 & -2 & -2 \\ \hline
\multicolumn{2}{c}{Cluster} & 4 & 3 & 2 & 2 & 2 & 3 & 4 & 3 & 4 & 4 & 3 & 4 & 4 & 2 & 4 & 4 & 4 & 4 & 4 & 3 & 4 & 3 & 4 & 4 & 4 & 4 \\ \hline
\end{tabular}
}
\label{Table_oecd_estir}
\begin{minipage}{\textheight}
    \footnotesize
    \vspace{0.1cm}
    This table reports the rotated row loading matrix estimators $\widehat{\bR}_\text{ours,rot}$ and $\widehat{\bR}_{\alpha\text{,rot}}$ and the corresponding clusters from the CPI dataset, using our method and $\alpha$-PCA. All values are multiplied by 10. The dataset contains 27 expenditure items for 52 countries over 1343 months. The factor numbers are set to $(k,r)=(3,2)$, with $(s_1,s_2)=(4,9)$ and even partitions in our method, and $\alpha=0$ in $\alpha$-PCA.
\end{minipage}
\end{sidewaystable}

\begin{sidewaystable}[p]
\caption{\raggedright Values of rotated column loading matrix estimators for the CPI dataset.
}
\centering
\resizebox{\linewidth}{!}{
\centering
\Huge
\begin{tabular}{cccccccccccccccc}
\hline
Estimator & Column & FNAB & CPI & RH & Energy & MRD & FLPTE & Core CPI & Transport & MGS & FHERHM & EGOF & HWEGF & ABTN & Health \\ \hline
\multirow{2}{*}{$\widehat{\bC}_\text{ours,rot}$} & Column1 & -2 & -1 & -2 & -1 & -3 & -2 & -1 & -2 & -2 & -2 & -2 & -2 & -2 & -2 \\
 & Column2 & 2 & 3 & 0 & 3 & -2 & 0 & 3 & 0 & 0 & 0 & 1 & 1 & 1 & 0 \\ \hline
\multicolumn{2}{c}{Cluster} & 2 & 2 & 3 & 2 & 1 & 3 & 2 & 3 & 3 & 3 & 3 & 3 & 3 & 3 \\ \hline
 &  &  &  &  &  &  &  &  &  &  &  &  &  &  &  \\ \hline
Estimator & Column & FNAB & CPI & RH & Energy & MRD & FLPTE & Core CPI & Transport & MGS & FHERHM & EGOF & HWEGF & ABTN & Health \\ \hline
\multirow{2}{*}{$\widehat{\bC}_{\alpha\text{,rot}}$} & Column1 & -2 & -2 & -3 & -2 & -3 & -2 & -2 & -2 & -2 & -2 & -2 & -2 & -2 & -2 \\
 & Column2 & 0 & 1 & -1 & 1 & -2 & 0 & 1 & 0 & 0 & 1 & 1 & 0 & 0 & 0 \\ \hline
\multicolumn{2}{c}{Cluster} & 2 & 2 & 2 & 2 & 1 & 2 & 2 & 2 & 2 & 2 & 2 & 2 & 2 & 2 \\ \hline
 &  &  &  &  &  &  &  &  &  &  &  &  &  &  &  \\ \cline{1-15}
Estimator & Column & ARH & Commu & RC & EDU & CF & WMSD & HEIRH & Servives & SLIRH & Goods & Housing & IRH & SLH &  \\ \cline{1-15}
\multirow{2}{*}{$\widehat{\bC}_\text{ours,rot}$} & Column1 & -2 & -2 & -2 & -2 & -2 & -2 & -3 & -1 & -2 & -1 & 1 & 1 & 1 &  \\
 & Column2 & 1 & 1 & 1 & 0 & 1 & 0 & -2 & 2 & -2 & 1 & 5 & 4 & 3 &  \\ \cline{1-15}
\multicolumn{2}{c}{Cluster} & 3 & 3 & 3 & 3 & 3 & 3 & 1 & 2 & 1 & 3 & 2 & 2 & 2 &  \\ \cline{1-15}
 &  &  &  &  &  &  &  &  &  &  &  &  &  &  &  \\ \cline{1-15}
Estimator & Column & ARH & Commu & RC & EDU & CF & WMSD & HEIRH & Servives & SLIRH & Goods & Housing & IRH & SLH &  \\ \cline{1-15}
\multirow{2}{*}{$\widehat{\bC}_{\alpha\text{,rot}}$} & Column1 & -2 & -1 & -2 & -2 & -1 & -2 & -3 & -1 & -2 & -1 & 0 & 0 & 0 &  \\
 & Column2 & 0 & 3 & 1 & 0 & 3 & 0 & -2 & 2 & -3 & 2 & 4 & 4 & 3 &  \\ \cline{1-15}
\multicolumn{2}{c}{Cluster} & 2 & 3 & 2 & 2 & 3 & 2 & 1 & 3 & 1 & 3 & 3 & 3 & 3 &  \\ \cline{1-15}
\end{tabular}
}
\label{Table_oecd_estic}
\begin{minipage}{\textheight}
    \footnotesize
    \vspace{0.1cm}
    This table reports the rotated column loading matrix estimators $\widehat{\bC}_\text{ours,rot}$ and $\widehat{\bC}_{\alpha\text{,rot}}$ and the corresponding clusters from the CPI dataset, using our method and $\alpha$-PCA. All values are multiplied by 10. The dataset contains 27 expenditure items for 52 countries over 1343 months. The factor numbers are set to $(k,r)=(3,2)$, with $(s_1,s_2)=(4,9)$ and even partitions in our method, and $\alpha=0$ in $\alpha$-PCA.
\end{minipage}
\end{sidewaystable}

\begin{sidewaystable}[p]
\caption{\raggedright Short and full names of countries and expenditure items in the CPI dataset.}
\begin{subtable}[t]{\textwidth}
    \caption{Short and full names of the countries}
    \resizebox{\linewidth}{!}{
\centering
\Huge
\begin{tabular}{lccccccccccccc}
\hline
Nation\_short & G20 & OECD & OECDE & EU27 & EA20 & G7 & USA & CHN & DEU & JPN & IND & GBR & FRA \\ \hline
\multirow{2}{*}{Nation} & \multirow{2}{*}{G20} & \multirow{2}{*}{OECD} & \multirow{2}{*}{OECD Europe} & \multirow{2}{*}{\begin{tabular}[c]{@{}c@{}}European\\ Union (2020)\end{tabular}} & \multirow{2}{*}{\begin{tabular}[c]{@{}c@{}}Euro area\\ (20 countries)\end{tabular}} & \multirow{2}{*}{G7} & \multirow{2}{*}{United States} & \multirow{2}{*}{China} & \multirow{2}{*}{Germany} & \multirow{2}{*}{Japan} & \multirow{2}{*}{India} & \multirow{2}{*}{\begin{tabular}[c]{@{}c@{}}United\\ Kingdom\end{tabular}} & \multirow{2}{*}{France} \\
 &  &  &  &  &  &  &  &  &  &  &  &  &  \\ \hline
\multicolumn{1}{c}{} &  &  &  &  &  &  &  &  &  &  &  &  &  \\ \hline
Nation\_short & ITA & BRA & CAN & RUS & MEX & AUS & KOR & ESP & IDN & NLD & TUR & SAU & CHE \\ \hline
Nation & Italy & Brazil & Canada & Russia & Mexico & Australia & Korea & Spain & Indonesia & Netherlands & Turkiye & Saudi Arabia & Switzerland \\ \hline
\multicolumn{1}{c}{} &  &  &  &  &  &  &  &  &  &  &  &  &  \\ \hline
Nation\_short & POL & BEL & SWE & IRL & ISR & AUT & NOR & DNK & ZAF & COL & CZE & CHL & FIN \\ \hline
Nation & Poland & Belgium & Sweden & Ireland & Israel & Austria & Norway & Denmark & South Africa & Colombia & Czechia & Chile & Finland \\ \hline
\multicolumn{1}{c}{} &  &  &  &  &  &  &  &  &  &  &  &  &  \\ \hline
Nation\_short & PRT & GRC & HUN & SVK & BGR & CRI & LUX & HRV & LTU & SVN & LVA & EST & ISL \\ \hline
\multirow{2}{*}{Nation} & \multirow{2}{*}{Portugal} & \multirow{2}{*}{Greece} & \multirow{2}{*}{Hungary} & \multirow{2}{*}{Slovak Republic} & \multirow{2}{*}{Bulgaria} & \multirow{2}{*}{Costa Rica} & \multirow{2}{*}{Luxembourg} & \multirow{2}{*}{Croatia} & \multirow{2}{*}{Lithuania} & \multirow{2}{*}{Slovenia} & \multirow{2}{*}{Latvia} & \multirow{2}{*}{Estonia} & \multirow{2}{*}{Iceland} \\
 &  &  &  &  &  &  &  &  &  &  &  &  &  \\ \hline
\end{tabular}
}
\end{subtable}

\begin{subtable}[t]{\textwidth}
    \caption{Short and full names of the expenditure items}
    \resizebox{\linewidth}{!}{
\centering
\huge
\begin{tabular}{lccccccc}
\hline
Item\_short & FNAB & CPI & RH & Energy & MRD & FLPTE & Core CPI \\ \hline
\multirow{2}{*}{Item} & \multirow{2}{*}{\begin{tabular}[c]{@{}c@{}}Food and non-alcoholic\\ beverages\end{tabular}} & \multirow{2}{*}{Total} & \multirow{2}{*}{\begin{tabular}[c]{@{}c@{}}Restaurants\\ and hotels\end{tabular}} & \multirow{2}{*}{Energy} & \multirow{2}{*}{\begin{tabular}[c]{@{}c@{}}Maintenance and repair\\ of the dwelling\end{tabular}} & \multirow{2}{*}{\begin{tabular}[c]{@{}c@{}}Fuels and lubricants for\\ personal transport equipment\end{tabular}} & \multirow{2}{*}{\begin{tabular}[c]{@{}c@{}}All items without \\ food or energy\end{tabular}} \\
 &  &  &  &  &  &  &  \\ \hline
\multicolumn{1}{c}{} &  &  &  &  &  &  &  \\ \hline
Item\_short & Transport & MGS & FHERHM & EGOF & HWEGF & ABTN & Health \\ \hline
\multirow{2}{*}{Item} & \multirow{2}{*}{Transport} & \multirow{2}{*}{\begin{tabular}[c]{@{}c@{}}Miscellaneous goods\\ and services\end{tabular}} & \multirow{2}{*}{\begin{tabular}[c]{@{}c@{}}Furnishings, household equipment\\ and routine household maintenance\end{tabular}} & \multirow{2}{*}{\begin{tabular}[c]{@{}c@{}}Electricity, gas and\\ other fuels\end{tabular}} & \multirow{2}{*}{\begin{tabular}[c]{@{}c@{}}Housing, water, electricity,\\ gas and other fuels\end{tabular}} & \multirow{2}{*}{\begin{tabular}[c]{@{}c@{}}Alcoholic beverages,\\ tobacco and narcotics\end{tabular}} & \multirow{2}{*}{Health} \\
 &  &  &  &  &  &  &  \\ \hline
\multicolumn{1}{c}{} &  &  &  &  &  &  &  \\ \hline
Item\_short & ARH & Commu & RC & EDU & CF & WMSD & HEIRH \\ \hline
\multirow{2}{*}{Item} & \multirow{2}{*}{\begin{tabular}[c]{@{}c@{}}Actual rentals\\ for housing\end{tabular}} & \multirow{2}{*}{Communication} & \multirow{2}{*}{Recreation and culture} & \multirow{2}{*}{Education} & \multirow{2}{*}{Clothing and footwear} & \multirow{2}{*}{\begin{tabular}[c]{@{}c@{}}Water supply and miscellaneous\\ services relating to the dwelling\end{tabular}} & \multirow{2}{*}{\begin{tabular}[c]{@{}c@{}}Housing excluding imputed\\ rentals for housing\end{tabular}} \\
 &  &  &  &  &  &  &  \\ \hline
 & \multicolumn{1}{l}{} & \multicolumn{1}{l}{} & \multicolumn{1}{l}{} & \multicolumn{1}{l}{} & \multicolumn{1}{l}{} & \multicolumn{1}{l}{} & \multicolumn{1}{l}{} \\ \hline
Item\_short & Servives & SLIRH & Goods & Housing & IRH & SLH &  \\ \hline
\multirow{2}{*}{Item} & \multirow{2}{*}{Services} & \multirow{2}{*}{\begin{tabular}[c]{@{}c@{}}Services less imputed\\ rentals for housing\end{tabular}} & \multirow{2}{*}{Goods} & \multirow{2}{*}{Housing} & \multirow{2}{*}{\begin{tabular}[c]{@{}c@{}}Imputed rentals\\ for housing\end{tabular}} & \multirow{2}{*}{Services less housing} & \multirow{2}{*}{} \\
 &  &  &  &  &  &  &  \\ \hline
\end{tabular}
}
\end{subtable}
\label{Table_oecd_labels}
\begin{minipage}{\textheight}
    \footnotesize
    \vspace{0.1cm}
    This table reports the short and full names of countries and expenditure items in the CPI dataset. The dataset contains 27 expenditure items for 52 countries over 1343 months. The observations are sorted by GDP rankings along the nation dimension (World Bank, December 16, 2024) and by variance along the item dimension in decreasing order.
\end{minipage}
\end{sidewaystable}

\end{document}